\documentclass[lettersize,journal,romanappendices]{IEEEtran}




\usepackage{color}
\usepackage{comment}

\usepackage[noadjust]{cite}
\usepackage{hyperref}

\usepackage{times}
\usepackage[mathcal]{euscript}

\usepackage[cmex10]{amsmath}
\interdisplaylinepenalty=1000
\usepackage{amssymb} 
\let\proof\relax
\let\endproof\relax
\usepackage{amsthm} 
  
\usepackage{dsfont} 
\usepackage{mathptmx}
\usepackage{bm}

\usepackage{epsfig} 
\usepackage{graphicx} 

\usepackage{multirow}
\usepackage{enumerate}

\usepackage[ruled, vlined, linesnumbered]{algorithm2e}
\SetNlSty{text}{}{:}

\usepackage{dblfloatfix}

\usepackage{scalerel}
\newcommand\reallywidehat[1]{\arraycolsep=0pt\relax%
\begin{array}{c}
\stretchto{
  \scaleto{
    \scalerel*[\widthof{\ensuremath{#1}}]{\kern-.5pt\bigwedge\kern-.5pt}
    {\rule[-\textheight/2]{1ex}{\textheight}} 
  }{\textheight} %
}{0.5ex}\\[-1mm]           
#1\\                 
\rule{-1ex}{0ex}
\end{array}
} 

\newcommand{\cdim}		{d} 
\newcommand{\cdegree}	{n} 
\newcommand{\corder}	{n} 
\newcommand{\clength}    {L} 

\newcommand{\bcurve}	{\mathrm{B}} 
\newcommand{\bpoly}		{b} 
\newcommand{\bdegree}	{\cdegree} 
\newcommand{\bdim} 		{\cdim} 
\newcommand{\bpoint}	{\vect{p}} 
\newcommand{\bpmat}		{\mat{P}} 
\newcommand{\bbasis}	{\vect{b}} 
\newcommand{\bbmat}		{\mat{b}} 
\newcommand{\bwmat}		{\mat{W}} 

\newcommand{\mcurve} 	{\mathrm{M}} 
\newcommand{\mpoint}	{\vect{q}} 
\newcommand{\mpmat}		{\mat{Q}} 
\newcommand{\mbasis}	{\vect{m}} 
\newcommand{\mbmat}		{\mat{m}} 

\newcommand{\tcurve}	{\Upsilon} 
\newcommand{\tpoint}	{\vect{y}} 
\newcommand{\tpmat}		{\mat{Y}} 
\newcommand{\tbasis}	{\vect{\tau}} 
\newcommand{\tbmat}		{\boldsymbol{\tau}} 
\newcommand{\toffset}	{t_{o}} 

\newcommand{\tfbasis}	{\mat{T}} 


\newcommand{\bdist}	{d} 
\newcommand{\bdistL}{d_{L2}} 
\newcommand{\bdistF}{d_F} 
\newcommand{\bdistH}{d_H} 
\newcommand{\bdistM}{d_M} 
\newcommand{\bdistC}{d_C} 

\newcommand{\emat}  {\mat{E}} 
\newcommand{\rmat}	{\mat{R}} 


\newtheoremstyle{plain}
	  {}
	  {}
	  {\itshape}
	  {}
	  {\bfseries}
	  {}
	  {5pt plus 1pt minus 1pt}
	  {}

\newtheoremstyle{definition}
  	  {}
	  {}
	  {\normalfont}
	  {}
	  {\bfseries}
	  {}
	  {5pt plus 1pt minus 1pt}
	  {}
  
\theoremstyle{plain}

\newtheorem{lemma}{Lemma}
\newtheorem{proposition}{Proposition}

\newtheorem{property}{Property}
	
\theoremstyle{definition}
\newtheorem{definition}{Definition}

\newtheorem{remark}{Remark}

\theoremstyle{plain}
\newtheorem*{BezierApproximationRule}{A Rule of Thumb for B\'ezier Approximations}


\newcommand{\refeq}[1]			{(\ref{#1})} 
\newcommand{\reffig}[1]			{Fig. \ref{#1}} 
\newcommand{\refsec}[1]			{Section \ref{#1}}
\newcommand{\refapp}[1]			{Appendix \ref{#1}}
\newcommand{\reftab}[1]			{Table \ref{#1}}

\newcommand{\refprop}[1]		{Proposition \ref{#1}}

\newcommand{\reflem}[1]			{Lemma \ref{#1}}
\newcommand{\refdef}[1]			{Definition \ref{#1}}

\newcommand{\refalg}[1]			{Algorithm \ref{#1}}

\newcommand{\reffn}[1] 		    {\textsuperscript{\ref{#1}}}
\newcommand{\refpropty}[1]      {Property \ref{#1}}


\newcommand{\R}  	{\mathbb{R}}
\newcommand{\N}  	{\mathbb{N}}
\newcommand{\ball}	{\mathcal{B}} 


\let\originalleft\left
\let\originalright\right
\renewcommand{\left}{\mathopen{}\mathclose\bgroup\originalleft}
\renewcommand{\right}{\aftergroup\egroup\originalright}
	
\newcommand{\plist}[1] 	{\left(#1\right)} 
\newcommand{\blist}[1]	{\left[ #1 \right]} 
\newcommand{\clist}[1]	{\left\{#1\right\}} 

\newcommand{\vect}[1]   {\mathrm{#1}}
\newcommand{\mat}[1]    {\mathbf{#1}}
\newcommand{\tr}[1] {{#1}^{\mathrm{T}}} 
\newcommand{\norm}[1]  {\|#1\|}
\newcommand{\trace} {\mathrm{tr}} 
\newcommand{\rank}   {\mathrm{rank}} 
\newcommand{\diag}	{\mathrm{diag}} 

\newcommand{\ldf}   {:=} 
\newcommand{\argmin}{\operatornamewithlimits{argmin}} 
\newcommand{\argmax}{\operatornamewithlimits{argmax}} 
\newcommand{\diff} {\mathrm{d}} 
\newcommand{\conv}	{\mathrm{conv}} 

\hyphenation{op-tical net-works semi-conduc-tor IEEE-Xplore}

\begin{document}

\title{Adaptive B\'ezier Degree Reduction and Splitting \\ for Computationally Efficient Motion Planning}

\author{\"Om\"ur Arslan and Aron Tiemessen
\thanks{The authors are with the Department of Mechanical Engineering, Eindhoven University of Technology, P.O. Box 513, 5600 MB Eindhoven, The Netherlands. The first author is also affiliated with the Eindhoven AI Systems Institute. Emails: o.arslan@tue.nl, a.j.c.tiemessen@student.tue.nl}%
}


\markboth{Technical Report, January~2022}%
{Arslan and Tiemessen: Adaptive B\'ezier Degree Reduction and Splitting}


\maketitle

\begin{abstract}
As a parametric polynomial curve family, B\'ezier curves are widely used in safe and smooth motion design of intelligent robotic systems from flying drones to autonomous vehicles to robotic manipulators. 
In such motion planning settings, the critical features of high-order B\'ezier curves such as curve length, distance-to-collision, maximum curvature/velocity/acceleration are either numerically computed at a high computational cost or inexactly approximated by discrete samples.
To address these issues, in this paper we present a novel computationally efficient approach for adaptive approximation of high-order B\'ezier curves by multiple low-order B\'ezier segments at any desired level of accuracy  that is specified in terms of a B\'ezier metric.
Accordingly, we introduce a new B\'ezier degree reduction method, called \emph{parameterwise matching reduction}, that approximates B\'ezier curves more accurately compared to the standard least squares and Taylor reduction methods.  
We also propose a new B\'ezier metric, called the \emph{maximum control-point distance}, that can be computed analytically, has a strong equivalence relation with other existing B\'ezier metrics, and defines a geometric relative bound between B\'ezier curves.
We provide extensive numerical evidence to demonstrate the effectiveness of our proposed B\'ezier approximation approach.
As a rule of thumb, based on the degree-one matching reduction error, we conclude that  an $n^\text{th}$-order B\'ezier curve can be accurately approximated by $3(n-1)$ quadratic and $6(n-1)$ linear B\'ezier segments, which is fundamental for B\'ezier discretization.         
\end{abstract}

\begin{IEEEkeywords}
Smooth motion planning, path smoothing, polynomial trajectory optimization, path discretization, B\'ezier curves
\end{IEEEkeywords}

\section{Introduction}
\label{sec.Introduction}
 
Safe and smooth motion planning is essential for many autonomous robots.
As a parametric smooth  motion representation, polynomial curves find significant applications in safe robot motion design from flying drones \cite{mellinger_kumar_ICRA2011,richter_bry_roy_ISRR016,ding_gao_wang_shen_TRO2019,gao_wu_lin_shen_ICRA2018, tordesillas_etal_TRO2021}  to autonomous vehicles \cite{gonzalez_et_al_TITS2016,ding_zhang_chen_shen_RAL2019, qian_etal_ITSC2016, perez_godoy_villagra_onieva_ICRA2013} to robotic manipulators \cite{ozaki_lin_ICRA1996,hauser_ng-thow-hing_ICRA2010,scheiderer_thun_meisen_FAIM2019, zhao_etal_CYBER2019}.
Polynomials expressed in different (e.g., monomial, Taylor, and Bernstein) bases offer different  useful functional and geometric properties for computationally efficient motion planning. 
While the monomial (a.k.a. power) basis yields quadratic trajectory optimization objectives \cite{mellinger_kumar_ICRA2011}, polynomial  B\'ezier curves in Bernstein basis have useful convexity and interpolation properties \cite{gao_wu_lin_shen_ICRA2018}: a B\'ezier curve is contained in the convex hull of its control points (i.e., parameters), and it smoothly interpolates between the first and last control point.
A well known challenge of motion planning with polynomial and so B\'ezier curves is  that the computational complexity increases with increasing curve degree \cite{gonzalez_et_al_TITS2016}. 
Because critical  curve features such as curve length, distance-to-collision, and maximum curvature/velocity/acceleration can be analytically determined only for low-order (e.g., linear and quadratic) polynomial curves and are numerically computed or inexactly approximated using discrete samples for higher-order polynomials, as summarized in \reftab{tab.BezierCurveFeatures}.

\begin{figure}[t]
\centering
\begin{tabular}{@{}c@{\hspace{1mm}}c@{}}
\includegraphics[width=0.24\textwidth]{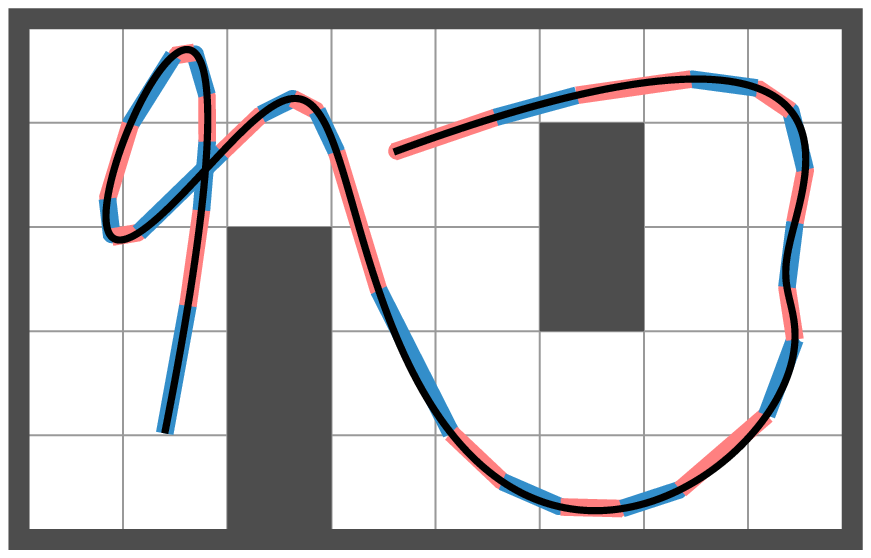}&
\includegraphics[width=0.24\textwidth]{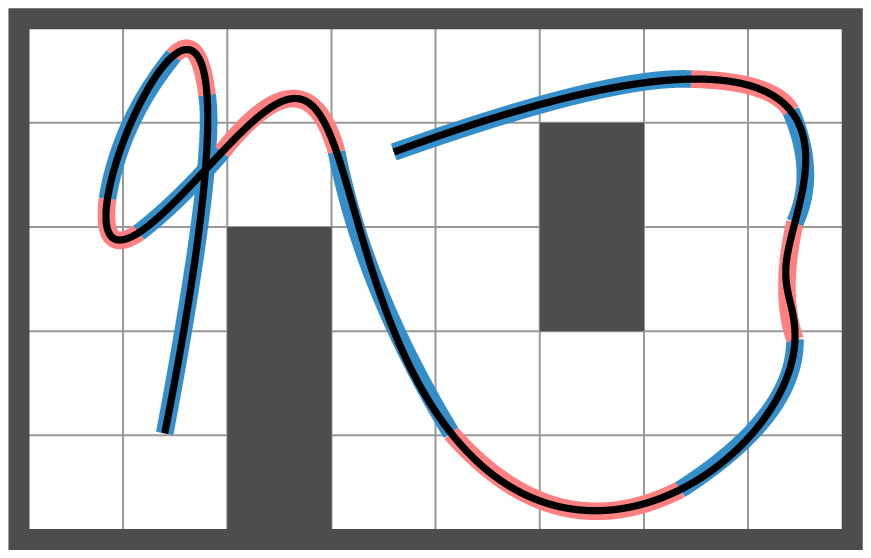} 
\end{tabular}
\vspace{-2mm}
\caption{Adaptive approximation of an $8^{\text{th}}$-order B\'ezier curve (black line) by multiple (left) linear and (right) quadratic B\'ezier  segments (red and blue patches) whose length, maximum velocity/acceleration/curvature, and distance to obstacles (dark gray) can be computed analytically. 
The B\'ezier degree reduction and splitting is automatically performed by uniform matching reduction and binary search for a maximum control-point distance of~$0.1$~units.}
\label{fig.BezierApproximation}
\vspace{-2mm}
\end{figure} 
 
 
\begin{table}[b]
\caption{Computation of Bezier Curve Features in Motion Planning}
\label{tab.BezierCurveFeatures}
\centering
\vspace{-1mm}
\begin{tabular}{lccc}
\hline \hline
Bezier Curve Feature & $\corder \leq 2$ & $\corder = 3$  &$\corder > 3 $ \\
\hline
Arc Length & Analytic & Numeric & Numeric  \\
Maximum Velocity & Analytic & Analytic & Numeric \\
Maximum Curvature & Analytic & Numeric & Numeric \\
Distance-to-Point & Analytic & Numeric & Numeric \\
Distance-to-Line-Segment & Analytic & Numeric & Numeric \\
\hline
\hline
\end{tabular}
\end{table}  

In this paper, we propose a new computationally efficient approach for adaptive approximation of high-order B\'ezier curves by multiple low-order B\'ezier segments at any desired level of accuracy specified in terms of a B\'ezier metric, as illustrated in \reffig{fig.BezierApproximation}.
Our approach is based on an unexplored functional property of B\'ezier curves in motion planning: distance between B\'ezier curves can be measured analytically in terms of control points.
Accordingly, we introduce a new analytic B\'ezier metric, called the \emph{maximum control-point distance}, that can be used to geometrically bound B\'ezier curves with respect to each other, and defines tight upper bounds on other existing B\'ezier metrics.  
We also propose a new B\'ezier degree reduction method,  called \emph{parameterwise matching reduction}, that allows preserving certain curve points (e.g., end points) while performing degree reduction. 
Based on the degree-one parameterwise matching reduction error, we conclude that an $n^{\text{th}}$-order B\'ezier curve can be accurately approximated by $3(n-1)$ quadratic and $6(n-1)$ linear B\'ezier segments, which is a fundamental rule of thumb for B\'ezier discretization.  
In numerical simulations, we demonstrate the effectiveness of approximating high-order B\'ezier curves by linear and quadratic B\'ezier segments for fast and accurate computation of common curve features used in \mbox{motion planning}.

\vspace{-2mm}

\subsection{Motivation and Related Literature}

Autonomous robots and people interacting with them enjoy smooth motion in practice: jerky robot motion does not only cause mechanical and electrical failures and malfunctions, but also causes discomfort for the user.
Most existing smooth motion planning methods follow a two-step approach: first find a piecewise linear path for a simplified version of the system to achieve a simplified version of a given task; and then perform path smoothing as post-processing to satisfy the actual task and system requirements \cite{ravankar_etal_Sensors2018}.
The first step, piecewise linear motion planning,  is well established with many computationally effective (search- and sampling-based) planning algorithms for the fully actuated kinematic robot model \cite{lavalle_PlanningAlgorithms2006}.  
The second step, path smoothing that aims to convert a piecewise linear reference plan into a smooth dynamically feasible trajectory satisfying both system  and task constraints, is an active research topic, especially for real-time operation requirements.
Due to their compact parametric form and functional properties, polynomial curves have recently received significant attention with promising potentials for computationally efficient path smoothing, especially for differential flat systems \cite{nieuwstadt_murray_IJRNC1998} such as cars \cite{ding_zhang_chen_shen_RAL2019, qian_etal_ITSC2016}, quadrotors \cite{mellinger_kumar_ICRA2011, richter_bry_roy_ISRR016, ding_gao_wang_shen_TRO2019}, and fixed-wing aircrafts \cite{bry_richter_bachrach_roy_IJRR2015}, to name a few, whose control inputs can be expressed as a function of \emph{flat system outputs} (represented by polynomials) and their derivatives.
For example, while polynomials of degree 3-5 are often used for autonomous vehicles, polynomials of degree 5-10 are required for quadrotors. 
The major reason for the use of relatively low-order polynomials in practice is that the computational cost of planning with polynomials increases with increasing degree of polynomials \cite{ravankar_etal_Sensors2018, gonzalez_et_al_TITS2016}.
Our proposed approach enables handling high-order polynomials efficiently by approximating them with multiple low-order polynomial~segments.

Convex optimization plays a key role in polynomial path smoothing. 
In polynomial trajectory optimization, the standard optimization objectives of total  squared velocity, acceleration, jerk, and snap (i.e., the first, second, third and fourth time derivatives of the position) of a robotic system can be written as a quadratic objective function of polynomial parameters \cite{mellinger_kumar_ICRA2011, bry_richter_bachrach_roy_IJRR2015}. 
In order to take the full advantage of quadratic programming, the system and task constraints are often represented as linear or quadratic inequalities.
For example, a piecewise linear reference plan can be used to construct a convex safe corridor around the reference plan to represent planning constraints as a collection of convex polytopes \cite{liu_etal_RAL2017} or spheres \cite{ding_gao_wang_shen_TRO2019}.
Accordingly, polynomial trajectory optimization is often formulated as a quadratic optimization problem, for example, by simply using a polynomial discretization \cite{mellinger_kumar_ICRA2011, richter_bry_roy_ISRR016, zhao_etal_CYBER2019}.
This naturally raises a question about polynomial discretization: how many sample points along a polynomial are needed for a proper and accurate representation of planning constraints. 
The existing methods use either manual or heuristic approaches to add extra samples if polynomial discretization fails \cite{mellinger_kumar_ICRA2011, richter_bry_roy_ISRR016}.  
In this sense, our results offer a systematic solution for determining a proper discretization of polynomials to model planning constraints at any desired level of accuracy.

In polynomial trajectory optimization, the convexity property of B\'ezier curves makes them an attractive choice for handling convex system constraints within quadratic programming.
Since B\'ezier curves are contained in the convex hull of their control points, trajectory optimization constraints are often enforced by  constraining B\'ezier control points inside convex constraint sets \cite{gao_wu_lin_shen_ICRA2018, honig_preiss_kumar_sukhatme_ayanian_TRO2018, choi_curry_elkaim_JAM2010}. 
This approach is effectively applied for smooth trajectory generation with Bezier curves over safe corridors \cite{liu_etal_RAL2017} in various application settings; for example, for drone navigation in unknown environments \cite{gao_wu_lin_shen_ICRA2018, tordesillas_etal_TRO2021}, autonomous driving \cite{ding_zhang_chen_shen_RAL2019, gonzalez_etal_ITSC2014, choi_curry_elkaim_JAM2010}, multirobot coordination \cite{honig_preiss_kumar_sukhatme_ayanian_TRO2018, tang_kumar_IROS2016}, and  perception-aware navigation \cite{preiss_hausman_sukhatme_weiss_RSS2017}.
Although it performs reasonably well for low-order B\'ezier curves in practice, this simple but conservative approach is suboptimal for high-order B\'ezier curves since the convex hull of B\'ezier control points significantly overestimates the smallest convex region containing by the actual curve, especially for higher-order polynomials.
On the other hand,  exact and fast continuous constraint verification with polynomial curves is possible based on the separation of polynomial extremes \cite{bucki_mueller_IROS2019}, the sign change of polynomials \cite{tang_tong_wang_manocha_TG2014}, and their root existence test based on Sturm's theorem \cite{wang_zhou_xu_chu_gao_RAL2020}, but these methods result in highly complex nonlinear optimization constraints.  
Our approach for approximating high-order B\'ezier curves by low-order B\'ezier segments allows one to use the convexity of low-order B\'ezier curves in high-order polynomial trajectory optimization in a  less conservative way.

Another appealing feature of B\'ezier curves for smooth robot motion design is that they smoothly interpolate between the first and last control points.
This interpolation property is often leveraged for motion planning of nonholonomic systems with boundary conditions; for example, for waypoint smoothing \cite{artunedo_godoy_villagra_IVS2017, li_luo_wu_IEEEAccess2019, perez_godoy_villagra_onieva_ICRA2013} and smooth steering control \cite{bae_etal_ITSC2013, chen_etal_ICRA2014} in  autonomous driving \cite{han_etal_IVS2010, zheng_etal_ITS2020}, and  path smoothing in sampling-based motion planning \cite{lau_sprunk_burgard_IROS2009, han_liu_DDCLS2020, pan_zhang_manocha_IJRR2012, hauser_ng-thow-hing_ICRA2010}.
Continuous curvature path smoothing with curvature constraints is applied for increasing passenger comfort while ensuring dynamical feasibility in autonomous vehicles for smooth lane change \cite{chen_zhao_mei_liang_ICVES2013} and urban driving \cite{qian_etal_ITSC2016, cimurs_hwang_suh_IRC2017, bu_su_zou_wang_ROBIO2015, elbanhawi_simic_jazar_JIRS2015}. 
Although path smoothing with curvature constraints can be performed analytically for low-order B\'ezier curves \cite{yang_sukkarieh_TRO2010}, the maximum curvature is numerically computed for high-order B\'ezier curves \cite{gonzalez_etal_ITSC2014}.
Thus, one can use our adaptive B\'ezier approximation approach to take the analytic advantages of low-order B\'eziers in path smoothing with high-order B\'eziers.

As a smooth motion primitive, polynomial curves are also used in search-based and sampling-based smooth motion planning of  
nonholonomic systems \cite{elbanhawi_simic_jazar_TITS2016, yang_etal_JIRS2014} and robotic manipulators \cite{yang_etal_JIRS2014, han_liu_DDCLS2020, zhao_etal_CYBER2019} as well as their reinforcement learning \cite{scheiderer_thun_meisen_FAIM2019}.
A challenge of planning with polynomial motion primitives is finding an informative and computationally efficient local metric for measuring the connectivity and travel cost. 
A natural travel cost measure is the arc length of polynomials, which can be analytically determined only for linear and quadratic polynomials.
Using the proposed  B\'ezier approximation method, one can accurately  and efficiently measure the arc length of high-order polynomial curves by dividing them into multiple low-order polynomial segments.

B\'ezier curves are widely used in computer graphics and computer aided design (CAD) for efficiently representing complex shapes with few parameters \cite{farouki_CAGD2012, farin_CurvesSurfaces2002}.
Computationally efficient handling of complex shapes often requires optimal reduction of B\'ezier curves based on different metrics \cite{lee_park_BAMS1997, eck_CAD1995, lee_park_yoo_CAGD2002}.
This motivates many alternative approaches for degree reduction of B\'ezier curves \cite{sunwoo_lee_CAGD2004} and their approximate conversions \cite{park_choi_kimn_BAMS1995}  (with end point constraints \cite{chen_wang_CAGD2002}).
This present paper brings such CAD tools to the motion planning literature with important additions which, we believe, also contribute back to the CAD literature.

\vspace{-2mm}

\subsection{Contributions and Organization of the Paper}

In this paper, we present a novel systematic approach for adaptive discretization and approximation of high-order B\'ezier curves by multiple low-order B\'ezier curves for computationally efficient smooth motion planning with high-order polynomials.   
In summary, our main contributions are:
\begin{itemize}
\item a new B\'ezier metric, called the \emph{maximum control-point distance}, that defines an analytic tight upper bound on existing standard B\'ezier metrics such as the Hausdorff, parameterwise maximum, and Frobenius-norm distances of B\'ezier polynomials, and enables bounding B\'ezier curves geometrically with respect to each other,

\item a new B\'ezier degree reduction method, called \emph{parameterwise matching reduction}, that approximates  B\'ezier geometry more accurately (e.g., by preserving end points) compared to the least squares and Taylor reductions,

\item a new adaptive B\'ezier approximation approach for  representing  high-order B\'ezier curves by multiple low-order B\'ezier segments at any desired level of accuracy that is specified in terms of a B\'ezier metric,  

\item a new rule of thumb for accurately approximating high-order B\'ezier curves with a fixed finite collection of linear and quadratic B\'ezier curves. 
\end{itemize}
With extensive numerical simulations, we demonstrate the effectiveness of the newly proposed methods. 
At a more conceptual level, this paper for the first time introduces the use of B\'ezier metrics and degree reduction methods for local low-order approximation of high-order B\'ezier curves in order to enable computationally efficient smooth motion planning.

The rest of the paper is organized as follows. 
In \refsec{sec.BezierCurves}, we provide a background overview of B\'ezier curves, and the matrix representation, basis transformation and reparametrization of polynomial curves. 
In \refsec{sec.BezierMetric}, we describe how to measure the distance between B\'ezier curves and introduce a new B\'ezier metric.
In \refsec{sec.BezierElevationReduction}, we present how to (approximately) represent a B\'ezier curve with more or fewer control points via degree elevation and reduction operations, and introduce a new degree reduction method.
In \refsec{sec.BezierAdaptiveApproximation}, we describe how to approximate high-order B\'ezier curves by low-order B\'ezier curves at any desired accuracy level, and present a rule of thumb for accurate B\'ezier approximations.
In \refsec{sec.NumericalAnalysis}, we present numerical results to demonstrate the role of polynomial degree and the number of B\'ezier segments on  approximation accuracy.   
In \refsec{sec.Conclusions}, we conclude with a summary of our research highlights and future directions.


\section{B\'ezier Curves}
\label{sec.BezierCurves}

In this section, we first briefly introduce B\'ezier curves and their important properties, and then continue with the matrix representation, basis transformation and affine reparameterization of polynomial B\'ezier, monomial and Taylor curves.   

\vspace{-2mm}

\subsection{Characteristic Properties of B\'ezier Curves}
\label{sec.GeneralBezierCurves}

\begin{definition}\label{def.BezierCurve}
(\emph{B\'ezier Curve}) In a $\bdim$-dimensional Euclidean space $\R^{\bdim}$, a \emph{B\'ezier curve} $\bcurve_{\bpoint_0, \ldots \bpoint_{\bdegree}} (t)$ of degree $\bdegree \in \N$, associated with \emph{control points} $\bpoint_0, \ldots, \bpoint_\bdegree \in \R^{\bdegree}$,  is  a parametric polynomial curve defined for $0 \leq t \leq 1$ as\footnote{The standard definition of B\'ezier curves is over the unit interval, and they are mathematical well defined over all reals.}
\begin{align} \label{eq.BezierCurve}
\bcurve_{\bpoint_0, \ldots, \bpoint_\bdegree}(t) \ldf \sum_{i=0}^\bdegree \bpoly_{i,\bdegree}(t) \bpoint_i, 
\end{align}
where  $\bpoly_{i,\bdegree}(t)$ denotes the $i^{\text{th}}$ Bernstein basis polynomial of degree $\bdegree$ that is  defined for $ i = 0,1, \ldots, \bdegree$ as
\begin{align}\label{eq.BernsteinPolynomial}
\bpoly_{i,\bdegree}(t) \ldf \scalebox{1.2}{$\binom{n}{i}$} t^i (1-t)^{n-i}.
\end{align}
\end{definition}

Key characteristics of B\'ezier and Bernstein polynomials  are their recursion, derivative and convexity  properties \cite{farouki_CAGD2012, farin_CurvesSurfaces2002}.

\begin{property} (\emph{Recursion})
A B\'ezier curve can be recursively determined as a convex combination of two B\'ezier curves of one degree lower as
\begin{align}
\bcurve_{\bpoint_0, \bpoint_1, \ldots, \bpoint_\bdegree}(t) = (1-t)\bcurve_{\bpoint_0, \ldots, \bpoint_{\bdegree -1}}(t) + t \bcurve_{\bpoint_1, \ldots, \bpoint_\bdegree} (t),
\end{align}
with the base case $\bcurve_{\bpoint_0}(t) = \bpoint_0$, 
which follows from the recursive definition of Bernstein polynomials
\begin{align}
\bpoly_{i,\bdegree}(t) = (1-t) \bpoly_{i,\bdegree-1}(t) + t \bpoly_{i-1, \bdegree-1}(t),
\end{align}
with base cases $\bpoly_{0,0}(t) = 1$ and $\bpoly_{i,n}(t) = 0$ for $i < 0$ and $i > \bdegree$.
\end{property}

\begin{property}\label{propty.BezierDerivative}
(\emph{Derivative}) The derivative of a B\'ezier curve is another B\'ezier curve of one degree lower and given by
\begin{align}\label{eq.BezierDerivative}
\frac{\diff}{\diff t} \bcurve_{\bpoint_0, \bpoint_1, \ldots, \bpoint_\bdegree}(t)=  \cdegree\bcurve_{\bpoint_1 - \bpoint_0, \ldots, \bpoint_\bdegree - \bpoint_{\bdegree-1}}(t),
\end{align}
since the Bernstein derivatives satisfy  
\begin{align}
\frac{\diff}{\diff t} \bpoly_{i,\bdegree}(t) = \bdegree\plist{\bpoly_{i-1,\bdegree-1}(t) - \bpoly_{i, \bdegree-1}(t)}.
\end{align} 
\end{property}

\begin{property} \label{propty.BezierConvexity}
(\emph{Convexity}) A B\'ezier curve is contained in the convex hull, denoted by $\conv$, of its control points, i.e.,
\begin{align}
\bcurve_{\bpoint_0, \ldots, \bpoint_{\bdegree}}(t) \in \conv\plist{\bpoint_0, \ldots, \bpoint_\bdegree} \quad \quad \forall t \in [0,1], 
\end{align}
because Bernstein polynomials are nonnegative and sum to one, i.e., for any $t \in [0,1]$ 
\begin{align}
\bpoly_{i,n}(t) \geq 0, \quad \text{and} \quad \sum_{i=0}^{n} \bpoly_{i,n}(t) = 1. 
\end{align}
\end{property}

\begin{property} \label{propty.BezierInterpolation}
(\emph{Interpolation}) A B\'ezier curve smoothly interpolates between its first and last control point, i.e.,
\begin{align}
\bcurve_{\bpoint_0, \ldots, \bpoint_{\bdegree}}(0) = \bpoint_0 \quad \text{ and } \quad \bcurve_{\bpoint_0, \ldots, \bpoint_{\bdegree}}(0) = \bpoint_\bdegree, 
\end{align}
since Bernstein polynomials smoothly interpolates between 
\begin{subequations}
\begin{align}
\plist{\bpoly_{0, \bdegree}(0),  \ldots, \bpoly_{\bdegree, \bdegree}(0)} &= (1, 0, \ldots, 0),  \\
\plist{\bpoly_{0, \bdegree}(1), \ldots,  \bpoly_{\bdegree, \bdegree}(1)} &= (0, \ldots, 0, 1).
\end{align}
\end{subequations}
\end{property}

\subsection{Matrix Representation of Polynomial Curves}

To  effectively handle high-order B\'ezier curves with a large number of control points, it is convenient to use the matrix representation of B\'ezier curves in the form of
\begin{align}
\bcurve_{\bpoint_0, \ldots, \bpoint_n}(t) =  \bpmat_{\cdegree} \bbasis_{\cdegree}(t), 
\end{align}
based on the control point matrix \mbox{$\bpmat_{\bdegree} \! := \! \blist{\bpoint_0, \ldots, \bpoint_n} \! \in\! \R^{\bdim \times (\bdegree+1)}$}  and the Bernstein basis vector $\bbasis_{\bdegree}(t) \in \R^{\bdegree+1}$ that is defined~as
\begin{align} \label{eq.BernsteinBasisMatrix}
\bbasis_{\bdegree}(t) &:= \left [
\begin{array}{c}
\bpoly_{0,\bdegree}(t) \\
\bpoly_{1,\bdegree}(t) \\
\vdots \\
\bpoly_{\bdegree,\bdegree}(t)
\end{array}
 \right ].
\end{align}
\noindent Note that the Bernstein basis polynomials  $\bpoly_{0,\bdegree}(t), \ldots, \bpoly_{\bdegree,\bdegree}(t)$ form a basis of $n+1$ linearly independent polynomials for polynomials of degree $\bdegree$  \cite{farouki_CAGD2012}.
The two other widely used basis functions of $\cdegree^{\text{th}}$-order polynomials are the monomial and Taylor basis vectors, respectively, defined as
\begin{align}\label{eq.MonomialTaylorBasis}
\mbasis_\cdegree (t) &:= \blist{ 
\begin{array}{@{}c@{}}
1 \\
t \\
\vdots \\
t^\cdegree
\end{array}}, 
\quad \text{ and } \quad
\tbasis_{\cdegree,\toffset}(t) :=  \blist{ 
\begin{array}{@{}c@{}}
1 \\
t - \toffset \\
\vdots \\
(t - \toffset)^n
\end{array} },
\end{align}
where $\toffset \in \R$ is the Taylor offset term.
Accordingly, like B\'ezier curves, one can define the \emph{monomial and Taylor curves}, associated with control points $\mpmat= \blist{\mpoint_0, \ldots, \mpoint_\cdegree} \in \R^{\cdim \times (\cdegree+1)}$ and $\tpmat_\cdegree = \blist{\tpoint_0, \ldots, \tpoint_\cdegree} \in \R^{\cdim \times \cdegree+1}$, respectively, as
\begin{subequations}
\begin{align}
\mcurve_{\mpoint_0, \ldots, \mpoint_{\cdegree}}(t) &\ldf \sum_{i=0}^{\cdegree} \mpoint_i t^i = \mpmat_{\cdegree} \mbasis_{\cdegree}(t), 
\\
\tcurve_{\tpoint_0, \ldots, \tpoint_{\cdegree}}(t, \toffset) &\ldf \sum_{i=0}^{\cdegree} \tpoint_i (t - \toffset)^i = \tpmat_{\cdegree} \tbasis_{\cdegree, \toffset}(t).
\end{align}
\end{subequations}
From their very similar forms in \refeq{eq.MonomialTaylorBasis} one can observe that the monomial and Taylor basis vectors (and so curves) are strongly related, i.e.,
\begin{align}
\tbasis_{\cdegree, \toffset}(t) = \mbasis_{\cdegree}(t - \toffset).
\end{align}  

Before continuing with the basis transformations of polynomial curves, we find it useful to define the Bernstein, monomial, and Taylor basis matrices associated with any set of reals $t_0, \ldots, t_m \in \R$, respectively, as
\begin{subequations} \label{eq.BasisMatrix}
\begin{align}
\bbmat_{\cdegree}(t_0, \ldots, t_m) &:=  \blist{\bbasis_{\cdegree}(t_0), \ldots, \bbasis_{\cdegree}(t_m)}, 
\\
\mbmat_{\cdegree}(t_0, \ldots, t_m) &:=  \blist{\mbasis_{\cdegree}(t_0), \ldots, \mbasis_{\cdegree}(t_m)} ,
\\
\tbmat_{\cdegree}(t_0, \ldots, t_m) &:=  \blist{\tbasis_{\cdegree}(t_0), \ldots, \tbasis_{\cdegree}(t_m)} .
\end{align}
\end{subequations}

An important property of square polynomial basis matrices is nonsingularity.
\begin{lemma}\label{lem.InvertibleBasisMatrix}
(Invertible Polynomial Basis Matrices)
For any pairwise distinct\footnote{\label{fn.DistinctReals}For numerically stable matrix inversion, a proper choice of pairwise distinct reals $t_0, \ldots, t_n \in [0,1]$  is the uniformly spaced parameters over the unit interval, i.e., $t_i = \frac{i}{n}$ for $i = 0, \ldots, n$.}  $t_0, \ldots, t_\cdegree \in \R$ and any Taylor offset $\toffset \in \R$, the polynomial basis matrices $\bbmat_{\cdegree}(t_0, \ldots, t_\cdegree)$, $\mbmat_{\cdegree}(t_0, \ldots, t_\cdegree)$ and $\tbmat_{\cdegree, \toffset}(t_0, \ldots, t_\cdegree)$ are all invertible.
\end{lemma}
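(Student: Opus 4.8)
The plan is to reduce all three cases to a single classical fact: the Vandermonde matrix associated with pairwise distinct nodes is invertible. The cleanest route is to observe that each of the three basis vectors $\bbasis_\cdegree(t)$, $\mbasis_\cdegree(t)$, $\tbasis_{\cdegree,\toffset}(t)$ is obtained from the monomial basis vector by left-multiplication with a fixed invertible matrix that does not depend on $t$. Concretely, there exists an upper- (or lower-) triangular change-of-basis matrix $\mat{A}_\bcurve \in \R^{(\cdegree+1)\times(\cdegree+1)}$ with $\bbasis_\cdegree(t) = \mat{A}_\bcurve\, \mbasis_\cdegree(t)$ whose entries come from expanding $\binom{n}{i} t^i (1-t)^{n-i}$ in powers of $t$; its diagonal entries are $\pm\binom{n}{i}\neq 0$, so it is invertible. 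Similarly, the binomial expansion of $(t-\toffset)^i$ gives a unipotent triangular matrix $\mat{A}_\tcurve$ with $\tbasis_{\cdegree,\toffset}(t) = \mat{A}_\tcurve\, \mbasis_\cdegree(t)$, and for the monomial case $\mat{A}_\mcurve$ is simply the identity. Then from the definition \refeq{eq.BasisMatrix} we get $\bbmat_\cdegree(t_0,\ldots,t_\cdegree) = \mat{A}_\bcurve\, \mbmat_\cdegree(t_0,\ldots,t_\cdegree)$ and likewise for the Taylor matrix, so it suffices to prove $\mbmat_\cdegree(t_0,\ldots,t_\cdegree)$ is invertible.

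Next I would finish the monomial case. The matrix $\mbmat_\cdegree(t_0,\ldots,t_\cdegree) = \blist{\mbasis_\cdegree(t_0), \ldots, \mbasis_\cdegree(t_\cdegree)}$ has $(i,j)$ entry $t_j^{\,i}$ for $i,j \in \{0,\ldots,\cdegree\}$, i.e., it is (the transpose of) a Vandermonde matrix. Its determinant equals $\prod_{0 \le j < k \le \cdegree}(t_k - t_j)$, which is nonzero precisely because the $t_i$ are pairwise distinct. (Alternatively, and perhaps more in keeping with a self-contained presentation: if $\mbmat_\cdegree$ were singular there would be a nonzero vector $\mat{c} = (c_0,\ldots,c_\cdegree)$ with $\tr{\mat{c}}\,\mbasis_\cdegree(t_j) = 0$ for all $j$, meaning the nonzero polynomial $p(t) = \sum_{i=0}^\cdegree c_i t^i$ of degree at most $\cdegree$ has the $\cdegree+1$ distinct roots $t_0,\ldots,t_\cdegree$, a contradiction.) Combining with the first paragraph, $\bbmat_\cdegree$ and $\tbmat_{\cdegree,\toffset}$ are products of invertible matrices, hence invertible.

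I expect no serious obstacle here; this is an elementary lemma. The only point that needs a little care is making the change-of-basis argument rigorous rather than hand-wavy — specifically, exhibiting $\mat{A}_\bcurve$ explicitly enough to see its diagonal is nonzero. For Bernstein this means noting that the coefficient of $t^i$ in $\bpoly_{i,\cdegree}(t) = \binom{\cdegree}{i} t^i(1-t)^{\cdegree-i}$ has the leading term $\binom{\cdegree}{i} t^i$ plus higher powers, so ordering the basis appropriately makes $\mat{A}_\bcurve$ triangular with diagonal entries $\binom{\cdegree}{i}$; since $\binom{\cdegree}{i} > 0$ the product of the diagonal is nonzero and $\mat{A}_\bcurve$ is invertible. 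An alternative that avoids even this bookkeeping is to repeat the polynomial-root argument directly in each basis: since $\bpoly_{0,\cdegree},\ldots,\bpoly_{\cdegree,\cdegree}$ form a basis for the space of polynomials of degree at most $\cdegree$ (as already noted in the text following \refeq{eq.BernsteinBasisMatrix}, citing \cite{farouki_CAGD2012}), any $\mat{c}$ in the kernel of $\tr{\bbmat_\cdegree}$ would again yield a nonzero polynomial of degree $\le \cdegree$ with $\cdegree+1$ distinct roots. I would present the change-of-basis version as the main argument since it handles all three matrices uniformly and makes the structural reason transparent, and mention the direct root argument as a remark.
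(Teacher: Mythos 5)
Your proposal is correct and matches the paper's own argument: the paper likewise reduces everything to the nonsingularity of the Vandermonde (monomial) matrix for distinct nodes and then transfers it to the Bernstein and Taylor matrices via the invertible triangular change-of-basis matrices (its Lemmas \ref{lem.MonomialBernsteinTransformation} and \ref{lem.MonomialTaylorTransformation}), also mentioning the linear-independence/root-counting argument as the other route. The only difference is ordering of emphasis — the paper leads with the "these polynomials form a basis" observation and gives the Vandermonde route as the alternative, while you invert that — so there is nothing substantive to add.
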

\begin{proof}
See \refapp{app.InvertibleBasisMatrix}.
\end{proof}

\subsection{Basis Transformations of Polynomial Curves}

As expected, alternative representations of polynomial curves have their advantages (e.g., the convexity of Bezier curves, the totally ordered basis\footnote{The monomial basis satisfies $1 \!< t \!< t^2\!< \ldots \!< t^n$ and $1 \!> t \!> t^2\!> \ldots \!> t^n$.} of monomial curves, and the local approximation feature of Taylor curves).
Fortunately, one can easily perform change of polynomial basis. 

\begin{lemma} \label{lem.BasisTransformation}
\emph{(Change of Basis via Parameterwise Correspondence)}
The basis transformation matrices  between Bernstein, monomial, and Taylor bases (with a Taylor offset $\toffset \in \R$)
\begin{subequations}
\begin{align}
\bbasis_{\cdegree} (t) &= \tfbasis_{\mbasis}^{\bbasis}(\cdegree)  \mbasis_{\cdegree}(t) = \tfbasis_{\tbasis}^{\bbasis}(\cdegree, \toffset) \tbasis_{\cdegree, \toffset}(t),   
\\
\mbasis_{\cdegree} (t) &= \tfbasis_{\bbasis}^{\mbasis}(\cdegree) \bbasis_{\cdegree}(t) = \tfbasis_{\tbasis}^{\mbasis}(\cdegree, \toffset) \tbasis_{\cdegree,\toffset}(t),
\\
\tbasis_{\cdegree, \toffset}(t) &= \tfbasis_{\bbasis}^{\tbasis}(\cdegree, \toffset) \bbasis_{\cdegree}(t) = \tfbasis_{\mbasis}^{\tbasis}(\cdegree, \toffset) \mbasis_{\cdegree} (t),
\end{align}
\end{subequations}
can be computed using any pairwise distinct\reffn{fn.DistinctReals} $t_0, \ldots, t_{\cdegree} \in \R$~as
\begin{subequations}
\begin{align}
\tfbasis_{\mbasis}^{\bbasis}(\cdegree) &=  \tfbasis_{\bbasis}^{\mbasis}(\cdegree)^{-1} \!= \bbmat_{\cdegree}(t_0, \ldots, t_\cdegree)\mbmat_{\cdegree}(t_0, \ldots, t_\cdegree) ^{-1},
\\
\!\!\!\tfbasis_{\tbasis}^{\bbasis}(\cdegree, \toffset) &=  \tfbasis_{\bbasis}^{\tbasis}(\cdegree, \toffset)^{-1} \!= \bbmat_{\cdegree}(t_0, \ldots, t_\cdegree)\tbmat_{\cdegree, \toffset}\!(t_0, \ldots, t_\cdegree) ^{-1}\!\!, \!\!
\\
\!\! \!\!\!\tfbasis_{\tbasis}^{\mbasis}(\cdegree, \toffset) &=  \tfbasis_{\mbasis}^{\tbasis}(\cdegree, \toffset)^{-1} \!= \mbmat_{\cdegree}(t_0, \ldots, t_\cdegree)\tbmat_{\cdegree, \toffset}\!(t_0, \ldots, t_\cdegree) ^{-1} \!\!. \!\!
\end{align}
\end{subequations}
\end{lemma}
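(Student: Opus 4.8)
The plan is to build on two ingredients that are already available: \reflem{lem.InvertibleBasisMatrix}, which guarantees that the square sampled basis matrices $\bbmat_{\cdegree}(t_0,\ldots,t_\cdegree)$, $\mbmat_{\cdegree}(t_0,\ldots,t_\cdegree)$, $\tbmat_{\cdegree,\toffset}(t_0,\ldots,t_\cdegree)$ are invertible whenever $t_0,\ldots,t_\cdegree$ are pairwise distinct; and the standard fact that the Bernstein, monomial, and Taylor families each span the same $(\cdegree+1)$-dimensional space of polynomials of degree at most $\cdegree$ (for the Taylor family this is immediate from $\tbasis_{\cdegree,\toffset}(t) = \mbasis_{\cdegree}(t-\toffset)$, which is merely an invertible affine change of the parameter).

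First I would establish existence and uniqueness of each transformation matrix. Because every Bernstein basis polynomial lies in the span of $1,t,\ldots,t^{\cdegree}$, there is a constant (parameter-independent) matrix $C$ with $\bbasis_{\cdegree}(t) = C\,\mbasis_{\cdegree}(t)$ for all $t\in\R$, and it is unique: if $C_1\mbasis_\cdegree(t) = C_2\mbasis_\cdegree(t)$ for all $t$, then evaluating at pairwise distinct $t_0,\ldots,t_\cdegree$ gives $C_1\mbmat_\cdegree(t_0,\ldots,t_\cdegree) = C_2\mbmat_\cdegree(t_0,\ldots,t_\cdegree)$, whence $C_1=C_2$ by invertibility of $\mbmat_\cdegree$. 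The same reasoning applies to every ordered pair drawn from $\{\bbasis_\cdegree,\mbasis_\cdegree,\tbasis_{\cdegree,\toffset}\}$, which legitimizes the notation $\tfbasis_{\mbasis}^{\bbasis}(\cdegree)$, $\tfbasis_{\bbasis}^{\mbasis}(\cdegree)$, and so on used in the statement.

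Next I would derive the closed forms and the reciprocal relations. Evaluating $\bbasis_{\cdegree}(t) = \tfbasis_{\mbasis}^{\bbasis}(\cdegree)\,\mbasis_{\cdegree}(t)$ at $t=t_0,\ldots,t_\cdegree$ and collecting the columns gives $\bbmat_{\cdegree}(t_0,\ldots,t_\cdegree) = \tfbasis_{\mbasis}^{\bbasis}(\cdegree)\,\mbmat_{\cdegree}(t_0,\ldots,t_\cdegree)$; right-multiplying by $\mbmat_{\cdegree}(t_0,\ldots,t_\cdegree)^{-1}$ yields the stated formula, and in particular shows the product $\bbmat_\cdegree\,\mbmat_\cdegree^{-1}$ is independent of the chosen nodes since it equals the unique matrix $\tfbasis_{\mbasis}^{\bbasis}(\cdegree)$. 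For the inverse relation, chaining $\mbasis_\cdegree(t) = \tfbasis_{\bbasis}^{\mbasis}(\cdegree)\bbasis_\cdegree(t) = \tfbasis_{\bbasis}^{\mbasis}(\cdegree)\tfbasis_{\mbasis}^{\bbasis}(\cdegree)\mbasis_\cdegree(t)$ for all $t$ and invoking the uniqueness argument above (applied to $\tfbasis_{\bbasis}^{\mbasis}(\cdegree)\tfbasis_{\mbasis}^{\bbasis}(\cdegree)$ versus the identity) gives $\tfbasis_{\bbasis}^{\mbasis}(\cdegree) = \tfbasis_{\mbasis}^{\bbasis}(\cdegree)^{-1}$. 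I would then observe that the Bernstein--Taylor and monomial--Taylor formulas follow by the identical computation, replacing $\mbmat_\cdegree$ by $\tbmat_{\cdegree,\toffset}$ as appropriate.

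The routine parts are the column-stacking and the matrix algebra; the only point needing care is the uniqueness / parameter-independence argument, i.e. that a polynomial-basis identity valid for all $t$ is already pinned down by its values at any $\cdegree+1$ distinct nodes, which is precisely what \reflem{lem.InvertibleBasisMatrix} supplies, together with treating the Taylor family as a genuine basis via the shift $\tbasis_{\cdegree,\toffset}(t) = \mbasis_\cdegree(t-\toffset)$. I would write the Bernstein--monomial case out in full and state that the remaining two cases are verbatim repetitions.
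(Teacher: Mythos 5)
Your proposal is correct and follows essentially the same route as the paper's proof: evaluate the parameterwise identity at pairwise distinct nodes $t_0,\ldots,t_\cdegree$ to get $\bbmat_{\cdegree}=\tfbasis_{\mbasis}^{\bbasis}(\cdegree)\,\mbmat_{\cdegree}$, invert via \reflem{lem.InvertibleBasisMatrix}, and note that the remaining basis pairs are handled identically. The only difference is that you spell out the existence/uniqueness of the transformation matrices and the reciprocal relations $\tfbasis_{\bbasis}^{\mbasis}(\cdegree)=\tfbasis_{\mbasis}^{\bbasis}(\cdegree)^{-1}$ explicitly, which the paper leaves implicit ("it follows by definition"); this is a welcome tightening rather than a deviation.
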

\begin{proof}
See \refapp{app.BasisTransformation}.
\end{proof}

\noindent It is useful to highlight that the elements of the basis transformation matrices between monomial and Bernstein (Taylor, respectively) bases can be explicitly determined and these matrices are upper (lower, respectively) triangular with positive diagonal elements, see \refapp{app.ExplicitBasisTransformation} for details.

\begin{lemma} \label{lem.PolynomialCurveEquivalence}
\emph{(Polynomial Curve Equivalence)} Bezier, monomial and Taylor curves of degree $\cdegree \in \N$ (associated with a Taylor offset $\toffset \in \R$) are equivalent, i.e., for any $t \in \R$
\begin{subequations}
\begin{align}
\bcurve_{\bpoint_0, \ldots, \bpoint_\cdegree}(t) &= \mcurve_{\mpoint_0, \ldots, \mpoint_\cdegree}(t) = \tcurve_{\tpoint_0, \ldots, \tpoint_\cdegree}(t, \toffset), 
\\
\bpmat_{\cdegree} \bbmat_{\cdegree}(t) &= \mpmat_{\cdegree} \mbmat_{\cdegree}(t) = \tpmat_{\cdegree} \tbmat_{\cdegree, \toffset}(t),
\end{align}
\end{subequations}
if and only if their respective control point matrices $\bpmat_{\cdegree}=\blist{\bpoint_0, \ldots, \bpoint_{\cdegree}}$, $\mpmat_{\cdegree} = \blist{\mpoint_0, \ldots, \mpoint_\cdegree}$, and $\tpmat_{\cdegree}=\blist{\tpoint_0, \ldots, \tpoint_\cdegree}$  are related to each other by the associated basis transformations as 
\begin{subequations}
\begin{align}
\bpmat_{\cdegree} & =  \mpmat_{\cdegree} \tfbasis_{\bbasis}^{\mbasis} (\cdegree) = \tpmat_{\cdegree} \tfbasis_{\bbasis}^{\tbasis}(\cdegree, \toffset), 
\\
\mpmat_{\cdegree} &= \bpmat_{\cdegree} \tfbasis_{\mbasis}^{\bbasis} (\cdegree) =  \tpmat_{\cdegree} \tfbasis_{\tbasis}^{\mbasis}(\cdegree, \toffset), 
\\
\tpmat_{\cdegree} &= \bpmat_{\cdegree} \tfbasis_{\bbasis}^{\tbasis} (\cdegree, \toffset) =  \mpmat_{\cdegree} \tfbasis_{\mbasis}^{\tbasis}(\cdegree, \toffset). 
\end{align}
\end{subequations}
\end{lemma}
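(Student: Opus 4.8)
The plan is to prove the curve equivalence $\bcurve_{\bpoint_0, \ldots, \bpoint_\cdegree}(t) = \mcurve_{\mpoint_0, \ldots, \mpoint_\cdegree}(t) = \tcurve_{\tpoint_0, \ldots, \tpoint_\cdegree}(t, \toffset)$ by reducing it to a statement about basis vectors and then exploiting \reflem{lem.BasisTransformation} together with the linear independence of each polynomial basis. First I would write each curve in its matrix form, $\bcurve_{\bpoint_0, \ldots, \bpoint_\cdegree}(t) = \bpmat_\cdegree \bbasis_\cdegree(t)$, $\mcurve_{\mpoint_0, \ldots, \mpoint_\cdegree}(t) = \mpmat_\cdegree \mbasis_\cdegree(t)$, and $\tcurve_{\tpoint_0, \ldots, \tpoint_\cdegree}(t, \toffset) = \tpmat_\cdegree \tbasis_{\cdegree, \toffset}(t)$, so that the claimed identity of curves for all $t$ becomes the claimed identity of the matrix-valued functions $t \mapsto \bpmat_\cdegree \bbasis_\cdegree(t)$, $t \mapsto \mpmat_\cdegree \mbasis_\cdegree(t)$, $t \mapsto \tpmat_\cdegree \tbasis_{\cdegree, \toffset}(t)$.

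For the ``if'' direction, I would substitute the control-point relations into the matrix forms. For instance, assuming $\bpmat_\cdegree = \mpmat_\cdegree \tfbasis_{\bbasis}^{\mbasis}(\cdegree)$, I compute $\bcurve_{\bpoint_0, \ldots, \bpoint_\cdegree}(t) = \bpmat_\cdegree \bbasis_\cdegree(t) = \mpmat_\cdegree \tfbasis_{\bbasis}^{\mbasis}(\cdegree) \bbasis_\cdegree(t) = \mpmat_\cdegree \mbasis_\cdegree(t) = \mcurve_{\mpoint_0, \ldots, \mpoint_\cdegree}(t)$, where the third equality is exactly the change-of-basis identity $\mbasis_{\cdegree}(t) = \tfbasis_{\bbasis}^{\mbasis}(\cdegree) \bbasis_{\cdegree}(t)$ from \reflem{lem.BasisTransformation}. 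The remaining pairings (Bernstein--Taylor, monomial--Taylor) follow identically by substituting the corresponding basis-transformation identities, and the matrix-form equalities $\bpmat_{\cdegree} \bbmat_{\cdegree}(t) = \mpmat_{\cdegree} \mbmat_{\cdegree}(t) = \tpmat_{\cdegree} \tbmat_{\cdegree, \toffset}(t)$ follow by stacking these pointwise identities across the columns $t_0, \ldots, t_m$.

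For the ``only if'' direction, I would use that the Bernstein polynomials $\bpoly_{0,\cdegree}(t), \ldots, \bpoly_{\cdegree,\cdegree}(t)$ are linearly independent as functions of $t$ (stated in the excerpt), and likewise for the monomial and Taylor bases; equivalently, the square basis matrix $\bbmat_\cdegree(t_0, \ldots, t_\cdegree)$ is invertible by \reflem{lem.InvertibleBasisMatrix} for pairwise distinct nodes. From $\bpmat_\cdegree \bbasis_\cdegree(t) = \mpmat_\cdegree \mbasis_\cdegree(t)$ for all $t$, I evaluate at $\cdegree+1$ pairwise distinct reals $t_0, \ldots, t_\cdegree$ to get $\bpmat_\cdegree \bbmat_\cdegree(t_0, \ldots, t_\cdegree) = \mpmat_\cdegree \mbmat_\cdegree(t_0, \ldots, t_\cdegree)$; substituting $\mbmat_\cdegree(t_0, \ldots, t_\cdegree) = \tfbasis_{\bbasis}^{\mbasis}(\cdegree)\bbmat_\cdegree(t_0, \ldots, t_\cdegree)$ from \reflem{lem.BasisTransformation} and right-multiplying by the inverse of $\bbmat_\cdegree(t_0, \ldots, t_\cdegree)$ yields $\bpmat_\cdegree = \mpmat_\cdegree \tfbasis_{\bbasis}^{\mbasis}(\cdegree)$, and the other control-point relations follow by the same argument, using that the transformation matrices are mutually inverse.

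I do not anticipate a serious obstacle; this is essentially bookkeeping once \reflem{lem.BasisTransformation} and \reflem{lem.InvertibleBasisMatrix} are in hand. The one point requiring a little care is the logical structure: the cleanest presentation is to prove the cycle of implications ``control-point relations $\Rightarrow$ curve equality $\Rightarrow$ (via node evaluation and matrix inversion) control-point relations,'' and to note that each listed control-point relation is equivalent to the others because the basis-transformation matrices appearing in \reflem{lem.BasisTransformation} come in mutually inverse pairs (so, e.g., $\bpmat_\cdegree = \mpmat_\cdegree \tfbasis_{\bbasis}^{\mbasis}(\cdegree)$ holds iff $\mpmat_\cdegree = \bpmat_\cdegree \tfbasis_{\mbasis}^{\bbasis}(\cdegree)$). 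A secondary bookkeeping point is that the transitivity of the three pairwise equivalences (Bernstein--monomial, monomial--Taylor, Bernstein--Taylor) is automatic since composing the relevant transformation matrices gives the remaining one.
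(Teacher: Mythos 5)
Your proposal is correct and follows essentially the same route as the paper's own proof: write the curves in matrix form, invoke the change-of-basis identities of \reflem{lem.BasisTransformation}, and for the converse evaluate at $\cdegree+1$ pairwise distinct parameters and cancel the invertible Bernstein basis matrix via \reflem{lem.InvertibleBasisMatrix}. The only difference is presentational — you separate the two implications explicitly, whereas the paper compresses them into a single reversible chain of equalities — so there is nothing substantive to flag.
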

\begin{proof}
See \refapp{app.PolynomialCurveEquivalence}.
\end{proof}

\subsection{Reparametrization of Polynomial Curves}

A common polynomial curve operation is the affine reparametrization from  one parameter interval to another; for example, for proper time allocation in order to satisfy  control (e.g., velocity, acceleration, jerk) constraints \cite{gao_wu_lin_shen_ICRA2018, gao_wu_pan_zhou_shen_IROS2018}.  

\begin{lemma}\label{lem.PolynomialReparametrization}
\emph{(Polynomial Curve Reparametrization)} B\'ezier, monomial and Taylor curves  of degree $\cdegree \in \N$ with respective control point matrices  $\bpmat_{\cdegree}=\blist{\bpoint_0, \ldots, \bpoint_{\cdegree}}$, $\mpmat_{\cdegree} = \blist{\mpoint_0, \ldots, \mpoint_\cdegree}$, and $\tpmat_{\cdegree}=\blist{\tpoint_0, \ldots, \tpoint_\cdegree}$ (and a Taylor offset $\toffset \in \R$) can be affinely reparametrized from interval $[a,b]$ to $[c,d]$ (with $a < b$ and $c < d$) as
\begin{subequations}
\begin{align}
\bcurve_{\widehat{\bpoint}_0, \ldots, \widehat{\bpoint}_\cdegree}(t) &= \bcurve_{\bpoint_0, \ldots, \bpoint_\cdegree} \plist{\tfrac{b-a}{d-c} t + \tfrac{a d - b c}{d - c}},
\\
\mcurve_{\widehat{\mpoint}_0, \ldots, \widehat{\mpoint}_\cdegree}(t) &= \mcurve_{\mpoint_0, \ldots, \mpoint_\cdegree} \plist{\tfrac{b-a}{d-c} t + \tfrac{a d - b c}{d - c}},
\\
\tcurve_{\widehat{\tpoint}_0, \ldots, \widehat{\tpoint}_\cdegree}(t, \widehat{\toffset}) &= \tcurve_{\tpoint_0, \ldots, \tpoint_\cdegree} \plist{\tfrac{b-a}{d-c} t + \tfrac{a d - b c}{d - c}, \toffset},
\end{align}
\end{subequations}
with the corresponding reparametrized control point matrices $\widehat{\bpmat}_{\cdegree} = \blist{\widehat{\bpoint}_0, \ldots, \widehat{\bpoint}_\cdegree}$, $\widehat{\mpmat}_{\cdegree} = \blist{\widehat{\mpoint}_0, \ldots, \widehat{\mpoint}_\cdegree}$, and $\widehat{\tpmat}_{\cdegree} = \blist{\widehat{\tpoint}_0, \ldots, \widehat{\tpoint}_\cdegree}$ that are given by 
\begin{subequations}
\begin{align}
\widehat{\bpmat}_{\cdegree} &= \bpmat_{\cdegree} \bbmat_{\cdegree}(t_0, \ldots, t_\cdegree) \bbmat_{\cdegree}(\widehat{t}_0, \ldots, \widehat{t}_{\cdegree})^{-1},
\\
\widehat{\mpmat}_{\cdegree} &= \mpmat_{\cdegree} \mbmat_{\cdegree}(t_0, \ldots, t_\cdegree) \mbmat_{\cdegree}(\widehat{t}_0, \ldots, \widehat{t}_{\cdegree})^{-1},
\\
\widehat{\tpmat}_{\cdegree} &= \tpmat_{\cdegree} \tbmat_{\cdegree,\toffset}(t_0, \ldots, t_\cdegree) \tbmat_{\cdegree, \widehat{\toffset}}(\widehat{t}_0, \ldots, \widehat{t}_{\cdegree})^{-1}, 
\end{align}
\end{subequations}
where  $t_0, \ldots, t_\cdegree \in \R$ are arbitrary pairwise distinct reals\reffn{fn.DistinctReals}, and $\widehat{t}_i = \tfrac{d -c}{b-a} t_i - \tfrac{ a d - b c}{ b- a}$ for $i = 0, \ldots, \cdegree$, and $\widehat{\toffset} = \tfrac{d -c}{b-a} \toffset - \tfrac{ a d - b c}{ b- a}$.
\end{lemma}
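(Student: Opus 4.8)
The plan is to use the single structural fact that a (nondegenerate) affine change of parameter is a linear automorphism of the space of polynomials of degree $\leq\cdegree$, so that reparametrizing any of the three curve families again produces a curve of the same degree, whose control points are then pinned down by a parameterwise correspondence exactly in the spirit of \reflem{lem.BasisTransformation}. Concretely, write $\phi(t) \ldf \tfrac{b-a}{d-c} t + \tfrac{ad-bc}{d-c}$ for the affine map appearing inside $\bcurve_{\bpoint_0,\ldots,\bpoint_\cdegree}$, $\mcurve$, and $\tcurve$ on the right-hand sides. A one-line computation gives $\phi(c)=a$ and $\phi(d)=b$, so $\phi$ is the unique affine bijection sending $[c,d]$ onto $[a,b]$; its inverse is $\psi(s) \ldf \tfrac{d-c}{b-a} s - \tfrac{ad-bc}{b-a}$, with nonzero slope $\lambda \ldf \tfrac{d-c}{b-a}$, and one records that $\widehat{t}_i = \psi(t_i)$ and $\widehat{\toffset} = \psi(\toffset)$, as well as $\phi(\widehat{t}_i)=t_i$ and $\phi(\widehat{\toffset})=\toffset$.

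Next I would establish that degree is preserved. Since $\phi$ is a nonconstant affine map, each component of $t\mapsto\bbasis_\cdegree(\phi(t))$ is a polynomial of degree $\leq\cdegree$ in $t$; because $\bpoly_{0,\cdegree},\ldots,\bpoly_{\cdegree,\cdegree}$ is a basis of the $(\cdegree+1)$-dimensional space of such polynomials, there is a unique constant matrix $\mat{M}_{\bbasis}\in\R^{(\cdegree+1)\times(\cdegree+1)}$ with $\bbasis_\cdegree(\phi(t)) = \mat{M}_{\bbasis}\,\bbasis_\cdegree(t)$ for all $t$. The same reasoning gives a unique $\mat{M}_{\mbasis}$ with $\mbasis_\cdegree(\phi(t)) = \mat{M}_{\mbasis}\,\mbasis_\cdegree(t)$, and, using $\phi(t)-\toffset = \phi(t)-\phi(\widehat{\toffset}) = \tfrac{1}{\lambda}(t-\widehat{\toffset})$, a unique (in fact diagonal) $\mat{M}_{\tbasis}=\diag(1,\lambda^{-1},\ldots,\lambda^{-\cdegree})$ with $\tbasis_{\cdegree,\toffset}(\phi(t)) = \mat{M}_{\tbasis}\,\tbasis_{\cdegree,\widehat{\toffset}}(t)$ for all $t$. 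Setting $\widehat{\bpmat}_\cdegree \ldf \bpmat_\cdegree\mat{M}_{\bbasis}$, $\widehat{\mpmat}_\cdegree \ldf \mpmat_\cdegree\mat{M}_{\mbasis}$, $\widehat{\tpmat}_\cdegree \ldf \tpmat_\cdegree\mat{M}_{\tbasis}$ then gives $\widehat{\bpmat}_\cdegree\bbasis_\cdegree(t) = \bpmat_\cdegree\bbasis_\cdegree(\phi(t)) = \bcurve_{\bpoint_0,\ldots,\bpoint_\cdegree}(\phi(t))$, and likewise for the monomial and Taylor cases; uniqueness of the reparametrized control-point matrices follows because a degree-$\leq\cdegree$ curve is determined by its control points.

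It then remains to put $\mat{M}$ into the stated form. Fix pairwise distinct $t_0,\ldots,t_\cdegree\in\R$; since $\psi$ is injective, the reals $\widehat{t}_i=\psi(t_i)$, $i=0,\ldots,\cdegree$, are pairwise distinct, and $\phi(\widehat{t}_i)=t_i$. Evaluating $\bbasis_\cdegree(\phi(t))=\mat{M}_{\bbasis}\bbasis_\cdegree(t)$ at $t=\widehat{t}_i$ and stacking over $i$ (using the basis-matrix notation of \refeq{eq.BasisMatrix}) gives $\bbmat_\cdegree(t_0,\ldots,t_\cdegree) = \mat{M}_{\bbasis}\,\bbmat_\cdegree(\widehat{t}_0,\ldots,\widehat{t}_\cdegree)$; by \reflem{lem.InvertibleBasisMatrix} the factor $\bbmat_\cdegree(\widehat{t}_0,\ldots,\widehat{t}_\cdegree)$ is invertible, whence $\mat{M}_{\bbasis} = \bbmat_\cdegree(t_0,\ldots,t_\cdegree)\bbmat_\cdegree(\widehat{t}_0,\ldots,\widehat{t}_\cdegree)^{-1}$ and therefore $\widehat{\bpmat}_\cdegree = \bpmat_\cdegree\bbmat_\cdegree(t_0,\ldots,t_\cdegree)\bbmat_\cdegree(\widehat{t}_0,\ldots,\widehat{t}_\cdegree)^{-1}$, which is the claim. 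The monomial formula follows verbatim with $\mbmat_\cdegree$ in place of $\bbmat_\cdegree$. For the Taylor formula one evaluates $\tbasis_{\cdegree,\toffset}(\phi(\widehat{t}_i)) = \tbasis_{\cdegree,\toffset}(t_i) = \mat{M}_{\tbasis}\,\tbasis_{\cdegree,\widehat{\toffset}}(\widehat{t}_i)$, i.e.\ $\tbmat_{\cdegree,\toffset}(t_0,\ldots,t_\cdegree) = \mat{M}_{\tbasis}\,\tbmat_{\cdegree,\widehat{\toffset}}(\widehat{t}_0,\ldots,\widehat{t}_\cdegree)$, and inverts the last factor (again via \reflem{lem.InvertibleBasisMatrix}, applied with offset $\widehat{\toffset}$), yielding $\widehat{\tpmat}_\cdegree = \tpmat_\cdegree\tbmat_{\cdegree,\toffset}(t_0,\ldots,t_\cdegree)\tbmat_{\cdegree,\widehat{\toffset}}(\widehat{t}_0,\ldots,\widehat{t}_\cdegree)^{-1}$.

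The substantive point — as opposed to the bookkeeping — is the second paragraph: one must argue that composing with $\phi$ does not raise the degree, so that a \emph{constant} transformation matrix exists (equivalently, that all three bases are closed under affine reparametrization), which is exactly what makes the finite parameterwise identification in the third paragraph valid for \emph{all} $t$ rather than only at the sample nodes $\widehat{t}_i$. The only extra wrinkle is the Taylor case, where the expansion point must itself be transported by $\psi$ (hence $\widehat{\toffset}=\psi(\toffset)$ and the factor $\lambda$ relating $\tbasis_{\cdegree,\widehat{\toffset}}$ to $\tbasis_{\cdegree,\toffset}\circ\phi$); once noticed, it is absorbed cleanly into the $\tbmat$ matrices and needs no separate computation, and everything else is the routine affine algebra verified in the first paragraph.
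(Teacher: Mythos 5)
Your proof is correct and follows essentially the same route as the paper's: identify the reparametrized curve with the original at the $\cdegree+1$ transported nodes $\widehat{t}_i=\psi(t_i)$, stack these identities into the basis matrices of \refeq{eq.BasisMatrix}, and invert via \reflem{lem.InvertibleBasisMatrix}. The only difference is that you explicitly justify the existence of a constant transformation matrix (closure of each basis under affine reparametrization, and the transport of the Taylor offset), a step the paper's proof leaves implicit under ``by definition''; this makes your write-up slightly more complete but not a different argument.
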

\begin{proof}
See \refapp{app.PolynomialReparametrization}. 
\end{proof}

\section{B\'ezier Metrics}
\label{sec.BezierMetric}

B\'ezier curves can be compared using various distances \cite{farin_CurvesSurfaces2002}. 
In this section, we particularly consider B\'ezier distances that define a true metric over the space of B\'ezier curves and can be computed efficiently in terms of B\'ezier control points, which is critical for computationally efficient and accurate B\'ezier approximation later in \refsec{sec.BezierAdaptiveApproximation}. 
 
\begin{definition}\label{def.BezierMetric} (\emph{B\'ezier Metric})
A real-valued distance measure $\bdist(\bcurve_{\bpoint_0, \ldots, \bpoint_{n}}, \bcurve_{\mpoint_0, \ldots, \mpoint_m})$ between two B\'ezier curves $\bcurve_{\bpoint_0, \ldots, \bpoint_n}(t)$ and $\bcurve_{\mpoint_0, \ldots, \mpoint_m}$ (t)  over the unit interval $[0,1]$  is a true metric~if 
\begin{enumerate}[i)]
\item it is nonnegative, i.e.,
\begin{align}
 \bdist(\bcurve_{\bpoint_0, \ldots, \bpoint_{n}}, \bcurve_{\mpoint_0, \ldots, \mpoint_m}) \geq 0, \nonumber
\end{align}
 
\item it is zero only for Bezier curves that are identical, i.e.,
\begin{align}
&\bdist(\bcurve_{\bpoint_0, \ldots, \bpoint_{n}}, \bcurve_{\mpoint_0, \ldots, \mpoint_{m}}) = 0 \nonumber \\
&\hspace{18mm} \Longleftrightarrow \bcurve_{\bpoint_0, \ldots, \bpoint_{n}}(t) = \bcurve_{\mpoint_0, \ldots, \mpoint_{m}}(t) \quad \forall t \in [0,1], \nonumber
\end{align}

\item it is symmetric, i.e.,  
\begin{align}
d(\bcurve_{\bpoint_0, \ldots, \bpoint_{n}}, \bcurve_{\mpoint_0, \ldots,\mpoint_{m}}) = d(\bcurve_{\mpoint_0, \ldots, \mpoint_m},\bcurve_{\bpoint_0, \ldots, \bpoint_{n}}), \nonumber
\end{align}

\item it satisfies the triangle inequality, i.e., 
\begin{align}
&d(\bcurve_{\bpoint_0, \ldots, \bpoint_{n}}, \bcurve_{\mpoint_0, \ldots, \mpoint_{m}})  \nonumber \\
&\hspace{15mm}\leq  d(\bcurve_{\bpoint_0, \ldots, \bpoint_{n}}, \bcurve_{\vect{r}_0, \ldots, \vect{r}_{k}}) + d(\bcurve_{\vect{r}_0, \ldots, \vect{r}_{k}}, \bcurve_{\mpoint_0, \ldots, \mpoint_{m}}). \nonumber
\end{align}
\end{enumerate}
\end{definition}

\subsection{L2-Norm \& Frobenius-Norm Distances of B\'ezier Curves}

A widely used B\'ezier metric is the L2-norm distance which can be analytically computed in terms of B\'ezier control points \cite{lee_park_BAMS1997, eck_CAD1995}.  
\begin{definition}\label{def.L2Distance}
(\emph{B\'ezier L2-Norm Distance}) The L2-norm distance of two B\'ezier curves $\bcurve_{\bpoint_0, \ldots, \bpoint_n}(t)$ and $\bcurve_{\mpoint_0, \ldots, \mpoint_m}(t)$ over the unit interval $[0, 1]$ is defined as
\begin{align}\label{eq.L2Distance}
\bdistL(\bcurve_{\bpoint_0, \ldots, \bpoint_n}, \bcurve_{\mpoint_0, \ldots, \mpoint_m})\! := \!\!\plist{\!\int\nolimits_{0}^{1} \!\!\!\!\norm{\bcurve_{\bpoint_0, \ldots, \bpoint_n}\!(t) \! -\! \bcurve_{\mpoint_0, \ldots, \mpoint_m}\!(t)\!}^2 \diff t \!\!\!}^{\!\!\!\frac{1}{2}}\!\!,
\end{align}
where $\norm{.}$ denotes the L2 (a.k.a. Euclidean) norm of vectors.
\end{definition}

\begin{proposition} \label{prop.L2Distance}
(\emph{Analytic Form of Bezier L2-Norm Distance}) The L2-norm distance of $\cdegree^{\text{th}}$-order B\'ezier curves $\bcurve_{\bpoint_0, \ldots, \bpoint_\cdegree}(t)$ and $\bcurve_{\mpoint_0, \ldots, \mpoint_\cdegree}(t)$,  with respective control point matrices $\bpmat_\cdegree=[\bpoint_0, \ldots, \bpoint_\cdegree]$ and $\mpmat_{\cdegree} = [\mpoint_0, \ldots, \mpoint_\cdegree]$, is explicitly given by
\begin{align}\label{eq.L2Distance}
\bdistL(\bcurve_{\bpoint_0, \ldots, \bpoint_\cdegree}, \bcurve_{\mpoint_0, \ldots, \mpoint_\cdegree})  = \trace\plist{\!\!(\bpmat_{\cdegree}-\mpmat_{\cdegree})\bwmat_{\cdegree} \tr{(\bpmat_{\cdegree} - \mpmat_{\cdegree})\!}\!}^{\!\!\frac{1}{2}}\!, \!\!\!
\end{align} 
where $\trace$ and $\tr{(.)}$ denote the trace and transpose operators, respectively,  and the B\'ezier L2-norm  weight matrix $\bwmat_{\cdegree} \in \R^{(\cdegree+1)\times (\cdegree+1)}$ is defined as
\begin{align}\label{eq.L2NormWeight}
\quad \quad \blist{\bwmat_\cdegree}_{i+1, j+1} &:= \scalebox{1.25}{$\frac{1}{2n+1}$}\frac{\binom{\cdegree}{i}\binom{\cdegree}{j}}{\binom{2\cdegree}{i+j}}, \quad  \forall i,j \in \clist{0, \ldots, \cdegree}. \!\!\!
\end{align}
%
\end{proposition}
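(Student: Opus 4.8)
The plan is to expand the integrand in \refeq{eq.L2Distance}, exploit the fact that the difference of two B\'ezier curves of the same degree is again a B\'ezier curve (by linearity in the control points), and then reduce everything to a single integral of products of Bernstein polynomials. Concretely, set $\emat_{\cdegree} := \bpmat_{\cdegree} - \mpmat_{\cdegree} = \blist{\bpoint_0 - \mpoint_0, \ldots, \bpoint_{\cdegree} - \mpoint_{\cdegree}}$. Since $\bcurve_{\bpoint_0, \ldots, \bpoint_\cdegree}(t) - \bcurve_{\mpoint_0, \ldots, \mpoint_\cdegree}(t) = \emat_{\cdegree} \bbasis_{\cdegree}(t)$ by \refeq{eq.BezierCurve}, the squared Euclidean norm inside the integral is $\norm{\emat_{\cdegree}\bbasis_{\cdegree}(t)}^2 = \tr{\bbasis_{\cdegree}(t)} \tr{\emat_{\cdegree}} \emat_{\cdegree} \bbasis_{\cdegree}(t) = \trace\plist{\emat_{\cdegree} \bbasis_{\cdegree}(t) \tr{\bbasis_{\cdegree}(t)} \tr{\emat_{\cdegree}}}$, using the cyclic property of the trace. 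Pulling the (constant) matrix $\emat_{\cdegree}$ and $\tr{\emat_{\cdegree}}$ outside the integral gives
\begin{align}
\bdistL(\bcurve_{\bpoint_0, \ldots, \bpoint_\cdegree}, \bcurve_{\mpoint_0, \ldots, \mpoint_\cdegree})^2 = \trace\plist{\emat_{\cdegree} \plist{\int_0^1 \bbasis_{\cdegree}(t) \tr{\bbasis_{\cdegree}(t)} \diff t} \tr{\emat_{\cdegree}}}, \nonumber
\end{align}
so it remains only to identify $\bwmat_{\cdegree} = \int_0^1 \bbasis_{\cdegree}(t)\tr{\bbasis_{\cdegree}(t)}\diff t$ entrywise.

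The second and main step is the entrywise computation $\blist{\bwmat_{\cdegree}}_{i+1,j+1} = \int_0^1 \bpoly_{i,\cdegree}(t)\bpoly_{j,\cdegree}(t)\diff t$. Using \refeq{eq.BernsteinPolynomial}, the product is $\binom{\cdegree}{i}\binom{\cdegree}{j} t^{i+j}(1-t)^{2\cdegree - i - j}$, which is $\binom{\cdegree}{i}\binom{\cdegree}{j} / \binom{2\cdegree}{i+j}$ times the Bernstein polynomial $\bpoly_{i+j,2\cdegree}(t)$. The integral of a degree-$m$ Bernstein polynomial over $[0,1]$ is the Beta-function value $\int_0^1 \binom{m}{k} t^k (1-t)^{m-k}\diff t = \binom{m}{k} B(k+1, m-k+1) = \binom{m}{k}\frac{k!\,(m-k)!}{(m+1)!} = \frac{1}{m+1}$, independent of $k$. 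Applying this with $m = 2\cdegree$ and $k = i+j$ yields $\int_0^1 \bpoly_{i,\cdegree}(t)\bpoly_{j,\cdegree}(t)\diff t = \frac{1}{2\cdegree+1}\frac{\binom{\cdegree}{i}\binom{\cdegree}{j}}{\binom{2\cdegree}{i+j}}$, which is exactly \refeq{eq.L2NormWeight}. Combining with the first step and taking square roots gives \refeq{eq.L2Distance}.

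The only genuine obstacle is the Beta-integral identity for Bernstein polynomials, and even that is standard; the rest is bookkeeping (linearity, the cyclic trace identity, and pulling constant matrices through the integral). One should be mildly careful that the matrix $\int_0^1 \bbasis_{\cdegree}(t)\tr{\bbasis_{\cdegree}(t)}\diff t$ is well defined and finite (it is, since each entry is an integral of a bounded polynomial over a bounded interval), and that $\bwmat_{\cdegree}$ as defined is symmetric positive semidefinite so the square root in \refeq{eq.L2Distance} makes sense — this follows immediately from the Gram-matrix representation $\bwmat_{\cdegree} = \int_0^1 \bbasis_{\cdegree}(t)\tr{\bbasis_{\cdegree}(t)}\diff t$, and in fact it is positive definite because the Bernstein polynomials are linearly independent (as noted after \refeq{eq.BernsteinBasisMatrix}). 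I would state the Gram-matrix observation explicitly, since it also clarifies why $\bdistL$ is a genuine metric (it vanishes iff $\emat_{\cdegree} = \mat{0}$, i.e., iff the curves coincide).
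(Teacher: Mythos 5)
Your proposal is correct and follows essentially the same route as the paper's proof: both reduce the squared L2 distance to the entries $\int_0^1 \bpoly_{i,\cdegree}(t)\bpoly_{j,\cdegree}(t)\,\diff t$ and evaluate them via the product identity $\bpoly_{i,\cdegree}\bpoly_{j,\cdegree} = \tfrac{\binom{\cdegree}{i}\binom{\cdegree}{j}}{\binom{2\cdegree}{i+j}}\bpoly_{i+j,2\cdegree}$ together with $\int_0^1 \bpoly_{k,m}(t)\,\diff t = \tfrac{1}{m+1}$ (which you derive from the Beta integral while the paper cites it directly). Your Gram-matrix phrasing $\bwmat_{\cdegree} = \int_0^1 \bbasis_{\cdegree}(t)\tr{\bbasis_{\cdegree}(t)}\,\diff t$ and the positive-definiteness remark are a nice packaging of the same computation the paper writes as a double sum over control-point differences.
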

\begin{proof}
See \refapp{app.L2Distance}.
\end{proof}

\noindent Hence, the L2-norm distance of B\'ezier curves is a weighted Frobenius norm of their control point difference, which motivates another standard B\'ezier metric.%
\footnote{Similarly, one can define alternative matrix-norm-induced distance metrics for B\'ezier curves; however, we are particularly, interested in L2-norm and Frobenius-norm distances because they are strongly related with the optimal least squares reduction of B\'ezier curves discussed in \refsec{sec.DegreeReduction}. }
\begin{definition}\label{def.FrobeniusDistance} (\emph{B\'ezier Frobenius-Norm Distance})
The Frobenius-norm distance of  $\cdegree^{\text{th}}$-order B\'ezier curves $\bcurve_{\bpoint_0, \ldots, \bpoint_{\cdegree}}(t)$ and $\bcurve_{\mpoint_0, \ldots, \mpoint_{\cdegree}}(t)$,  with respective control point matrices $\bpmat_{\cdegree}=[\bpoint_0, \ldots, \bpoint_\cdegree]$ and $\mpmat_{\cdegree} = [\mpoint_0, \ldots, \mpoint_\cdegree]$, is defined~as 
\begin{subequations} \label{eq.FrobeniusDistance}
\begin{align}
\bdistF(\bcurve_{\bpoint_0, \ldots, \bpoint_\cdegree}, \bcurve_{\mpoint_0, \ldots, \mpoint_\cdegree}) &:=  \norm{\bpmat_{\cdegree} - \mpmat_{\cdegree}}_F,
\\
& = \trace\plist{(\bpmat_{\cdegree} - \mpmat_{\cdegree}) \tr{(\bpmat_{\cdegree} - \mpmat_{\cdegree})}}^{\frac{1}{2}},
 \\
&= \sqrt{\sum_{i=0}^{\cdegree}\norm{\bpoint_i - \mpoint_i}^2},
\end{align}
\end{subequations}
where $\norm{.}_{F}$ denotes the Frobenius norm of matrices.
\end{definition}
\noindent It is useful to remark that any distance measure of $\cdegree^{\text{th}}$-order B\'ezier curves can be adapted to handle B\'ezier curves of different orders  via degree elevation (see \refsec{sec.DegreeElevation}).\footnote{\label{fn.BezierDistanceElevation}Let $d(\bcurve_{\bpoint_0, \ldots, \bpoint_{\cdegree}}, \bcurve_{\mpoint_0, \ldots, \mpoint_{\cdegree}})$ be a distance measure for $\cdegree^{\text{th}}$-order B\'ezier curves. It can be extended to any arbitrary B\'ezier curves  $\bcurve_{\bpoint_0, \ldots, \bpoint_{n}}$ and $\bcurve_{\mpoint_0, \ldots, \mpoint_{m}}$ as
\begin{align*}
d(\bcurve_{\bpoint_0, \ldots, \bpoint_{n}}, \bcurve_{\mpoint_0, \ldots, \mpoint_{m}}):= d(\bcurve_{[\bpoint_0, \ldots, \bpoint_{n}] \emat(n, \max(n,m))}, \bcurve_{[\mpoint_0, \ldots, \mpoint_{m}] \emat(n, \max(n,m))})
\end{align*}
where $\emat(n,m)$ denotes the elevation matrix defined in \refdef{def.DegreeElevation}.
}

\begin{figure*}[b]
\centering
\begin{tabular}{@{}c@{}c@{}c@{}c@{}c@{}c@{}}
\includegraphics[width=0.165\textwidth]{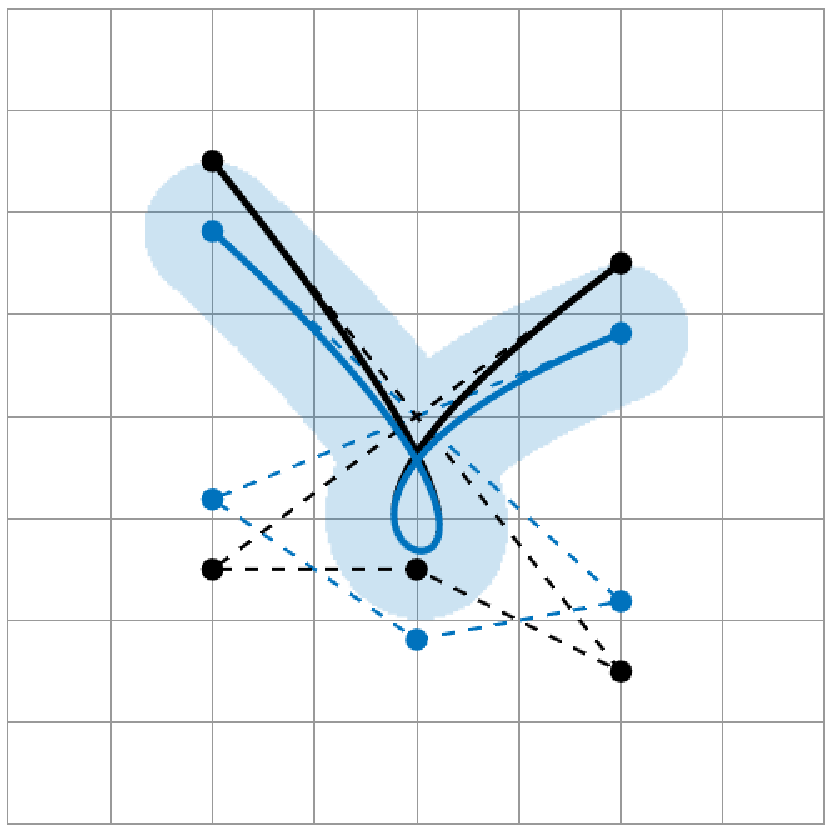} & 
\includegraphics[width=0.165\textwidth]{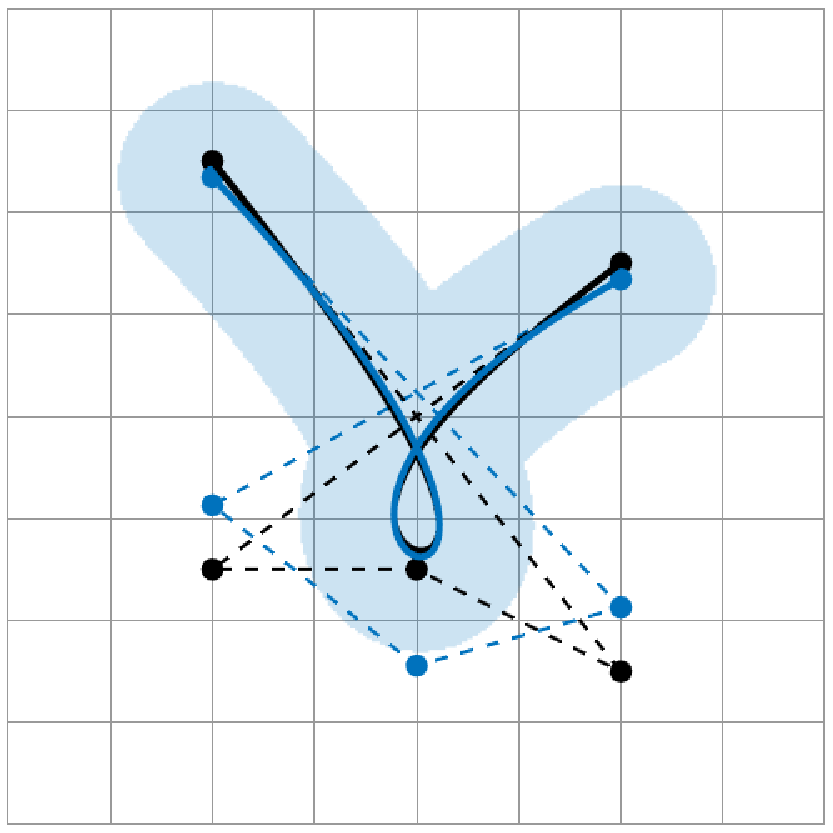} &  
\includegraphics[width=0.165\textwidth]{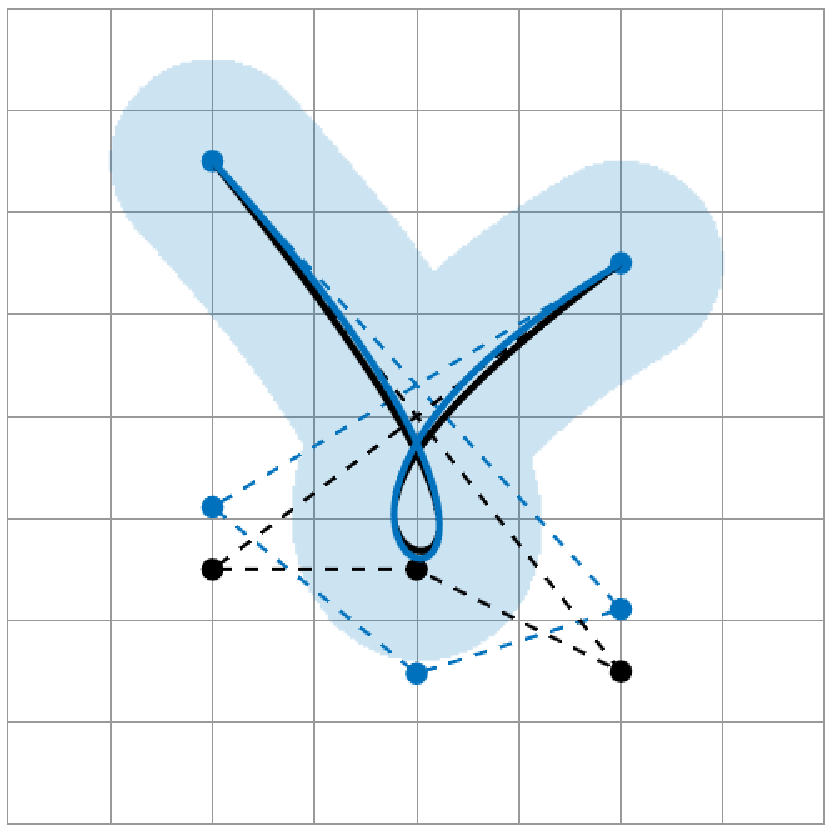} &
\includegraphics[width=0.165\textwidth]{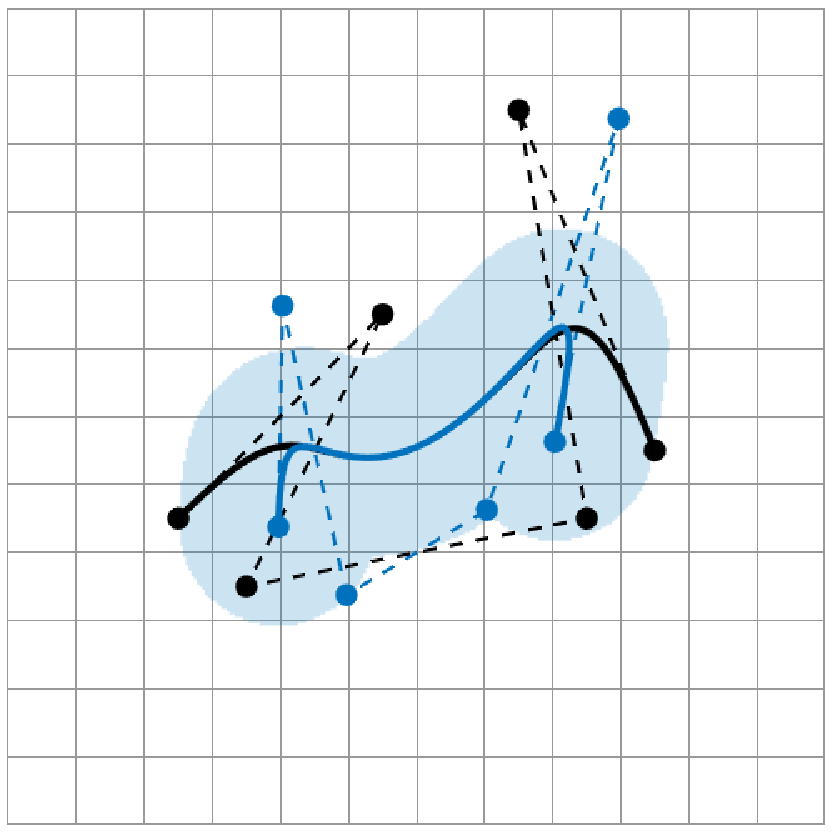} & 
\includegraphics[width=0.165\textwidth]{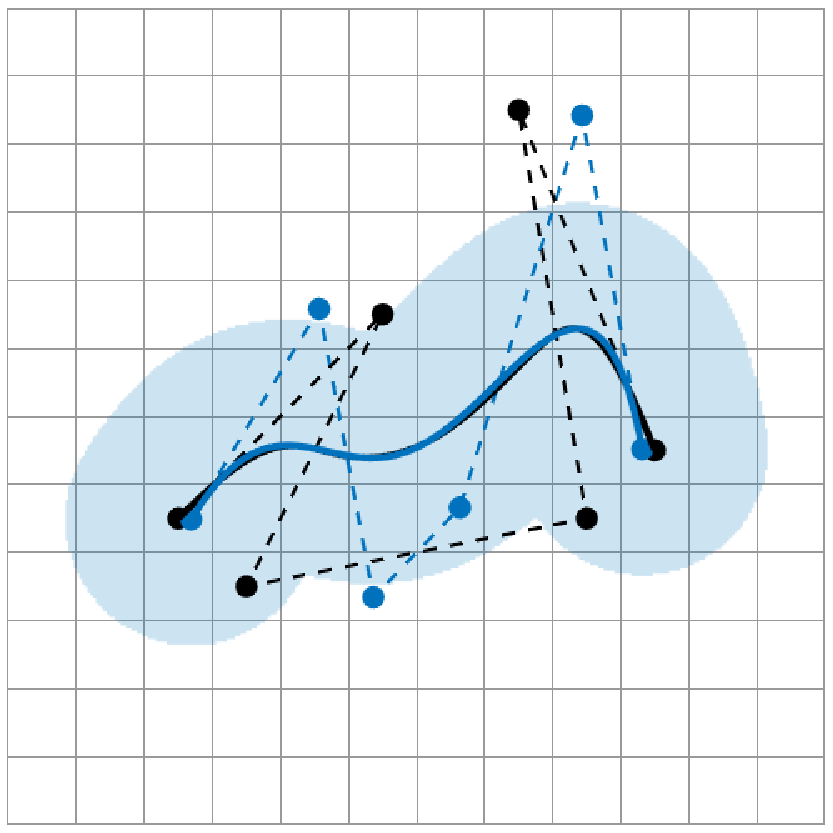} &  
\includegraphics[width=0.165\textwidth]{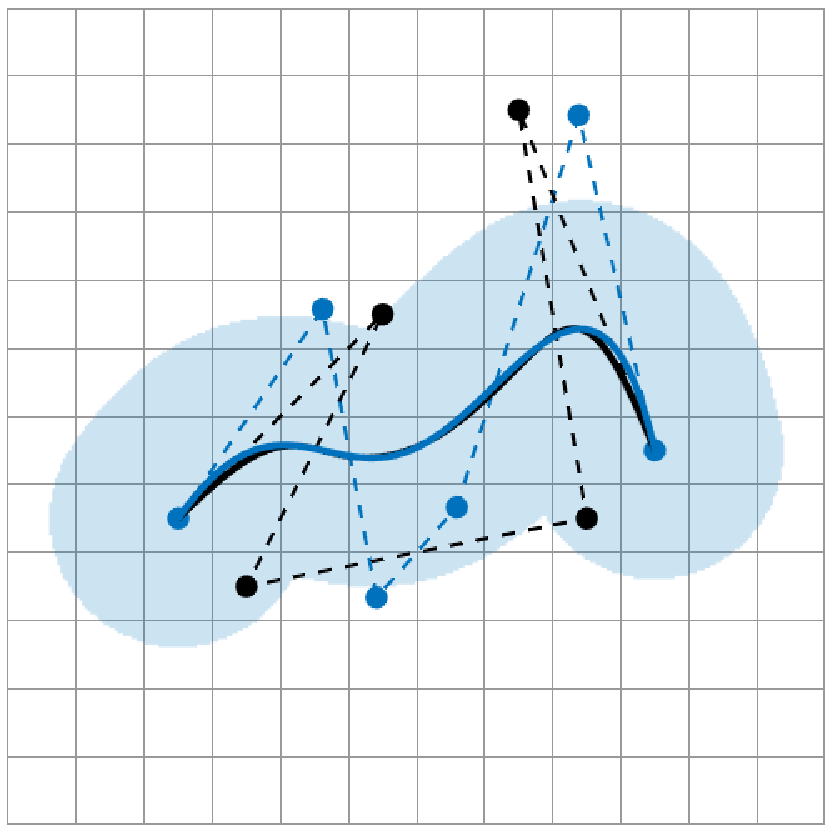} 
\\
\includegraphics[width=0.165\textwidth]{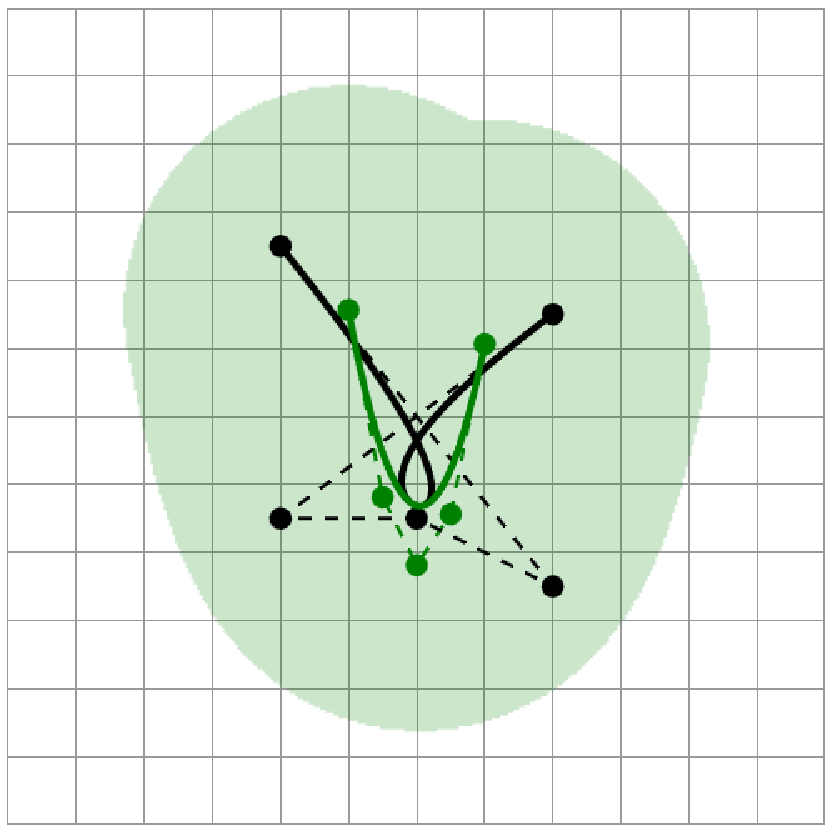} &
\includegraphics[width=0.165\textwidth]{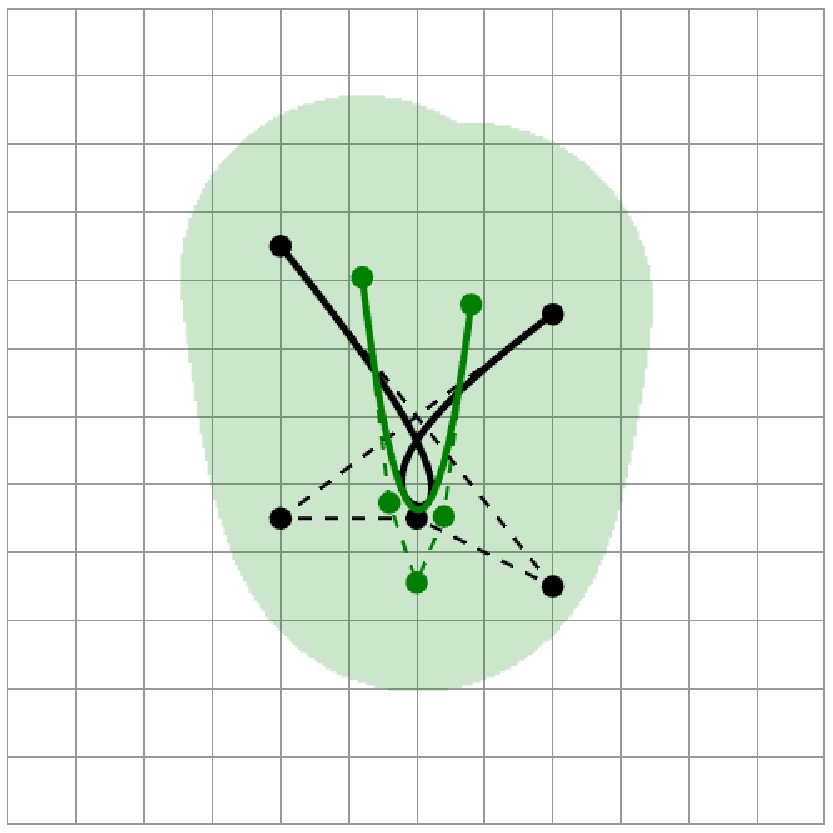} &
\includegraphics[width=0.165\textwidth]{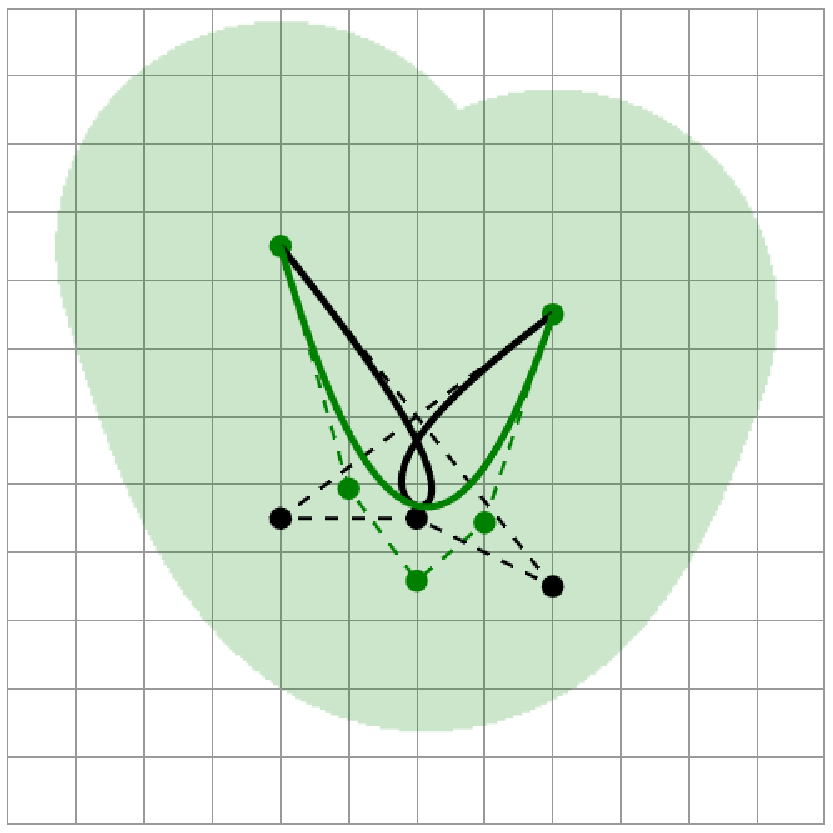} &
\includegraphics[width=0.165\textwidth]{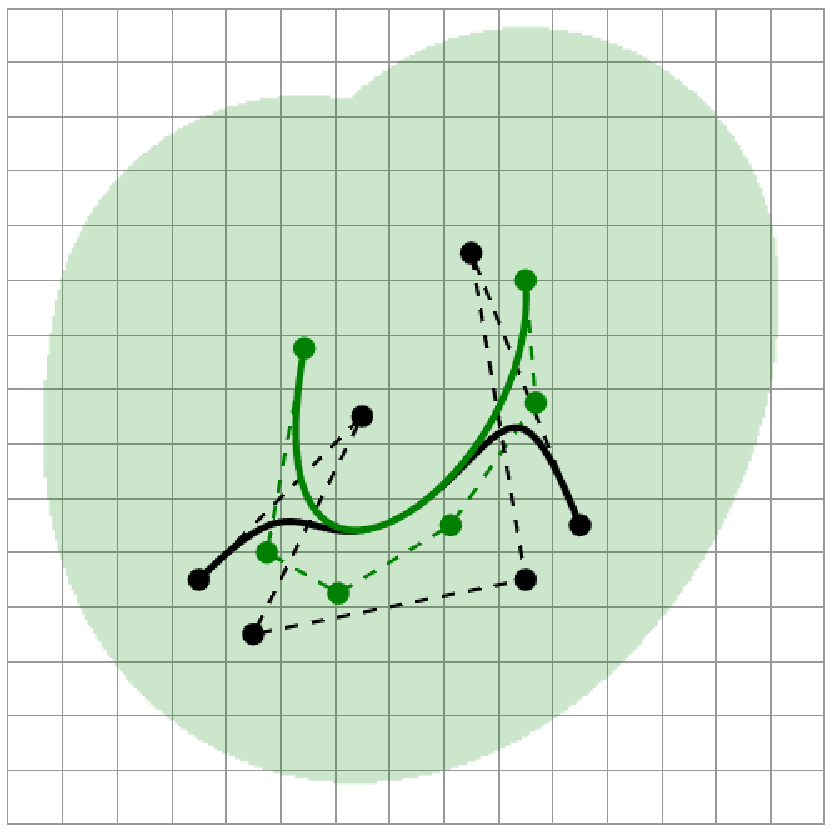} &
\includegraphics[width=0.165\textwidth]{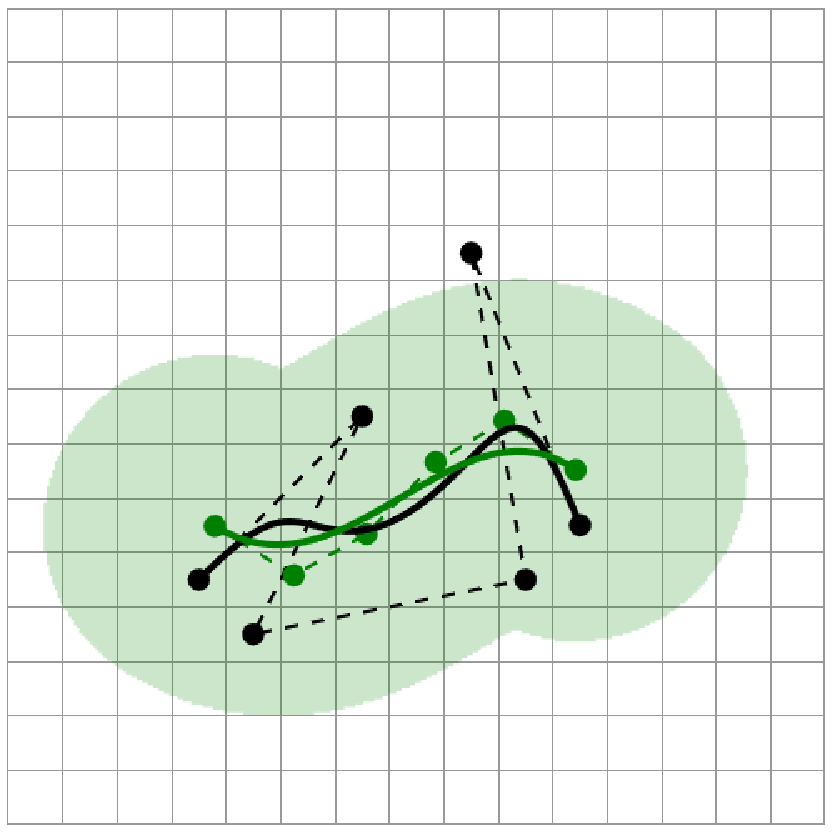} &
\includegraphics[width=0.165\textwidth]{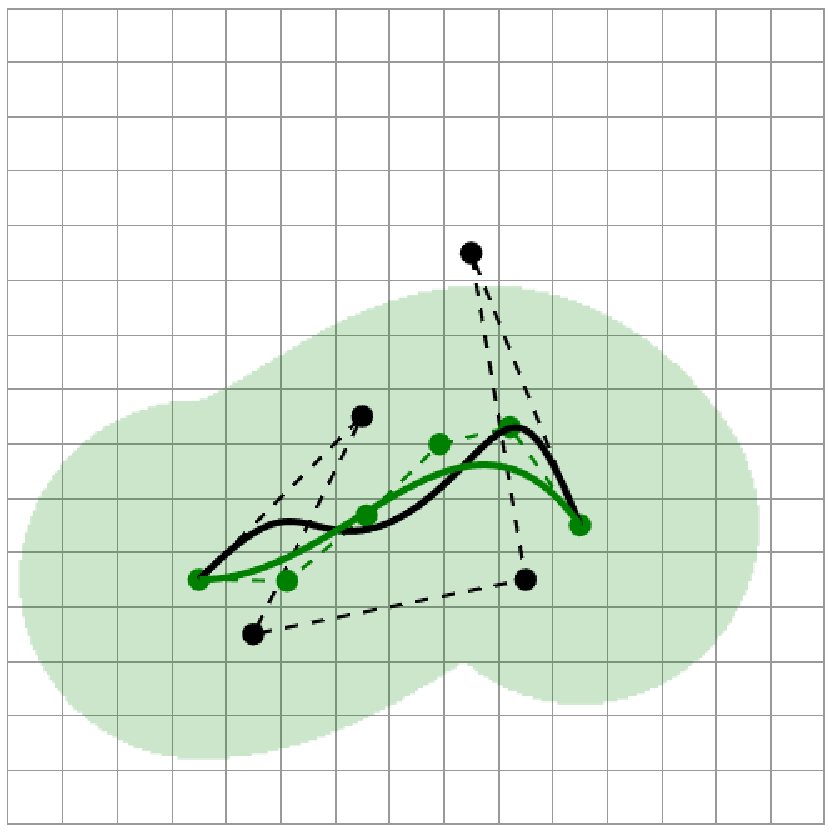} 
\\
\includegraphics[width=0.165\textwidth]{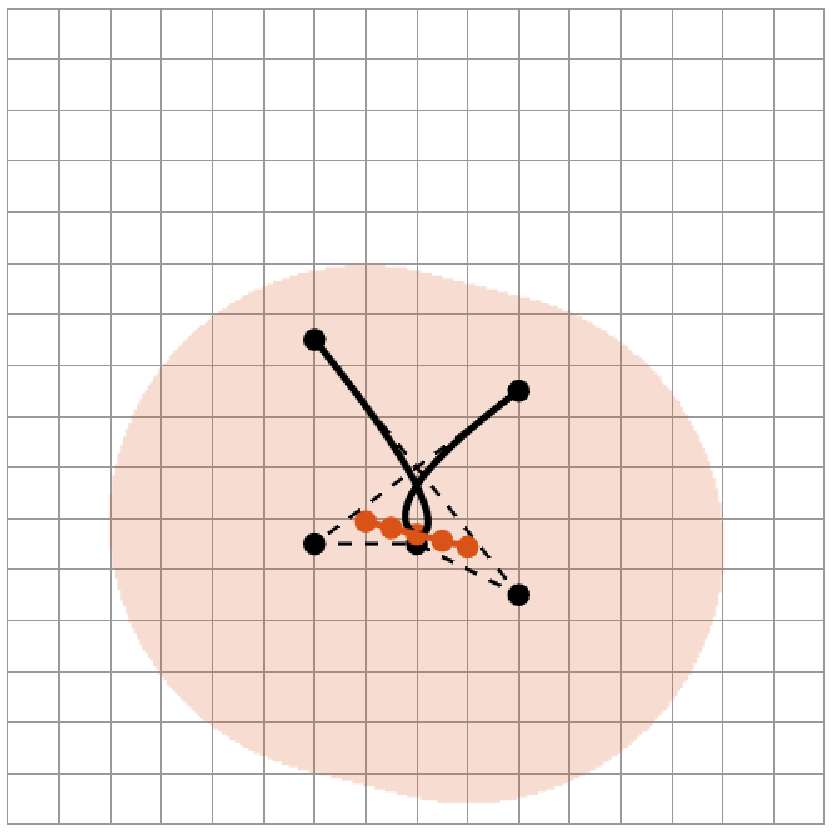} &
\includegraphics[width=0.165\textwidth]{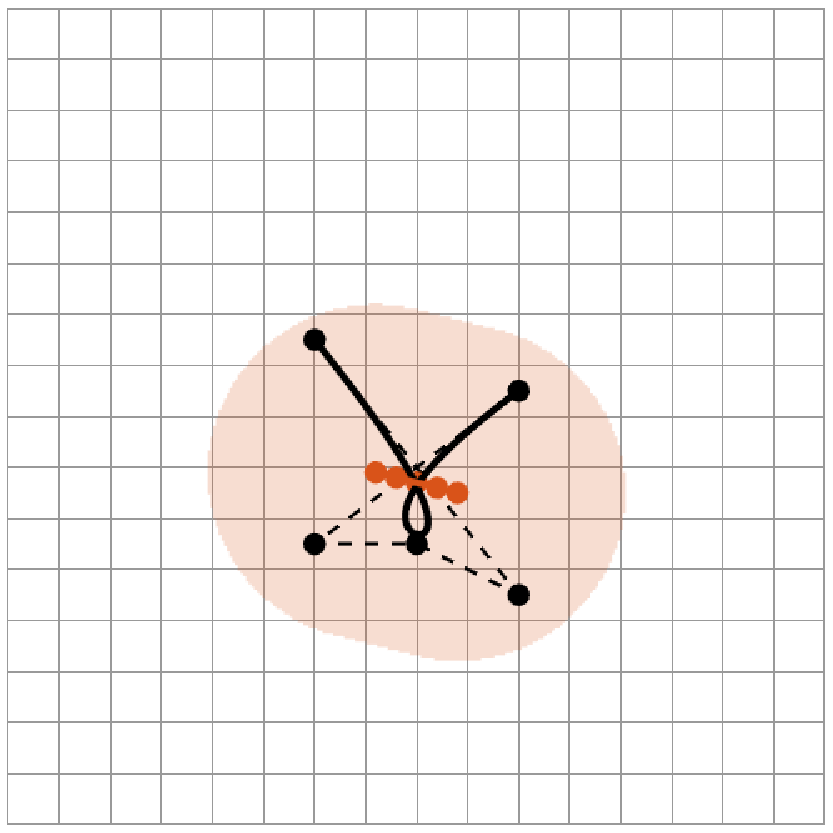} &
\includegraphics[width=0.165\textwidth]{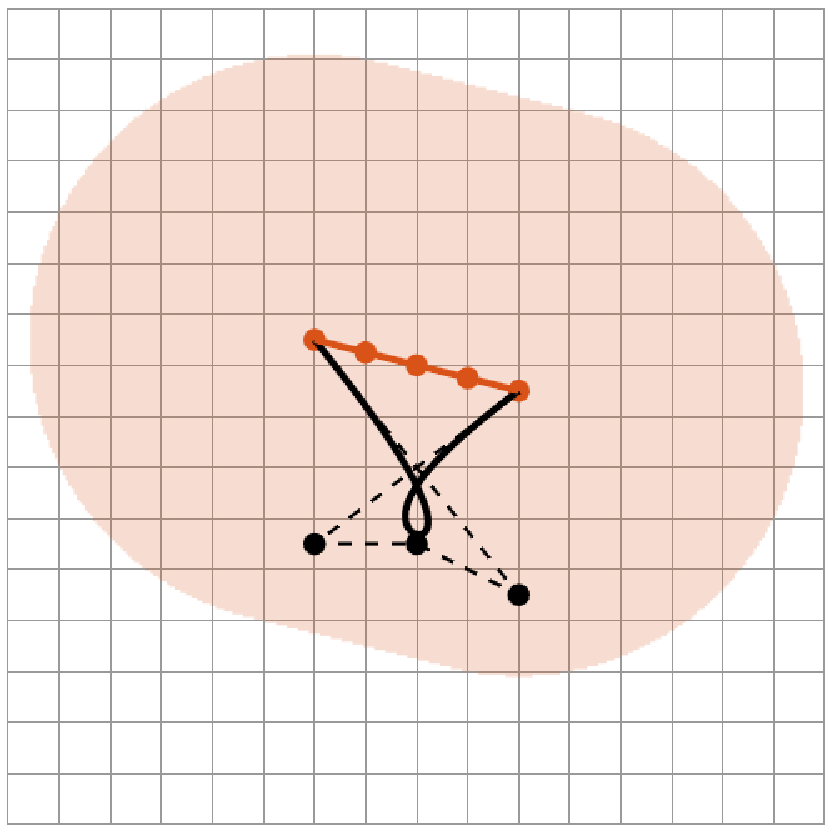} &
\includegraphics[width=0.165\textwidth]{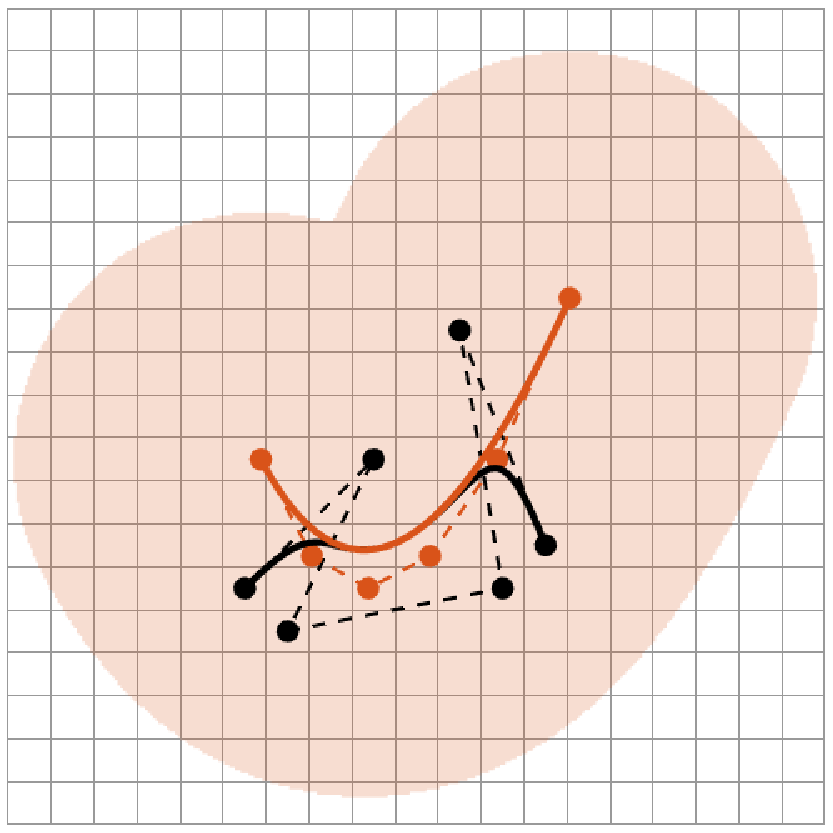} &
\includegraphics[width=0.165\textwidth]{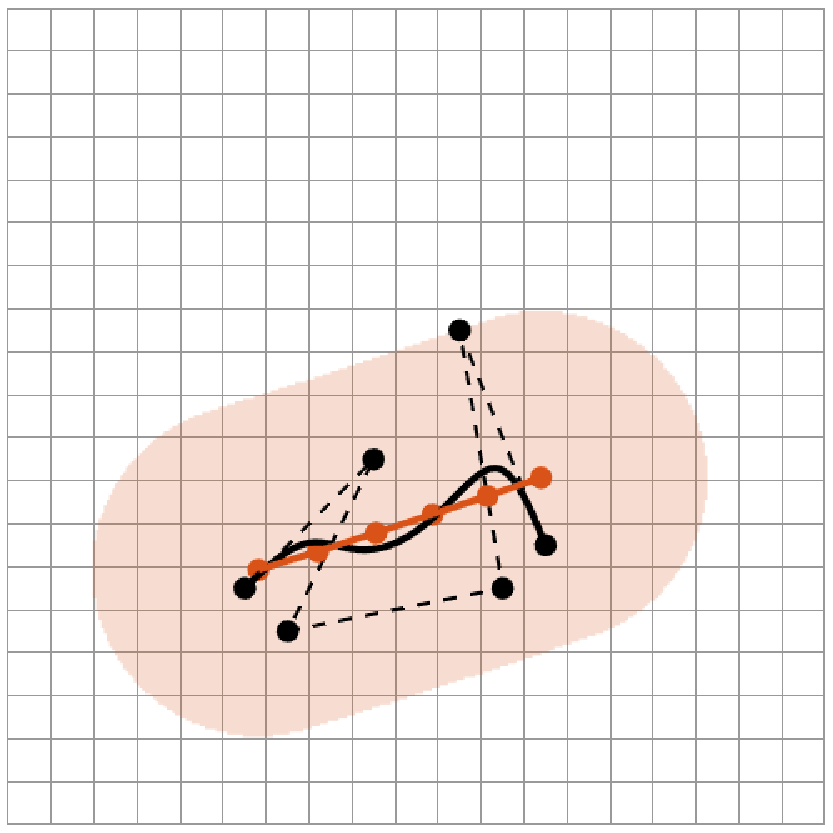} &
\includegraphics[width=0.165\textwidth]{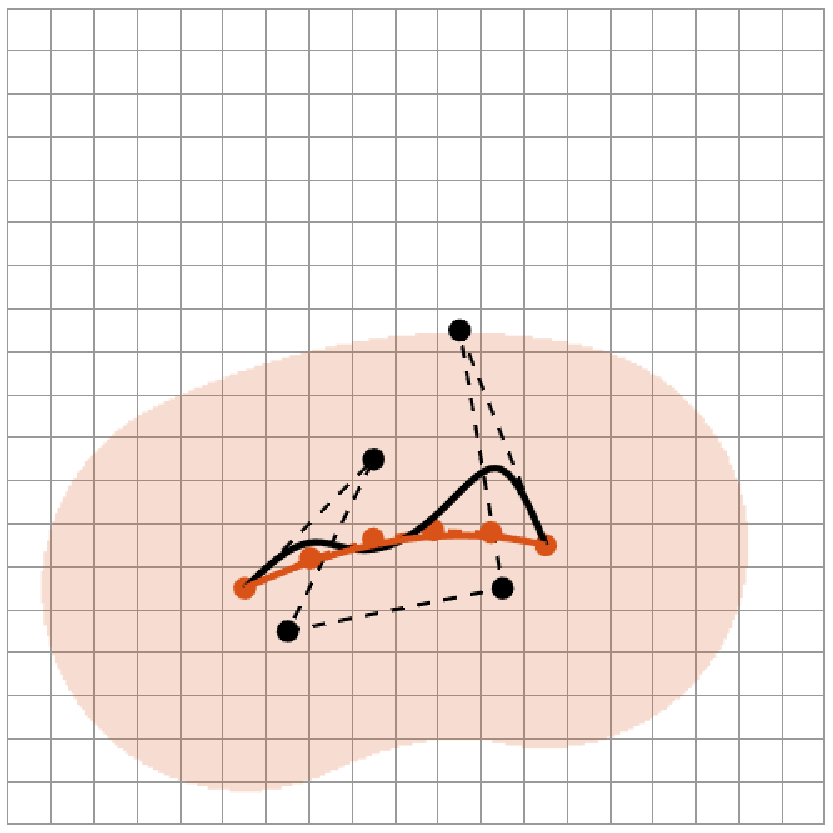} 
\\[-1mm]
\scriptsize{(a)} & \scriptsize{(b)} & \scriptsize{(c)} & \scriptsize{(d)} & \scriptsize{(e)} & \scriptsize{(f)}
\end{tabular}
\vspace{-1mm}
\caption{The maximum control-point distance defines a geometric bound on  a B\'ezier curve (black) relative to another B\'ezier curve (blue, green, red), for example,  its degree reduction. (top) Degree-one, (middle) degree-two  and (bottom) degree-three reduction based on (a, d) Taylor reduction, (b, e) least squares reduction, (c, f) uniform matching reduction. Here, the reduced B\'ezier curves are elevated again to have the same number of control points.} 
\label{fig.RelativeBezierBound}
\end{figure*}

\subsection{Hausdorff and Maximum Distances of B\'ezier Curves}

As an alternative to norm-induced algebraic B\'ezier metrics, one can also compare Bezier curves based on set-theoretic distance measures as follows. 
\begin{definition}\label{def.HaussdorffMaximumDistance}
(\emph{B\'ezier Haussdorff \& Maximum Distances}) 
The Haussdorff  and (parameterwise) maximum distances between two B\'ezier curves $\bcurve_{\bpoint_0, \ldots, \bpoint_n}(t)$ and $\bcurve_{\mpoint_0, \ldots, \mpoint_m}(t)$ over the unit interval $[0,1]$ are, respectively,  defined as
\begin{align}
&\!\bdistH(\bcurve_{\bpoint_0, \ldots, \bpoint_n}, \bcurve_{\mpoint_0, \ldots, \mpoint_m})  \nonumber \\
& \hspace{4mm} : = \max \!\plist{\!
\begin{array}{@{}c@{}}
\max\limits_{t_p \in [0,1]} \min\limits_{t_q \in [0,1]} \norm{\bcurve_{\bpoint_0, \ldots, \bpoint_n}(t_p) \! - \! \bcurve_{\mpoint_0, \ldots, \mpoint_m}(t_q)\!}, \\
\max\limits_{t_q \in [0,1]} \min\limits_{t_p \in [0,1]} \norm{\bcurve_{\bpoint_0, \ldots, \bpoint_n}(t_p) \!- \!\bcurve_{\mpoint_0, \ldots, \mpoint_m}(t_q)\!}
\end{array}
\!}\!\!, \!\!\! \! \label{eq.HaussdorffDistance}
\\
&\!\bdistM(\bcurve_{\bpoint_0, \ldots, \bpoint_n}, \bcurve_{\mpoint_0, \ldots, \mpoint_m}) \!\! := \! \! \max_{t \in [0,1]}\! \norm{\bcurve_{\bpoint_0, \ldots, \bpoint_n}\!(t) \!-\! \bcurve_{\mpoint_0, \ldots, \mpoint_m}\!(t)\!}.  \!\!\!\! \!\label{eq.MaximumDistance}
\end{align}
\end{definition}

\noindent Unfortunately, both the Haussdorff and  maximum distances of B\'ezier curves do not accept an analytic solution in terms of control points in general, but can be analytically bounded above by the maximum distance of B\'ezier control points.

\subsection{Control-Point Distance of B\'ezier Curves}

We introduce a new analytic B\'ezier metric that defines a relative geometric bound between B\'ezier curves, see \reffig{fig.RelativeBezierBound}.  

\begin{definition} \label{def.CtrlDistance}
(\emph{Bezier Control-Point Distance})
The maximum \emph{control-point distance} of $\cdegree^{\text{th}}$-order B\'ezier curves  is defined as
\begin{align}\label{eq.CtrlDistance}
\bdistC \plist{\bcurve_{\bpoint_0, \ldots, \bpoint_n}, \bcurve_{\mpoint_0, \ldots, \mpoint_n}} := \max_{i= 0, \ldots, n} \norm{\bpoint_i - \mpoint_i}.
\end{align}
\end{definition}

\begin{proposition}\label{prop.DistanceOrder}
(\emph{B\'ezier Distance Order})
The Frobenius, Hausdorff, and parameterwise \& control-pointwise maximum distances of $\cdegree^{\text{th}}$-order B\'ezier curves satisfy
\begin{subequations}\label{eq.DistanceOrder}
\begin{align}
\bdistH(\bcurve_{\bpoint_0, \ldots, \bpoint_n}, \bcurve_{\mpoint_0, \ldots, \mpoint_n}) &\leq \bdistM(\bcurve_{\bpoint_0, \ldots, \bpoint_n}, \bcurve_{\mpoint_0, \ldots, \mpoint_n}), 
\\ 
& \leq \bdistC(\bcurve_{\bpoint_0, \ldots, \bpoint_n},\bcurve_{\mpoint_0, \ldots, \mpoint_n}), 
\\
& \leq \bdistF(\bcurve_{\bpoint_0, \ldots, \bpoint_n},\bcurve_{\mpoint_0, \ldots, \mpoint_n}), 
\\  
& \leq  \sqrt{\cdegree} \,  \bdistC(\bcurve_{\bpoint_0, \ldots, \bpoint_n},\bcurve_{\mpoint_0, \ldots, \mpoint_n}).
\end{align} 
\end{subequations}
\end{proposition}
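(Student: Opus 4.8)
The plan is to establish the chain of inequalities in \refeq{eq.DistanceOrder} one link at a time, built on a single structural observation. Because the B\'ezier map \refeq{eq.BezierCurve} is linear in its control points, the difference of two $\cdegree^{\text{th}}$-order B\'ezier curves is again a $\cdegree^{\text{th}}$-order B\'ezier curve --- the one whose control points are the displacements $\delta_i := \bpoint_i - \mpoint_i$:
\begin{align*}
\bcurve_{\bpoint_0, \ldots, \bpoint_n}(t) - \bcurve_{\mpoint_0, \ldots, \mpoint_n}(t) = \bcurve_{\delta_0, \ldots, \delta_n}(t) = \sum\nolimits_{i=0}^{n} \bpoly_{i,n}(t)\,\delta_i .
\end{align*}
Consequently all four distances in the statement are controlled by $\delta_0, \ldots, \delta_n$; in particular $\bdistC = \max_i \norm{\delta_i}$ and, by \refeq{eq.FrobeniusDistance}, $\bdistF = \plist{\sum_i \norm{\delta_i}^2}^{1/2}$.

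To get $\bdistH \leq \bdistM$, note that in \refeq{eq.HaussdorffDistance}, for any fixed $t_p$ the parameter $t_q = t_p$ is feasible for the inner minimization, so $\min_{t_q \in [0,1]} \norm{\bcurve_{\bpoint_0, \ldots, \bpoint_n}(t_p) - \bcurve_{\mpoint_0, \ldots, \mpoint_n}(t_q)} \leq \norm{\bcurve_{\bpoint_0, \ldots, \bpoint_n}(t_p) - \bcurve_{\mpoint_0, \ldots, \mpoint_n}(t_p)} \leq \bdistM$ by \refeq{eq.MaximumDistance}. Maximizing over $t_p$ bounds the first one-sided Hausdorff term by $\bdistM$, and the symmetric choice $t_p = t_q$ bounds the second; hence their maximum $\bdistH$ is at most $\bdistM$.

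The link $\bdistM \leq \bdistC$ is the geometric core of the proposition, and I would argue it as follows. Applying the triangle inequality to the curve-difference identity and then the nonnegativity and partition-of-unity of the Bernstein polynomials (\refpropty{propty.BezierConvexity}), for every $t \in [0,1]$
\begin{align*}
\norm{\bcurve_{\bpoint_0, \ldots, \bpoint_n}(t) - \bcurve_{\mpoint_0, \ldots, \mpoint_n}(t)} \leq \sum\nolimits_{i=0}^{n} \bpoly_{i,n}(t)\,\norm{\delta_i} \leq \plist{\sum\nolimits_{i=0}^{n} \bpoly_{i,n}(t)}\max_i \norm{\delta_i} = \bdistC ,
\end{align*}
and maximizing the left-hand side over $t$ yields $\bdistM \leq \bdistC$. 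Equivalently, \refpropty{propty.BezierConvexity} places $\bcurve_{\delta_0, \ldots, \delta_n}(t)$ in $\conv(\delta_0, \ldots, \delta_n)$, and the Euclidean norm restricted to a polytope is maximized at a vertex, i.e.\ at some $\delta_i$.

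Finally, $\bdistC \leq \bdistF \leq \sqrt{\cdegree+1}\,\bdistC$ is the elementary comparison of the $\ell^\infty$ and $\ell^2$ norms of the nonnegative $(\cdegree+1)$-tuple $\plist{\norm{\delta_0}, \ldots, \norm{\delta_n}}$: its largest entry is no larger than its Euclidean length, which in turn is no larger than $\sqrt{\cdegree+1}$ times its largest entry (the count $\cdegree+1$ being the number of control points). I do not anticipate a real obstacle: each argument is short. The only points that demand care are keeping both one-sided terms of the Hausdorff distance in view in the first step, and, in the step $\bdistM \leq \bdistC$, first recognizing that the curve difference is itself a B\'ezier curve so that \refpropty{propty.BezierConvexity} applies verbatim, and then that it is precisely the convex-combination structure --- nonnegative weights summing to one --- rather than any sharper feature of the Bernstein basis, that forces the bound by the maximum control-point distance.
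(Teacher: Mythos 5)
Your proof is correct and follows essentially the same route as the paper's: the difference of two $n^{\text{th}}$-order B\'ezier curves is the B\'ezier curve of the control-point displacements, the nonnegative partition-of-unity structure of the Bernstein basis gives $\bdistM \leq \bdistC$ (the paper applies Jensen's inequality to the squared norm where you use the triangle inequality --- the same convex-combination estimate), and the remaining links are the definitional Hausdorff-versus-parameterwise bound and the $\ell^\infty$/$\ell^2$ comparison. One point worth flagging: you correctly obtain $\bdistF \leq \sqrt{n+1}\,\bdistC$, since there are $n+1$ control points, whereas the proposition (and the paper's own proof) assert $\sqrt{n}\,\bdistC$; that constant is off by one and in fact fails whenever all displacements $\bpoint_i - \mpoint_i$ are equal and nonzero, so your version is the right one.
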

\begin{proof}
See \refapp{app.DistanceOrder}.
\end{proof}

\begin{proposition}\label{prop.RelativeBezierBound}
(Relative B\'ezier Bound)
In the $\cdim$-dimensional Euclidean space $\R^{\cdim}$, an $n^{\text{th}}$-order B\'ezier curve is contained in the dilation of an another $n^{\text{th}}$-order B\'ezier curve by their maximum control-point distance, i.e.,
\begin{align}
\bcurve_{\bpoint_0, \ldots, \bpoint_n}([0,1]) & \nonumber \\
& \hspace{-10mm} \subseteq \bcurve_{\mpoint_0, \ldots, \mpoint_n}([0,1]) \oplus  \ball_{\cdim}(\bdistC(\bcurve_{\bpoint_0, \ldots, \bpoint_n}, \bcurve_{\mpoint_0, \ldots, \mpoint_n})\!)\!, 
\\
& \hspace{-10mm} \subseteq \conv\plist{\mpoint_0, \ldots, \mpoint_n} \oplus  \ball_{\cdim}(\bdistC(\bcurve_{\bpoint_0, \ldots, \bpoint_n}, \bcurve_{\mpoint_0, \ldots, \mpoint_n})\!)\!, \!\!\!\!
\end{align}
where $\ball_{\cdim}(r):= \clist{\vect{x} \in \R^{\cdim} \big| \norm{\vect{x}} \leq r }$ denotes the $\cdim$-dimensional closed Euclidean ball of radius $r \geq 0$  centered at the origin. 
\end{proposition}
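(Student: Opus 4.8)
The plan is to reduce both set inclusions to a single pointwise estimate, namely
$\norm{\bcurve_{\bpoint_0, \ldots, \bpoint_n}(t) - \bcurve_{\mpoint_0, \ldots, \mpoint_n}(t)} \leq \bdistC(\bcurve_{\bpoint_0, \ldots, \bpoint_n}, \bcurve_{\mpoint_0, \ldots, \mpoint_n})$ for every $t \in [0,1]$, which is precisely the bound $\bdistM \leq \bdistC$ already contained in \refprop{prop.DistanceOrder}. For completeness I would rederive it directly: by the Bernstein expansion \refeq{eq.BezierCurve} the difference curve is $\bcurve_{\bpoint_0, \ldots, \bpoint_n}(t) - \bcurve_{\mpoint_0, \ldots, \mpoint_n}(t) = \sum_{i=0}^{n} \bpoly_{i,n}(t)\plist{\bpoint_i - \mpoint_i}$, and applying the triangle inequality together with the nonnegativity and partition-of-unity property of the Bernstein polynomials from \refpropty{propty.BezierConvexity} gives $\norm{\sum_{i} \bpoly_{i,n}(t)\plist{\bpoint_i - \mpoint_i}} \leq \sum_{i} \bpoly_{i,n}(t)\norm{\bpoint_i - \mpoint_i} \leq \plist{\max_{i} \norm{\bpoint_i - \mpoint_i}}\sum_{i} \bpoly_{i,n}(t) = \bdistC(\bcurve_{\bpoint_0, \ldots, \bpoint_n}, \bcurve_{\mpoint_0, \ldots, \mpoint_n})$.

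Next I would unwind the Minkowski-sum inclusion. Writing $r := \bdistC(\bcurve_{\bpoint_0, \ldots, \bpoint_n}, \bcurve_{\mpoint_0, \ldots, \mpoint_n})$ and fixing $t \in [0,1]$, the vector $\vect{x} := \bcurve_{\bpoint_0, \ldots, \bpoint_n}(t) - \bcurve_{\mpoint_0, \ldots, \mpoint_n}(t)$ satisfies $\norm{\vect{x}} \leq r$ by the previous step, i.e. $\vect{x} \in \ball_{\cdim}(r)$; hence $\bcurve_{\bpoint_0, \ldots, \bpoint_n}(t) = \bcurve_{\mpoint_0, \ldots, \mpoint_n}(t) + \vect{x}$ lies in $\bcurve_{\mpoint_0, \ldots, \mpoint_n}([0,1]) \oplus \ball_{\cdim}(r)$. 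Since $t$ was arbitrary, taking the union over $t \in [0,1]$ yields the first inclusion.

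The second inclusion then follows from monotonicity of the Minkowski sum together with the convexity property: by \refpropty{propty.BezierConvexity} we have $\bcurve_{\mpoint_0, \ldots, \mpoint_n}([0,1]) \subseteq \conv\plist{\mpoint_0, \ldots, \mpoint_n}$, so $\bcurve_{\mpoint_0, \ldots, \mpoint_n}([0,1]) \oplus \ball_{\cdim}(r) \subseteq \conv\plist{\mpoint_0, \ldots, \mpoint_n} \oplus \ball_{\cdim}(r)$, and chaining this with the first inclusion completes the argument. I do not expect any genuine obstacle here; the only points needing care are keeping the directions of the inclusions consistent and observing that the same radius $r$ serves both bounds, the ball being centered at the origin so that $\oplus \ball_{\cdim}(r)$ indeed realizes the dilation.
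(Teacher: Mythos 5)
Your proposal is correct and follows essentially the same route as the paper: the key step in both is the pointwise bound $\norm{\bcurve_{\bpoint_0, \ldots, \bpoint_n}(t) - \bcurve_{\mpoint_0, \ldots, \mpoint_n}(t)} \leq \bdistC(\bcurve_{\bpoint_0, \ldots, \bpoint_n}, \bcurve_{\mpoint_0, \ldots, \mpoint_n})$ from \refprop{prop.DistanceOrder} (which you also rederive directly via the partition-of-unity property), followed by the convex-hull containment of \refpropty{propty.BezierConvexity} for the second inclusion. Your write-up merely spells out the Minkowski-sum unwinding that the paper leaves implicit.
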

\begin{proof}
See \refapp{app.RelativeBezierBound}.
\end{proof}

\noindent The ordering (a.k.a. equivalence) relation of B\'ezier distances in \refprop{prop.DistanceOrder}  makes the maximum control-point  distance  a computationally efficient tool for discriminative comparison of B\'ezier curves independent of their degree $\cdegree$, whereas
the Frobenius-norm distance  tends to increase with increasing $\cdegree$. 
Moreover, similar to the convexity property in \refpropty{propty.BezierConvexity}, the relative bound of B\'ezier curves via the maximum control-point distance in \refprop{prop.RelativeBezierBound} offers an alternative way of constraining B\'ezier control points for safety and constraint verification in motion planning.   
We continue below with how different B\'ezier metrics behave under B\'ezier degree elevation and reduction operations.

\section{Bezier Degree Elevation \& Reduction}
\label{sec.BezierElevationReduction}

In this section, we briefly summarize the degree elevation and reduction operations of B\'ezier curves for (approximately) representing them with more or fewer control points.
In particular, degree reduction is another building block of high-order B\'ezier approximations with multiple low-order B\'ezier segments.
Accordingly, we introduce a new degree reduction method for approximating B\'ezier geometry more accurately.

\subsection{Degree Elevation}
\label{sec.DegreeElevation}

Degree elevation generates an exact representation of B\'ezier curves with more control points, as illustrated in \mbox{\reffig{fig.DegreeElevation}}.

\begin{definition}\label{def.DegreeElevation}
\emph{(Degree Elevation)}
A B\'ezier curve $\bcurve_{\mpoint_0, \ldots, \mpoint_m}$ of higher degree $m$ with control points $\mpoint_0, \ldots, \mpoint_m$ is said to be the \emph{degree elevation} of another B\'ezier curve  $\bcurve_{\bpoint_0, \ldots, \bpoint_n}$ of lower degree $n \leq m$ with control points $\bpoint_0, \ldots, \bpoint_n$ if and only if the curves are parameterwise identical, i.e.,
\begin{align}
\bcurve_{\mpoint_0, \ldots, \mpoint_m}(t) = \bcurve_{\bpoint_0, \ldots, \bpoint_n}(t) \quad \forall t \in \R.
\end{align} 
\end{definition}

B\'ezier degree elevation can be analytically  computed as:

\begin{proposition}(\emph{Elevated Control Points})\label{prop.ElevationControlPoint}
A B\'ezier curve  $\bcurve_{\mpoint_0, \ldots, \mpoint_m}$ of degree $m$  is the degree elevation of a B\'ezier curve  $\bcurve_{\bpoint_0, \ldots, \bpoint_n}$ of degree $n \leq m$
if and only if the control point matrices $\mpmat_{m} = \blist{\mpoint_0, \ldots, \mpoint_m}$ and $ \bpmat_{n}= \blist{\bpoint_0, \ldots, \bpoint_n}$ satisfy
\begin{align} \label{eq.ElevationControlPoints}
\mpmat_{m}= \bpmat_{n}  \emat(n ,m),
\end{align}
where the degree elevation matrix $\emat(n, m)$ is defined as
\begin{align} \label{eq.ElevationMatrix}
\emat(n, m) := \tfbasis_{\mbasis}^{\bbasis}(n) \mat{I}_{(n+1) \times (m+1)} \tfbasis_{\bbasis}^{\mbasis}(m),
\end{align}
and $\mat{I}_{n+1 \times (m+1)}$ is the  $(n+1) \times (m+1)$ rectangular identify matrix with ones in the main diagonal and zeros elsewhere.  
\end{proposition}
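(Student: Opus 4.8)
The plan is to reduce everything to the change-of-basis machinery already established in Lemma~\ref{lem.PolynomialCurveEquivalence}, so that degree elevation becomes a trivial embedding in the monomial basis. First I would observe that by Definition~\ref{def.DegreeElevation}, $\bcurve_{\mpoint_0, \ldots, \mpoint_m}$ is the degree elevation of $\bcurve_{\bpoint_0, \ldots, \bpoint_n}$ if and only if the two curves are parameterwise identical for all $t \in \R$. The natural move is to rewrite both curves in the monomial basis: by Lemma~\ref{lem.PolynomialCurveEquivalence}, the $n^{\text{th}}$-order B\'ezier curve $\bcurve_{\bpoint_0, \ldots, \bpoint_n}$ has monomial control point matrix $\mpmat_n^{\text{mon}} = \bpmat_n \tfbasis_{\mbasis}^{\bbasis}(n)$, and the $m^{\text{th}}$-order B\'ezier curve $\bcurve_{\mpoint_0, \ldots, \mpoint_m}$ has monomial control point matrix $\mpmat_m^{\text{mon}} = \mpmat_m \tfbasis_{\mbasis}^{\bbasis}(m)$. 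Since monomials of different maximal degrees are still linearly independent as functions on $\R$, two polynomial curves (of degrees $\le n$ and $\le m$, with $n \le m$) agree identically if and only if their monomial coefficient matrices agree after padding the lower-degree one with zero columns — i.e., if and only if $\mpmat_m \tfbasis_{\mbasis}^{\bbasis}(m) = \bpmat_n \tfbasis_{\mbasis}^{\bbasis}(n) \mat{I}_{(n+1)\times(m+1)}$, where the rectangular identity matrix performs exactly this zero-padding.

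Next I would solve this identity for $\mpmat_m$. Since $\tfbasis_{\mbasis}^{\bbasis}(m)$ is invertible (it is a product of invertible polynomial basis matrices by Lemma~\ref{lem.InvertibleBasisMatrix}, and its inverse is $\tfbasis_{\bbasis}^{\mbasis}(m)$ by Lemma~\ref{lem.BasisTransformation}), I can right-multiply by $\tfbasis_{\bbasis}^{\mbasis}(m)$ to obtain
\begin{align}
\mpmat_m = \bpmat_n\, \tfbasis_{\mbasis}^{\bbasis}(n)\, \mat{I}_{(n+1)\times(m+1)}\, \tfbasis_{\bbasis}^{\mbasis}(m) = \bpmat_n\, \emat(n,m),
\end{align}
which is precisely \eqref{eq.ElevationControlPoints} with $\emat(n,m)$ as defined in \eqref{eq.ElevationMatrix}. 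The converse is immediate: if $\mpmat_m = \bpmat_n \emat(n,m)$, then reversing the above algebra shows the monomial coefficient matrices coincide (after padding), hence the curves are parameterwise identical on $\R$, so $\bcurve_{\mpoint_0,\ldots,\mpoint_m}$ is by definition the degree elevation of $\bcurve_{\bpoint_0,\ldots,\bpoint_n}$.

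The only subtle point — and the step I would be most careful about — is justifying the zero-padding claim: that agreement of the two curves as functions on $\R$ is equivalent to agreement of the monomial coefficient matrices once the shorter one is extended by zeros to length $m+1$. This rests on the fact that $1, t, t^2, \ldots, t^m$ are linearly independent over $\R$ (a standard fact, e.g.\ a nonzero polynomial of degree $\le m$ has at most $m$ roots), so a polynomial vector curve has a unique monomial coefficient representation of each fixed length; writing $\bcurve_{\bpoint_0,\ldots,\bpoint_n}$ with coefficients indexed up to $m$ simply forces the coefficients of $t^{n+1}, \ldots, t^m$ to be zero, which is exactly what $\mat{I}_{(n+1)\times(m+1)}$ encodes. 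Everything else is routine linear algebra using results already proved in Section~\ref{sec.BezierCurves}; no genuine obstacle remains.
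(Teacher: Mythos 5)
Your proposal is correct and takes essentially the same route as the paper's proof: both reduce degree elevation to the trivial zero-padding embedding $\mbasis_{n}(t) = \mat{I}_{(n+1)\times(m+1)}\mbasis_{m}(t)$ in the monomial basis via the change-of-basis matrices, and your sufficiency algebra is line-for-line what appears in the appendix. The only cosmetic difference is in the ``only if'' direction, where the paper evaluates both curves at $m+1$ distinct parameters and inverts the square Bernstein basis matrix (\reflem{lem.InvertibleBasisMatrix}), whereas you invoke uniqueness of the length-$(m+1)$ monomial coefficient representation; these are interchangeable standard facts, so no gap remains.
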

\begin{proof}
See \refapp{app.ElevationControlPoint}.
\end{proof}
\noindent Observe that \refeq{eq.ElevationMatrix} leverages the change of basis between Bernstein and monomial bases because degree elevation of monomial curves is trivial.

Higher-order Bernstein basis vectors can also be obtained from lower ones via degree elevation, which offers another way of determining the elevation matrix.

\begin{proposition}\label{prop.ElevationMatrixBernstein}
(\emph{Elevated Bernstein Basis})
For any $n \leq m$, the elevation matrix $\emat(n,m)$ relates Bernstein basis vectors as
\begin{align}\label{eq.BernsteinBasisElevationMatrix}
\bbasis_{n}(t) = \emat(n,m) \bbasis_{m}(t) \quad  \forall t \in \R.
\end{align} 
Hence, $\emat(n,m)$ can be determined as
\begin{align}\label{eq.ElevationMatrixBernstein}
\emat(n,m) = \bbmat_{n}(t_0, \ldots, t_m) \bbmat_{m}(t_0, \ldots, t_m)^{-1},
\end{align}
where $t_0, \ldots, t_m \in \R$ are an arbitrary selection of pairwise distinct curve  parameters, i.e., $t_i \neq t_j$ for all $i \neq j$. 
\end{proposition}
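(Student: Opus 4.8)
The plan is to establish the identity \refeq{eq.BernsteinBasisElevationMatrix} first and then read off \refeq{eq.ElevationMatrixBernstein} as an immediate algebraic consequence. For the first part, I would start from \refprop{prop.ElevationControlPoint}: the B\'ezier curve $\bcurve_{\mpoint_0, \ldots, \mpoint_m}$ is the degree elevation of $\bcurve_{\bpoint_0, \ldots, \bpoint_n}$ precisely when $\mpmat_m = \bpmat_n \emat(n,m)$, and by \refdef{def.DegreeElevation} this means $\bpmat_n \bbasis_n(t) = \mpmat_m \bbasis_m(t) = \bpmat_n \emat(n,m) \bbasis_m(t)$ for all $t \in \R$. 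Since this holds for \emph{every} choice of control points $\bpmat_n \in \R^{\cdim \times (n+1)}$ (in particular, taking $\cdim = n+1$ and $\bpmat_n = \mat{I}_{(n+1)\times(n+1)}$, or more simply arguing row by row), we conclude $\bbasis_n(t) = \emat(n,m)\bbasis_m(t)$ for all $t \in \R$, which is \refeq{eq.BernsteinBasisElevationMatrix}.

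For the second part, I evaluate \refeq{eq.BernsteinBasisElevationMatrix} at the pairwise distinct parameters $t_0, \ldots, t_m$ and stack the resulting column vectors side by side. Using the basis-matrix notation of \refeq{eq.BasisMatrix}, this gives $\bbmat_n(t_0, \ldots, t_m) = \emat(n,m)\, \bbmat_m(t_0, \ldots, t_m)$, where $\bbmat_n(t_0,\ldots,t_m) \in \R^{(n+1)\times(m+1)}$ and $\bbmat_m(t_0,\ldots,t_m) \in \R^{(m+1)\times(m+1)}$. The square matrix $\bbmat_m(t_0, \ldots, t_m)$ is invertible by \reflem{lem.InvertibleBasisMatrix}, since $t_0, \ldots, t_m$ are pairwise distinct; multiplying on the right by its inverse yields \refeq{eq.ElevationMatrixBernstein}.

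The only subtlety — and the step I would treat most carefully — is the passage from "$\bpmat_n \bbasis_n(t) = \bpmat_n \emat(n,m)\bbasis_m(t)$ holds for all admissible $\bpmat_n$" to "$\bbasis_n(t) = \emat(n,m)\bbasis_m(t)$." This is where one must be explicit that \refdef{def.DegreeElevation} and \refprop{prop.ElevationControlPoint} quantify over arbitrary control-point configurations (equivalently, that $\emat(n,m)$ is defined independently of the control points via \refeq{eq.ElevationMatrix}), so that one may cancel the left factor $\bpmat_n$ by a rank argument or by specializing to a basis of control-point matrices. An alternative route that sidesteps \refprop{prop.ElevationControlPoint} entirely is to combine \refeq{eq.ElevationMatrix} with \reflem{lem.BasisTransformation}: write $\bbasis_n(t) = \tfbasis_{\mbasis}^{\bbasis}(n)\mbasis_n(t)$, embed $\mbasis_n(t) = \mat{I}_{(n+1)\times(m+1)}\mbasis_m(t)$ (the low-degree monomials are the first $n+1$ of the high-degree ones), and then $\mbasis_m(t) = \tfbasis_{\bbasis}^{\mbasis}(m)\bbasis_m(t)$; concatenating these three identities reproduces exactly \refeq{eq.ElevationMatrix} acting on $\bbasis_m(t)$, giving \refeq{eq.BernsteinBasisElevationMatrix} directly. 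I would present the argument via \refprop{prop.ElevationControlPoint} as the primary line, since it is the more conceptual one, and mention the monomial-embedding computation as a remark.
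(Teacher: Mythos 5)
Your proposal is correct, but your \emph{primary} line is not the one the paper takes: the paper's proof of \refprop{prop.ElevationMatrixBernstein} is exactly the computation you relegate to a closing remark, namely unwinding \refeq{eq.ElevationMatrix} directly as $\emat(n,m)\bbasis_m(t) = \tfbasis_{\mbasis}^{\bbasis}(n)\mat{I}_{(n+1)\times(m+1)}\tfbasis_{\bbasis}^{\mbasis}(m)\bbasis_m(t) = \tfbasis_{\mbasis}^{\bbasis}(n)\mat{I}_{(n+1)\times(m+1)}\mbasis_m(t) = \tfbasis_{\mbasis}^{\bbasis}(n)\mbasis_n(t) = \bbasis_n(t)$, followed by the same invertibility step from \reflem{lem.InvertibleBasisMatrix} that you use to obtain \refeq{eq.ElevationMatrixBernstein}. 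Your route through \refprop{prop.ElevationControlPoint} is also valid and carries no circularity (the paper's proof of that proposition does not rely on the present one), and you correctly identify the one genuine subtlety in it: cancelling the left factor $\bpmat_n$ requires quantifying over all control-point matrices, e.g.\ by specializing $\cdim = n+1$ and $\bpmat_n = \mat{I}_{(n+1)\times(n+1)}$. What the paper's (and your alternative) computation buys is that it avoids this cancellation entirely and is self-contained given \reflem{lem.BasisTransformation}; what your primary route buys is a more conceptual reading of \refeq{eq.BernsteinBasisElevationMatrix} as the basis-vector shadow of the control-point relation \refeq{eq.ElevationControlPoints}. Either presentation would be acceptable; if you keep the \refprop{prop.ElevationControlPoint} version as primary, make the specialization of $\bpmat_n$ explicit rather than leaving it as an aside.
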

\begin{proof}
See \refapp{app.ElevationMatrixBernstein}.
\end{proof}

\begin{figure}[t]
\centering
\begin{tabular}{@{}cc@{}}
\includegraphics[width=0.2\textwidth]{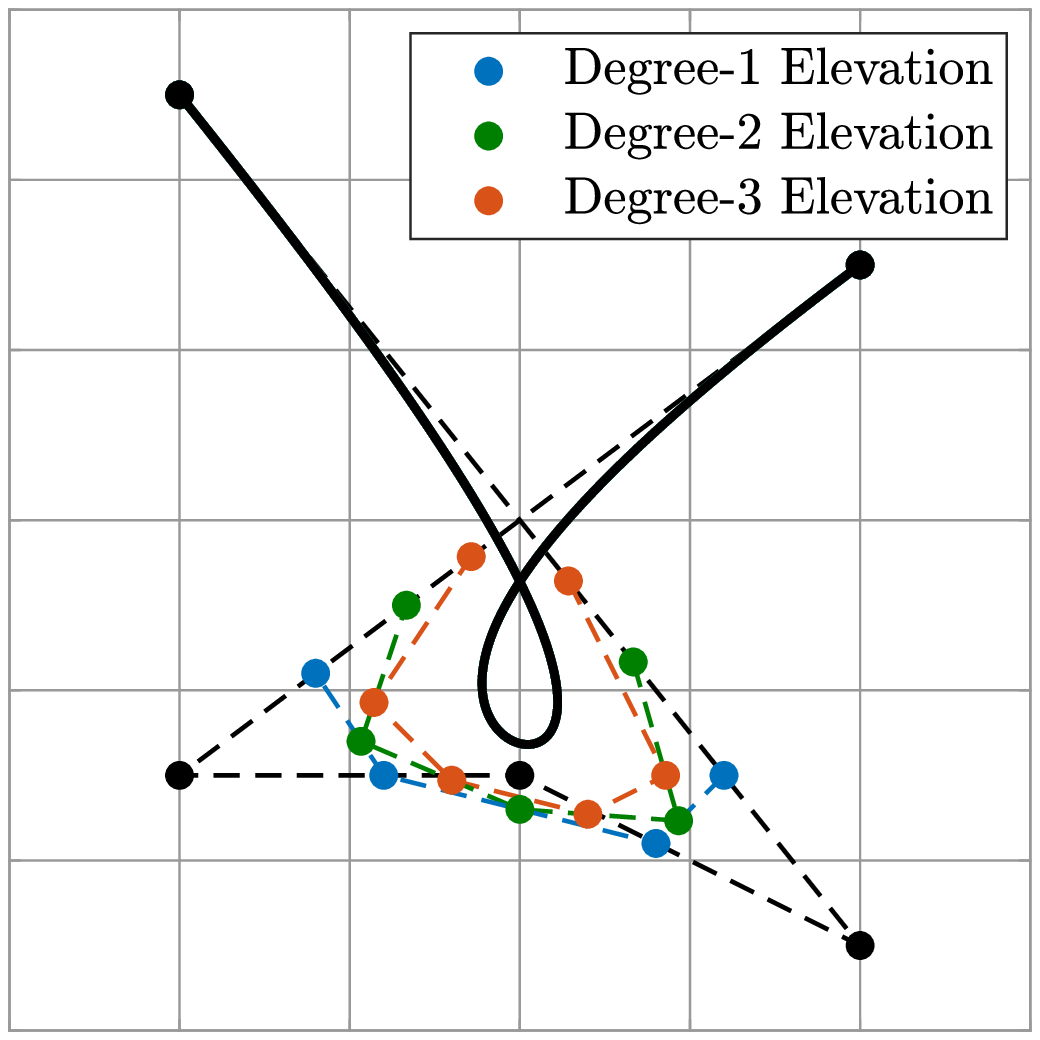} & \includegraphics[width=0.2\textwidth]{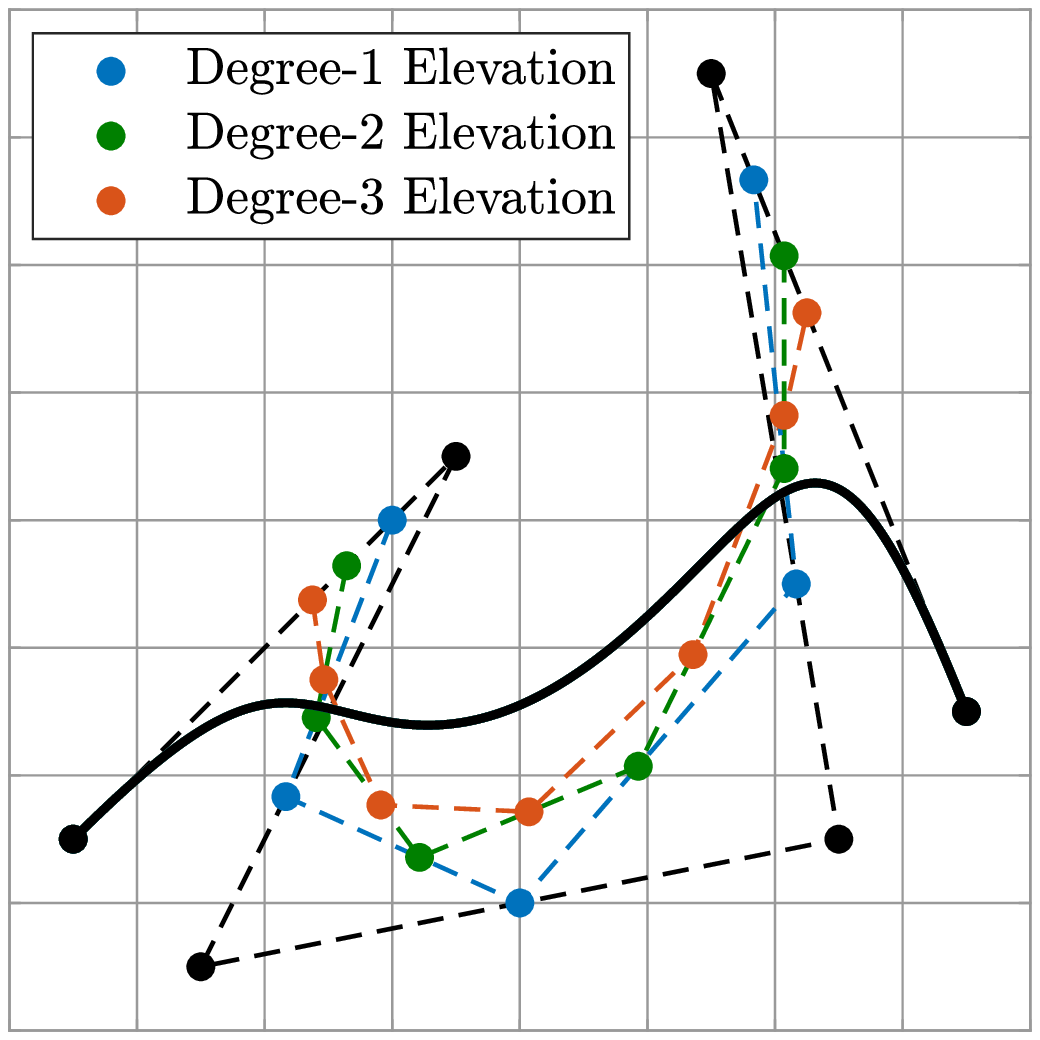}
\end{tabular}
\vspace{-1mm}
\caption{Degree elevation represents B\'ezier curves exactly by more control points that asymptotically converge to the curve itself as the amount of degree elevation goes to infinity.}
\label{fig.DegreeElevation}
\end{figure}

\begin{figure*}[t]
\centering
\begin{tabular}{@{}c@{\hspace{0.5mm}}c@{\hspace{0.5mm}}c@{\hspace{0.5mm}}c@{\hspace{0.5mm}}c@{\hspace{0.5mm}}c@{}}
\includegraphics[width=0.165\textwidth]{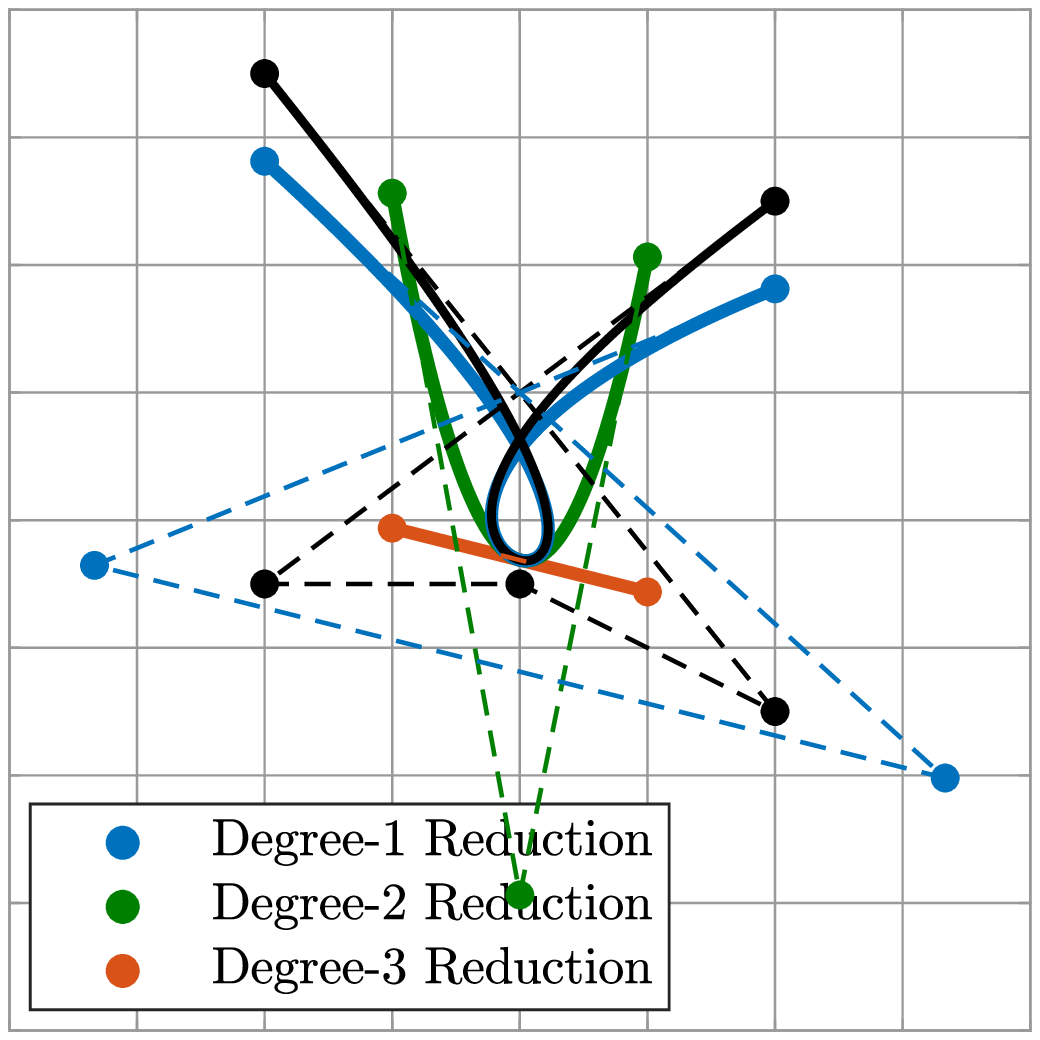} &
\includegraphics[width=0.165\textwidth]{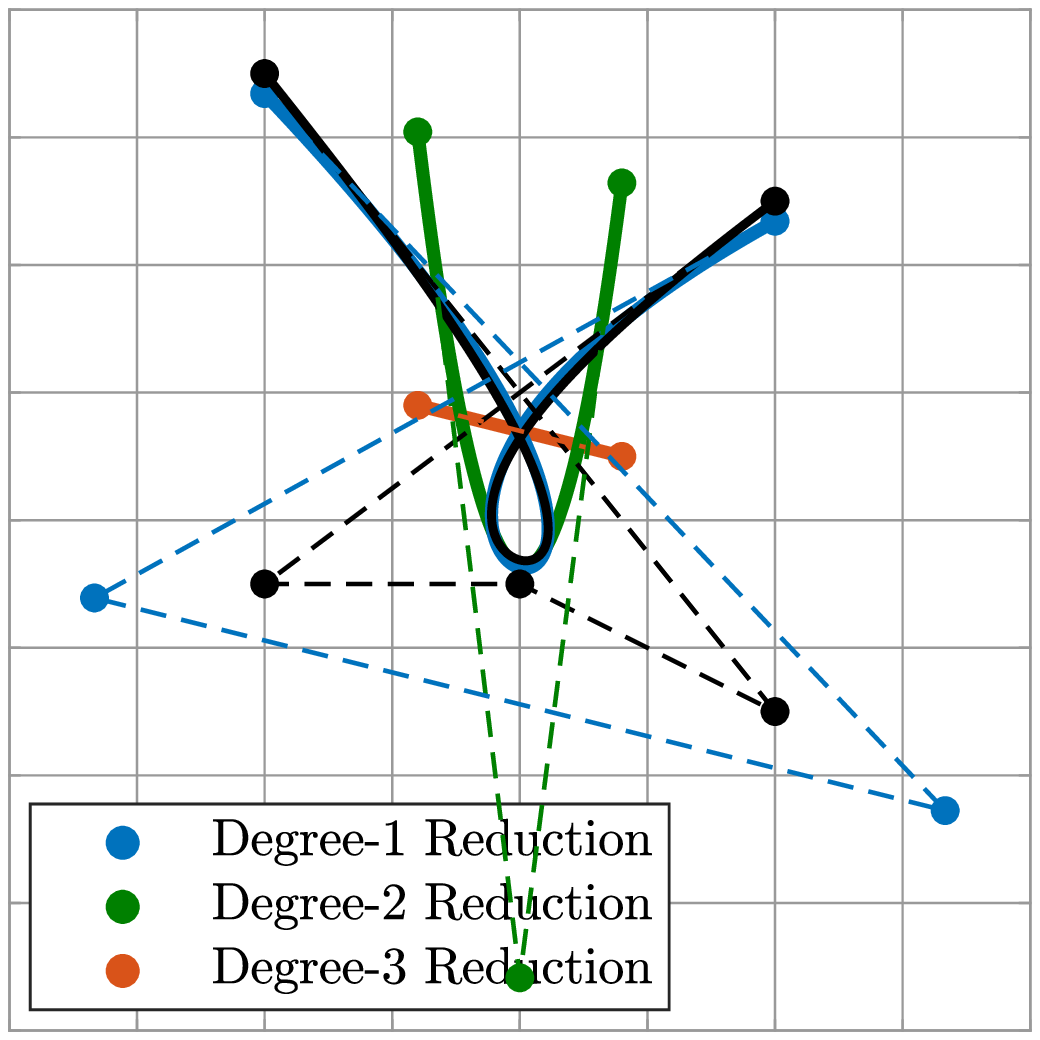} &
\includegraphics[width=0.165\textwidth]{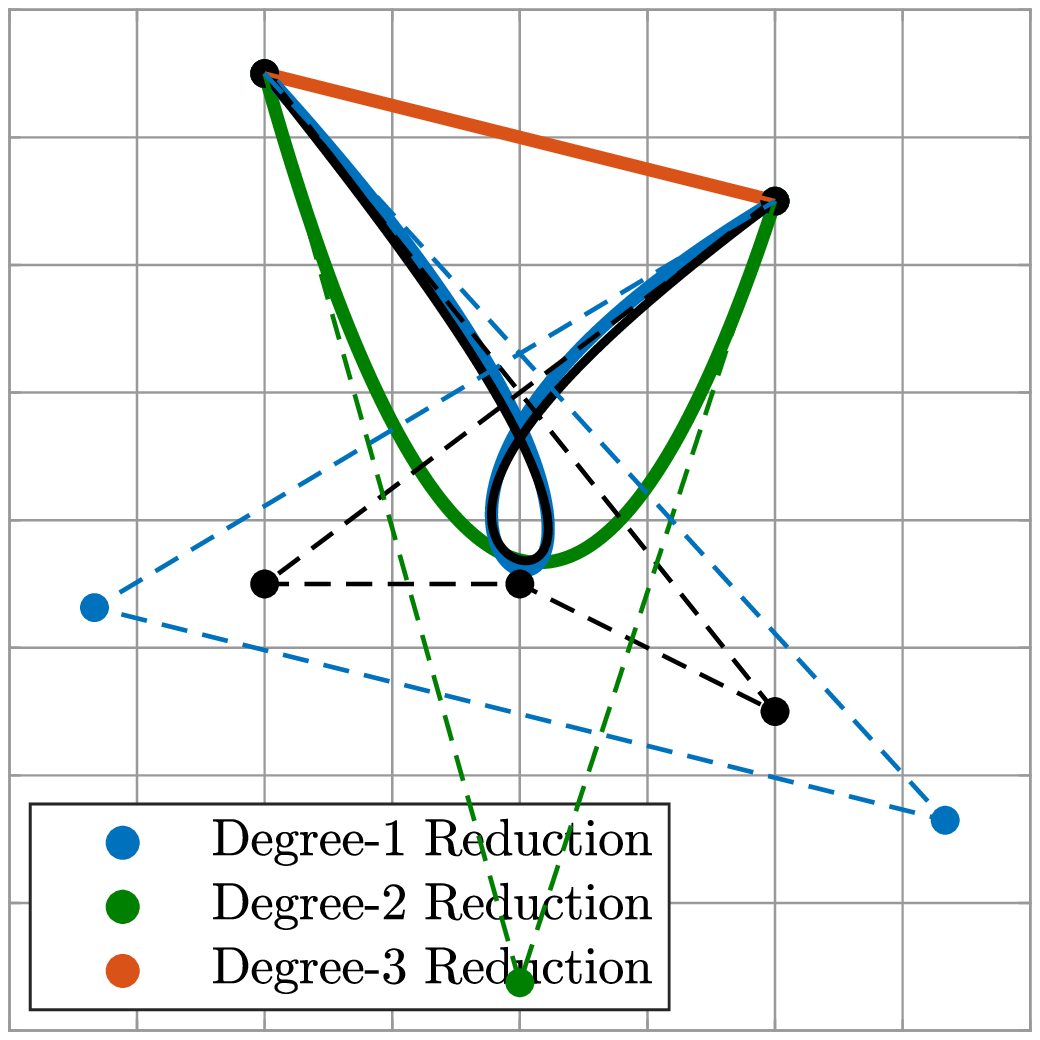} &
\includegraphics[width=0.165\textwidth]{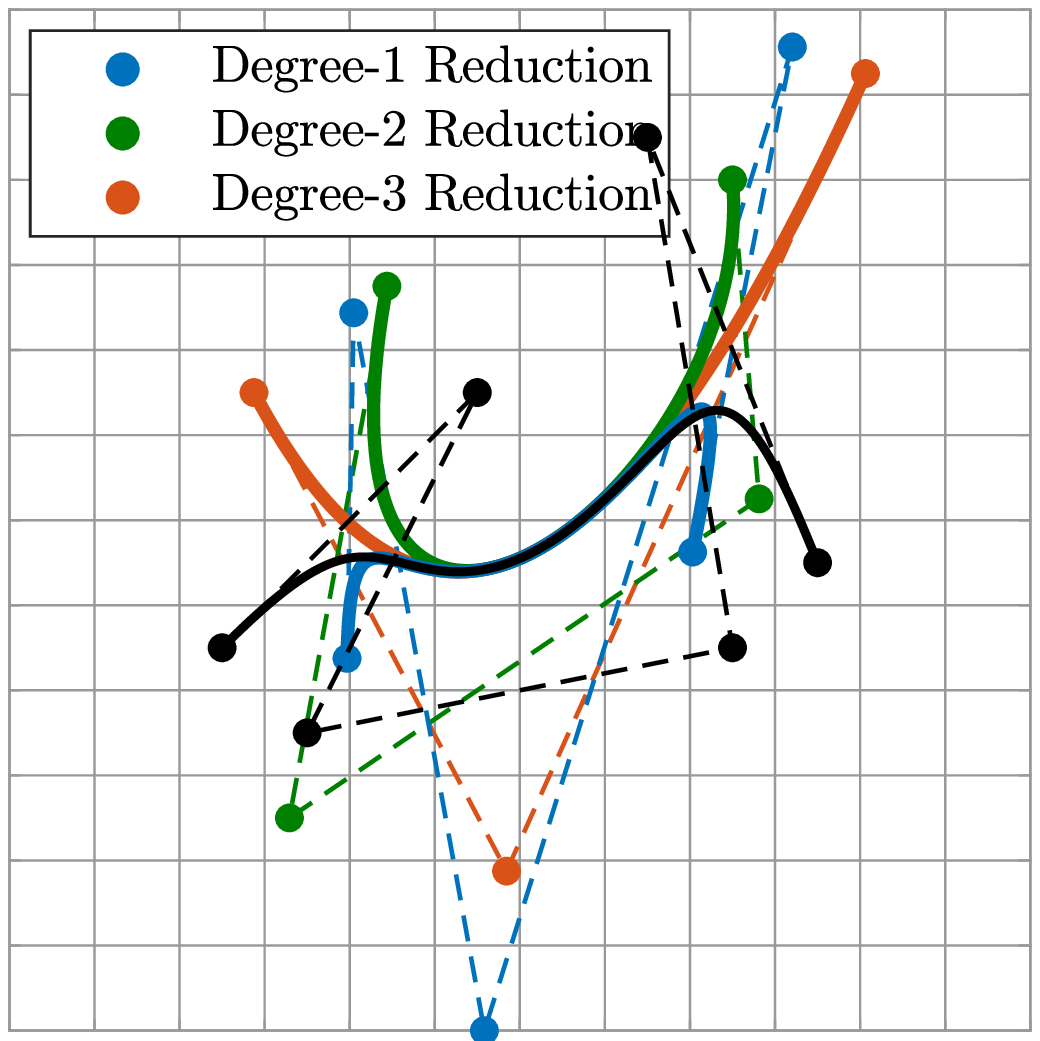} &
\includegraphics[width=0.165\textwidth]{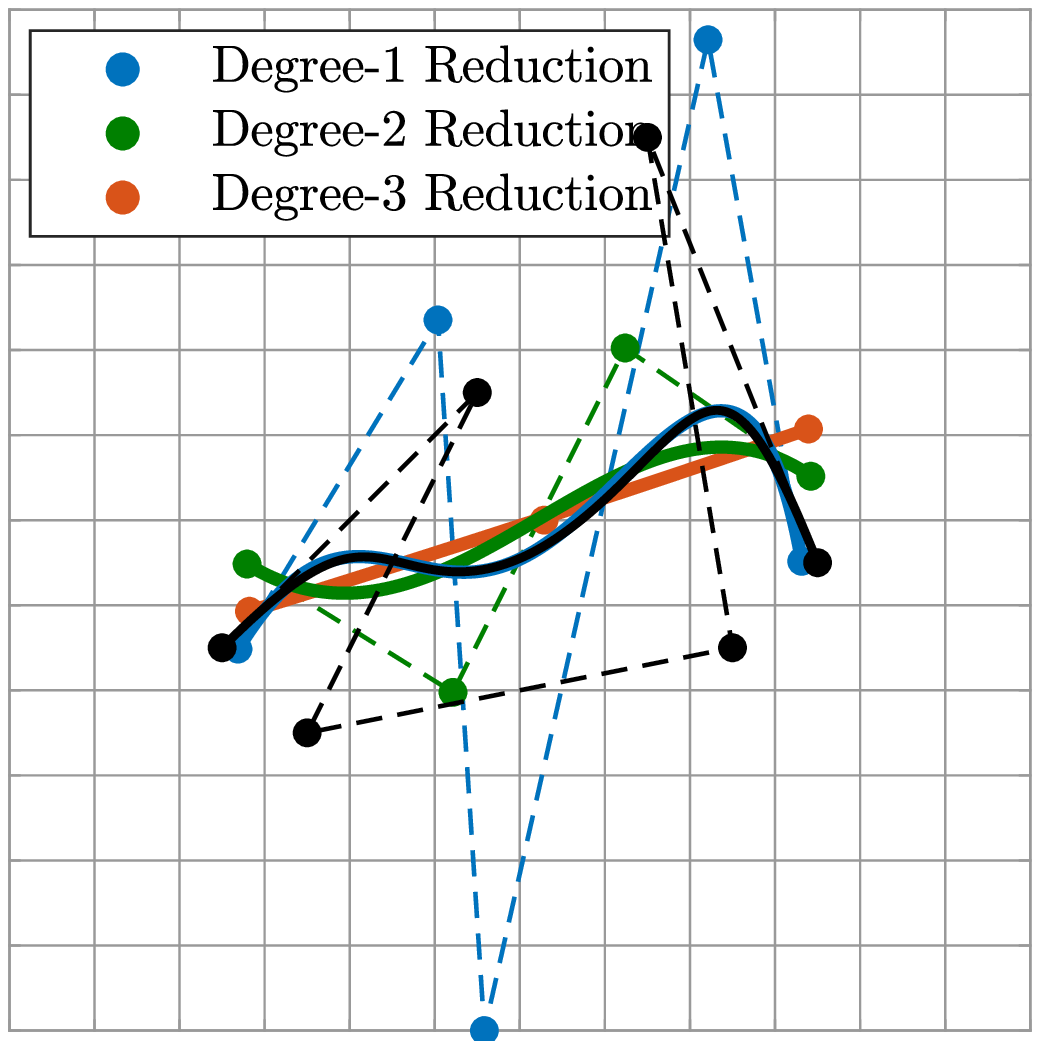} &
\includegraphics[width=0.165\textwidth]{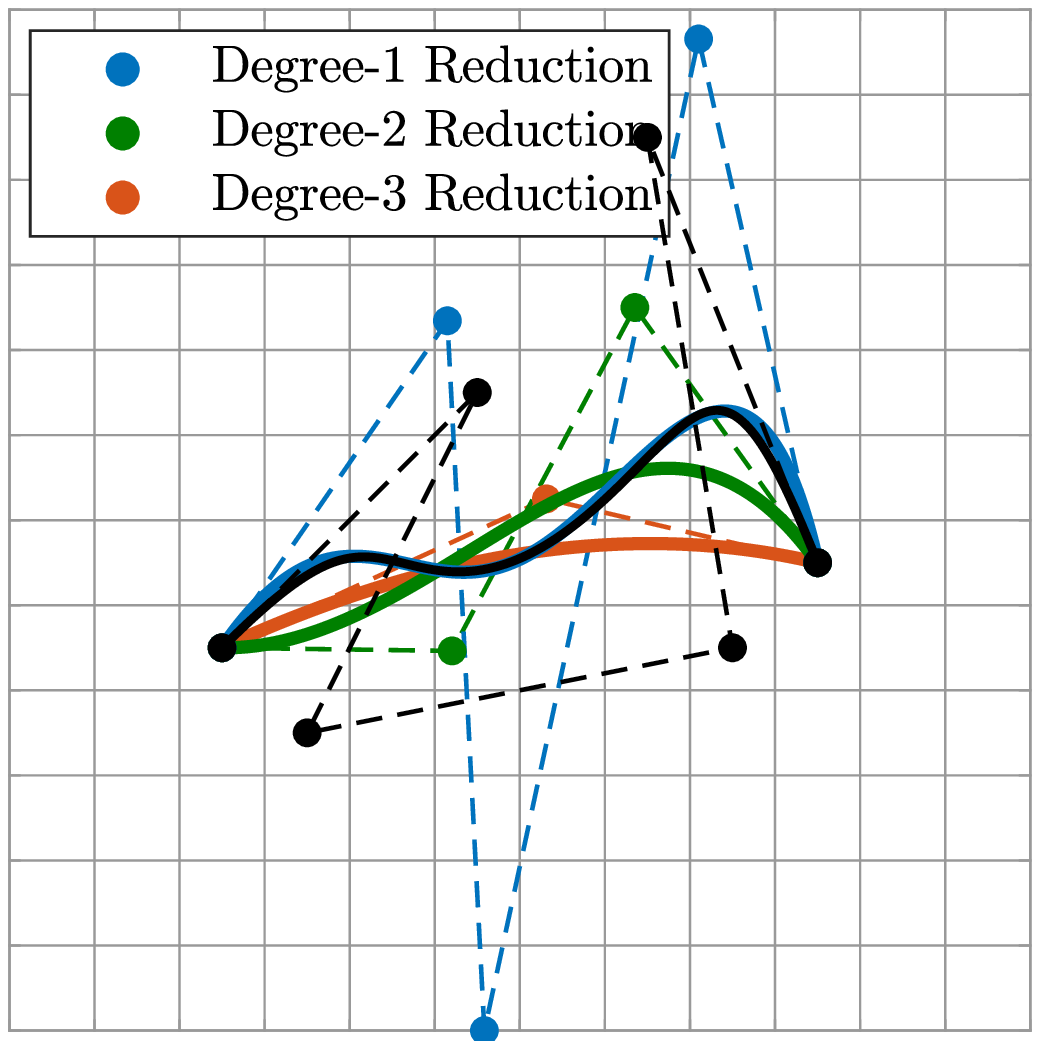} 
\\[-1mm]
\scriptsize{(a)} & \scriptsize{(b)} & \scriptsize{(c)} & \scriptsize{(d)} & \scriptsize{(e)} & \scriptsize{(f)}
\end{tabular}
\vspace{-4mm}
\caption{Degree reduction approximately represents B\'ezier curves with fewer control points. (a, d) Taylor degree reduction with a Taylor offset $\toffset = 0.5$, (b, e) Least squares reduction, (c, f) Uniform matching reduction with uniformly spaced parameters $t_0, \ldots, t_m$ over the unit interval, i.e., $t_i = \frac{i}{m}$ for $i = 0, \ldots, m$. }
\label{fig.DegreeReduction}
\vspace{-3mm}
\end{figure*}

In fact, the elements of the degree elevation matrix can be determined explicitly \cite{lee_park_BAMS1997, farin_CurvesSurfaces2002}:
\begin{proposition} \label{prop.ElevationMatrixElements}
(\emph{Elevation Matrix Elements})
For any $n \leq m$, the elements of the elevation matrix $\emat(n,m)$ are given by
\begin{align}\label{eq.ElevationMatrixElements}
\blist{\emat(n,m)}_{i+1,j+1} = \left\{ 
\begin{array}{@{}c@{\,\,}l@{}}
\frac{\binom{n}{i}\binom{m-n}{j-i}}{\binom{m}{j}} & \text{, if } m-n \geq j-i \geq 0 \\
0 & \text{, otherwise,}
\end{array}
\right. 
\end{align}
where $i = 0, \ldots, n$ and $j = 0, \ldots, m$.
\end{proposition}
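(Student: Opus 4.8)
The plan is to derive the explicit formula in \refeq{eq.ElevationMatrixElements} from the characterization of $\emat(n,m)$ already available in \refprop{prop.ElevationMatrixBernstein}, namely that $\emat(n,m)$ is the unique matrix satisfying $\bbasis_n(t) = \emat(n,m)\bbasis_m(t)$ for all $t \in \R$. Since the Bernstein basis polynomials $\bpoly_{0,m}(t), \ldots, \bpoly_{m,m}(t)$ are linearly independent (as noted after \refeq{eq.BernsteinBasisMatrix}), it suffices to show that each low-degree Bernstein polynomial $\bpoly_{i,n}(t)$ admits the expansion
\begin{align}
\bpoly_{i,n}(t) = \sum_{j=0}^{m} \frac{\binom{n}{i}\binom{m-n}{j-i}}{\binom{m}{j}}\, \bpoly_{j,m}(t),
\end{align}
where the coefficient is understood to vanish unless $0 \leq j-i \leq m-n$. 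Matching coefficients of $\bpoly_{j,m}(t)$ then forces $\blist{\emat(n,m)}_{i+1,j+1}$ to equal exactly the stated value, by uniqueness of the expansion in the Bernstein basis.

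The key algebraic step is a direct computation. First I would multiply $\bpoly_{i,n}(t) = \binom{n}{i} t^i (1-t)^{n-i}$ by the partition of unity $1 = (t + (1-t))^{m-n} = \sum_{k=0}^{m-n} \binom{m-n}{k} t^k (1-t)^{m-n-k}$, which is the standard trick for degree elevation since it does not change the polynomial. This yields
\begin{align}
\bpoly_{i,n}(t) = \sum_{k=0}^{m-n} \binom{n}{i}\binom{m-n}{k}\, t^{i+k} (1-t)^{m-i-k}.
\end{align}
Re-indexing with $j = i+k$ (so $k = j-i$, and $j$ ranges over $i \leq j \leq i + (m-n)$), the summand becomes $\binom{n}{i}\binom{m-n}{j-i}\, t^{j}(1-t)^{m-j}$. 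To express this in terms of $\bpoly_{j,m}(t) = \binom{m}{j} t^j (1-t)^{m-j}$, I divide and multiply by $\binom{m}{j}$, giving the coefficient $\binom{n}{i}\binom{m-n}{j-i}/\binom{m}{j}$ in front of $\bpoly_{j,m}(t)$, which matches \refeq{eq.ElevationMatrixElements} precisely, with the case distinction arising naturally from the support $\{i, i+1, \ldots, i+m-n\}$ of the sum over $j$.

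I expect the main obstacle to be purely bookkeeping rather than conceptual: carefully handling the index shift $j = i+k$ and confirming that the constraint $0 \le k \le m-n$ translates to the stated condition $m - n \geq j - i \geq 0$, and that coefficients outside this range are genuinely zero so that the matrix entry is well-defined for all $i \in \{0,\ldots,n\}$, $j \in \{0,\ldots,m\}$. One should also remark that this explicit form is consistent with \refeq{eq.ElevationMatrix} and \refeq{eq.ElevationMatrixBernstein}; rather than reproving those, I would simply invoke \refprop{prop.ElevationMatrixBernstein} to justify that matching Bernstein coefficients determines $\emat(n,m)$ uniquely, so the identity above pins down every entry. A brief sanity check — that the rows sum to one, reflecting the partition-of-unity origin, and that $\emat(n,n) = \mat{I}$ — can be included to confirm the formula, but the substantive content is the single binomial expansion outlined above.
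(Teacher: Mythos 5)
Your proof is correct, but it takes a genuinely different route from the paper. The paper proves \refprop{prop.ElevationMatrixElements} by induction on $m$: the base case $m=n+1$ invokes the degree-one elevation identity $\bpoly_{i,n}(t) = (1-\tfrac{i}{n+1})\bpoly_{i,n+1}(t) + \tfrac{i+1}{n+1}\bpoly_{i+1,n+1}(t)$, and the induction step composes $\emat(n,m) = \emat(n,m-1)\emat(m-1,m)$ and grinds through four cases of binomial-coefficient bookkeeping. You instead obtain the full expansion in one step by multiplying $\bpoly_{i,n}(t)$ by the partition of unity $(t+(1-t))^{m-n}$ and re-indexing, then identify the coefficients with the entries of $\emat(n,m)$ via the uniqueness guaranteed by \refprop{prop.ElevationMatrixBernstein} (equivalently, linear independence of the degree-$m$ Bernstein basis). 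Your computation checks out: the re-indexing $j=i+k$ correctly turns the constraint $0\le k\le m-n$ into $0\le j-i\le m-n$, and the division by $\binom{m}{j}$ is legitimate for all $j=0,\ldots,m$. This is the classical single-shot derivation; it is shorter, avoids the case analysis entirely, and makes the support condition transparent, at the cost of not exhibiting the multiplicative structure $\emat(n,m)=\emat(n,m-1)\emat(m-1,m)$ that the paper's induction exposes and that is reused implicitly elsewhere. One peripheral slip: in your closing sanity check you assert that the rows of $\emat(n,m)$ sum to one; by \refprop{prop.ElevationMatrixRowColumnSum} it is the \emph{columns} that sum to one, while each row sums to $\tfrac{m+1}{n+1}$. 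Since that remark is offered only as an optional consistency check and plays no role in the argument, it does not affect the validity of the proof, but you should correct or drop it.
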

\begin{proof}
See \refapp{app.ElevationMatrixElements}.
\end{proof}

\subsubsection{Important Elevation Matrix Properties}

The degree elevation matrices are full rank and  have unit column sum \cite{lee_park_BAMS1997}.

\begin{proposition}\label{prop.ElevationMatrixFullRank} (\emph{Full Rank Elevation Matrix})
For any $n \leq m$, the elevation matrix $\emat(n,m) \in \R^{(n+1) \times (m+1)}$ is full rank of $n+1$, i.e., $\rank(\emat(n,m)) = n+1$.
\end{proposition}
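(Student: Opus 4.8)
The plan is to prove that $\emat(n,m)$ has rank $n+1$ by exhibiting a right inverse, or equivalently by showing its $n+1$ rows are linearly independent. Since $\emat(n,m) \in \R^{(n+1)\times(m+1)}$ with $n+1 \leq m+1$, rank $n+1$ is the maximal possible, so it suffices to show the rows are linearly independent (equivalently, that $\emat(n,m)$ has a right inverse).

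First I would exploit the definition \refeq{eq.ElevationMatrix}, namely $\emat(n,m) = \tfbasis_{\mbasis}^{\bbasis}(n) \, \mat{I}_{(n+1)\times(m+1)} \, \tfbasis_{\bbasis}^{\mbasis}(m)$. By \reflem{lem.BasisTransformation} (and the invertibility in \reflem{lem.InvertibleBasisMatrix}), both $\tfbasis_{\mbasis}^{\bbasis}(n) \in \R^{(n+1)\times(n+1)}$ and $\tfbasis_{\bbasis}^{\mbasis}(m) \in \R^{(m+1)\times(m+1)}$ are square and invertible. The rectangular identity $\mat{I}_{(n+1)\times(m+1)}$ obviously has rank $n+1$ (its first $n+1$ columns form $\mat{I}_{n+1}$). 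Multiplying by invertible matrices on the left and right preserves rank, so $\rank(\emat(n,m)) = \rank(\mat{I}_{(n+1)\times(m+1)}) = n+1$. This is the clean argument.

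Alternatively — and this is a useful cross-check that avoids even quoting the rank-preservation lemma — I would use \refprop{prop.ElevationMatrixBernstein}: $\bbasis_n(t) = \emat(n,m)\,\bbasis_m(t)$ for all $t$, so for pairwise distinct $t_0,\ldots,t_m$ we have $\bbmat_n(t_0,\ldots,t_m) = \emat(n,m)\,\bbmat_m(t_0,\ldots,t_m)$ with $\bbmat_m(t_0,\ldots,t_m) \in \R^{(m+1)\times(m+1)}$ invertible by \reflem{lem.InvertibleBasisMatrix}. Hence $\emat(n,m) = \bbmat_n(t_0,\ldots,t_m)\,\bbmat_m(t_0,\ldots,t_m)^{-1}$ (this is exactly \refeq{eq.ElevationMatrixBernstein}). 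Now pick any $n+1$ of the parameters, say $t_0,\ldots,t_n$; the submatrix $\bbmat_n(t_0,\ldots,t_n) \in \R^{(n+1)\times(n+1)}$ is invertible, again by \reflem{lem.InvertibleBasisMatrix}, and it equals $\emat(n,m)$ times the corresponding $(m+1)\times(n+1)$ column-submatrix of $\bbmat_m(t_0,\ldots,t_m)^{-1}$. An invertible matrix factoring through $\emat(n,m)$ forces $\emat(n,m)$ to have rank at least $n+1$; combined with the obvious upper bound $\rank(\emat(n,m)) \leq \min(n+1,m+1) = n+1$, we conclude equality.

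Honestly, there is no hard part here: the statement is essentially a corollary of \reflem{lem.BasisTransformation} and \reflem{lem.InvertibleBasisMatrix}, both already established. The only thing to be slightly careful about is getting the dimensions of the rectangular identity and the direction of the inclusions/products right; once those are pinned down, either argument above is a one-line rank computation. I would present the first argument (rank preserved under invertible left/right multiplication) as the main proof and perhaps mention the second as a remark, since it additionally furnishes an explicit right inverse of $\emat(n,m)$ — namely any matrix of the form $\bbmat_m(t_0,\ldots,t_m)\, \mat{J}$ where $\mat{J}$ selects and inverts an appropriate $(n+1)\times(n+1)$ block — which will be convenient when degree reduction is discussed.
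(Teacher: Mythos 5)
Your proof is correct and is essentially identical to the paper's: the paper likewise argues that, starting from either \refeq{eq.ElevationMatrix} or \refeq{eq.ElevationMatrixBernstein}, multiplication by the invertible (square) basis-transformation or Bernstein basis matrices preserves rank, so $\rank(\emat(n,m))$ equals the rank $n+1$ of the rectangular identity factor. Your second, Bernstein-matrix cross-check is the same alternative the paper's proof also points to, so nothing further is needed.
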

\begin{proof}
See \refapp{app.ElevationMatrixFullRank}.
\end{proof}

\begin{proposition}\label{prop.ElevationMatrixRowColumnSum} (\emph{Elevation Matrix Row \& Column Sum})
For any $n \leq m$, the sum of each column of the elevation matrix $\emat(n,m)$ is one, whereas each of its rows sums to $\frac{m+1}{n+1}$, i.e.,
\begin{subequations}
\begin{align}
\mat{1}_{1\times (n+1)} \, \emat(n,m) &= \mat{1}_{1 \times (m+1)}, \label{eq.ElevationMatrixColumnSum}
\\
\emat(n,m) \mat{1}_{(m+1) \times 1} &=  \tfrac{m+1}{n+1} \mat{1}_{(n+1) \times 1}. \label{eq.ElevationMatrixRowSum}
\end{align} 
\end{subequations}
where $\mat{1}_{n \times m}$ denotes the $n \times m$ matrix of all ones.  
\end{proposition}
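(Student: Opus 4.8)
The plan is to derive both identities from the Bernstein-basis elevation relation $\bbasis_{n}(t) = \emat(n,m)\,\bbasis_{m}(t)$ of \refprop{prop.ElevationMatrixBernstein}, which holds for all $t \in \R$, by pairing it once with the partition-of-unity property and once with integration over the unit interval. No explicit combinatorics are needed, although the explicit entries in \refprop{prop.ElevationMatrixElements} give an alternative route (see the final remark below).

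For the column-sum identity \refeq{eq.ElevationMatrixColumnSum}, I would left-multiply the elevation relation by the all-ones row vector. By the partition-of-unity part of \refpropty{propty.BezierConvexity}, $\mat{1}_{1 \times (n+1)}\,\bbasis_{n}(t) = \sum_{i=0}^{n}\bpoly_{i,n}(t) = 1$ and likewise $\mat{1}_{1 \times (m+1)}\,\bbasis_{m}(t) = 1$ for every $t$. Hence the two degree-$m$ row vectors $\mat{1}_{1 \times (n+1)}\,\emat(n,m)$ and $\mat{1}_{1 \times (m+1)}$ both express the constant polynomial $1$ in the Bernstein basis of degree $m$. Evaluating $\plist{\mat{1}_{1 \times (n+1)}\,\emat(n,m) - \mat{1}_{1\times(m+1)}}\bbasis_{m}(t) = 0$ at any pairwise distinct $t_0, \ldots, t_m \in \R$ and invoking the invertibility of $\bbmat_{m}(t_0, \ldots, t_m)$ from \reflem{lem.InvertibleBasisMatrix} (equivalently, the linear independence of $\bpoly_{0,m}, \ldots, \bpoly_{m,m}$) forces the two vectors to coincide, which is \refeq{eq.ElevationMatrixColumnSum}.

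For the row-sum identity \refeq{eq.ElevationMatrixRowSum}, I would integrate the elevation relation entrywise over $[0,1]$. A standard beta-function computation gives $\int_{0}^{1}\bpoly_{i,k}(t)\,\diff t = \binom{k}{i}\tfrac{i!\,(k-i)!}{(k+1)!} = \tfrac{1}{k+1}$ for every $i$, so $\int_{0}^{1}\bbasis_{k}(t)\,\diff t = \tfrac{1}{k+1}\,\mat{1}_{(k+1)\times 1}$. Applying this with $k = n$ to the left-hand side and $k = m$ to the right-hand side of $\bbasis_{n}(t) = \emat(n,m)\,\bbasis_{m}(t)$ yields $\tfrac{1}{n+1}\,\mat{1}_{(n+1)\times 1} = \tfrac{1}{m+1}\,\emat(n,m)\,\mat{1}_{(m+1)\times 1}$, which rearranges directly to \refeq{eq.ElevationMatrixRowSum}.

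I do not expect a genuine obstacle here; the only point requiring care is the uniqueness step in the column-sum argument, i.e., correctly invoking linear independence of the Bernstein basis, which is precisely what \reflem{lem.InvertibleBasisMatrix} provides. As a remark, both identities also follow from \refprop{prop.ElevationMatrixElements}: the column sum is Vandermonde's identity $\sum_{i}\binom{n}{i}\binom{m-n}{j-i} = \binom{m}{j}$, and the row sum reduces to $\sum_{k=0}^{m-n}\binom{m-n}{k}\big/\binom{m}{i+k} = (m+1)\int_{0}^{1} t^{i}(1-t)^{n-i}\,\diff t = \tfrac{(m+1)\,i!\,(n-i)!}{(n+1)!}$, but the basis-level argument above is shorter and more transparent.
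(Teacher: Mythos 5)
Your proof is correct. The column-sum half is essentially the paper's own argument: both of you combine the partition-of-unity property of the Bernstein basis (\refpropty{propty.BezierConvexity}) with the invertibility of the Bernstein basis matrix (\reflem{lem.InvertibleBasisMatrix}) applied to the relation of \refprop{prop.ElevationMatrixBernstein}; the paper phrases it through the factorization $\emat(n,m) = \bbmat_{n}(t_0,\ldots,t_m)\bbmat_{m}(t_0,\ldots,t_m)^{-1}$ while you phrase it through linear independence of $\bpoly_{0,m},\ldots,\bpoly_{m,m}$, but these are the same step. The row-sum half is where you genuinely diverge. The paper proves \refeq{eq.ElevationMatrixRowSum} by induction on $m$: it computes the row sum of the single-step matrix $\emat(n,n+1)$ explicitly from the degree-one elevation identity for Bernstein polynomials, and then propagates via the factorization $\emat(n,m) = \emat(n,m-1)\emat(m-1,m)$, using that the row sum of a product is the product of constant row sums. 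You instead integrate $\bbasis_{n}(t) = \emat(n,m)\,\bbasis_{m}(t)$ over $[0,1]$ and use $\int_{0}^{1}\bpoly_{i,k}(t)\,\diff t = \tfrac{1}{k+1}$ — a fact the paper itself already invokes in the proof of \refprop{prop.L2Distance} — to read off the identity in one line. Your route is shorter and avoids the inductive bookkeeping and the case analysis of matrix entries; the paper's route is purely algebraic and does not rely on integration, and as a by-product it exhibits the multiplicative structure of elevation matrices, which the paper reuses elsewhere (e.g., in the proof of \refprop{prop.ElevationMatrixElements}). Both are complete; your closing remark deriving the column sum from Vandermonde's identity on the explicit entries of \refprop{prop.ElevationMatrixElements} is also accurate.
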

\begin{proof}
See \refapp{app.ElevationMatrixRowColumnSum}.
\end{proof}

\subsubsection{B\'ezier Metrics under Degree Elevation}
Different B\'ezier metrics behave differently under degree elevation: while the L2-norm distance stays constant, the Frobenius norm distance might increase, whereas the maximum control-point distance is nonincreasing under elevation. 

\begin{proposition} \label{prop.L2DistanceElevation}
(\emph{Invariance of B\'ezier L2-norm distance under degree elevation})
The L2-norm distance of B\'ezier curves, $\bcurve_{\bpoint_0, \ldots, \bpoint_n} (t)$ and $\bcurve_{\mpoint_0, \ldots, \mpoint_m}(t)$ over the unit interval $[0,1]$, are preserved under degree elevation, i.e.,
\begin{align}
\bdistL(\bcurve_{\blist{\bpoint_0, \ldots, \bpoint_n} \emat(n,k)}, \bcurve_{\blist{\mpoint_0, \ldots, \mpoint_m} \emat(m, h)}\!) &=\bdistL(\bcurve_{\bpoint_0, \ldots, \bpoint_n}, \bcurve_{\mpoint_0, \ldots, \mpoint_m}), 
\end{align}
for any $k \geq n$ and $ h \geq m$.
\end{proposition}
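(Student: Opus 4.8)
The plan is to reduce the claim to the already-established identity $\bcurve_{\blist{\bpoint_0, \ldots, \bpoint_n}\emat(n,k)}(t) = \bcurve_{\bpoint_0, \ldots, \bpoint_n}(t)$ for all $t$, which is exactly the defining property of degree elevation (\refdef{def.DegreeElevation}) together with \refprop{prop.ElevationControlPoint}. Since the L2-norm distance in \refdef{def.L2Distance} is defined purely as the integral $\plist{\int_0^1 \norm{\bcurve_{\bpoint_0, \ldots, \bpoint_n}(t) - \bcurve_{\mpoint_0, \ldots, \mpoint_m}(t)}^2 \diff t}^{1/2}$ of the pointwise Euclidean distance between the two curves, and degree elevation leaves each curve pointwise unchanged, the integrand is literally identical before and after elevation. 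Hence the integral, and therefore the distance, is unchanged.

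Concretely, I would first invoke \refprop{prop.ElevationControlPoint}: the control point matrix $\blist{\bpoint_0, \ldots, \bpoint_n}\emat(n,k)$ defines a B\'ezier curve of degree $k$ that is parameterwise identical to $\bcurve_{\bpoint_0, \ldots, \bpoint_n}$, and likewise $\blist{\mpoint_0, \ldots, \mpoint_m}\emat(m,h)$ defines a degree-$h$ curve parameterwise identical to $\bcurve_{\mpoint_0, \ldots, \mpoint_m}$. This uses the hypotheses $k \geq n$ and $h \geq m$ only to guarantee that the elevation matrices $\emat(n,k)$ and $\emat(m,h)$ are well-defined. Substituting these two pointwise equalities into the defining integral for $\bdistL$ shows that the two expressions $\bdistL(\bcurve_{\blist{\bpoint_0, \ldots, \bpoint_n}\emat(n,k)}, \bcurve_{\blist{\mpoint_0, \ldots, \mpoint_m}\emat(m,h)})$ and $\bdistL(\bcurve_{\bpoint_0, \ldots, \bpoint_n}, \bcurve_{\mpoint_0, \ldots, \mpoint_m})$ have the same integrand on $[0,1]$, hence the same value.

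There is essentially no obstacle here: the proposition is a direct corollary of the fact that degree elevation is an exact (parameterwise) re-representation, combined with the observation that $\bdistL$ is a functional of the curves themselves rather than of their control-point representations. The only mild subtlety worth noting explicitly is that the two curves being compared may originally have different degrees $n$ and $m$; but $\bdistL$ as given in \refeq{eq.L2Distance} already makes sense for curves of differing degrees (the analytic trace formula of \refprop{prop.L2Distance} is a special case when $n = m$, and the general case is handled via the elevation convention of footnote~\ref{fn.BezierDistanceElevation}), so no additional bookkeeping is needed. I would therefore keep the proof to two or three sentences, citing \refdef{def.DegreeElevation}, \refprop{prop.ElevationControlPoint}, and \refdef{def.L2Distance}.

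Alternatively, if one prefers an algebraic argument in terms of the trace formula \refeq{eq.L2Distance}, one could take $k = h$ (reducing first, if necessary, to the common degree $\max(k,h)$ via a further elevation, which is legitimate since elevation composes: $\emat(n,k)\emat(k,h) = \emat(n,h)$) and verify directly that $(\bpmat_n \emat(n,k) - \mpmat_m \emat(m,k))\bwmat_k \tr{(\bpmat_n \emat(n,k) - \mpmat_m \emat(m,k))}$ has the same trace as the corresponding degree-$\max(n,m)$ expression. This amounts to checking that $\emat(n,k)\bwmat_k\tr{\emat(n,k)} = \bwmat_n$ (and the mixed-degree analogue), which follows because the Bernstein basis vectors satisfy $\bbasis_n(t) = \emat(n,k)\bbasis_k(t)$ (\refprop{prop.ElevationMatrixBernstein}) and $\bwmat_\cdegree = \int_0^1 \bbasis_\cdegree(t)\tr{\bbasis_\cdegree(t)}\diff t$. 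I would mention this as a remark but rely on the short pointwise argument as the main proof, since it is cleaner and sidesteps the need to match degrees.
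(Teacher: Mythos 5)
Your proposal is correct and uses essentially the same argument as the paper: the appendix proof states that the result follows directly from \refdef{def.L2Distance} because degree elevation (\refdef{def.DegreeElevation}) represents the curves exactly with more control points, which is precisely your pointwise-identity reduction. The additional algebraic remark via the trace formula and $\emat(n,k)\bwmat_k\tr{\emat(n,k)} = \bwmat_n$ is a valid alternative but is not needed and is not what the paper does.
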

\begin{proof}
See \refapp{app.L2DistanceElevation}.
\end{proof}

\begin{proposition}\label{prop.FrobeniusDistanceElevation}
(\emph{Elevated Frobenius Distance})
Under degree elevation, the Frobenius distance of $n^{\text{th}}$-order B\'ezier curves satisfies for any $m \geq n \in \N$ that
\begin{align}
&    \bdistF(\bcurve_{[\bpoint_0, \ldots, \bpoint_n] \emat(n,m)}, \bcurve_{[\mpoint_0, \ldots, \mpoint_n]\emat(n,m)})^2   \nonumber \\
& \hspace{32mm}\leq \tfrac{m+1}{n+1} \bdistF(\bcurve_{\bpoint_0, \ldots, \bpoint_n}, \bcurve_{\mpoint_0, \ldots, \mpoint_n})^2 .
\end{align} 
\end{proposition}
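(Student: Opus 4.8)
The plan is to reduce everything to a statement about the difference matrix $\mat{D} := \bpmat_{n} - \mpmat_{n} = [\bpoint_0 - \mpoint_0, \ldots, \bpoint_n - \mpoint_n]$. By linearity of the elevation map (\refprop{prop.ElevationControlPoint}), the elevated difference is $\mat{D}\,\emat(n,m)$, and by \refeq{eq.FrobeniusDistance} the claimed inequality is exactly
\begin{align*}
\norm{\mat{D}\,\emat(n,m)}_F^2 \leq \tfrac{m+1}{n+1}\norm{\mat{D}}_F^2.
\end{align*}
Writing $\norm{\mat{D}\,\emat(n,m)}_F^2 = \trace\bigl(\emat(n,m)^{\mathrm{T}} \mat{D}^{\mathrm{T}} \mat{D}\, \emat(n,m)\bigr) = \trace\bigl(\mat{D}^{\mathrm{T}} \mat{D}\, \emat(n,m)\emat(n,m)^{\mathrm{T}}\bigr)$, it suffices to show the operator-norm (or spectral) bound $\emat(n,m)\emat(n,m)^{\mathrm{T}} \preceq \tfrac{m+1}{n+1}\mat{I}_{(n+1)\times(n+1)}$, since then $\trace\bigl(\mat{D}^{\mathrm{T}}\mat{D}\,\emat(n,m)\emat(n,m)^{\mathrm{T}}\bigr) \leq \tfrac{m+1}{n+1}\trace(\mat{D}^{\mathrm{T}}\mat{D})$ by the standard fact that $\trace(\mat{A}\mat{B}) \leq \trace(\mat{A})\lambda_{\max}(\mat{B})$ for symmetric positive semidefinite $\mat{A}$.

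So the core task is to bound the largest eigenvalue of the $(n+1)\times(n+1)$ symmetric matrix $\mat{S} := \emat(n,m)\emat(n,m)^{\mathrm{T}}$. The key step I would use is the row-sum structure from \refprop{prop.ElevationMatrixRowColumnSum}: since every entry of $\emat(n,m)$ is nonnegative (by the explicit formula \refeq{eq.ElevationMatrixElements}, each entry is a ratio of binomial coefficients, hence $\geq 0$), the matrix $\mat{S}$ is also entrywise nonnegative. For a symmetric nonnegative matrix, $\lambda_{\max}(\mat{S}) \leq \max_i \sum_j \mat{S}_{ij}$ (the maximum row sum, e.g.\ via Gershgorin or Perron--Frobenius). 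Now $\sum_j \mat{S}_{ij}$ is the $i$-th entry of $\emat(n,m)\emat(n,m)^{\mathrm{T}}\mat{1}_{(n+1)\times 1} = \emat(n,m)\,\bigl(\emat(n,m)^{\mathrm{T}}\mat{1}_{(n+1)\times 1}\bigr)$. By the column-sum identity \refeq{eq.ElevationMatrixColumnSum}, $\emat(n,m)^{\mathrm{T}}\mat{1}_{(n+1)\times 1} = \mat{1}_{(m+1)\times 1}$, and then by the row-sum identity \refeq{eq.ElevationMatrixRowSum}, $\emat(n,m)\mat{1}_{(m+1)\times 1} = \tfrac{m+1}{n+1}\mat{1}_{(n+1)\times 1}$. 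Hence every row of $\mat{S}$ sums to exactly $\tfrac{m+1}{n+1}$, giving $\lambda_{\max}(\mat{S}) \leq \tfrac{m+1}{n+1}$, which is what we need.

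The main obstacle — really the only place requiring care — is justifying the eigenvalue bound for $\mat{S}$ rigorously and cleanly. The cheap route is Gershgorin's circle theorem applied to the symmetric (hence real-spectrum) matrix $\mat{S}$: every eigenvalue lies in some disc centered at $\mat{S}_{ii}$ with radius $\sum_{j\neq i}|\mat{S}_{ij}| = \sum_{j\neq i}\mat{S}_{ij}$, so $\lambda \leq \mat{S}_{ii} + \sum_{j\neq i}\mat{S}_{ij} = \sum_j \mat{S}_{ij} = \tfrac{m+1}{n+1}$ for the relevant $i$. One should double-check the elementary trace inequality $\trace(\mat{A}\mat{B})\leq \lambda_{\max}(\mat{B})\trace(\mat{A})$ for $\mat{A}\succeq 0$ symmetric and $\mat{B}$ symmetric, which follows by diagonalizing $\mat{A} = \mat{U}\mathrm{diag}(a_1,\ldots)\mat{U}^{\mathrm{T}}$ with $a_i\geq 0$ and writing $\trace(\mat{A}\mat{B}) = \sum_i a_i (\mat{U}^{\mathrm{T}}\mat{B}\mat{U})_{ii} \leq \sum_i a_i \lambda_{\max}(\mat{B}) = \lambda_{\max}(\mat{B})\trace(\mat{A})$. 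Everything else is bookkeeping, and the nonnegativity of $\emat(n,m)$'s entries — needed so that Gershgorin's bound collapses to the row sum — is immediate from \refprop{prop.ElevationMatrixElements}.
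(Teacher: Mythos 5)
Your proof is correct, but it takes a different route from the paper's. The paper argues columnwise: writing $\norm{(\bpmat_n-\mpmat_n)\emat(n,m)}_F^2=\sum_{j}\norm{\sum_i [\emat(n,m)]_{i+1,j+1}(\bpoint_i-\mpoint_i)}^2$, it applies Jensen's inequality to the squared Euclidean norm in each column (using that the entries of $\emat(n,m)$ are nonnegative with unit column sums, \refprop{prop.ElevationMatrixRowColumnSum}), then swaps the order of summation and invokes the row-sum identity to pull out the factor $\tfrac{m+1}{n+1}$. You instead convert the claim into the spectral bound $\lambda_{\max}\bigl(\emat(n,m)\tr{\emat(n,m)}\bigr)\leq\tfrac{m+1}{n+1}$ and obtain it from Gershgorin applied to the entrywise-nonnegative symmetric matrix $\mat{S}=\emat(n,m)\tr{\emat(n,m)}$, whose row sums all equal $\tfrac{m+1}{n+1}$ by exactly the same two identities of \refprop{prop.ElevationMatrixRowColumnSum}. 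Both arguments are complete and rest on the same ingredients (nonnegativity of the elevation matrix plus its column and row sums); the paper's is shorter and more elementary. Your spectral route buys something extra, though: since $\mat{1}_{(n+1)\times 1}$ is an eigenvector of $\mat{S}$ with eigenvalue exactly $\tfrac{m+1}{n+1}$, you get $\lambda_{\max}(\mat{S})=\tfrac{m+1}{n+1}$, and taking $\bpmat_n-\mpmat_n=\vect{v}\,\mat{1}_{1\times(n+1)}$ shows the constant in the proposition is sharp --- a fact the Jensen argument does not surface. All the auxiliary steps you flag (the cyclic trace identity, the inequality $\trace(\mat{A}\mat{B})\leq\lambda_{\max}(\mat{B})\trace(\mat{A})$ for $\mat{A}\succeq 0$ and symmetric $\mat{B}$, and the nonnegativity of the entries from \refprop{prop.ElevationMatrixElements}) check out.
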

\begin{proof}
See \refapp{app.FrobeniusDistanceElevation}.
\end{proof}

\begin{proposition} \label{prop.CtrlDistanceElevation}
(\emph{Nonincreasing Elevated Control-Point Distance})
The maximum control-point distance of $n^{\text{th}}$-order B\'ezier curves is non-increasing under degree elevation, i.e., 
\begin{align}
\bdistC(\bcurve_{\blist{\bpoint_0, \ldots, \bpoint_n}\emat(n,m)}, \bcurve_{\blist{\mpoint_0, \ldots, \mpoint_n} \emat(n,m)}) \leq \bdistC(\bcurve_{\bpoint_0, \ldots, \bpoint_n}, \bcurve_{\mpoint_0, \ldots, \mpoint_n}),   
\end{align} 
for any $m \geq n \in \N$.
\end{proposition}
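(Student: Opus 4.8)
The plan is to reduce the statement to the elementary fact that a convex combination of vectors has norm at most the largest norm among them. The engine is the pair of elevation-matrix properties already established: by \refprop{prop.ElevationMatrixElements} every entry of $\emat(n,m)$ is nonnegative, and by \refprop{prop.ElevationMatrixRowColumnSum}, equation~\refeq{eq.ElevationMatrixColumnSum}, each of its columns sums to one. Thus every column of $\emat(n,m)$ is a vector of convex weights, and --- crucially, with the \emph{same} weights for both curves --- each elevated control point is a convex combination of the original control points.

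First I would fix $j \in \clist{0, \ldots, m}$, set $\lambda_{i,j} := \blist{\emat(n,m)}_{i+1,j+1} \geq 0$ so that $\sum_{i=0}^{n}\lambda_{i,j} = 1$, and denote by $\widehat{\bpoint}_j$ and $\widehat{\mpoint}_j$ the $(j+1)^{\text{th}}$ columns of the elevated control point matrices $\blist{\bpoint_0, \ldots, \bpoint_n}\emat(n,m)$ and $\blist{\mpoint_0, \ldots, \mpoint_n}\emat(n,m)$. The definition of matrix multiplication then gives $\widehat{\bpoint}_j = \sum_{i=0}^{n}\lambda_{i,j}\bpoint_i$ and $\widehat{\mpoint}_j = \sum_{i=0}^{n}\lambda_{i,j}\mpoint_i$, hence $\widehat{\bpoint}_j - \widehat{\mpoint}_j = \sum_{i=0}^{n}\lambda_{i,j}(\bpoint_i - \mpoint_i)$.

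The closing step is a single chain of inequalities: by the triangle inequality for $\norm{.}$ and $\lambda_{i,j} \geq 0$,
\[
\norm{\widehat{\bpoint}_j - \widehat{\mpoint}_j} \;\leq\; \sum_{i=0}^{n}\lambda_{i,j}\norm{\bpoint_i - \mpoint_i} \;\leq\; \plist{\sum_{i=0}^{n}\lambda_{i,j}}\max_{i=0,\ldots,n}\norm{\bpoint_i - \mpoint_i} \;=\; \bdistC(\bcurve_{\bpoint_0, \ldots, \bpoint_n}, \bcurve_{\mpoint_0, \ldots, \mpoint_n}),
\]
where the equality uses $\sum_{i}\lambda_{i,j} = 1$ and \refeq{eq.CtrlDistance}. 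Taking the maximum over $j \in \clist{0, \ldots, m}$ of the left-hand side and recognizing it, via \refeq{eq.CtrlDistance}, as $\bdistC(\bcurve_{\blist{\bpoint_0, \ldots, \bpoint_n}\emat(n,m)}, \bcurve_{\blist{\mpoint_0, \ldots, \mpoint_n}\emat(n,m)})$ yields the claim. I do not expect a genuine obstacle; the one subtlety worth flagging in the write-up is that it is the \emph{columns} of $\emat(n,m)$ that have unit sum (its rows sum to $\tfrac{m+1}{n+1}$), and it is precisely this column-sum property --- not merely nonnegativity --- that turns each elevated control point into a convex rather than a merely affine combination of the originals, which is exactly what the bound requires.
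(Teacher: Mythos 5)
Your proof is correct and follows essentially the same route as the paper's: the paper likewise combines the unit column-sum property of $\emat(n,m)$ with Jensen's inequality for the norm, which is exactly your convex-combination argument spelled out columnwise. Your explicit appeal to the nonnegativity of the elevation matrix entries (via \refprop{prop.ElevationMatrixElements}) is a welcome clarification of a hypothesis the paper's proof uses only implicitly.
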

\begin{proof}
See \refapp{app.CtrlDistanceElevation}.
\end{proof}
\noindent Thus, the ordering relation of B\'ezier distances in \refprop{prop.DistanceOrder}, the relative geometric bound of B\'ezier curves in \refprop{prop.RelativeBezierBound},  and the nonincreasing property under degree elevation  in \refprop{prop.CtrlDistanceElevation} make the maximum  control-point distance an analytic  and intuitive metric for comparing B\'ezier~curves.

Finally, it is useful to note that as elevation degree goes to infinity, one has all B\'ezier curve points as its control points.
\begin{proposition} [\cite{farin_CurvesSurfaces2002}]
(\emph{Asymptotic Behavior of Degree Elevation})
As the elevation degree goes to infinity, the elevated B\'ezier control points become the B\'ezier curve points, i.e.,
\begin{align}
\lim _{m \rightarrow \infty} [\bpoint_{0}, \ldots, \bpoint_n]\emat(n,m) = \blist{\bcurve_{\bpoint_0, \ldots, \bpoint_n}(t)}_{0 \leq t \leq 1} \,.
\end{align}
\end{proposition}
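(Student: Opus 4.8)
The plan is to read the columns of $\bpmat_n \emat(n,m) = [\bpoint_0,\ldots,\bpoint_n]\emat(n,m) \in \R^{\cdim\times(m+1)}$ as the degree-$m$ elevated control points and show that each of them converges to a point of the curve. Writing $\widehat{\bpoint}^{(m)}_j$ for the $(j+1)$-th column ($j=0,\ldots,m$), the explicit entries of $\emat(n,m)$ from \refprop{prop.ElevationMatrixElements} give
\[
\widehat{\bpoint}^{(m)}_j = \sum_{i} \frac{\binom{n}{i}\binom{m-n}{j-i}}{\binom{m}{j}}\,\bpoint_i ,
\]
where the sum runs over $\max(0,j-m+n)\le i\le \min(n,j)$, and by \refprop{prop.ElevationMatrixRowColumnSum} these coefficients are nonnegative and sum to one in $i$, so every $\widehat{\bpoint}^{(m)}_j$ is a convex combination of $\bpoint_0,\ldots,\bpoint_n$ and lies in $\conv(\bpoint_0,\ldots,\bpoint_n)$. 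The target is then: for each $t\in[0,1]$, choosing $j=j_m$ with $j_m/m\to t$ (e.g.\ $j_m=\lfloor mt\rfloor$) gives $\widehat{\bpoint}^{(m)}_{j_m}\to \bcurve_{\bpoint_0,\ldots,\bpoint_n}(t)$, and this holds uniformly in $t$, so that the finite set of elevated control points converges to the curve $\bcurve_{\bpoint_0,\ldots,\bpoint_n}([0,1])$ in the Hausdorff sense $\bdistH$.

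The core step is a binomial-ratio asymptotic. Fix $i$ and expand the ratio of falling factorials,
\[
\frac{\binom{m-n}{j-i}}{\binom{m}{j}} = \frac{\big[\,j(j-1)\cdots(j-i+1)\,\big]\,\big[\,(m-j)(m-j-1)\cdots(m-j-n+i+1)\,\big]}{m(m-1)\cdots(m-n+1)},
\]
a product of $i$ factors each equal to $j+O(1)$, times $n-i$ factors each equal to $m-j+O(1)$, over $n$ factors each equal to $m+O(1)$. Since $n$ is fixed, this equals $(j/m)^i(1-j/m)^{n-i}\bigl(1+O(1/m)\bigr)$, which tends to $t^i(1-t)^{n-i}$ whenever $j/m\to t$. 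Multiplying by $\binom{n}{i}$, the entry $[\emat(n,m)]_{i+1,j+1}$ converges to $\binom{n}{i}t^i(1-t)^{n-i}=\bpoly_{i,n}(t)$, so summing over $i$ yields $\widehat{\bpoint}^{(m)}_{j_m}\to \sum_{i=0}^n \bpoly_{i,n}(t)\,\bpoint_i = \bcurve_{\bpoint_0,\ldots,\bpoint_n}(t)$. The boundary cases $t\in\{0,1\}$ are consistent: there $j_m\in\{0,m\}$ forces the truncated sum to collapse to the single surviving term $\bpoint_0$ or $\bpoint_n$, matching $\bcurve_{\bpoint_0,\ldots,\bpoint_n}(0)=\bpoint_0$ and $\bcurve_{\bpoint_0,\ldots,\bpoint_n}(1)=\bpoint_n$ from \refpropty{propty.BezierInterpolation}.

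To upgrade pointwise convergence to set (Hausdorff) convergence, I would make the estimates uniform: the $O(1/m)$ remainder above is bounded by a constant depending only on $n$, uniformly in $i$ and $j$, and $\norm{\bpoly_{i,n}(j_m/m)-\bpoly_{i,n}(t)}$ is $O(1/m)$ since $j_m=\lfloor mt\rfloor$ and each $\bpoly_{i,n}$ is Lipschitz on $[0,1]$; combining these gives $\max_{t\in[0,1]}\norm{\widehat{\bpoint}^{(m)}_{\lfloor mt\rfloor}-\bcurve_{\bpoint_0,\ldots,\bpoint_n}(t)}\to 0$, so every curve point is approached with uniformly vanishing error. Conversely, any accumulation point of the elevated control points equals $\lim_k \widehat{\bpoint}^{(m_k)}_{j_{m_k}}$ for some indices, and passing to a subsequence along which $j_{m_k}/m_k\to t^\ast\in[0,1]$ (compactness of $[0,1]$) identifies it with the curve point $\bcurve_{\bpoint_0,\ldots,\bpoint_n}(t^\ast)$; hence the set of elevated control points converges to $\bcurve_{\bpoint_0,\ldots,\bpoint_n}([0,1])$, which is the claim. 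I expect the main obstacle to be precisely the uniform control of the binomial-ratio expansion together with the bookkeeping of the index truncation $\max(0,j-m+n)\le i\le \min(n,j)$ near the endpoints; a clean alternative is to recognize $[\emat(n,m)]_{i+1,j+1}$ as the hypergeometric probability $\mathbb{P}(X=i)$ for a population of $m$ with $n$ marked items and $j$ draws, which converges in distribution to $\mathrm{Binomial}(n,t)$ as $m\to\infty$ with $j/m\to t$, yielding the same limit $\bpoly_{i,n}(t)$ at once.
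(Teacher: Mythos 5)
The paper offers no proof of this proposition at all: it is stated as a classical result imported from Farin's book, with no appendix entry, so there is nothing internal to compare your argument against. Your proof is correct and is essentially the standard one. Reading the columns of $[\bpoint_0,\ldots,\bpoint_n]\emat(n,m)$ as convex combinations with the hypergeometric weights $\binom{n}{i}\binom{m-n}{j-i}/\binom{m}{j}$ (\refprop{prop.ElevationMatrixElements}, \refprop{prop.ElevationMatrixRowColumnSum}) and passing to the binomial limit $\bpoly_{i,n}(t)$ as $j/m\to t$ is exactly how this fact is established in the CAGD literature, and your falling-factorial expansion of the ratio is sound, including the endpoint collapse at $j\in\{0,m\}$. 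Two small remarks on the finishing step: the cleanest way to get Hausdorff convergence is to prove the single uniform estimate $\max_{0\le j\le m}\norm{\widehat{\bpoint}^{(m)}_j-\bcurve_{\bpoint_0,\ldots,\bpoint_n}(j/m)}\to 0$, which delivers both inclusions at once (every control point is near a curve point, and every curve point $\bcurve(t)$ is near the control point with $j=\lfloor mt\rfloor$ by continuity of the curve), making your separate accumulation-point argument unnecessary; and your claimed uniform $O(1/m)$ bound on the factor-by-factor perturbation does hold with a constant depending only on $n$ (each ratio $(j-k)/(m-l)$ differs from $j/m$ by $O(n/(m-n))$ and all factors lie in $[0,1]$), so the one obstacle you flag is genuinely surmountable and the proof closes.
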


\subsection{Degree Reduction}
\label{sec.DegreeReduction}

As opposed to degree elevation, B\'ezier degree reduction aims to approximately represent a B\'ezier curve with fewer control points, as illustrated in \reffig{fig.DegreeReduction}.
Hence, degree reduction is naturally defined as the inverse of degree elevation.\reffn{fn.UnifiedDegreeReduction}

\begin{definition} \label{def.DegreeReduction}
(\emph{Degree Reduction})
A B\'ezier curve $\bcurve_{\mpoint_0, \ldots, \mpoint_m}$ of lower degree $m$ with control points $\mpmat_{m}=\blist{\mpoint_0, \ldots, \mpoint_m}$ is said to be a \emph{degree reduction} of another B\'ezier curve $\bcurve_{\bpoint_0, \ldots, \bpoint_n}$ of higher degree $n \geq m$ with control points $\bpmat_{n} = \blist{\bpoint_0, \ldots, \bpoint_n}$ if and only if the control points are related to each other by
\begin{align}
\mpmat_{m} = \bpmat_{n} \rmat(n,m),
\end{align}
where $\mat{R}(n,m)\! \in \! \R^{(n+1) \times (m+1)}$ denotes a degree reduction~matrix that is a right inverse of the elevation matrix $\emat(m,n)$, i.e.,
\begin{align}
\emat(m,n)\mat{R}(n,m) = \mat{I}_{(m+1) \times (m+1)}.
\end{align}
\end{definition}
\noindent That is to say, the degree elevation from $m$ to $n$ followed by a degree reduction from $n$ to $m$ preserves B\'ezier curves; but, the reverse is not correct in general.
Also note that the right inverse of the elevation matrix is not unique, which allows many alternative ways of constructing a reduction matrix.

\addtocounter{footnote}{1}
\footnotetext{\label{fn.UnifiedDegreeReduction}Many existing notions of B\'ezier degree reduction methods that are defined in terms of different B\'ezier distances (possibly with end-point constraints) can be unified using the inverse of degree elevation \cite{sunwoo_lee_CAGD2004}.}

\subsubsection{Least Squares Reduction}

A standard choice for degree reduction is the pseudo-inverse of the elevation matrix \cite{farin_CurvesSurfaces2002}.

\begin{definition}\label{def.LeastSquaresReduction}
(\emph{Least Squares Reduction})
The \emph{least squares} reduction matrix $\rmat_{L2}(n,m)$ is defined as the pseudo-inverse of the elevation matrix $\emat(m,n)$ that is explicitly given by
\begin{align}
\rmat_{L2}(n,m) = \tr{\emat(m,n)} (\emat(m,n) \tr{\emat(m,n)})^{-1}.
\end{align}
\end{definition}
\noindent Note that $\emat(m,n) \tr{\emat(m,n)}$ is invertible for any $m \leq n \in \N$ because $\emat(m,n)$ is full rank  (\refprop{prop.ElevationMatrixFullRank}), which implies $\emat(m,n)\rmat_{L2}(n,m) = \mat{I}_{(m+1) \times (m+1)}$. 
An example of least squares reduction is presented in \ref{fig.DegreeReduction}(b,e), where the B\'ezier end-points are not preserved after degree reduction.

The least squares degree reduction is known to be optimal in the sense of the  L2-  and Frobenius-norm  distances \mbox{\cite{lee_park_BAMS1997, lutterkort_peters_reif_CAGD1999}}.

\begin{proposition}\label{prop.LeastSquaresReductionOptimality}
(\emph{Optimality of Least Squares Reduction})
With respect to the L2-norm and Frobenius-norm distances, the optimal $m^{\text{th}}$-order Bezier curve $\bcurve_{\mpoint_0, \ldots, \mpoint_m}(t)$  with control points $\mpmat_{m} = [\mpoint_0, \ldots, \mpoint_m]$ that is closest to a higher $\text{n}^{\text{th}}$-order B\'ezier curve $\bcurve_{\bpoint_0, \ldots, \bpoint_{n}}(t)$ with control points $\bpmat_{m} = \blist{\bpoint_0, \ldots, \bpoint_n}$  is given by the least squares reduction, i.e.,
\begin{subequations}
\begin{align}
\mpmat_{m} &=  \argmin_{\mpoint_0, \ldots, \mpoint_m \in \R^{\cdim}} \bdistL(\bcurve_{\bpoint_0, \ldots, \bpoint_n}, \bcurve_{\blist{\mpoint_0, \ldots, \mpoint_m} \emat(m,n)}),
\\
& =  \argmin_{\mpoint_0, \ldots, \mpoint_m \in \R^{\cdim}} \bdistF(\bcurve_{\bpoint_0, \ldots, \bpoint_n}, \bcurve_{\blist{\mpoint_0, \ldots, \mpoint_m} \emat(m,n)}),
\\
& =  \bpmat_{n}\rmat_{L2}(n,m). 
\end{align} 
\end{subequations}
\end{proposition}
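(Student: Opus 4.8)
The plan is to prove the two optimality claims separately: the Frobenius-norm one by an elementary linear least-squares computation, and the $L_2$-norm one by showing it has the same minimiser. As a preliminary I would record that $\rmat_{L2}(n,m)$ is a legitimate reduction matrix in the sense of \refdef{def.DegreeReduction}: since $\emat(m,n)\in\R^{(m+1)\times(n+1)}$ has full row rank $m+1$ by \refprop{prop.ElevationMatrixFullRank}, the matrix $\emat(m,n)\tr{\emat(m,n)}$ is invertible, so $\rmat_{L2}(n,m)=\tr{\emat(m,n)}\big(\emat(m,n)\tr{\emat(m,n)}\big)^{-1}$ is well defined, satisfies $\emat(m,n)\rmat_{L2}(n,m)=\mat{I}_{(m+1)\times(m+1)}$, and $P_E:=\rmat_{L2}(n,m)\emat(m,n)=\tr{\emat(m,n)}\big(\emat(m,n)\tr{\emat(m,n)}\big)^{-1}\emat(m,n)$ is the Euclidean orthogonal projector of $\R^{n+1}$ onto the column space of $\tr{\emat(m,n)}$, that is, onto the Bernstein-$n$ coefficient vectors of polynomials of degree at most $m$.

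For the Frobenius-norm optimality, the candidate $\bcurve_{[\mpoint_0, \ldots, \mpoint_m]\emat(m,n)}$ is the degree elevation of $\bcurve_{\mpoint_0, \ldots, \mpoint_m}$ (\refprop{prop.ElevationControlPoint}) and carries the control matrix $\mpmat_m\emat(m,n)$ with $n+1$ columns, so \refdef{def.FrobeniusDistance} gives
\begin{align*}
\bdistF\big(\bcurve_{\bpoint_0, \ldots, \bpoint_n}, \bcurve_{[\mpoint_0, \ldots, \mpoint_m]\emat(m,n)}\big)^2 = \norm{\bpmat_n - \mpmat_m\emat(m,n)}_F^2 ,
\end{align*}
a strictly convex quadratic in $\mpmat_m$; setting the gradient to zero yields the normal equations $\mpmat_m\emat(m,n)\tr{\emat(m,n)}=\bpmat_n\tr{\emat(m,n)}$, whose unique solution is $\mpmat_m=\bpmat_n\tr{\emat(m,n)}\big(\emat(m,n)\tr{\emat(m,n)}\big)^{-1}=\bpmat_n\rmat_{L2}(n,m)$.

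For the $L_2$-norm optimality, applying \refprop{prop.L2Distance} to the two degree-$n$ curves $\bcurve_{\bpoint_0, \ldots, \bpoint_n}$ and $\bcurve_{[\mpoint_0, \ldots, \mpoint_m]\emat(m,n)}$ gives the objective $\trace\!\big((\bpmat_n-\mpmat_m\emat(m,n))\,\bwmat_n\,\tr{(\bpmat_n-\mpmat_m\emat(m,n))}\big)$, again strictly convex in $\mpmat_m$ because $\bwmat_n$ is a Gram matrix (hence positive definite) and $\emat(m,n)$ has full row rank; its unique minimiser is characterised by the weighted normal equations $\mpmat_m\emat(m,n)\bwmat_n\tr{\emat(m,n)}=\bpmat_n\bwmat_n\tr{\emat(m,n)}$. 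It then suffices to check that $\mpmat_m=\bpmat_n\rmat_{L2}(n,m)$ solves these; using $\rmat_{L2}(n,m)\emat(m,n)=P_E$, this reduces to the single matrix identity
\begin{align*}
P_E\,\bwmat_n\tr{\emat(m,n)}=\bwmat_n\tr{\emat(m,n)},
\end{align*}
that is, that every column of $\bwmat_n\tr{\emat(m,n)}$ already lies in the column space of $\tr{\emat(m,n)}$ --- equivalently, that $\bwmat_n$ maps the subspace of degree-$\le m$ Bernstein-$n$ coefficient vectors into itself.

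This last identity is the only nontrivial step and the main obstacle --- everything else is routine convex quadratic minimisation --- and it is exactly the reason that the best $L_2$ and best Euclidean (Frobenius) degree reductions coincide \cite{lutterkort_peters_reif_CAGD1999, lee_park_BAMS1997}. I would establish it directly: by the Bernstein product formula $\bpoly_{i,n}(t)\bpoly_{k,m}(t)=\tfrac{\binom{n}{i}\binom{m}{k}}{\binom{n+m}{i+k}}\bpoly_{i+k,n+m}(t)$ together with $\int_0^1\bpoly_{l,N}(t)\,\diff t=\tfrac{1}{N+1}$, the $k$-th column of $\bwmat_n\tr{\emat(m,n)}$ has $i$-th entry $\langle\bpoly_{i,n},\bpoly_{k,m}\rangle=\tfrac{1}{n+m+1}\tfrac{\binom{n}{i}\binom{m}{k}}{\binom{n+m}{i+k}}$, and cancelling factorials (using $m-k\ge 0$) shows that, for each fixed $k$, this is a polynomial in $i$ of degree at most $m$; since $\sum_{i=0}^{n}\pi(i)\,\bpoly_{i,n}(t)$ has degree at most $\deg\pi$ for every polynomial $\pi$ --- a consequence of the binomial factorial-moment identity $\sum_{i=0}^{n} i^{[j]}\bpoly_{i,n}(t)=n^{[j]}\,t^{j}$, where $i^{[j]}$ is the falling factorial --- that column is the Bernstein-$n$ coefficient vector of a polynomial of degree at most $m$, which is the desired identity; alternatively one may simply invoke the cited classical equivalence. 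Existence and uniqueness of both argmins then follow from the strict convexity noted above.
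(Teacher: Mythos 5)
Your proposal is correct, and it follows the same overall skeleton as the paper's proof (convex quadratic minimisation via normal equations for both distances), but it is substantially more complete on the one step that actually matters. The paper's argument in \refapp{app.LeastSquaresReductionOptimality} differentiates the squared L2 objective, writes the gradient as $2(\mpmat_m\emat(m,n)-\bpmat_n)\bwmat_n$ (which as printed is missing a right factor of $\tr{\emat(m,n)}$), and then simply asserts that it vanishes at $\mpmat_m=\bpmat_n\rmat_{L2}(n,m)$; the Frobenius case is dispatched by invoking the pseudo-inverse. But the assertion that the \emph{unweighted} least-squares point also annihilates the $\bwmat_n$-\emph{weighted} normal equations is exactly the nontrivial content of the proposition --- it is the reason the L2-optimal and Frobenius-optimal reductions coincide --- and the paper effectively leaves it to the cited references \cite{lee_park_BAMS1997, lutterkort_peters_reif_CAGD1999}. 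You isolate this as the identity $\rmat_{L2}(n,m)\emat(m,n)\,\bwmat_{n}\tr{\emat(m,n)}=\bwmat_{n}\tr{\emat(m,n)}$, i.e.\ that $\bwmat_n$ maps the row space of $\emat(m,n)$ (the Bernstein-$n$ coefficient vectors of degree-$\le m$ polynomials) into itself, and you prove it from first principles: the $(i,k)$ entry of $\bwmat_n\tr{\emat(m,n)}$ is the inner product $\int_0^1\bpoly_{i,n}\bpoly_{k,m}\,\diff t=\tfrac{1}{n+m+1}\tbinom{n}{i}\tbinom{m}{k}/\tbinom{n+m}{i+k}$, which after cancelling factorials is a polynomial of degree $\le m$ in $i$, and the falling-factorial identity $\sum_{i}i^{[j]}\bpoly_{i,n}(t)=n^{[j]}t^{j}$ converts that into membership in the required subspace. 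I checked the degree count ($k$ from $(i+k)!/i!$ plus $m-k$ from $(n+m-i-k)!/(n-i)!$) and the positive-definiteness claims ($\bwmat_n$ is the Gram matrix of linearly independent Bernstein polynomials, and $\emat(m,n)$ has full row rank by \refprop{prop.ElevationMatrixFullRank}), and all of it holds. What your route buys is a self-contained proof that closes the gap in the paper's appendix; what the paper's route buys is brevity, at the cost of deferring the essential equivalence of the two minimisers to the literature.
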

\begin{proof}
See \refapp{app.LeastSquaresReductionOptimality}.
\end{proof}

\subsubsection{Taylor Reduction}
Another classical degree reduction is Taylor approximation that preserves local derivatives.

\begin{definition} \label{def.TaylorReduction}
(\emph{Taylor Reduction}) 
The Taylor reduction matrix $\mat{R}_{\tbasis, \toffset}(n,m)$ for approximating an $n^\text{th}$-order Bezier curve around  $\toffset \in \R$ by a lower $m^\text{th}$-order Bezier curve is defined as%
\begin{align}\label{eq.ReductionMatrix}
\mat{R}_{\tbasis, \toffset}(n,m) := \tfbasis_{\tbasis}^{\bbasis}(n, \toffset) \mat{I}_{(n+1) \times (m+1)} \tfbasis_{\bbasis}^{\tbasis}(m, \toffset),
\end{align}
where $\tfbasis_{\tbasis}^{\bbasis}$ and $\tfbasis_{\bbasis}^{\tbasis}$ are the Taylor-to-Bernstein and the Bern-stein-to-Taylor basis transformation matrices  in \reflem{lem.BasisTransformation}. 
\end{definition}
\noindent In other words, Taylor reduction perform a basis transformation from Bernstein to Taylor basis, and ignores some higher-order Taylor basis elements, and then comes back to Bernstein basis.\footnote{
The Taylor reduction matrix can be derived as follows:
\begin{align}
\bcurve_{\bpoint_0, \ldots, \bpoint_n}(t) &= \blist{\bpoint_0, \ldots, \bpoint_n} \bbasis_{n}(t)  
= \blist{\bpoint_0, \ldots, \bpoint_n} \tfbasis_{\tbasis}^{\bbasis}(n, \toffset) \tbasis_{n, \toffset}(t) \nonumber
\\
& \approx \blist{\bpoint_0, \ldots, \bpoint_n} \tfbasis_{\tbasis}^{\bbasis}(n, \toffset) \mat{I}_{(n+1) \times (m+1)} \tbasis_{m, \toffset}(t) \nonumber
\\
& = \blist{\bpoint_0, \ldots, \bpoint_n} \tfbasis_{\tbasis}^{\bbasis}(n, \toffset) \mat{I}_{(n+1) \times (m+1)} \tfbasis_{\bbasis}^{\tbasis}(m, \toffset) \bbasis_{m}(t) \nonumber
\\
& = \blist{\mpoint_0, \ldots, \mpoint_m} \bbasis_{m}(t) 
=\bcurve_{\mpoint_0, \ldots, \mpoint_m} (t) \nonumber
\end{align}
}
Hence, it has a strong bias and local expressiveness around the Taylor offset $\toffset$, as illustrated in \reffig{fig.DegreeReduction}(a,d).

\begin{proposition} \label{prop.TaylorReductionInverse}
(\emph{Taylor Reduction as Elevation Inverse})
The Taylor reduction matrix $\mat{R}_{\tbasis, \toffset}(n,m)$ is a right inverse of the elevation matrix $\emat(m,n)$, i.e., for any $m \leq n \in \N$
\begin{align}
\emat(m,n) \mat{R}_{\tbasis, \toffset}(n,m) = \mat{I}_{(m+1) \times (m+1)}.
\end{align}
\end{proposition}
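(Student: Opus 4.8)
The plan is to substitute the definitions of the degree elevation matrix $\emat(m,n)$ (\refdef{def.DegreeElevation}, \refprop{prop.ElevationControlPoint}) and the Taylor reduction matrix $\mat{R}_{\tbasis, \toffset}(n,m)$ (\refdef{def.TaylorReduction}) into the product $\emat(m,n)\,\mat{R}_{\tbasis, \toffset}(n,m)$ and then collapse it step by step using the change-of-basis identities of \reflem{lem.BasisTransformation}. With $m \le n$ we have $\emat(m,n) = \tfbasis_{\mbasis}^{\bbasis}(m)\,\mat{I}_{(m+1)\times(n+1)}\,\tfbasis_{\bbasis}^{\mbasis}(n)$ and $\mat{R}_{\tbasis, \toffset}(n,m) = \tfbasis_{\tbasis}^{\bbasis}(n,\toffset)\,\mat{I}_{(n+1)\times(m+1)}\,\tfbasis_{\bbasis}^{\tbasis}(m,\toffset)$, so their product contains the interior factor $\tfbasis_{\bbasis}^{\mbasis}(n)\,\tfbasis_{\tbasis}^{\bbasis}(n,\toffset)$, which is the composition Taylor$\to$Bernstein$\to$monomial and hence equals $\tfbasis_{\tbasis}^{\mbasis}(n,\toffset)$ (this follows from \reflem{lem.BasisTransformation} since the Taylor basis vectors at $n+1$ pairwise distinct parameters span $\R^{n+1}$, by \reflem{lem.InvertibleBasisMatrix}). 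After this step the product reads $\tfbasis_{\mbasis}^{\bbasis}(m)\,\big(\mat{I}_{(m+1)\times(n+1)}\,\tfbasis_{\tbasis}^{\mbasis}(n,\toffset)\,\mat{I}_{(n+1)\times(m+1)}\big)\,\tfbasis_{\bbasis}^{\tbasis}(m,\toffset)$.

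The crux is the bracketed term. Since $\mat{I}_{(m+1)\times(n+1)}$ is the identity padded on the right with zero columns and $\mat{I}_{(n+1)\times(m+1)}$ is its transpose, the bracketed term is precisely the leading principal $(m+1)\times(m+1)$ submatrix of $\tfbasis_{\tbasis}^{\mbasis}(n,\toffset)$, and the claim is that this submatrix equals $\tfbasis_{\tbasis}^{\mbasis}(m,\toffset)$. To see this, recall that $\tfbasis_{\tbasis}^{\mbasis}(n,\toffset)$ is characterized by $\mbasis_{n}(t) = \tfbasis_{\tbasis}^{\mbasis}(n,\toffset)\,\tbasis_{n,\toffset}(t)$; its $(k{+}1)$-th row lists the coefficients in the expansion $t^{k} = \sum_{j} \binom{k}{j}\,\toffset^{\,k-j}\,(t-\toffset)^{j}$, which for $k \le m$ involves only $(t-\toffset)^{0},\dots,(t-\toffset)^{m}$ and whose coefficients $\binom{k}{j}\toffset^{\,k-j}$ do not depend on $n$ (this is exactly the explicit lower-triangular form recorded in \refapp{app.ExplicitBasisTransformation}). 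Hence the leading $(m+1)\times(m+1)$ block $\mat{B}$ already satisfies $\mbasis_{m}(t) = \mat{B}\,\tbasis_{m,\toffset}(t)$ for all $t$, and by uniqueness of this representation (invertibility of the Taylor basis matrix, \reflem{lem.InvertibleBasisMatrix}) we conclude $\mat{B} = \tfbasis_{\tbasis}^{\mbasis}(m,\toffset)$.

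Substituting the block identity back leaves $\tfbasis_{\mbasis}^{\bbasis}(m)\,\tfbasis_{\tbasis}^{\mbasis}(m,\toffset)\,\tfbasis_{\bbasis}^{\tbasis}(m,\toffset)$. The last two factors are the composition Bernstein$\to$Taylor$\to$monomial and therefore equal $\tfbasis_{\bbasis}^{\mbasis}(m)$ by the same one-line argument used above; and finally $\tfbasis_{\mbasis}^{\bbasis}(m)\,\tfbasis_{\bbasis}^{\mbasis}(m) = \mat{I}_{(m+1)\times(m+1)}$ because these matrices are mutual inverses by \reflem{lem.BasisTransformation}. This yields $\emat(m,n)\,\mat{R}_{\tbasis, \toffset}(n,m) = \mat{I}_{(m+1)\times(m+1)}$, as claimed. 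The only genuinely nontrivial step is the block-extraction claim; the rest is bookkeeping with the already-established change-of-basis identities. Conceptually, the claim says that the degree-$n$ Taylor-to-monomial matrix is \emph{nested} — its leading principal $(m+1)\times(m+1)$ submatrix is the degree-$m$ version — which is precisely the structural property that makes the sandwiched rectangular identity matrices collapse, mirroring the analogous composition behavior of the elevation matrices.
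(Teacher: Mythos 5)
Your proof is correct and follows essentially the same route as the paper's: expand both matrices via their definitions, merge the inner factors $\tfbasis_{\bbasis}^{\mbasis}(n)\tfbasis_{\tbasis}^{\bbasis}(n,\toffset)$ into $\tfbasis_{\tbasis}^{\mbasis}(n,\toffset)$, observe that the sandwiching rectangular identities extract its leading $(m+1)\times(m+1)$ block, identify that block with $\tfbasis_{\tbasis}^{\mbasis}(m,\toffset)$, and collapse the remaining product to the identity. If anything, you justify the block-nesting step more carefully than the paper, which asserts it only with an underbrace, whereas you derive it from the $n$-independent lower-triangular entries $\binom{i}{j}\toffset^{i-j}$ of the Taylor-to-monomial transformation.
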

\begin{proof}
See \refapp{app.TaylorReductionInverse}.
\end{proof}

It is important to observe  in \reffig{fig.DegreeReduction} that both the least-squares and Taylor reduction methods offer less freedom in controlling the resulting shape of B\'ezier approximations; for example, the end points of the original curve are not preserved after degree reduction, which is essential for path smoothing with boundary conditions \cite{ravankar_etal_Sensors2018}.

\subsubsection{Parameterwise Matching Reduction}

In order to accurately approximate curve shape and geometry, we propose a new parameterwise matching reduction method that preserves a finite set of curve points after degree reduction.

\begin{definition}\label{def.MatchingReduction}
(\emph{Parameterwise Matching Reduction})
For B\'ezier degree reduction from a higher degree $n$ to a lower degree $m \leq n$, the \emph{parameterwise matching degree reduction matrix} $\rmat_{t_0, \ldots, t_m}(n,m)$ associated with pairwise distinct parameters $t_0,  \ldots, t_m \in \R$ (i.e., $t_i \neq t_j$ for all $i\neq j$) is defined as 
\begin{align} \label{eq.MatchingReduction}
\rmat_{t_0, \ldots, t_m}(n,m) := \bbmat_{n}(t_0, \ldots, t_m) \bbmat_{m}(t_0, \ldots, t_m)^{-1},
\end{align}
where $\bbmat_{n}(t_0, \ldots, t_m)$ is the Bernstein basis matrix  in \refeq{eq.BasisMatrix}.
\end{definition}
\noindent For example, a numerically stable choice of  $t_0, \ldots, t_m$ over the unit interval is the uniformly spaced parameters in $[0,1]$, i.e., $t_i = \frac{i}{m}$ for $i = 0, \ldots, m$. 
We call the corresponding reduction operation as the \emph{uniform matching reduction}. 

As expected, the parameterwise matching reduction of B\'ezier curves keeps curve points unchanged at $t_0, \ldots, t_m$.
\begin{proposition}\label{prop.MatchingReduction}
(\emph{Preserved Points of Matching Reduction}) For any pairwise distinct $t_0, \ldots, t_m \in \R$, a B\'ezier curve $\bcurve_{\bpoint_0, \ldots, \bpoint_n}(t)$ of degree $n$  and its parameterwise matching reduction $\bcurve_{\mpoint_0, \ldots, \mpoint_m}(t)$ of degree $m \leq n$ with control points
\begin{align}
[\mpoint_0, \ldots, \mpoint_m] = [\bpoint_0, \ldots, \bpoint_n] \rmat_{t_0, \ldots, t_m}(n,m)
\end{align}
match at the curve parameters $t_0, \ldots, t_m$, i.e.,
\begin{align}
\bcurve_{\mpoint_0, \ldots, \mpoint_m}(t) = \bcurve_{\bpoint_0, \ldots, \bpoint_n}(t) \quad \forall t = t_0, \ldots, t_m.
\end{align} 
\end{proposition}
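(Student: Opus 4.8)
The plan is to unwind the definitions and collapse the whole statement to a one-line linear-algebra identity. Abbreviate $\bbmat_{m} := \bbmat_{m}(t_0, \ldots, t_m)$ and $\bbmat_{n} := \bbmat_{n}(t_0, \ldots, t_m)$. First I would observe that, since the $m+1$ parameters $t_0, \ldots, t_m$ are pairwise distinct, the Bernstein basis matrix $\bbmat_{m}$ is square of size $(m+1)\times(m+1)$ and, by \reflem{lem.InvertibleBasisMatrix}, invertible; hence the parameterwise matching reduction matrix $\rmat_{t_0, \ldots, t_m}(n,m) = \bbmat_{n}\,\bbmat_{m}^{-1}$ in \refeq{eq.MatchingReduction}, and therefore the reduced control point matrix $\mpmat_{m} = \bpmat_{n}\,\bbmat_{n}\,\bbmat_{m}^{-1}$, are well defined.

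The core step is then to evaluate the reduced curve at a sample parameter $t_k$, $k \in \{0, \ldots, m\}$, using the matrix representation of B\'ezier curves: $\bcurve_{\mpoint_0, \ldots, \mpoint_m}(t_k) = \mpmat_{m}\,\bbasis_{m}(t_k) = \bpmat_{n}\,\bbmat_{n}\,\bbmat_{m}^{-1}\,\bbasis_{m}(t_k)$. Here the key observation is that, by the definition of the Bernstein basis matrix in \refeq{eq.BasisMatrix}, $\bbasis_{m}(t_k)$ is precisely the $(k+1)^{\text{st}}$ column of $\bbmat_{m}$, so $\bbmat_{m}^{-1}\,\bbasis_{m}(t_k) = \vect{e}_{k+1}$, the $(k+1)^{\text{st}}$ standard basis vector of $\R^{m+1}$. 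Left-multiplying by $\bbmat_{n}$ then extracts its $(k+1)^{\text{st}}$ column, which is $\bbasis_{n}(t_k)$, giving $\bcurve_{\mpoint_0, \ldots, \mpoint_m}(t_k) = \bpmat_{n}\,\bbasis_{n}(t_k) = \bcurve_{\bpoint_0, \ldots, \bpoint_n}(t_k)$; since $k$ was arbitrary, this is the claim.

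There is essentially no hard part: the argument is just the cancellation of $\bbmat_{m}^{-1}$ against a single column of $\bbmat_{m}$. The only point needing care — and the sole reason \reflem{lem.InvertibleBasisMatrix} is invoked — is guaranteeing that $\bbmat_{m}(t_0,\ldots,t_m)$ is genuinely invertible, which is exactly where the pairwise-distinctness hypothesis on $t_0, \ldots, t_m$ enters; without it \refeq{eq.MatchingReduction} and the statement itself would be ill-posed.
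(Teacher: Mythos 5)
Your proof is correct and follows essentially the same route as the paper's: both unwind $\rmat_{t_0,\ldots,t_m}(n,m)=\bbmat_{n}(t_0,\ldots,t_m)\bbmat_{m}(t_0,\ldots,t_m)^{-1}$ and cancel $\bbmat_{m}^{-1}$ against the column $\bbasis_{m}(t_i)$ of $\bbmat_{m}$ to recover $\bbasis_{n}(t_i)$, with invertibility supplied by \reflem{lem.InvertibleBasisMatrix}. Your version merely makes the standard-basis-vector step explicit, which the paper leaves implicit.
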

\begin{proof}
See \refapp{app.MatchingReduction}.
\end{proof}

\bigskip

\begin{proposition}\label{prop.MatchingReductionInverse}
(\emph{Matching Reduction as Elevation Inverse})
For any $m \leq n$ and pairwise distinct reals $t_0, \ldots, t_m \in \R$, the parameterwise matching degree matrix $\rmat_{t_0, \ldots, t_m}(n,m)$ is a right inverse of the elevation matrix $\emat(m,n)$, 
\begin{align}
\emat(m,n)\rmat_{t_0, \ldots, t_m}(n,m) = \mat{I}_{(m+1) \times (m+1)}.
\end{align} 
\end{proposition}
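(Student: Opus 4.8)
The plan is to rewrite $\emat(m,n)\rmat_{t_0,\ldots,t_m}(n,m)$ using the factored form of the parameterwise matching reduction matrix in \eqref{eq.MatchingReduction}, and then collapse it with the Bernstein basis elevation identity from \refprop{prop.ElevationMatrixBernstein}. First I would note that the definition \eqref{eq.MatchingReduction} is well posed: since $t_0, \ldots, t_m \in \R$ are pairwise distinct, the square $(m+1)\times(m+1)$ Bernstein basis matrix $\bbmat_{m}(t_0, \ldots, t_m)$ is invertible by \reflem{lem.InvertibleBasisMatrix} (applied with degree $m$), so the product $\bbmat_{n}(t_0, \ldots, t_m)\bbmat_{m}(t_0, \ldots, t_m)^{-1}$ makes sense and has the right size $(n+1)\times(m+1)$.

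The key step is to observe that the columns of the two basis matrices are related by the elevation matrix. By \refprop{prop.ElevationMatrixBernstein} (equation \eqref{eq.BernsteinBasisElevationMatrix}, with the roles of the degrees taken as $m \leq n$), one has $\bbasis_{m}(t) = \emat(m,n)\bbasis_{n}(t)$ for every $t \in \R$. Evaluating this at $t = t_0, \ldots, t_m$ and stacking the resulting column vectors side by side yields
\begin{align}
\bbmat_{m}(t_0, \ldots, t_m) = \emat(m,n)\, \bbmat_{n}(t_0, \ldots, t_m).
\end{align}
Substituting this identity into the definition of $\rmat_{t_0, \ldots, t_m}(n,m)$ gives
\begin{align}
\emat(m,n)\, \rmat_{t_0, \ldots, t_m}(n,m)
&= \emat(m,n)\, \bbmat_{n}(t_0, \ldots, t_m)\, \bbmat_{m}(t_0, \ldots, t_m)^{-1} \nonumber \\
&= \bbmat_{m}(t_0, \ldots, t_m)\, \bbmat_{m}(t_0, \ldots, t_m)^{-1}
= \mat{I}_{(m+1) \times (m+1)},
\end{align}
which is the claimed right-inverse property, and in particular shows $\rmat_{t_0, \ldots, t_m}(n,m)$ is a valid degree reduction matrix in the sense of \refdef{def.DegreeReduction}.

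There is essentially no hard step here; the argument is a two-line manipulation once the Bernstein basis elevation identity is in hand. The only points requiring a bit of care are bookkeeping: making sure the invertible (square) factor is the \emph{lower}-degree basis matrix $\bbmat_{m}$ evaluated at exactly $m+1$ pairwise distinct parameters, and that the orientation of the elevation identity is $\bbasis_m = \emat(m,n)\bbasis_n$ (elevation maps the degree-$m$ basis onto itself after first embedding into degree $n$), consistent with the convention $\emat(\cdot,\cdot)$ used in \refsec{sec.DegreeElevation}.
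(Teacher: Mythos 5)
Your proposal is correct and follows essentially the same route as the paper's proof: both apply the Bernstein basis elevation identity of \refprop{prop.ElevationMatrixBernstein} (in the orientation $\bbasis_{m}(t) = \emat(m,n)\bbasis_{n}(t)$ for $m \leq n$) columnwise at $t_0, \ldots, t_m$ to get $\emat(m,n)\bbmat_{n}(t_0,\ldots,t_m) = \bbmat_{m}(t_0,\ldots,t_m)$, and then cancel against $\bbmat_{m}(t_0,\ldots,t_m)^{-1}$. Your added remarks on the invertibility of the square lower-degree Bernstein matrix via \reflem{lem.InvertibleBasisMatrix} are a sound and slightly more explicit justification than the paper provides.
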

\begin{proof}
See \refapp{app.MatchingReductionInverse}
\end{proof}

\begin{figure*}[b]
\vspace{-0mm}
\centering
\begin{tabular}{@{}c@{\hspace{0.5mm}}c@{\hspace{0.5mm}}c@{\hspace{0.5mm}}c@{\hspace{0.5mm}}c@{\hspace{0.5mm}}c@{\hspace{0.5mm}}c@{}}
\rotatebox{90}{\scriptsize{\hspace{0.1mm}Taylor Reduction}}&
\includegraphics[width=0.161\textwidth]{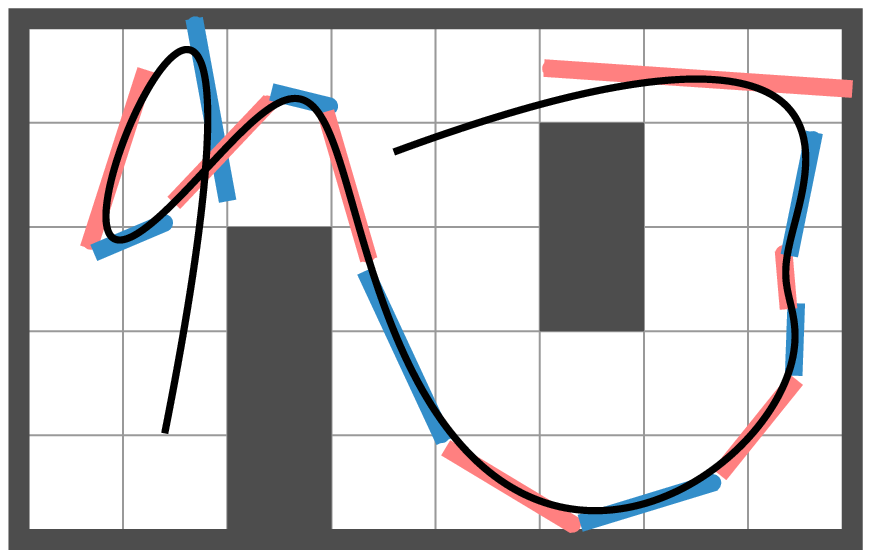} &
\includegraphics[width=0.161\textwidth]{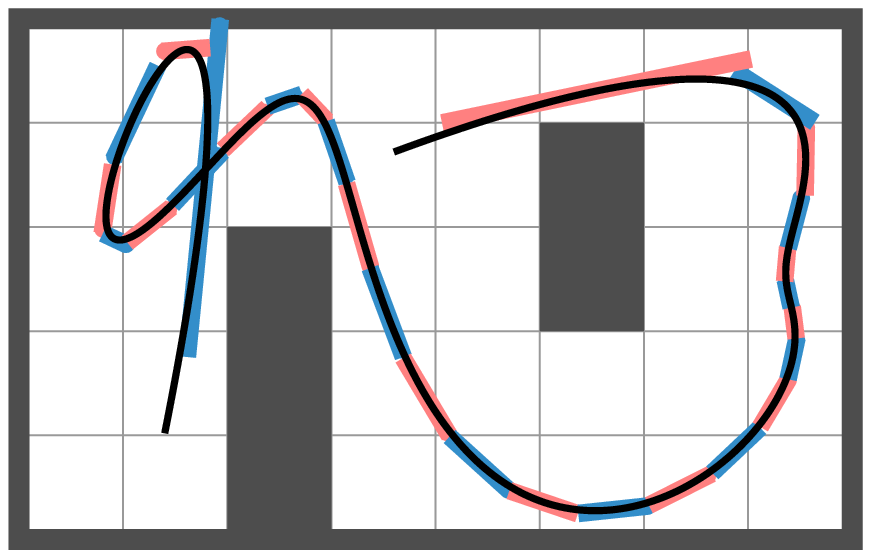} &
\includegraphics[width=0.161\textwidth]{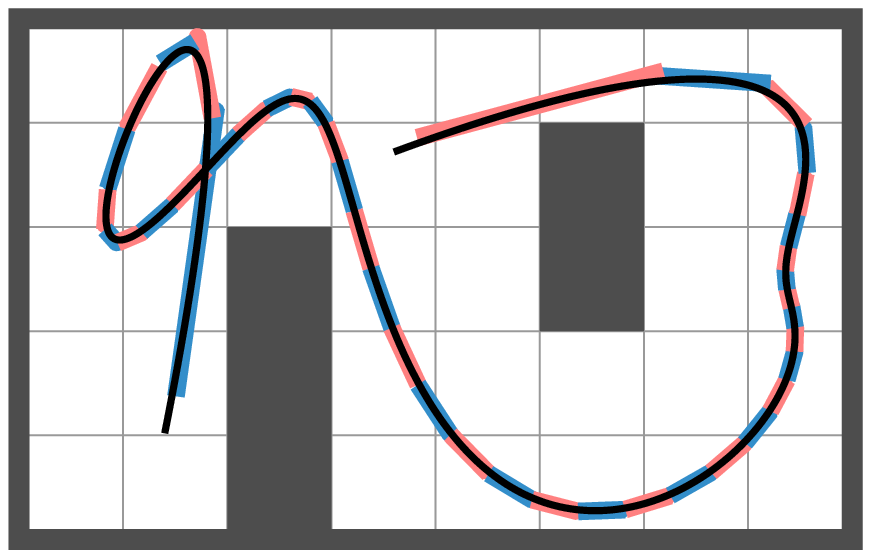} &
\includegraphics[width=0.161\textwidth]{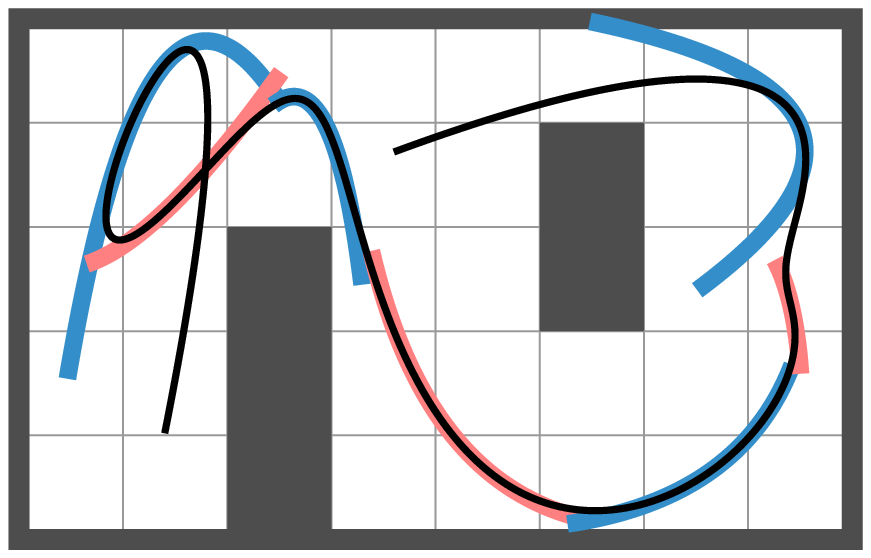} & 
\includegraphics[width=0.161\textwidth]{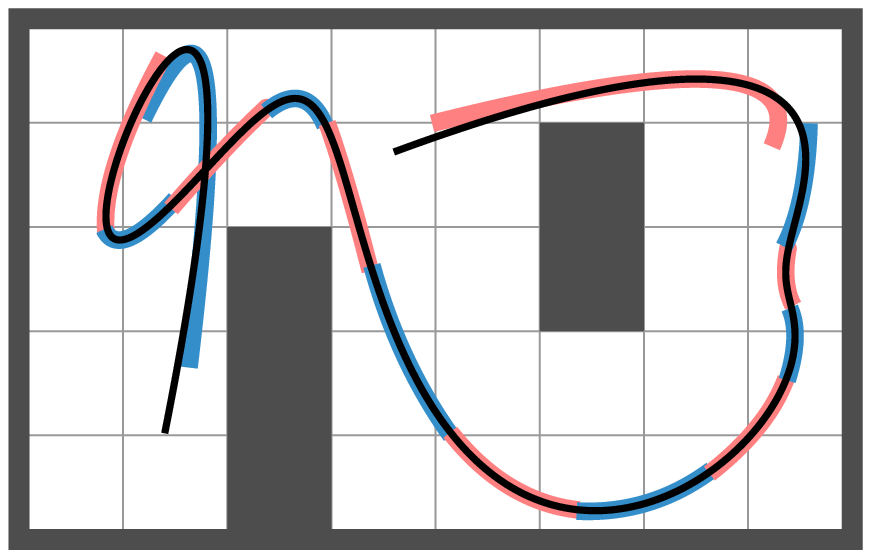} &
\includegraphics[width=0.161\textwidth]{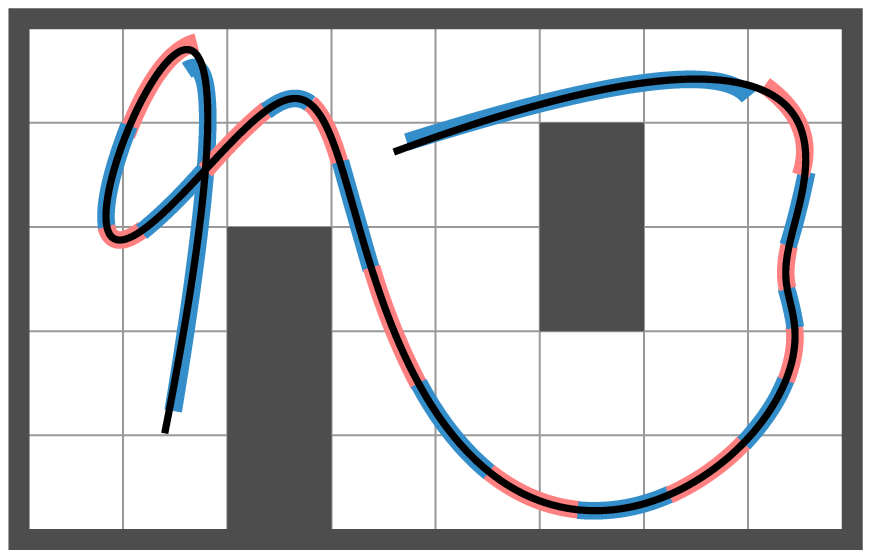} 
\\
\rotatebox{90}{\scriptsize{\hspace{2mm}Least Squares}} &
\includegraphics[width=0.161\textwidth]{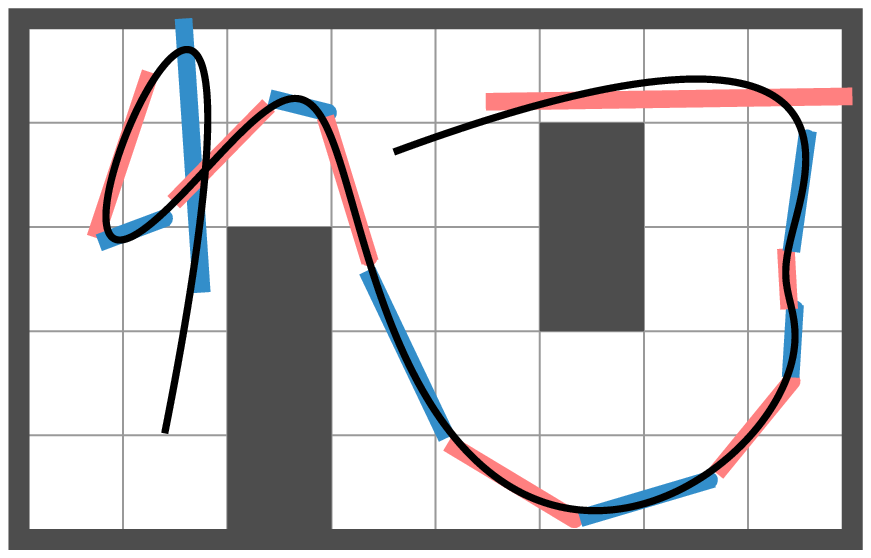} &
\includegraphics[width=0.161\textwidth]{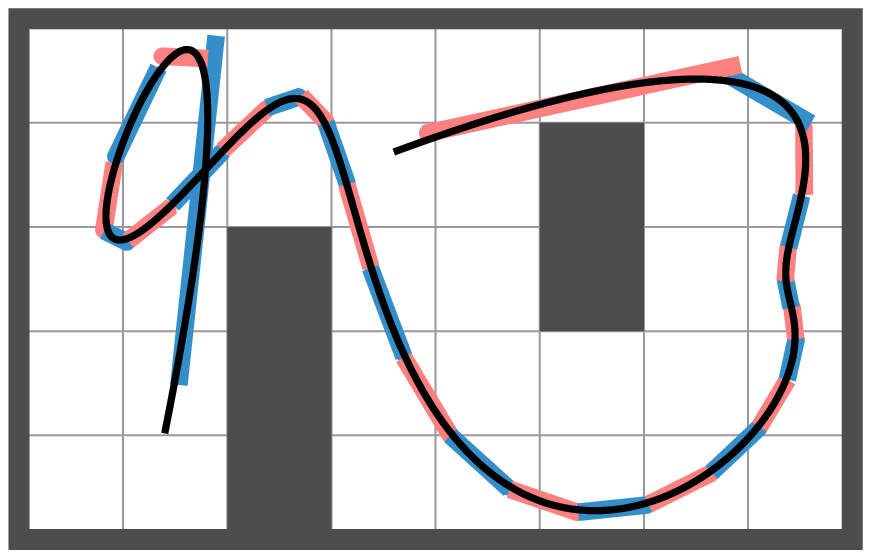} &
\includegraphics[width=0.161\textwidth]{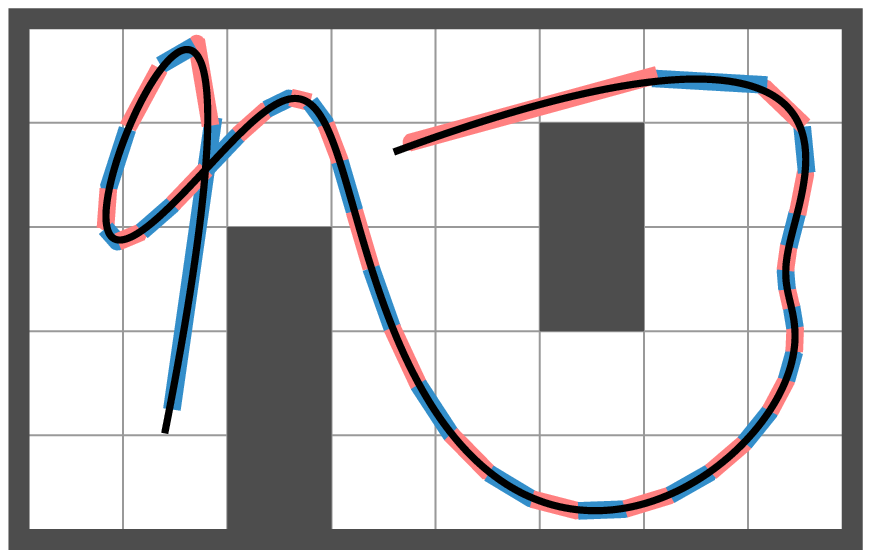} &
\includegraphics[width=0.161\textwidth]{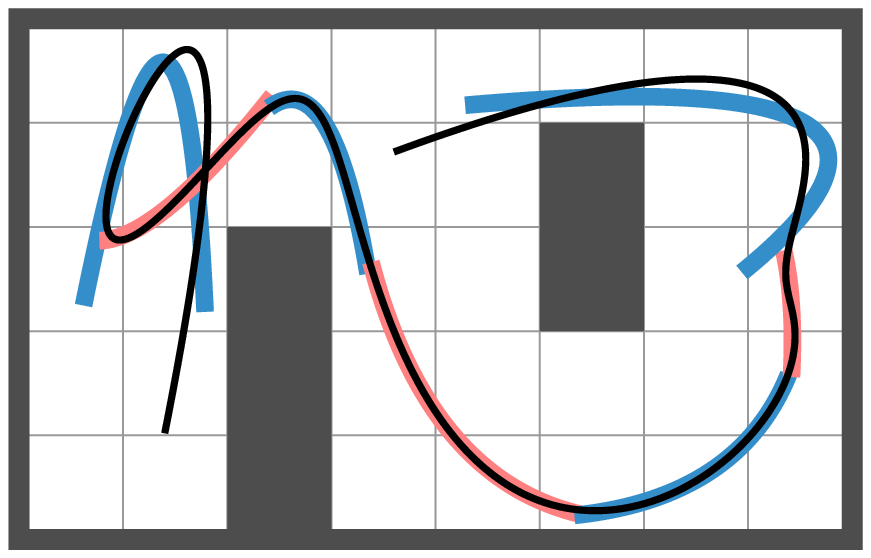} & 
\includegraphics[width=0.161\textwidth]{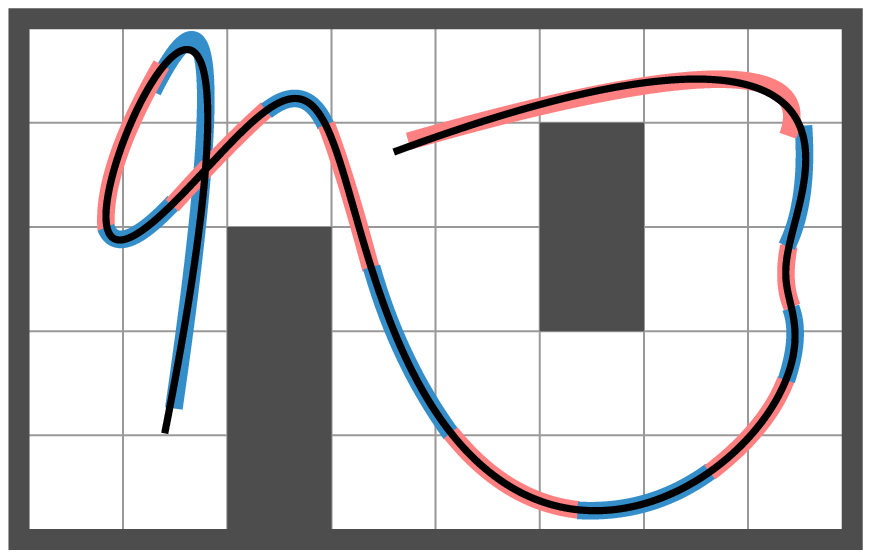} &
\includegraphics[width=0.161\textwidth]{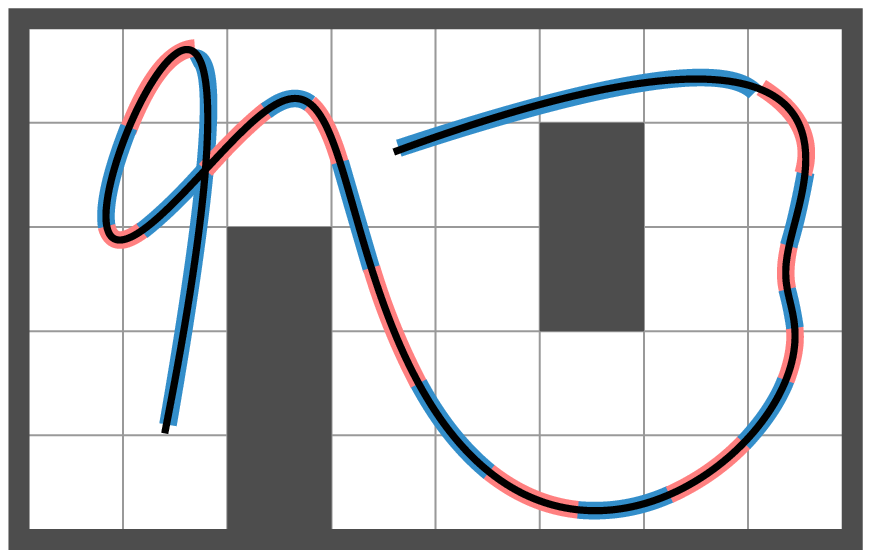} 
\\
\rotatebox{90}{\scriptsize{\hspace{-0.75mm}Uniform Matching}} & 
\includegraphics[width=0.161\textwidth]{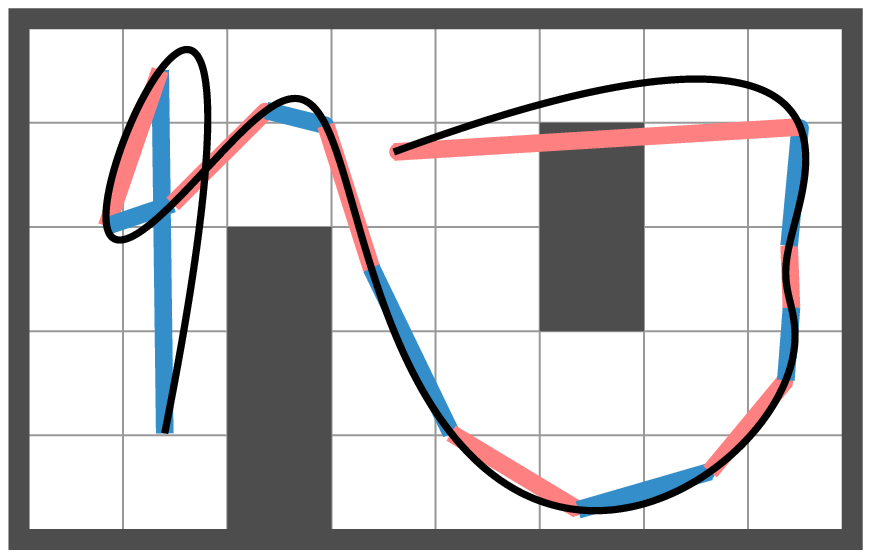} &
\includegraphics[width=0.161\textwidth]{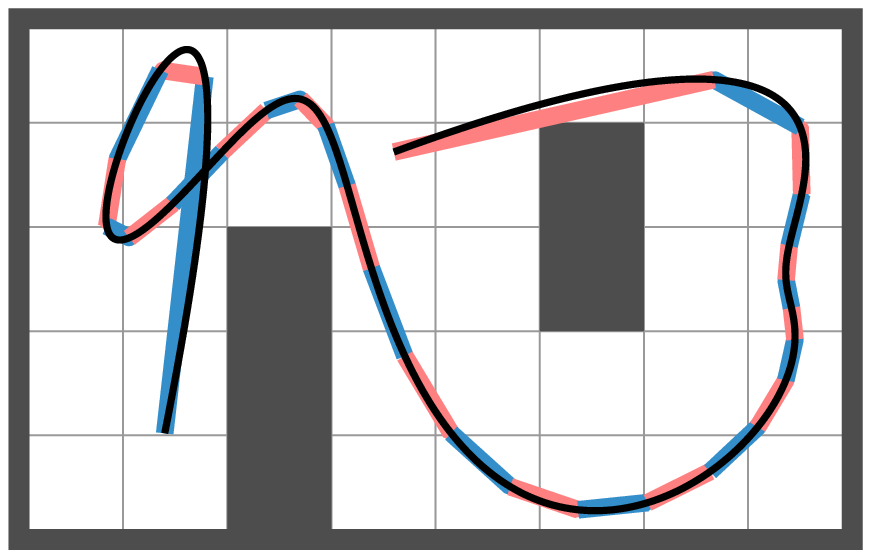} &
\includegraphics[width=0.161\textwidth]{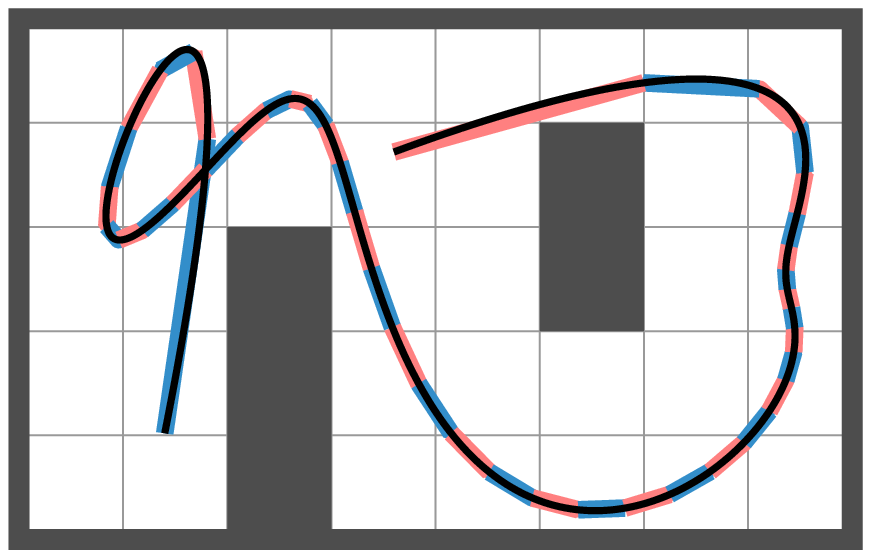} &
\includegraphics[width=0.161\textwidth]{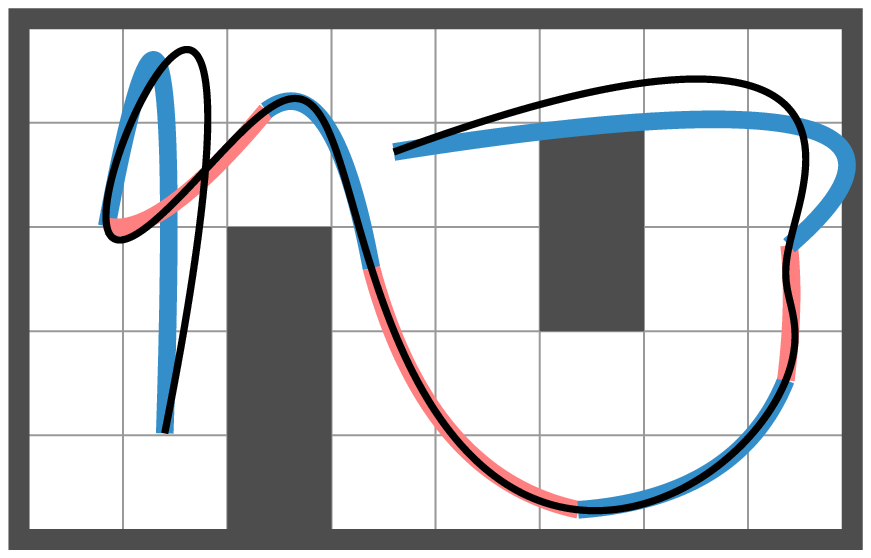} & 
\includegraphics[width=0.161\textwidth]{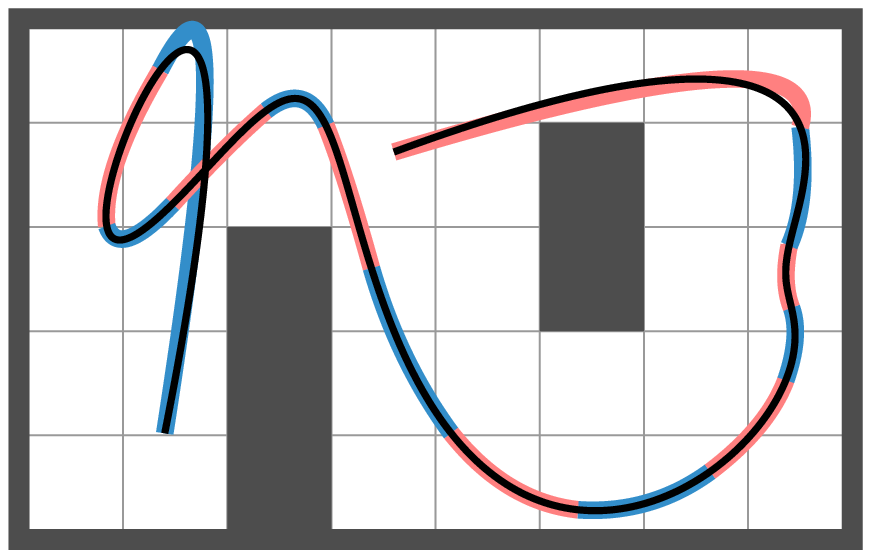} &
\includegraphics[width=0.161\textwidth]{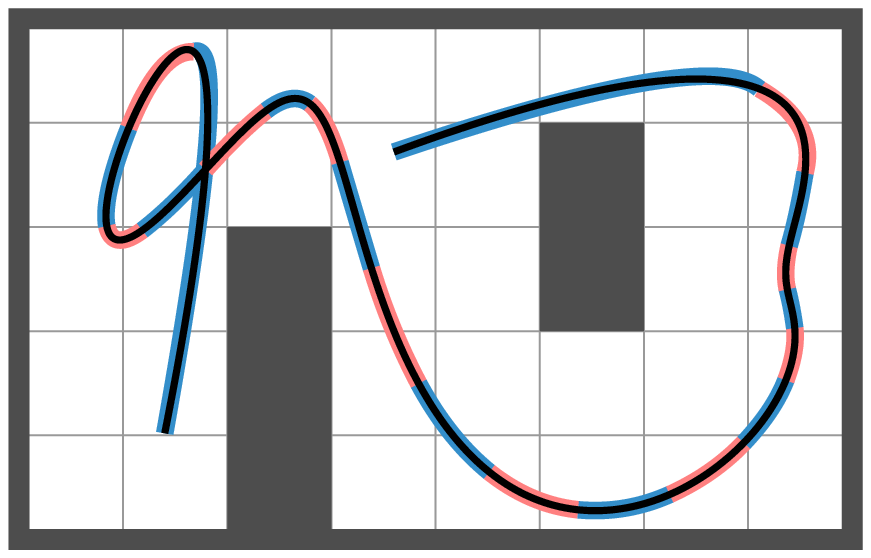} 
\\[-1.5mm]
& \scriptsize{$2(n-1)$ Linear Segments} & \scriptsize{$4(n-1)$ Linear Segments } & \scriptsize{$6(n-1)$ Linear Segments} & \scriptsize{$(n-1)$ Quad Segments} & \scriptsize{$2(n-1)$ Quad Segments} & \scriptsize{$3(n-1)$ Quad Segments}   
\\[-1mm]
 & \scriptsize{(a)} & \scriptsize{(b)} & \scriptsize{(c)} & \scriptsize{(d)} & \scriptsize{(e)} & \scriptsize{(f)}
\end{tabular}
\vspace{-3mm}
\caption{Approximation of a B\'ezier curve of degree $n = 8$  by (a, b, c) linear and (d, e, f) quadratic B\'ezier segments using (top) Taylor, (middle) least squares, and (bottom) uniform matching reduction. The Taylor offset is set to be 0.5, and a uniform partition of the unit interval is used with (a) $2(n-1)$, (b) $4(n-1)$, (c) $6(n-1)$ partition elements for linear approximations, and (d) $(n-1)$, (e) $2(n-1)$, (f) $3(n-1)$ partition elements for quadratic approximations. B\'ezier approximation rule: An $n^\text{th}$-order B\'ezier curve can be approximated accurately by $3(n-1)$ quadratic and $6(n-1)$ linear uniform matching B\'ezier curves.}
\label{fig.BezierApproximation}
\end{figure*}

Another interesting connection between degree elevation and matching reduction is their shared matrix form.
\begin{proposition} \label{prop.MatchingReductionElevationSharedForm}
(\emph{Shared Form of Matching Reduction and Elevation Matrix})
For any $n,m \in \N$ and any pairwise distinct reals $t_0, \ldots, t_m \in \R$, the Bernstein basis matrices satisfy%
\footnote{It is important to highlight that for any distinct  $t_0, \ldots, t_n \in \R$, the inverse of the Bernstein matrix can be computed analytically using the Bernstein-to-monomial basis transformation $\bbasis_{n}(t_0, \ldots, t_n) = \tfbasis_{\mbasis}^{\bbasis} \mbasis(t_0, \ldots, t_n)$ as 
\begin{align*}
\bbmat_{n}(t_0, \ldots, t_n)^{-1} = \mbmat_{n}(t_0, \ldots, t_n)^{-1} \tfbasis_{\bbasis}^{\mbasis}(n)
\end{align*}
since the inverse of the monomial (a.k.a. Vandermonde) matrix is analytically available \cite{neagoe_SPL1996}, and the elements of Bernstein-to-monomial basis transformation $\tfbasis_{\bbasis}^{\mbasis}(n)$ can be determined explicitly, see \refapp{app.BasisTransformation}.
}
\begin{align}
\!\!\!\bbmat_{n}(t_0, \ldots, t_m)\bbmat_{m}(t_0, \ldots, t_m)^{-1} \!\!= \!\!
\left\{
\begin{array}{@{}c@{}l@{}}
\emat(n,m) & \text{, if } n \leq m, \\
\!\rmat_{t_0, \ldots, t_m}\!(n,m) & \text{, if } n \geq m,
\end{array}
\right. \!\!\!\!
\end{align}
where $\emat(n,n) = \rmat_{t_0, \ldots, t_m}(n,n) = \mat{I}_{(n+1) \times (n+1)}$.
\end{proposition}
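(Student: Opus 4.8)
The plan is to obtain each of the two cases directly from a result already in hand, so the argument is a unification rather than a fresh computation.

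First I would record that the right-hand product is well defined. Since $t_0, \ldots, t_m$ are $m+1$ pairwise distinct reals, \reflem{lem.InvertibleBasisMatrix} guarantees that the square Bernstein basis matrix $\bbmat_{m}(t_0, \ldots, t_m) \in \R^{(m+1)\times(m+1)}$ is invertible; hence $\bbmat_{n}(t_0, \ldots, t_m)\,\bbmat_{m}(t_0, \ldots, t_m)^{-1}$ is a well-defined $(n+1)\times(m+1)$ matrix for every $n \in \N$, matching the shapes of $\emat(n,m)$ (when $n \leq m$) and $\rmat_{t_0, \ldots, t_m}(n,m)$ (when $n \geq m$).

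Next I would treat the case $n \geq m$: here there is nothing to prove, since \refdef{def.MatchingReduction} \emph{defines} $\rmat_{t_0, \ldots, t_m}(n,m)$ by \refeq{eq.MatchingReduction} to be exactly $\bbmat_{n}(t_0, \ldots, t_m)\,\bbmat_{m}(t_0, \ldots, t_m)^{-1}$. For the case $n \leq m$, the assertion is precisely \refeq{eq.ElevationMatrixBernstein} of \refprop{prop.ElevationMatrixBernstein}, which states that $\emat(n,m) = \bbmat_{n}(t_0, \ldots, t_m)\,\bbmat_{m}(t_0, \ldots, t_m)^{-1}$ for any choice of pairwise distinct parameters $t_0, \ldots, t_m$. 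Finally, in the overlapping case $n = m$ both formulas collapse to $\bbmat_{n}(t_0, \ldots, t_n)\,\bbmat_{n}(t_0, \ldots, t_n)^{-1} = \mat{I}_{(n+1)\times(n+1)}$, which agrees with the special values $\emat(n,n) = \mat{I}_{(n+1)\times(n+1)}$ (immediate from \refeq{eq.ElevationMatrix}, since $\tfbasis_{\mbasis}^{\bbasis}(n)$ and $\tfbasis_{\bbasis}^{\mbasis}(n)$ are mutually inverse by \reflem{lem.BasisTransformation}) and $\rmat_{t_0, \ldots, t_n}(n,n) = \mat{I}_{(n+1)\times(n+1)}$, so the two branches are consistent where they meet.

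I do not expect any real obstacle. The only point that requires a moment of care is lining up hypotheses: \reflem{lem.InvertibleBasisMatrix} applies because the number of sample parameters is $m+1$, exactly the size of the square matrix $\bbmat_{m}(t_0, \ldots, t_m)$ being inverted, which is what the statement's hypothesis ``pairwise distinct reals $t_0, \ldots, t_m$'' provides. It may also be worth a closing remark that this proposition makes explicit the structural reason why degree elevation and matching reduction share the same matrix form: both simply re-express the same curve points sampled at a different number of parameters, and the single expression $\bbmat_{n}(t_0, \ldots, t_m)\,\bbmat_{m}(t_0, \ldots, t_m)^{-1}$ captures both directions.
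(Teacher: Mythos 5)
Your proposal is correct and matches the paper's own argument, which likewise derives the $n \geq m$ branch directly from \refdef{def.MatchingReduction} and the $n \leq m$ branch from \refprop{prop.ElevationMatrixBernstein}. The extra checks you include (invertibility of $\bbmat_{m}(t_0,\ldots,t_m)$ via \reflem{lem.InvertibleBasisMatrix} and consistency of the two branches at $n=m$) are sound and only make the one-line proof more explicit.
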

\begin{proof}
It follows from  \refprop{prop.ElevationMatrixBernstein} and \refdef{def.MatchingReduction}.
\end{proof}

A critical property of matching reduction is that the degree-one reduction error can be determined analytically.
\begin{proposition} \label{prop.MatchingReductionDifference}
(\emph{Degree-One Matching Reduction Error})
For any pairwise distinct $t_0, \ldots t_n \in \R$,  the difference between a B\'ezier curve $\bcurve_{\bpoint_0, \ldots, \bpoint_{n+1}} (t)$ and its  parameterwise matching degree reduction  $\bcurve_{\mpoint_0, \ldots, \mpoint_{n}} (t)$, with control points 
\begin{align}
[\mpoint_0, \ldots, \mpoint_n] = [\bpoint_0, \ldots, \bpoint_{n+1}] \rmat_{t_0, \ldots, t_n}(n+1, n),
\end{align}
is given by
\begin{align}
\bcurve_{\bpoint_0, \ldots, \bpoint_{n+1}} (t) - \bcurve_{\mpoint_0, \ldots, \mpoint_{n}} (t) = \Delta \bpoint \prod_{i=0}^{n}(t - t_i),
\end{align}
where
\begin{align}
\Delta \bpoint = \sum_{i=0}^{\cdegree + 1} (-1)^{n+1-i}\scalebox{1.2}{$\binom{n+1}{i}$} \bpoint_i.
\end{align}
\end{proposition}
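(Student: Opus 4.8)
The plan is to exploit that the difference of the two curves is a low-degree polynomial with many prescribed roots, hence determined up to its leading coefficient, and then to compute that leading coefficient explicitly from the Bernstein expansion.

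First I would set $D(t) \ldf \bcurve_{\bpoint_0, \ldots, \bpoint_{n+1}}(t) - \bcurve_{\mpoint_0, \ldots, \mpoint_n}(t)$ and note that each Euclidean component of $D$ is a univariate polynomial of degree at most $n+1$, since $\bcurve_{\bpoint_0, \ldots, \bpoint_{n+1}}$ has degree at most $n+1$ and $\bcurve_{\mpoint_0, \ldots, \mpoint_n}$ has degree at most $n$ (the reduction matrix $\rmat_{t_0, \ldots, t_n}(n+1,n)$ being well defined by \reflem{lem.InvertibleBasisMatrix}). By \refprop{prop.MatchingReduction}, the parameterwise matching reduction preserves the curve at the parameters $t_0, \ldots, t_n$, so $D(t_i) = 0$ for all $i = 0, \ldots, n$; that is, every component of $D$ has the $n+1$ distinct roots $t_0, \ldots, t_n$. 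A polynomial of degree at most $n+1$ vanishing at $n+1$ distinct points is necessarily a scalar multiple of the monic polynomial $\prod_{i=0}^{n}(t - t_i)$ (of degree exactly $n+1$); applying this componentwise gives $D(t) = \Delta \bpoint \prod_{i=0}^{n}(t - t_i)$ for a constant vector $\Delta \bpoint \in \R^{\bdim}$, which is precisely the vector of coefficients of $t^{n+1}$ in $D$.

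It then remains to identify $\Delta \bpoint$. Since $\bcurve_{\mpoint_0, \ldots, \mpoint_n}$ has degree at most $n$, it contributes nothing to the coefficient of $t^{n+1}$, so $\Delta \bpoint$ equals the coefficient of $t^{n+1}$ in $\bcurve_{\bpoint_0, \ldots, \bpoint_{n+1}}(t) = \sum_{i=0}^{n+1} \bpoly_{i,n+1}(t)\, \bpoint_i$. Expanding $\bpoly_{i,n+1}(t) = \binom{n+1}{i} t^i (1-t)^{n+1-i}$ by the binomial theorem, the only contribution to $t^{n+1}$ comes from the product $t^i \cdot (-t)^{n+1-i}$, which has coefficient $(-1)^{n+1-i}\binom{n+1}{i}$; summing over $i$ yields $\Delta \bpoint = \sum_{i=0}^{n+1} (-1)^{n+1-i}\binom{n+1}{i}\bpoint_i$, i.e. the $(n+1)$-th forward finite difference of the control points, as claimed.

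I expect the argument to be short, with the only genuine computation being the extraction of the leading power-basis coefficient of a degree-$(n+1)$ B\'ezier curve via a routine binomial expansion. The only point requiring a word of care is the degenerate case in which $\bpoint_0, \ldots, \bpoint_{n+1}$ already define a curve of degree at most $n$: there the matching reduction is exact, $D \equiv 0$, and indeed $\Delta \bpoint = 0$, so the stated formula remains consistent.
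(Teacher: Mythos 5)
Your proof is correct, and it takes a genuinely different route from the paper's. The paper proves this proposition by brute-force matrix computation: it expresses $\rmat_{t_0,\ldots,t_n}(n+1,n)$ in the monomial basis via the transformation $\tfbasis_{\mbasis}^{\bbasis}$, reduces the difference to the single term $\bpmat_{n+1}\tfbasis_{\mbasis}^{\bbasis}(n+1)\,\tr{[0,\ldots,0,1]}\,\bigl(t^{n+1}-[t_0^{n+1},\ldots,t_n^{n+1}]\,\mbmat_{n}(t_0,\ldots,t_n)^{-1}\mbasis_{n}(t)\bigr)$, identifies the scalar factor with $\prod_{i=0}^{n}(t-t_i)$, and reads off $\Delta\bpoint$ from the explicit last column of $\tfbasis_{\mbasis}^{\bbasis}(n+1)$ given in \reflem{lem.MonomialBernsteinTransformation}. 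You instead treat \refprop{prop.MatchingReduction} as a black box to get the $n+1$ prescribed roots, invoke the elementary fact that a polynomial of degree at most $n+1$ vanishing at $n+1$ distinct points is a constant multiple of $\prod_{i=0}^{n}(t-t_i)$, and then extract the leading power-basis coefficient of the degree-$(n+1)$ B\'ezier curve by a direct binomial expansion. Your argument is shorter and more conceptual, makes transparent why $\Delta\bpoint$ is independent of the matching parameters (it is just the leading coefficient of the original curve), and avoids the monomial-basis block-matrix bookkeeping; the paper's computation, on the other hand, is self-contained in the sense that it simultaneously re-derives the explicit monomial form of the reduction matrix, which it reuses elsewhere (\reflem{lem.MatchingReductionMatrixMonomial}). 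Note that the paper also implicitly uses your root-counting observation when it asserts $t^{n+1}-[t_0^{n+1},\ldots,t_n^{n+1}]\,\mbmat_{n}(t_0,\ldots,t_n)^{-1}\mbasis_{n}(t)=\prod_{i=0}^{n}(t-t_i)$, so your proof can be seen as isolating the essential step and discarding the rest. Your closing remark about the degenerate case $\Delta\bpoint=0$ is a sensible consistency check and does not affect the validity of either argument.
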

\begin{proof}
See \refapp{app.MatchingReductionDifference}.
\end{proof}

\noindent It is important to observe that the degree-one matching reduction difference vector $\Delta \bpoint$ is independent of the selection of matching parameters $t_0, \ldots, t_\cdegree$ where the reduction error is zero.\footnote{Finding optimal matching parameters that minimize the peak reduction error is an open research problem and outside the scope of this paper. We observe from \reffig{fig.MatchingReductionError} that optimal matching parameters should be nonuniformly spaced with a bias towards the ends points. In this paper, we consider the uniformly spaced matching parameters and their adaptive selection based on binary search in \refsec{sec.BezierAdaptiveApproximation}.}  
Moreover, the polynomial product form of the degree-one matching reduction error, illustrated in \reffig{fig.MatchingReductionError}, plays a key role in determining how many local low-order B\'ezier segments are needed for approximating high-order B\'ezier curves accurately,  as discussed below in \refsec{sec.BezierAdaptiveApproximation}.

\begin{figure}[t]
\centering
\includegraphics[width=0.43\textwidth]{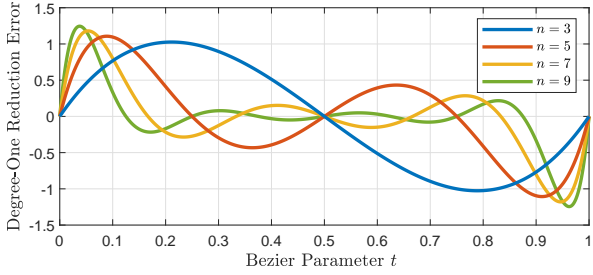}
\vspace{-2mm} 
\caption{Scaled degree-one reduction error function $\varepsilon_{n}(t)$ for uniform matching reduction of B\'ezier curves that is defined as $\varepsilon_{n} (t) =  \frac{\prod_{i=0}^{n} t - \frac{i}{n}}{ \prod_{i=0}^{n} \frac{1}{2n} - \frac{i}{n}}$.}
\label{fig.MatchingReductionError}
\end{figure}

\section{Adaptive Degree Reduction  and Splitting \\ of B\'ezier Curves}
\label{sec.BezierAdaptiveApproximation}

In this section, we consider the problem of approximating high-order B\'ezier curves by multiple lower-order B\'ezier segments. 
We first describe how B\'ezier approximations can be performed over a given finite partition of the unit interval, and then  
propose linear and binary search methods for adaptive approximation of high-order B\'ezier curves by lower-order B\'ezier segments at any desired accuracy level. 
We also present a rule of thumb for accurate B\'ezier discretization.

\subsection{B\'ezier Approximation over a Partition of the Unit Interval}
\label{sec.BezierApproximationPartition}

Consider an $n^{\text{th}}$-order B\'ezier curve $\bcurve_{\bpoint_0, \ldots, \bpoint_n}(t)$ with control points $\bpmat_{n} = [\bpoint_0, \ldots, \bpoint_n]$ defined over the unit interval $[0,1]$.
Suppose $T = [t_0, \ldots, t_k]$ is an ordered list of distinct parameters that defines a partition of the unit interval into $k$ splits, i.e., $0 = t_0 < t_1 < \ldots < t_{k-1} < t_k = 1$.
Accordingly, the B\'ezier curve $\bcurve_{\bpoint_0, \ldots, \bpoint_n}(t)$  can be locally approximated over each parameter subinterval $[t_{i-1}, t_i]$ by a lower $m^{\text{th}}$-order Bezier curve $\bcurve_{\mpoint_{0,i}, \ldots, \mpoint_{m,i}}(t)$ whose control points $\mpmat_{m,i} = [\mpoint_{0,i}, \ldots, \mpoint_{m,i}]$ is obtained based on a choice  of a degree reduction matrix $\rmat(n,m)$ (see \refdef{def.DegreeReduction}) as
\begin{align}
\mpmat_{m,i} = \mathrm{DegreeReduction}(\bpmat_{n,i}, m) := \bpmat_{n,i} \rmat(n,m),
\end{align}
using the reparametrization $\bcurve_{\bpoint_{0,i}, \ldots, \bpoint_{n,i}}(t)$ of $\bcurve_{\bpoint_0, \ldots, \bpoint_n}(t)$ from $[t_{i-1}, t_i]$ to the unit interval $[0,1]$    with new control points $\bpmat_{n,i}= [\bpoint_{0,i}, \ldots, \bpoint_{n,i}]$ (see \reflem{lem.PolynomialReparametrization}) that are obtained as
\begin{align}
\, \bpmat_{n,i} &= \mathrm{Reparameterize} (\bpmat_i, [t_{i-1}, t_i]),
\\
& \hspace{-3mm} := \bbmat_{n}\plist{0, \tfrac{1}{n}, \ldots,  \tfrac{n-1}{n},1} \bbmat_{n}\plist{s_i(0), s_i(\tfrac{1}{n}), \ldots,  s_i(\tfrac{n-1}{n}),s_i(1)\!}^{\!-1}\!\!\!,
\end{align}    
where $s_i(t) = \bcurve_{t_{i-1}, t_i} (t) =  t_{i-1} (1- t) + t_i \, t $ is the B\'ezier parameter scaling function.
Hence, as described in \refalg{alg.BezierApproximation}, the  high-order B\'ezier curve $\bcurve_{\bpoint_0, \ldots, \bpoint_n}([0,1])$ can be approximated by a collection of low-order B\'ezier segments $\bcurve_{\mpoint_{0,i}, \ldots, \mpoint_{m,i}}([0,1])$  constructed  over each partition element $[t_{i-1}, t_i]$ such that $\bcurve_{\mpmat_{m,i}}([0,1])$ approximates $\bcurve_{\bpmat_{n}}([t_{i-1}, t_i])$.

\begin{algorithm}[t]
\caption{B\'ezier Approximation over a Partition \\  \mbox{\hspace{20mm}}of the Unit Interval}\label{alg.BezierApproximation}
\KwIn{%
$\bpmat_n \in \R^{\cdim \times n}$: Bezier Control Points \\ 
\hspace{10mm} $m \in \N$: Reduction Degree \\ 
\hspace{10mm} $T=[t_0, \ldots, t_k]$: Partition of the Unit Interval
}
\KwOut{%
\mbox{$\mpmat_{m,1}, \ldots \mpmat_{m,k}$: List of Reduced Control Points}
\\
\mbox{\hspace{11.5mm} where $\bcurve_{\mpmat_{m,i}}([0,1])$ approximates $\bcurve_{\bpmat_{n}}([t_{i-1}, t_i])$}
\vspace{-6mm}
\\
\hrulefill
}
$k \gets \mathrm{length}(T) - 1$ \tcp*[f]{\small{Number of Segments}}\\
\For{$i \gets 1$ \KwTo $k$}{
$\bpmat_{n,i} \gets \textrm{Reparametrize}(\bpmat_n, [t_{i-1}, t_i])$  \\
$\mpmat_{m,i}  \gets \textrm{DegreeReduction}(\bpmat_{n,i}, m)$ \\
}
\Return{$\mpmat_{m,1}, \ldots, \mpmat_{m, k}$}
\end{algorithm}

In \reffig{fig.BezierApproximation}, we illustrate approximating an $8^{\text{th}}$-order B\'ezier curve with linear and quadratic B\'ezier segments over the uniform partitions of the unit interval using Taylor, least squares, and uniform matching reductions. 
As seen in \reffig{fig.BezierApproximation}, the uniform matching reduction performs better in approximately representing the original curve shape than the least squares reduction which performs better than Taylor reduction. 
Also notice that the end points of the original curve are kept unchanged only under the uniform matching reduction.

\subsection{B\'ezier Approximation Rule}
\label{sec.BezierApproximationRule}

A practical question of approximating high-order B\'ezier curves by a finite number of low-order B\'ezier segments is what the required number of local B\'ezier segments is for an accurate B\'ezier discretization. 
As expected, the answer  depends on the desired level of approximation accuracy and the degree of B\'ezier curves.  
In this part, we provide an answer based on the structural form of the B\'ezier approximation error of degree-one matching reduction (\refprop{prop.MatchingReductionDifference}), and the \mbox{numerical analysis of B\'ezier approximations in \refsec{sec.NumericalAnalysis}}.

\begin{BezierApproximationRule}
An $n^{\text{th}}$-order B\'ezier curve can be accurately approximated by $3(n-1)$ quadratic and $6(n-1)$ linear B\'ezier curves obtained via uniform matching reduction.
\end{BezierApproximationRule}

According to \refprop{prop.MatchingReductionDifference}, an $n^{\text{th}}$-order B\'ezier curve can be written as the sum of an $(n-1)^{\text{th}}$-order reduced B\'ezier curve and a degree-one matching reduction error  which is a polynomial of order $n$ in the product form.
Note that the $(n-1)^{\text{th}}$-order reduced B\'ezier curve can be better approximated with the same number of low-order B\'ezier segments than the original $n^{\text{th}}$-order B\'ezier curve.  
Hence, one can determine the required number of low-order B\'ezier segments for accurately approximating high-order B\'ezier curves by exploiting the functional form of the reduction error.  
As seen in \reffig{fig.MatchingReductionError}, the degree-one matching reduction error of $n^\text{th}$-order B\'ezier curves has $(n-1)$ extreme (local maximum and minimum) points. 
This implies that a proper approximation of $n^\text{th}$-order B\'ezier curves structurally requires at least $(n-1)$ quadratic B\'ezier curves which has a single extremum. 
Because of the asymmetry of the approximation error around each extreme point, one needs at least $2(n-1)$ quadratic segments.
Since the uniform matching reduction uses uniformly spaced parameters for approximation, we observe in our numerical studies in \refsec{sec.NumericalAnalysis} that the asymmetry around each extreme point can be better handled with $3(n-1)$ quadratic patches in practice.
Our numerical analysis also shows that approximating $n^{\text{th}}$-order B\'ezier curves by $3(n-1)$ quadratic segments ensures a normalized (i.e., scale invariant) approximation error below the order of $10^{-3}$ for computing important curve features such as curve length, distance-to-point/line, and maximum velocity/acceleration.   
Similarly, since a quadratic polynomial structurally requires at least two linear curve segments for a proper representation of its unique extremum, we also observe from our numerical studies that approximating $n^{\text{th}}$-order B\'ezier curves by $6(n-1)$ linear B\'ezier segments yields a normalized approximation error in the order of $10^{-3}$.

\begin{algorithm}[b]
\caption{\mbox{Adaptive Degree Reduction and Splitting} via Linear Search}\label{alg.AdaptiveApproximationLinearSearch}
\KwIn{%
$\bpmat_n \in \R^{\cdim \times n}$: Bezier Control Points \\ 
\hspace{10mm} $m \in \N$: Reduction Degree \\ 
\hspace{10mm} $\varepsilon > 0 $: Approximation Tolerance
}
\KwOut{%
$T=[t_0, \ldots, t_k]$: Split Intervals \\
\mbox{\hspace{11.5mm} $\mpmat_{m,1}, \ldots \mpmat_{m,k}$: List of Reduced Control Points}\\
\mbox{\hspace{11.5mm} where $\bcurve_{\mpmat_{m,i}}([0,1])$ approximates $\bcurve_{\bpmat_{n}}([t_{i-1}, t_i])$}
\vspace{-6mm}
\\
\hrulefill
}
$k \gets 1$ \tcp*[f]{\small{Number of Segments}}\\
$ T \gets [0, 1]$ \tcp*[f]{\small{Initial Partition}} \\
\While{$k < \mathrm{length}(T)$}{
\For{$i \gets 1$ \KwTo $k$}{
$\bpmat_{n,i} \gets \textrm{Reparametrize}(\bpmat_n, [t_{i-1}, t_i])$  \\
$\mpmat_{m,i}  \gets \textrm{DegreeReduction}(\bpmat_{n,i}, m)$ \\
\If{$\mathrm{BezierDistance}(\bpmat_{n,i}, \mpmat_{m,i} \emat(m,n)) > \varepsilon$}{
$T \gets [0, \frac{1}{k+1}, \ldots, \frac{k}{k+1}, 1]$ \\
\textbf{break}
}
}
$k \gets k + 1$ \\
}
$k \gets \mathrm{length}(T) - 1$ \tcp*[f]{\small{Number of Segments}} \\
\Return{$T, \mpmat_{m,1}, \ldots, \mpmat_{m, k}$}

\end{algorithm}

\begin{algorithm}[t]
\caption{\mbox{Adaptive Degree Reduction and Splitting} via Binary Search}\label{alg.AdaptiveApproximationBinarySearch}
\KwIn{%
$\bpmat_n \in \R^{\cdim \times n}$: Bezier Control Points \\ 
\hspace{10mm} $m \in \N$: Reduction Degree \\ 
\hspace{10mm} $\varepsilon > 0$: Approximation Tolerance
}
\KwOut{%
$T=[t_0, \ldots, t_k]$: Split Intervals \\
\mbox{\hspace{11.5mm} $\mpmat_{m,1}, \ldots \mpmat_{m,k}$: List of Reduced Control Points}\\
\mbox{\hspace{11.5mm} where $\bcurve_{\mpmat_{m,i}}([0,1])$ approximates $\bcurve_{\bpmat_{n}}([t_{i-1}, t_i])$}
\vspace{-6mm}
\\
\hrulefill
} 

$k \gets 1$  \tcp*[f]{\small{Segment Counter}}\\
$T \gets [0,1]$ \tcp*[f]{\small{Initial Partition}} \\
\While{$k < \mathrm{length}(t)$}{
$\bpmat_{n,k} \gets \textrm{Reparametrize}(\bpmat_n, [t_{k-1}, t_k])$  \\
$\mpmat_{m,k}  \gets \textrm{DegreeReduction}(\bpmat_{n,i}, m)$ \\
\If{$\mathrm{BezierDistance}(\bpmat_{n,k}, \mpmat_{m,k} \emat(m,n)) > \varepsilon$}{
$T \gets [t_0, \ldots, t_{k-1},  \frac{t_{k-1} + t_k}{2},t_k]$
}
\Else{
$k \gets k + 1$
}
}
$k \gets \mathrm{length}(T) - 1$ \tcp*[f]{\small{Number of Segments}}\\
\Return{$T, \mpmat_{m,1}, \ldots, \mpmat_{m, k}$}
\end{algorithm}

\subsection{Adaptive B\'ezier Approximation via B\'ezier Metrics}

The B\'ezier approximation rule above holds for any B\'ezier curve in general, and ensures a proper structural representation of high-order B\'ezier curves at a certain level of accuracy.
However, high-order B\'ezier curves might have redundant control points, for example, consider the degree elevation of B\'ezier curves in \reffig{fig.DegreeElevation}, and also different application settings might require different levels of approximation accuracy.
Hence, it is desirable to perform B\'ezier approximation that is tailored to individual B\'ezier curves  and can adaptively select the required number of B\'ezier segments and the partition of the unit interval based on the desired level of accuracy. 
In this part, we extend B\'ezier approximations over a given partition of the unit interval by incorporating a search strategy to automatically determine a partition of the unit interval in order to achieve a desired level of measurable approximation accuracy.

\begin{figure*}[b]
\centering
\begin{tabular}{@{}c@{\hspace{0.5mm}}c@{\hspace{0.5mm}}c@{\hspace{0.5mm}}c@{\hspace{0.5mm}}c@{\hspace{0.5mm}}c@{\hspace{0.5mm}}c@{}}
\rotatebox{90}{\scriptsize{Linear Srch \scalebox{0.9}{$\varepsilon = .5$}}}&
\includegraphics[width=0.161\textwidth]{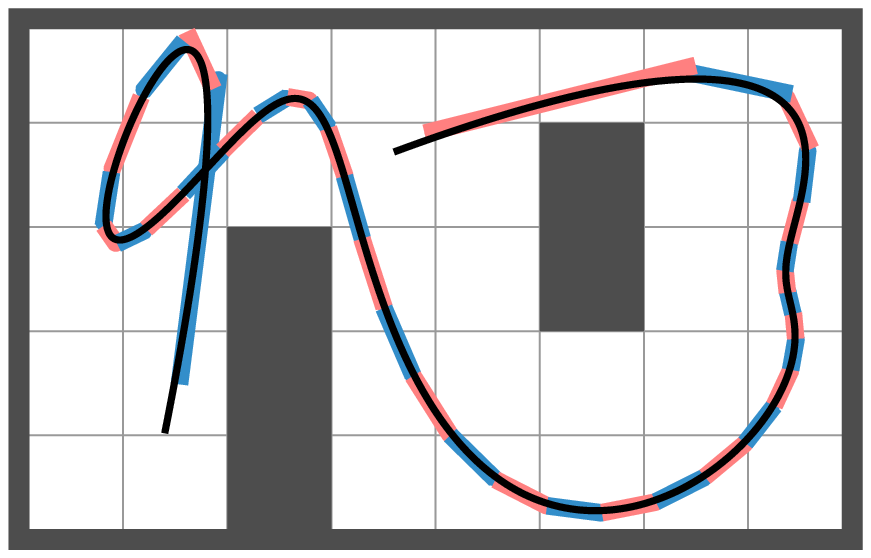} &
\includegraphics[width=0.161\textwidth]{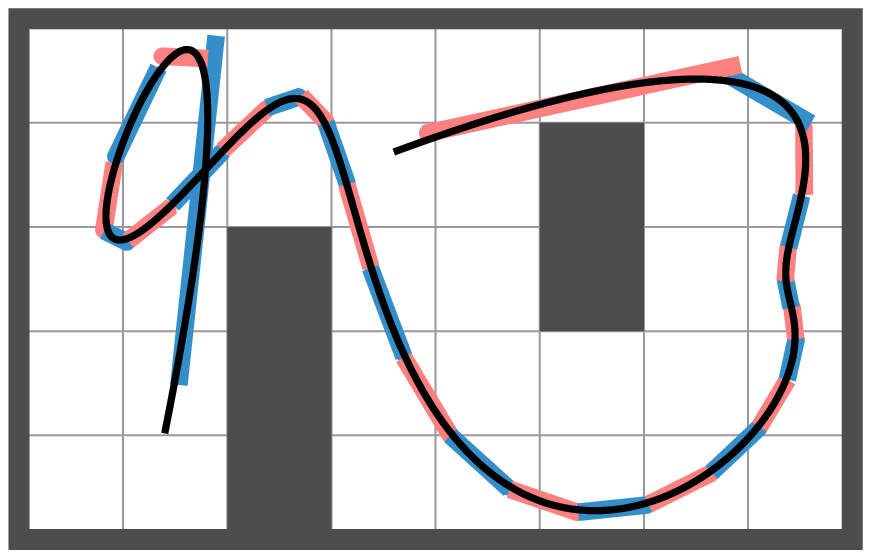} &
\includegraphics[width=0.161\textwidth]{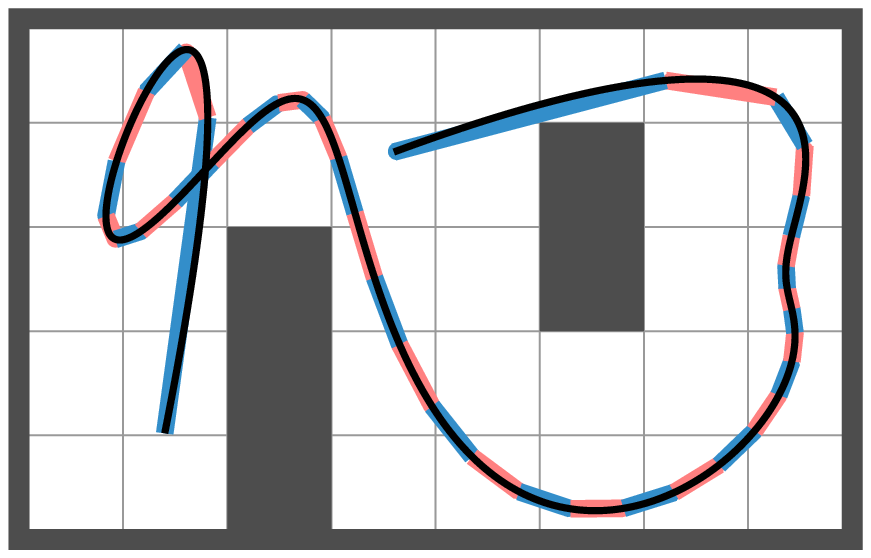} &
\includegraphics[width=0.161\textwidth]{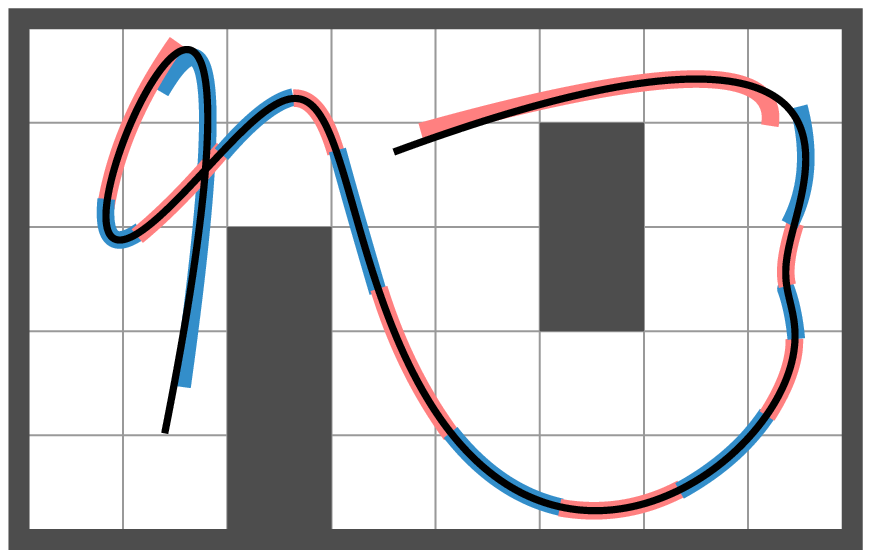} &
\includegraphics[width=0.161\textwidth]{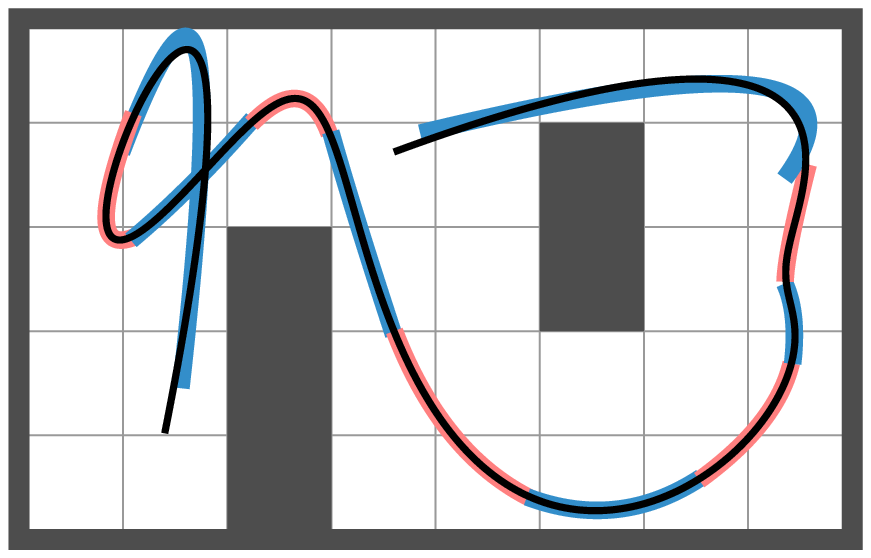} &
\includegraphics[width=0.161\textwidth]{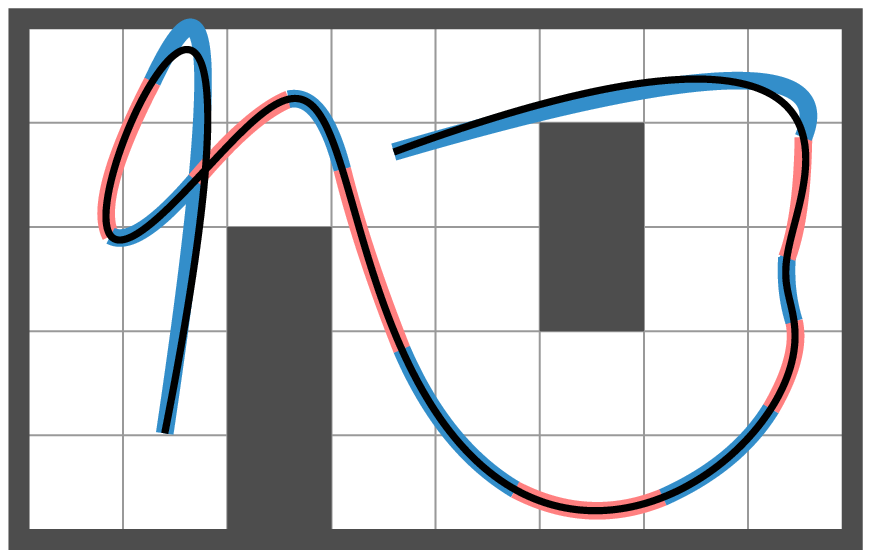} 
\\
\rotatebox{90}{\scriptsize{Linear Srch \scalebox{0.9}{$\varepsilon = .1$}}}&
\includegraphics[width=0.161\textwidth]{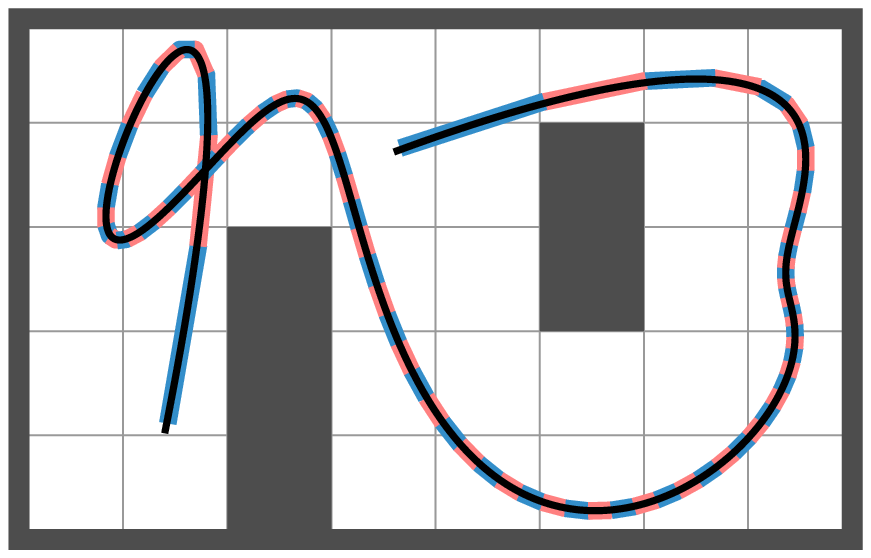} &
\includegraphics[width=0.161\textwidth]{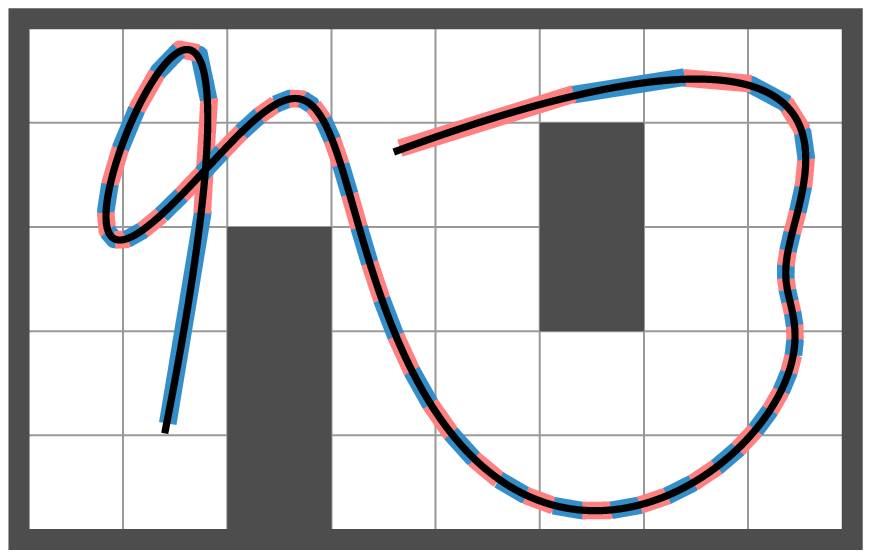} &
\includegraphics[width=0.161\textwidth]{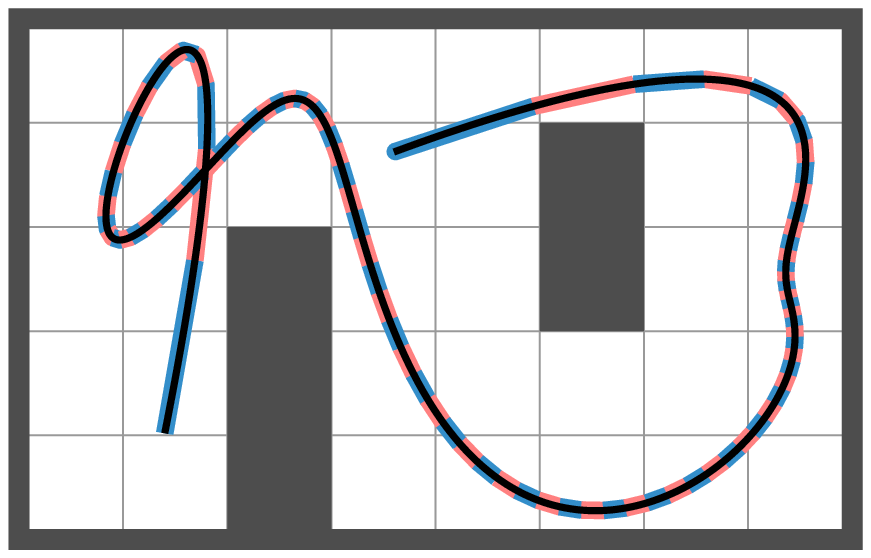} &
\includegraphics[width=0.161\textwidth]{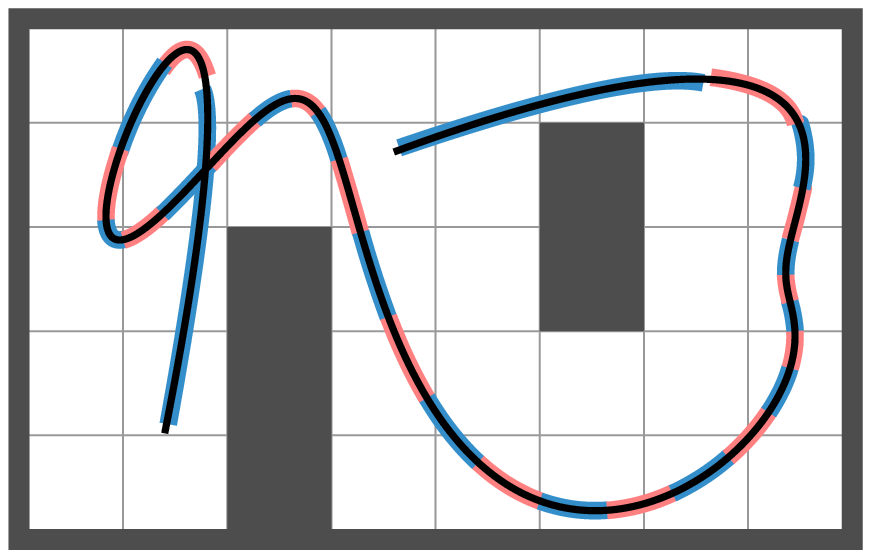} &
\includegraphics[width=0.161\textwidth]{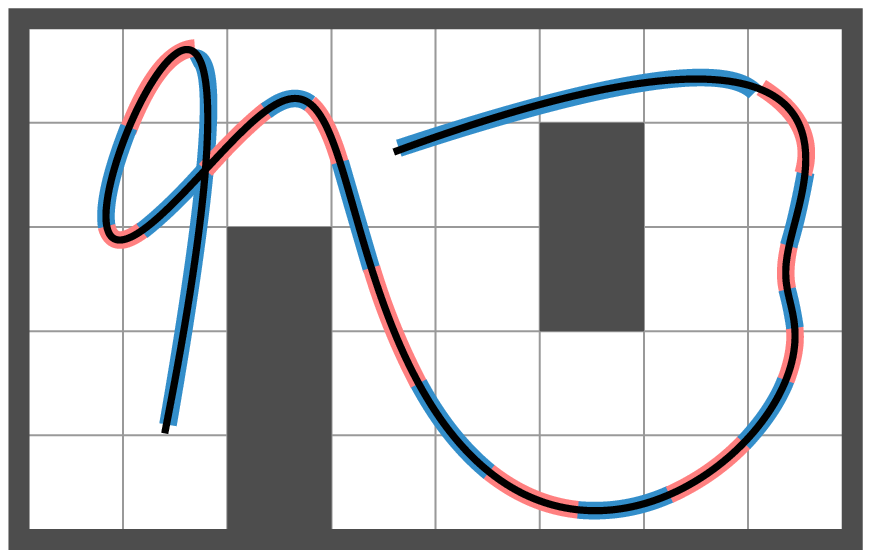} &
\includegraphics[width=0.161\textwidth]{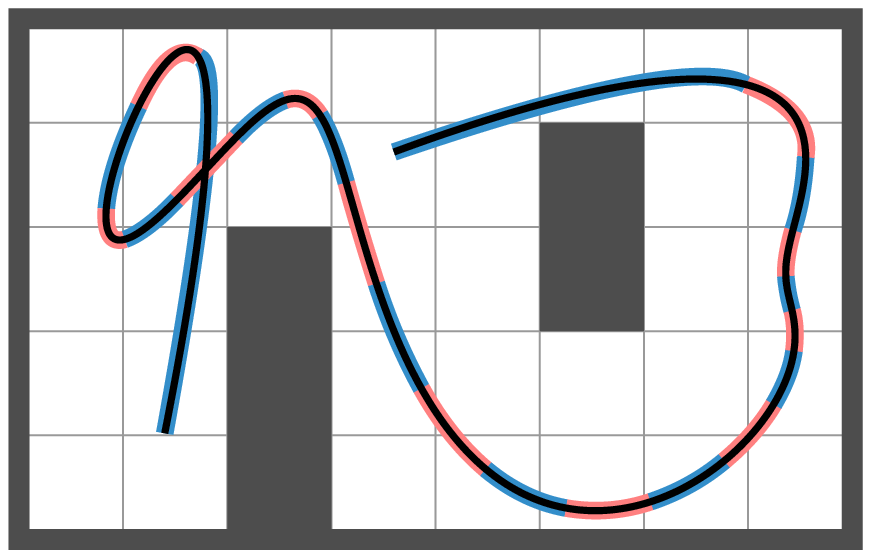} 
\\
\rotatebox{90}{\scriptsize{Binary Srch \scalebox{0.9}{$\varepsilon = .1$}}}&
\includegraphics[width=0.161\textwidth]{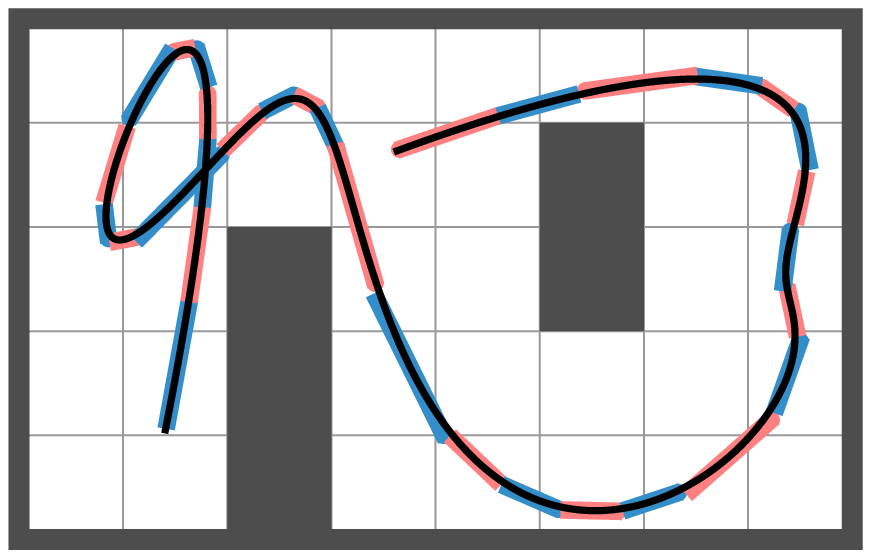} &
\includegraphics[width=0.161\textwidth]{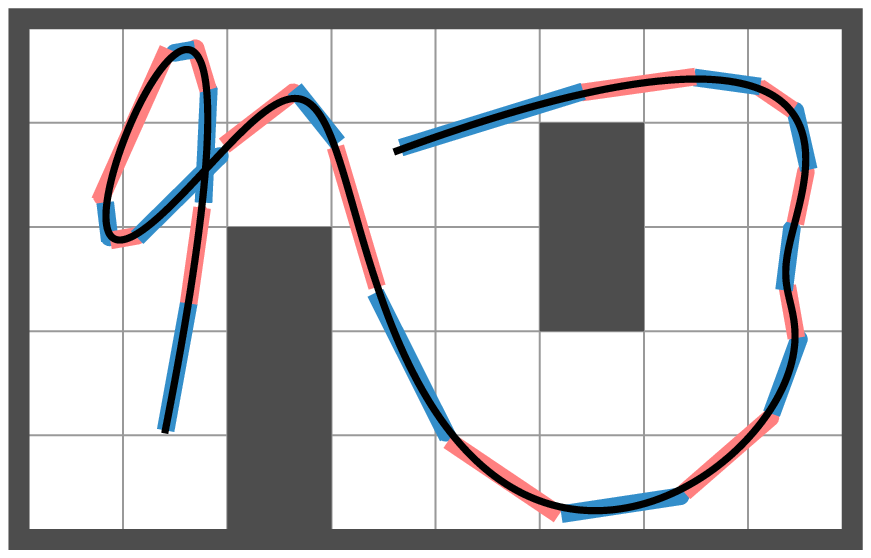} &
\includegraphics[width=0.161\textwidth]{figures/bezierAdaptiveApprox_Obstacle_3_UniformMatch_Binary_1_100.eps} &
\includegraphics[width=0.161\textwidth]{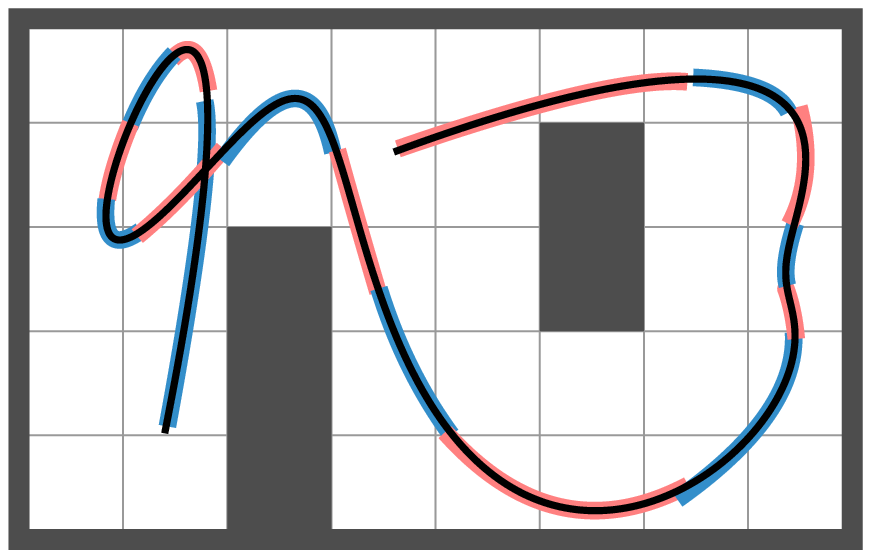} &
\includegraphics[width=0.161\textwidth]{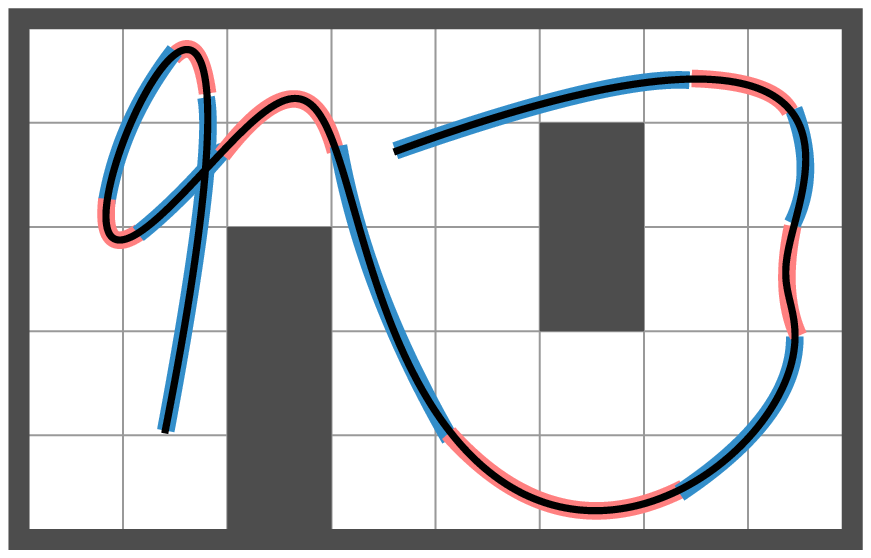} &
\includegraphics[width=0.161\textwidth]{figures/bezierAdaptiveApprox_Obstacle_3_UniformMatch_Binary_2_100.eps}
\\[-1mm]
& \scriptsize{Linear Taylor Reduction} & \scriptsize{Linear Least Squares} & \scriptsize{Linear Uniform Matching} & \scriptsize{Quadratic Taylor Reduction} & \scriptsize{Quadratic Least Squares} & \scriptsize{Quadratic Uniform Matching} 
\\[-1mm]
& \scriptsize{(a)} & \scriptsize{(b)} & \scriptsize{(c)} & \scriptsize{(d)} & \scriptsize{(e)} & \scriptsize{(f)}    
\end{tabular}
\vspace{-4mm}
\caption{Adaptive approximation of a $8^{\text{th}}$-order B\'ezier curve (black dashed line) by (a, b, c) linear and (d, e, f) quadratic B\'ezier curve segments (red and blue patches) that are automatically obtained via (top, middle) linear  and (bottom) binary search based on (a, d) Taylor approximation, (b, e) least squares reduction, and (c, f) uniform matching reduction. 
The B\'ezier splitting is automatically done based on a desired approximation tolerance $\varepsilon>0$ specified in term of the B\'ezier maximum control-point distance: (top) $ \varepsilon = 0.5$ units, (middle, bottom) $\varepsilon = 0.1$ units.}
\label{fig.AdaptiveBezierApproximation}
\end{figure*}

As discussed in \refsec{sec.BezierApproximationPartition}, an $n^{\text{th}}$-order B\'ezier curve $\bcurve_{\bpoint_0, \ldots, \bpoint_n}(t)$  can be locally approximated by $m^{\text{th}}$-order Bezier segments over each element of a $k$-partition $T = [t_0, \ldots, t_k]$ of the unit interval  by applying curve reparametrization and degree reduction as
\begin{align}
\bpmat_{n,i} &= \mathrm{Reparameterize} (\bpmat_i, [t_{i-1}, t_i]), \\
\mpmat_{m,i} &= \mathrm{DegreeReduction}(\bpmat_{n,i}, m), 
\end{align}
Hence, one can measure the quality of approximating  $\bcurve_{\bpoint_{0,i}, \ldots, \bpoint_{n,i}}(t)$  by $\bcurve_{\mpoint_{0,i}, \ldots, \mpoint_{m,i}}(t)$ using a B\'ezier metric $\bdist_{B}$ (\refdef{def.BezierMetric}) and degree elevation (\refprop{prop.ElevationControlPoint}) as
\begin{align}
&\mathrm{BezierDistance}(\bpmat_{n,i}, \mpmat_{m,i} \emat(m,n)) \nonumber \\ 
&\hspace{25mm}:= \bdist_{B}(\bcurve_{\bpoint_{0,i}, \ldots, \bpoint_{n,i}}, \bcurve_{[\mpoint_{0,i}, \ldots, \mpoint_{m,i}] \emat(m,n)}).
\end{align}
Accordingly, in \refalg{alg.AdaptiveApproximationLinearSearch} and \refalg{alg.AdaptiveApproximationBinarySearch}, respectively, we  present a linear- and  a binary-search approach for automatically finding a proper partition $T=[t_0, \ldots, t_k]$ of the unit interval (and the associated  control points $\mpmat_{m,1}, \ldots, \mpmat_{m,k}$ of local B\'ezier segments) where the distance between the actual curve and its degree reduction is below a certain desired approximation tolerance $\varepsilon > 0$.
Note that linear search assumes uniform partitions of the unit interval whereas binary search might result in a nonuniform partition of the unit interval depending on the shape of the input B\'ezier curve.
As a result, as illustrated in \reffig{fig.AdaptiveBezierApproximation}, binary search often achieves the same level of approximation quality as linear search by using a significantly less number of B\'ezier segments.
Another important observation  in \reffig{fig.AdaptiveBezierApproximation} is that although uniform matching still outperforms least squares and Taylor approximations, the quality of adaptive B\'ezier approximation is less dependent on the choice of a reduction method. 
Finally, as expected, the required number of B\'ezier segments increases exponentially with the increasing approximation quality, which is further discussed in the \mbox{following \refsec{sec.NumericalAnalysis}}.

\section{Numerical Analysis \small{of} B\'ezier~Approximations} 
\label{sec.NumericalAnalysis}

In this section, we provide numerical evidence to show the effectiveness of uniform matching reduction over least squares and Taylor reductions by investigating how B\'ezier approximation accuracy depends on the number of curve segments. 
We also demonstrate  how the automatically adjusted number of curve segments in adaptive B\'ezier approximation depends on B\'ezier degree and  approximation tolerance.

\subsection{Approximation Accuracy vs. Number of Segments}

To investigate the role of number of segments in approximation accuracy, we consider the following  B\'ezier features:

\begin{itemize}
\item  \emph{Curve Length:} The arc length of B\'ezier curves is an essential criterion in optimal motion planning to  find motion trajectories that reduce travel distance. 

\item \emph{Distance-to-Point:} The distance of a B\'ezier curve to a point is often used in constrained motion planning for determining parameterwise B\'ezier intersections and the maximum velocity/acceleration along B\'ezier curves.

\item \emph{Distance-to-Line:} The distance of a B\'ezier curve to a line segment (or a polyline/polygon) is a common distance-to-collision measure in safe motion planning. 
\item \emph{Maximum Curvature:} Curvature-constrained motion planning of nonholonomic systems  requires an efficient computation of maximum curvature of B\'ezier curves. 
\end{itemize}
The aforementioned B\'ezier features can be determined analytically only for linear and quadratic B\'ezier curves. 
For high-order B\'ezier curves, we suggest computing these curve features efficiently using B\'ezier approximations by linear and quadratic B\'ezier segments. 
Since these curve features are nonnegative, to determine the approximation quality, we define the normalized approximation error of a B\'ezier feature using its actual and approximate calculations as
\begin{align}
\text{Approximation Error} = \frac{|\text{Feature}_{\text{Approx}} - \text{Feature}_{\text{Actual}}|}{\text{Feature}_{\text{Approx}} + \text{Feature}_{\text{Actual}}}, \!\!\!
\end{align} 
where the actual curve features are computed using a dense discrete samples of B\'ezier curves.

\begin{figure*}
\centering
\begin{tabular}{@{\hspace{-0.1mm}}c@{\hspace{0mm}}c@{\hspace{0mm}}c@{\hspace{0mm}}c@{\hspace{0mm}}c@{\hspace{0mm}}c@{}}
\includegraphics[width=0.1675\textwidth]{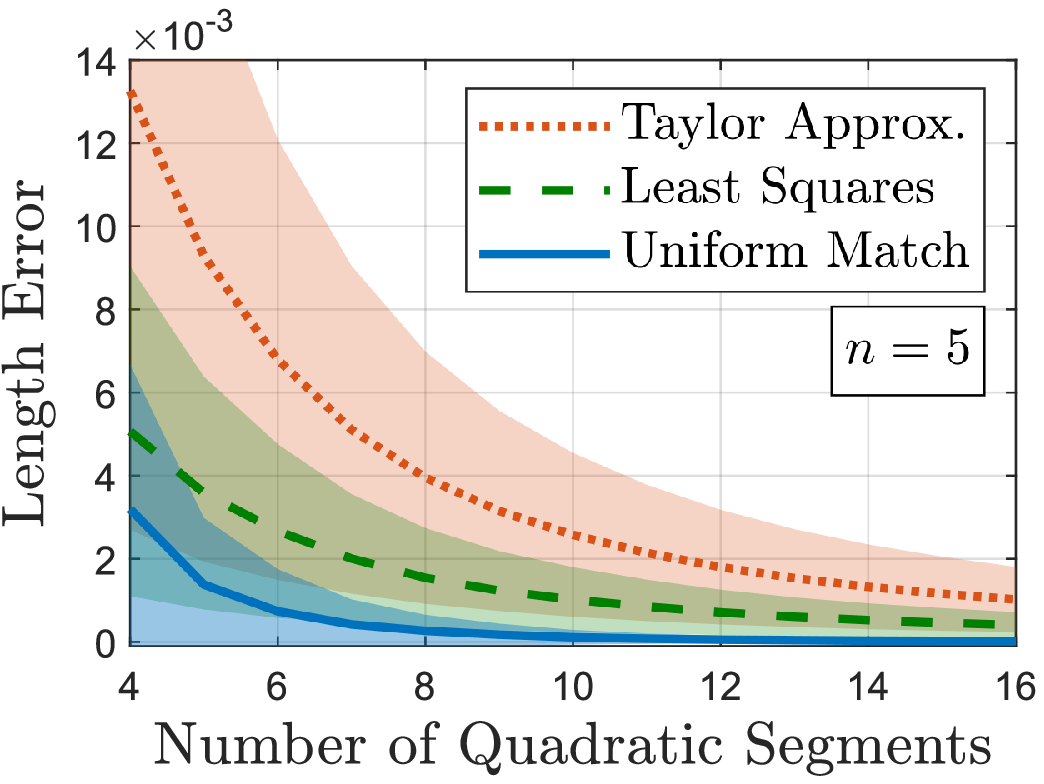} &
\includegraphics[width=0.1675\textwidth]{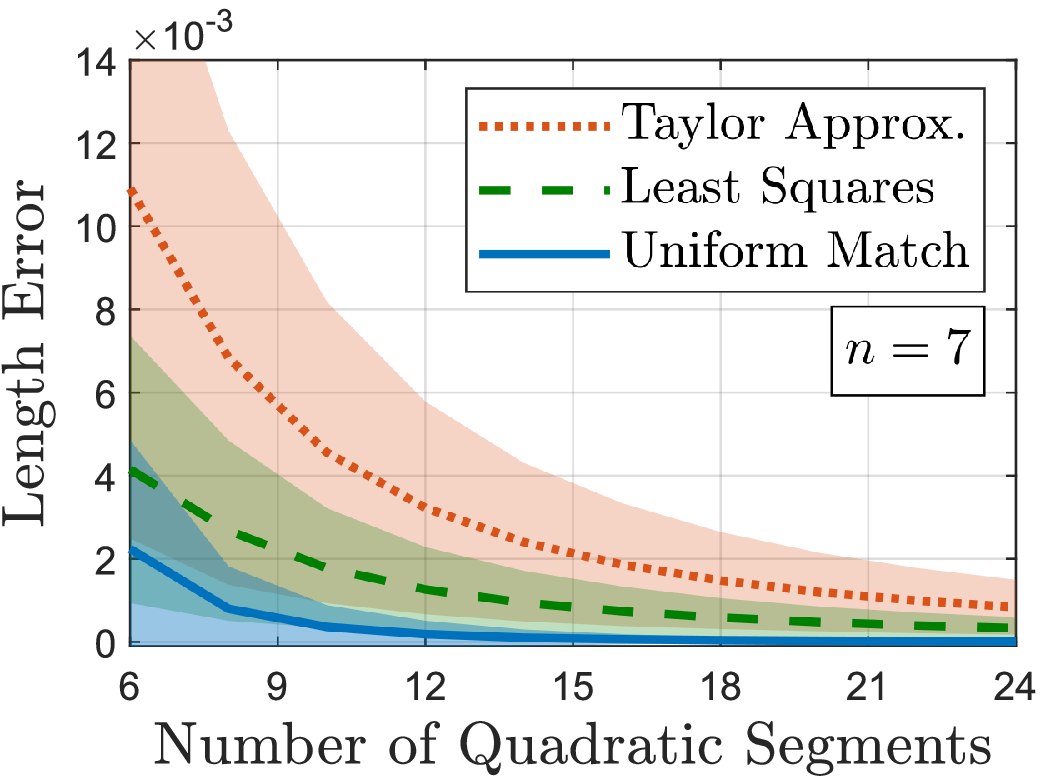} &
\includegraphics[width=0.1675\textwidth]{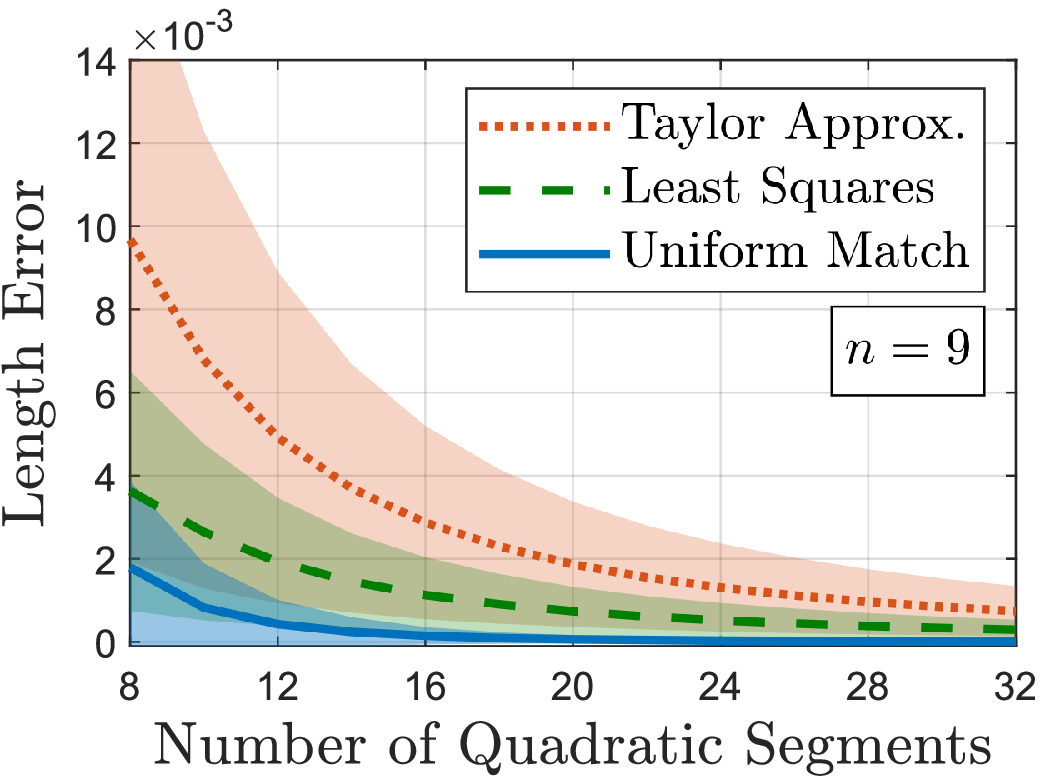} &
\includegraphics[width=0.1675\textwidth]{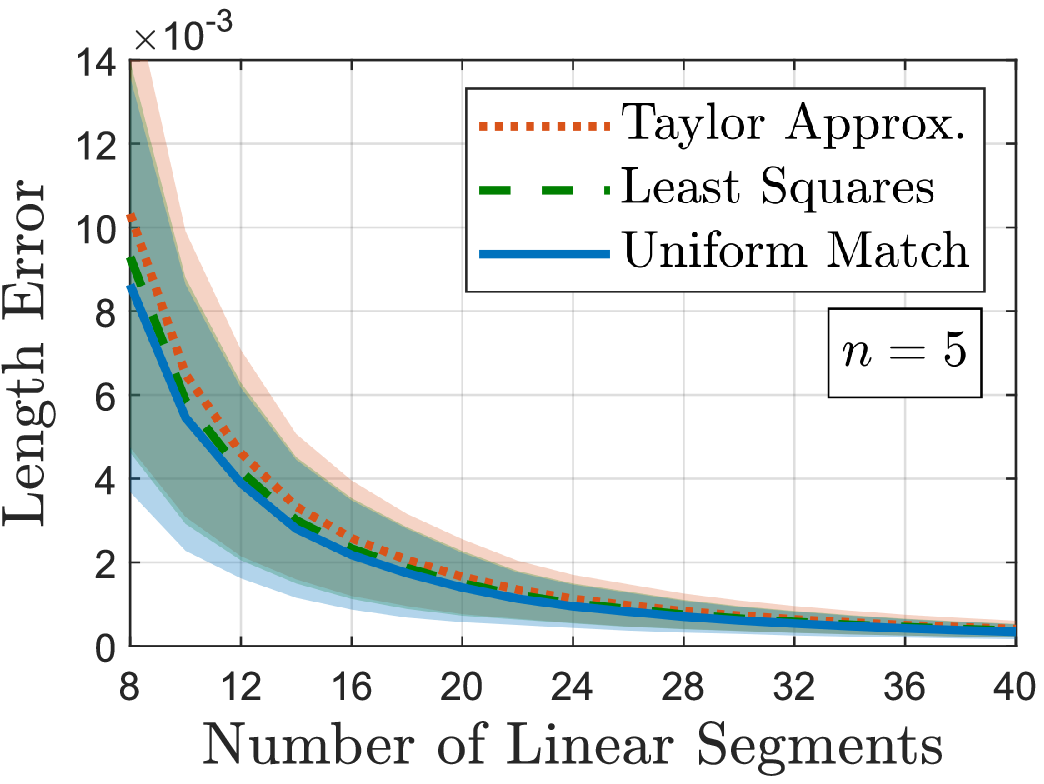} &
\includegraphics[width=0.1675\textwidth]{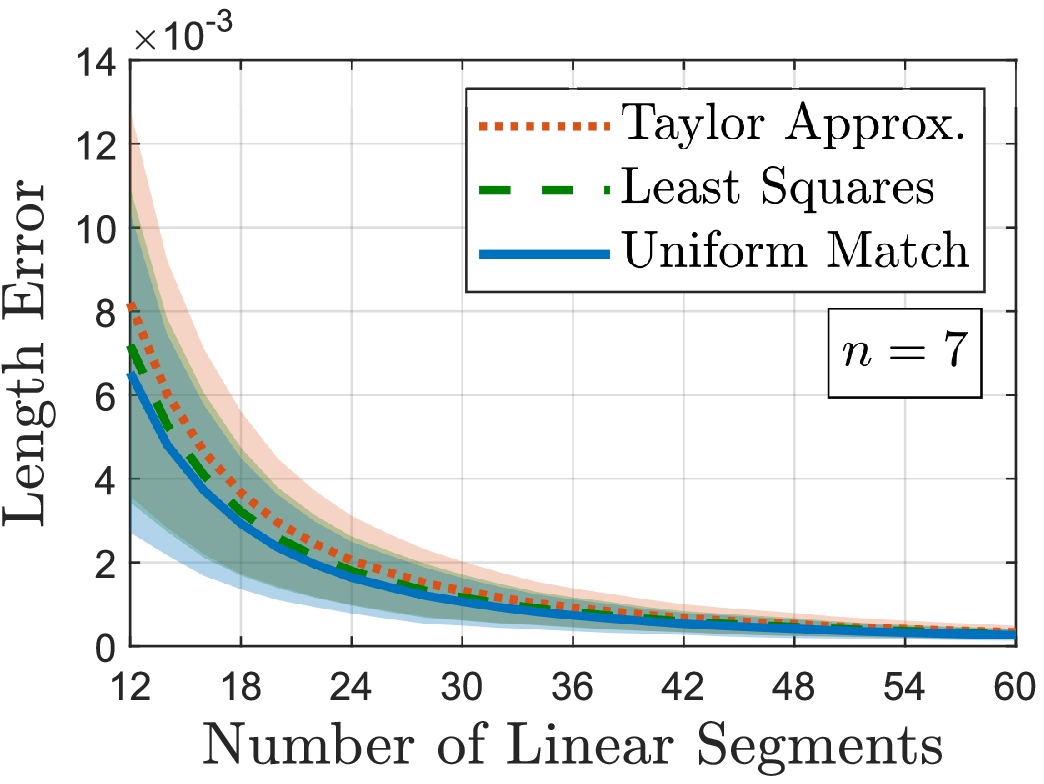} &
\includegraphics[width=0.1675\textwidth]{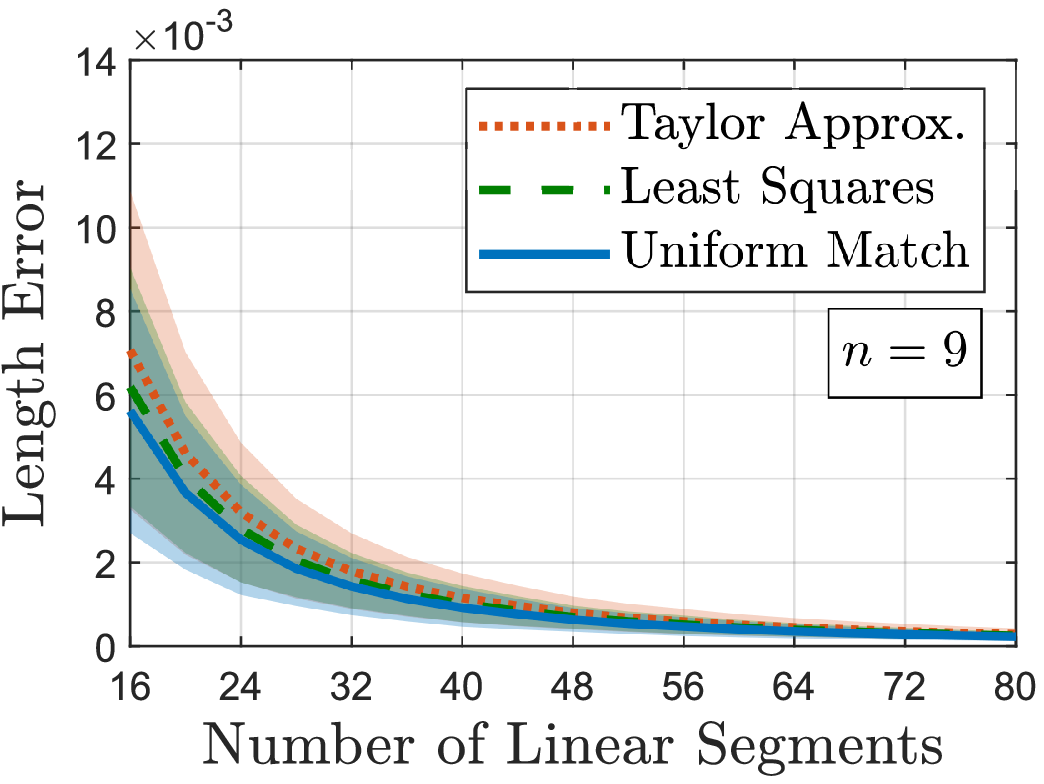} 
\\[-1.5mm]
\scriptsize{(a)} & \scriptsize{(b)} & \scriptsize{(c)} & \scriptsize{(d)} & \scriptsize{(e)} & \scriptsize{(f)}
\end{tabular}
\vspace{-3.5mm}
\caption{Normalized length error statistics of approximating $n^{\text{th}}$-order B\'ezier curves by (a, b, c) quadratic and (d, e, f) linear B\'ezier segments: (a, d) $n = 5$, (b, e) $n=7$, (c, f) $n = 9$, where the mean and the standard deviation of the error are presented by a line and a shaded region, respectively.}
\label{fig.LengthErrorStatistics}
\end{figure*}

\begin{figure*}
\centering
\begin{tabular}{@{\hspace{0mm}}c@{\hspace{0mm}}c@{\hspace{0mm}}c@{\hspace{0mm}}c@{\hspace{0mm}}c@{\hspace{0mm}}c@{\hspace{0mm}}c@{}}
\includegraphics[width=0.1675\textwidth]{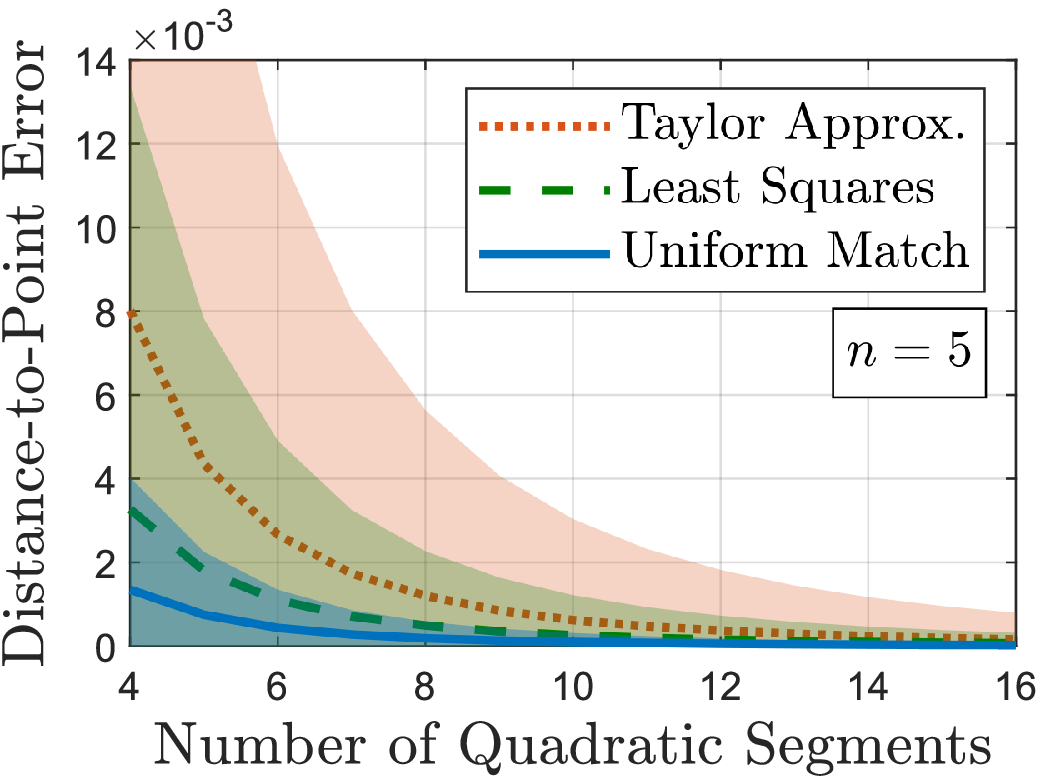} &
\includegraphics[width=0.1675\textwidth]{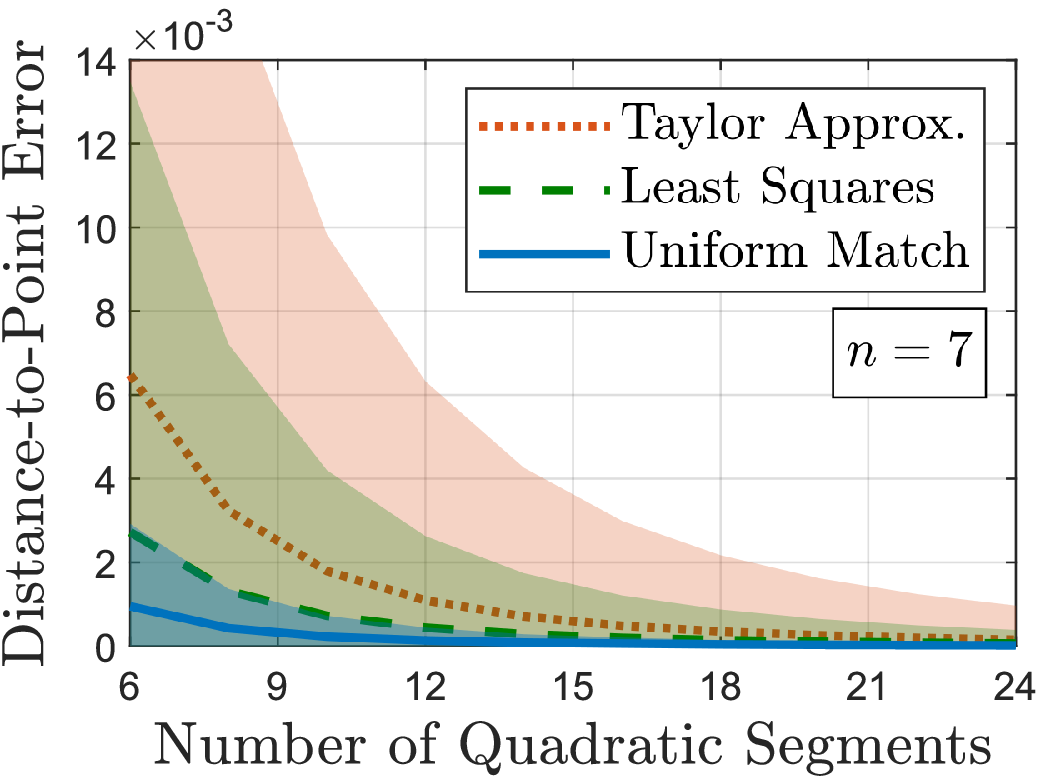} &
\includegraphics[width=0.1675\textwidth]{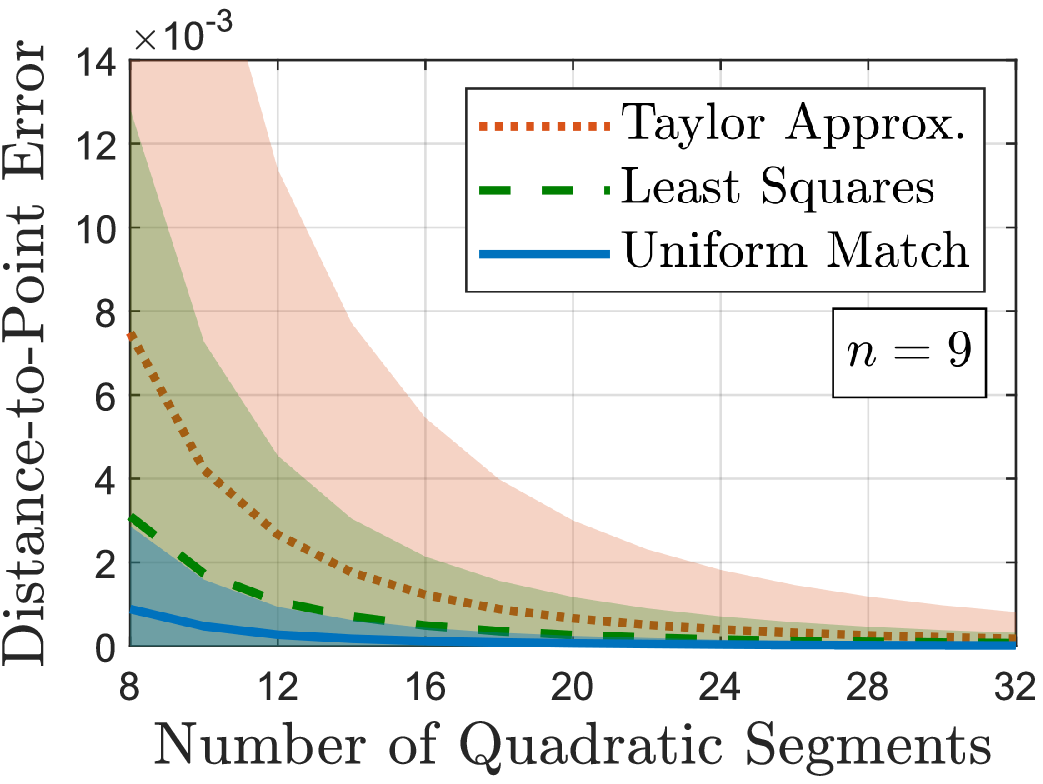} &
\includegraphics[width=0.1675\textwidth]{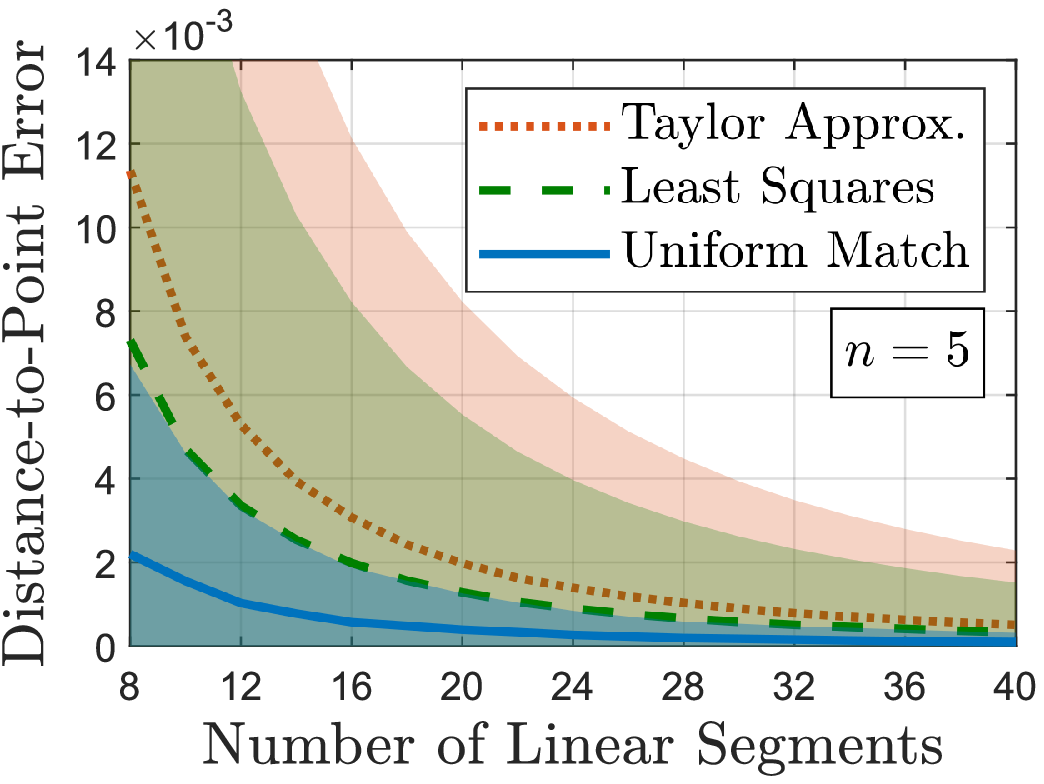} &
\includegraphics[width=0.1675\textwidth]{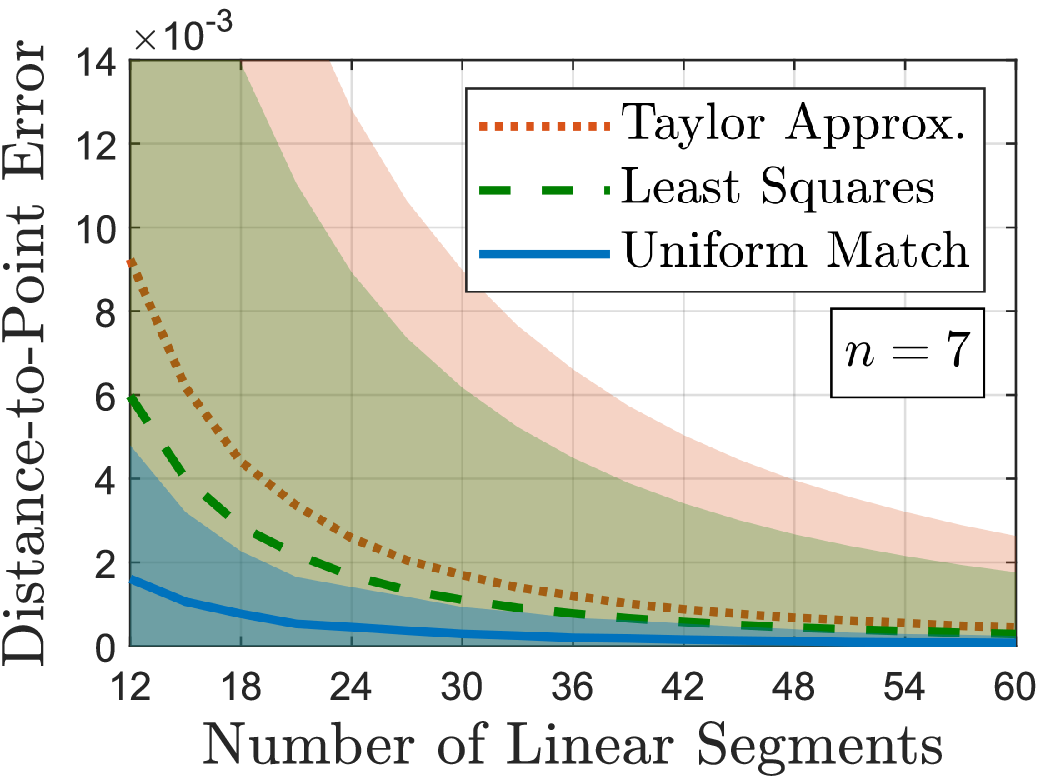} &
\includegraphics[width=0.1675\textwidth]{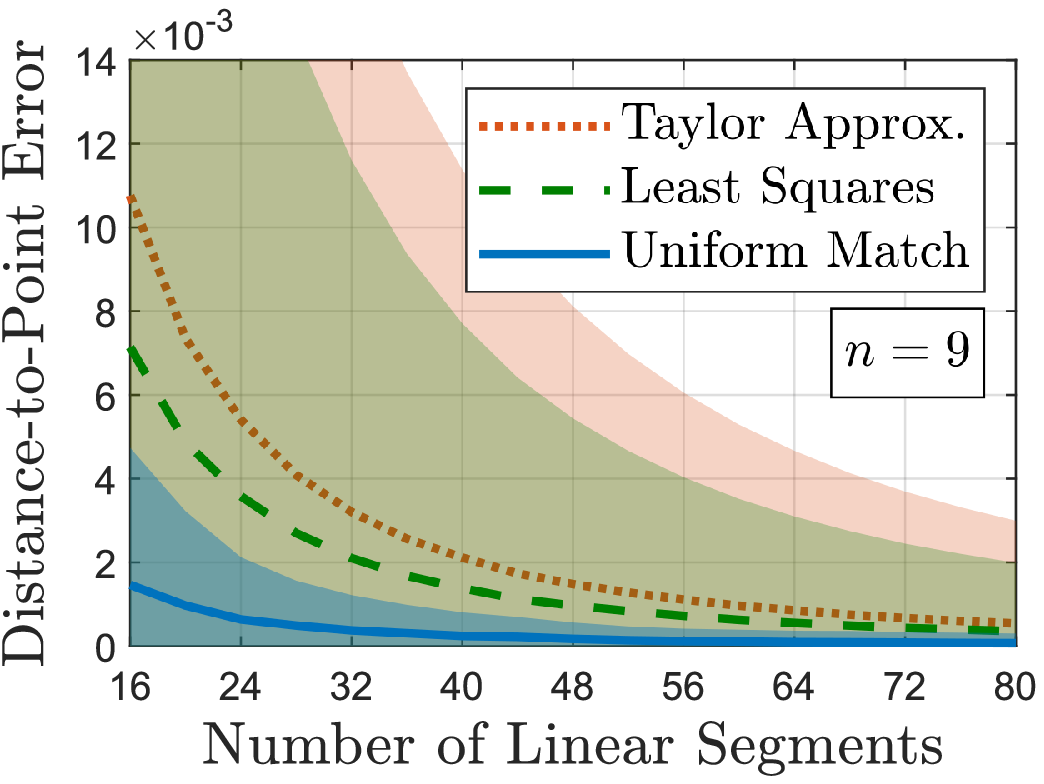}
\\[-1.5mm]
\scriptsize{(a)} & \scriptsize{(b)} & \scriptsize{(c)} & \scriptsize{(d)} & \scriptsize{(e)} & \scriptsize{(f)}
\end{tabular}
\vspace{-3.5mm}
\caption{Normalized distance-to-point error statistics of approximating $n^{\text{th}}$-order B\'ezier curves by (a, b, c) quadratic and (d, e, f) linear B\'ezier segments: (a, d) $n = 5$, (b, e) $n=7$, (c, f) $n = 9$, where the mean and the standard deviation of the error are presented by a line and a shaded region, respectively.}
\label{fig.DistanceToPointErrorStatistics}
\end{figure*}

\begin{figure*}
\centering
\begin{tabular}{@{\hspace{0mm}}c@{\hspace{0mm}}c@{\hspace{0mm}}c@{\hspace{0mm}}c@{\hspace{0mm}}c@{\hspace{0mm}}c@{\hspace{0mm}}c@{}}
\includegraphics[width=0.1675\textwidth]{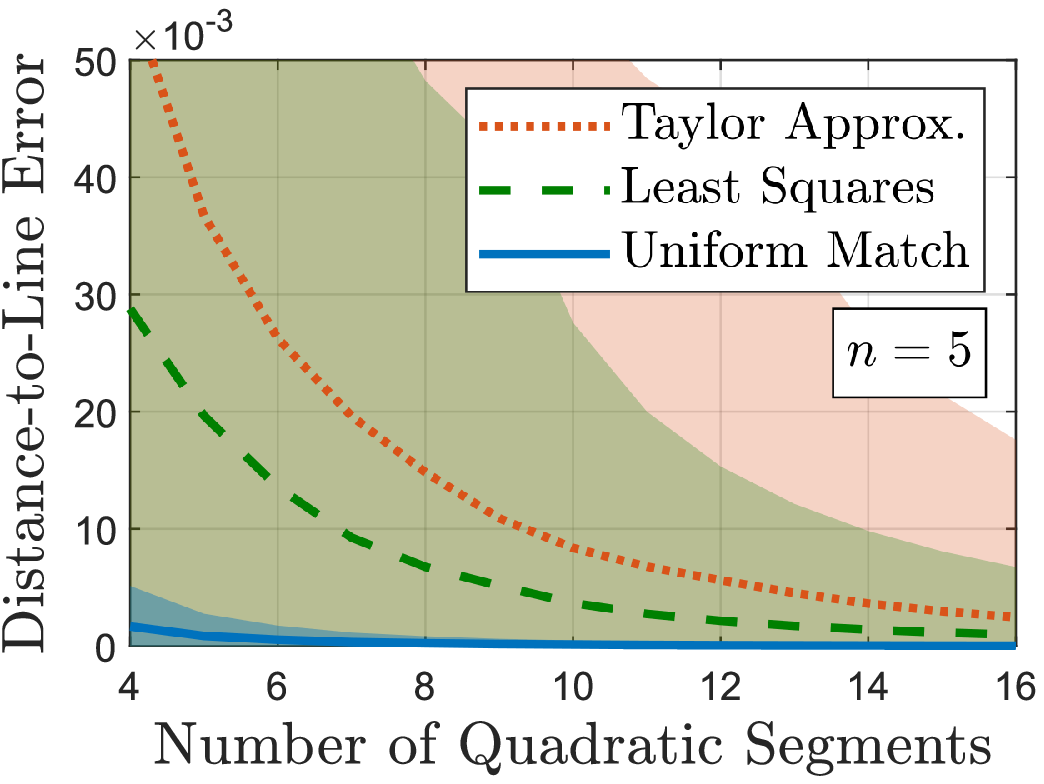} &
\includegraphics[width=0.1675\textwidth]{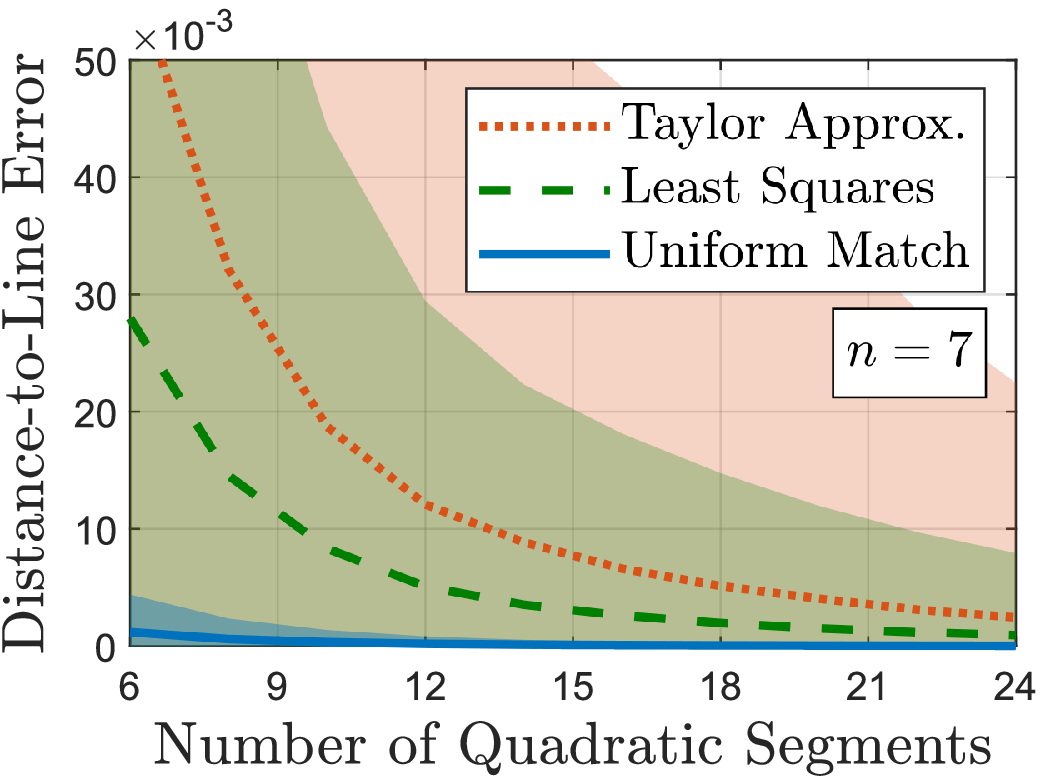} &
\includegraphics[width=0.1675\textwidth]{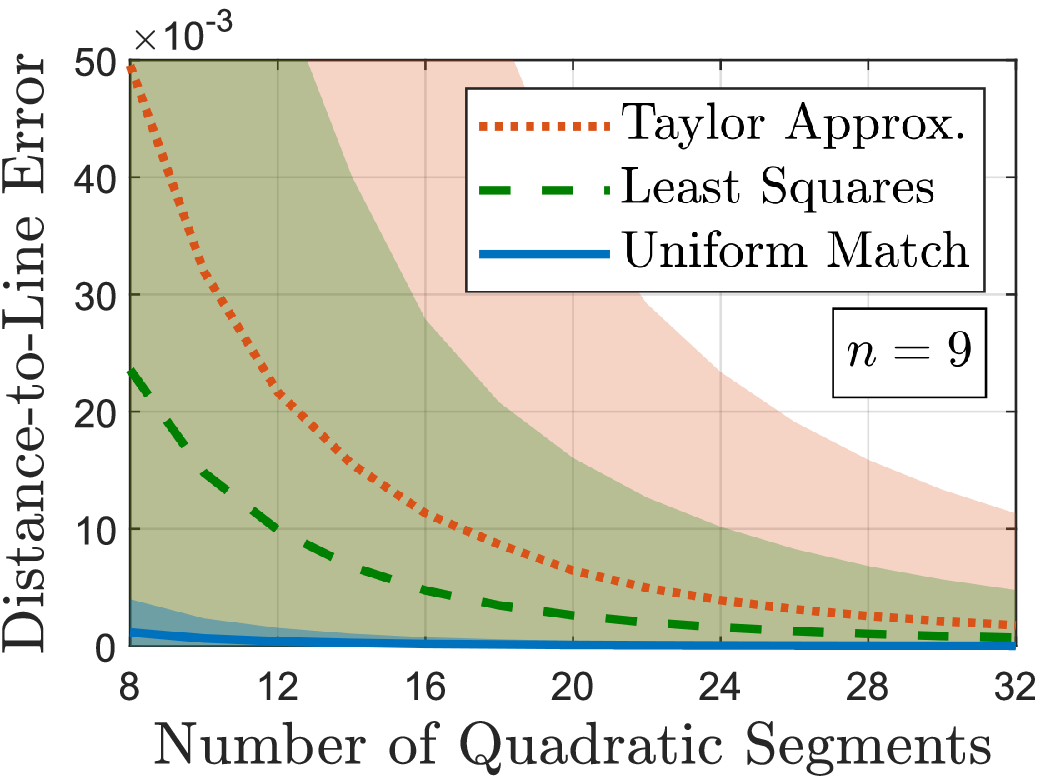} &
\includegraphics[width=0.1675\textwidth]{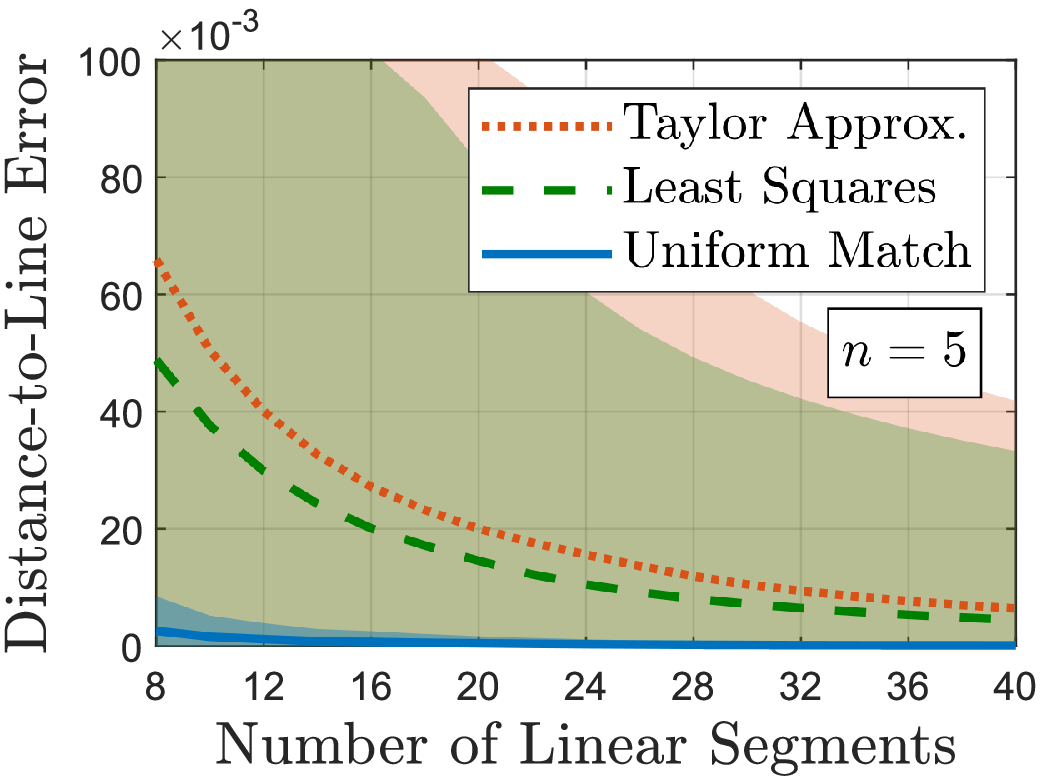} &
\includegraphics[width=0.1675\textwidth]{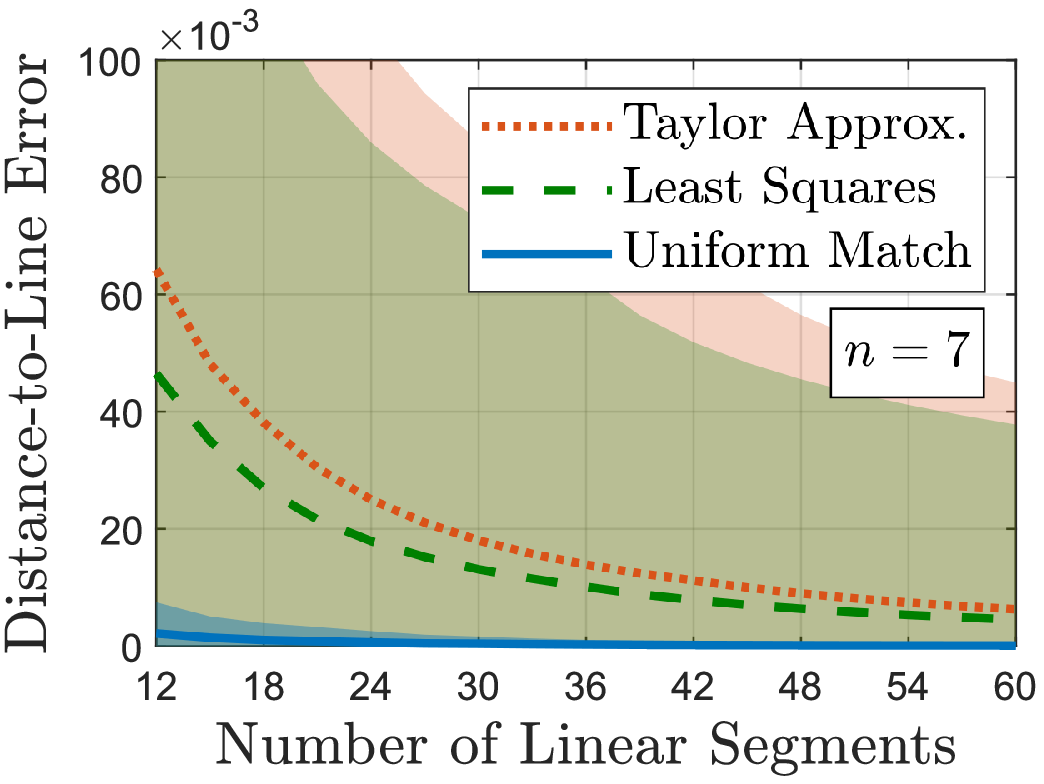} &
\includegraphics[width=0.1675\textwidth]{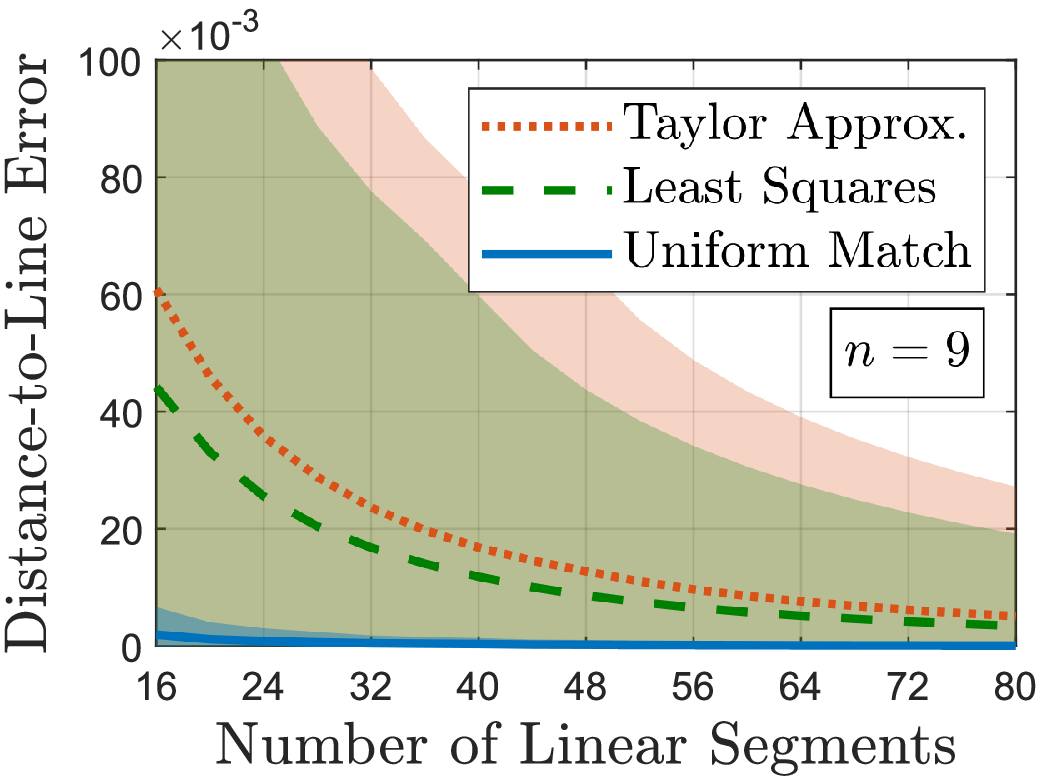} 
\\[-1.5mm]
\scriptsize{(a)} & \scriptsize{(b)} & \scriptsize{(c)} & \scriptsize{(d)} & \scriptsize{(e)} & \scriptsize{(f)}
\end{tabular}
\vspace{-3.5mm}
\caption{Normalized distance-to-line error statistics of approximating $n^{\text{th}}$-order B\'ezier curves by (a, b, c) quadratic and (d, e, f) linear B\'ezier segments: (a, d) $n = 5$, (b, e) $n=7$, (c, f) $n = 9$, where the mean and the standard deviation of the error are presented by a line and a shaded region, respectively.}
\label{fig.DistanceToLineErrorStatistics}
\end{figure*}

\begin{figure}
\centering
\begin{tabular}{@{\hspace{0mm}}c@{\hspace{0mm}}c@{\hspace{0mm}}c@{}}
\includegraphics[width=0.165\textwidth]{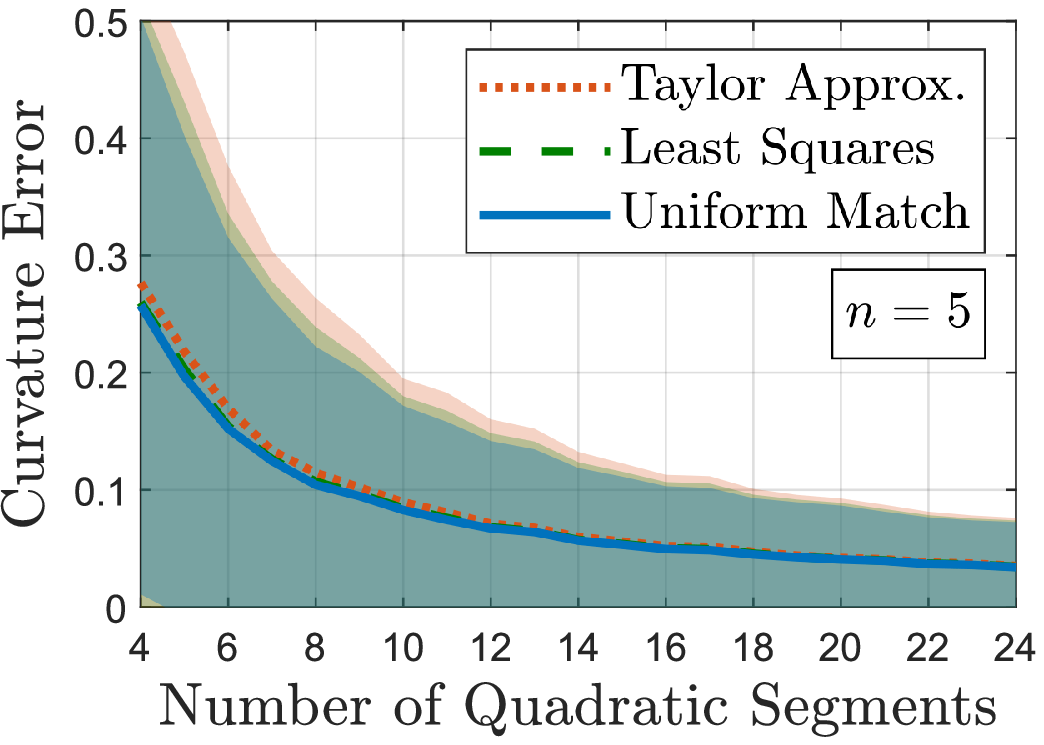} &
\includegraphics[width=0.165\textwidth]{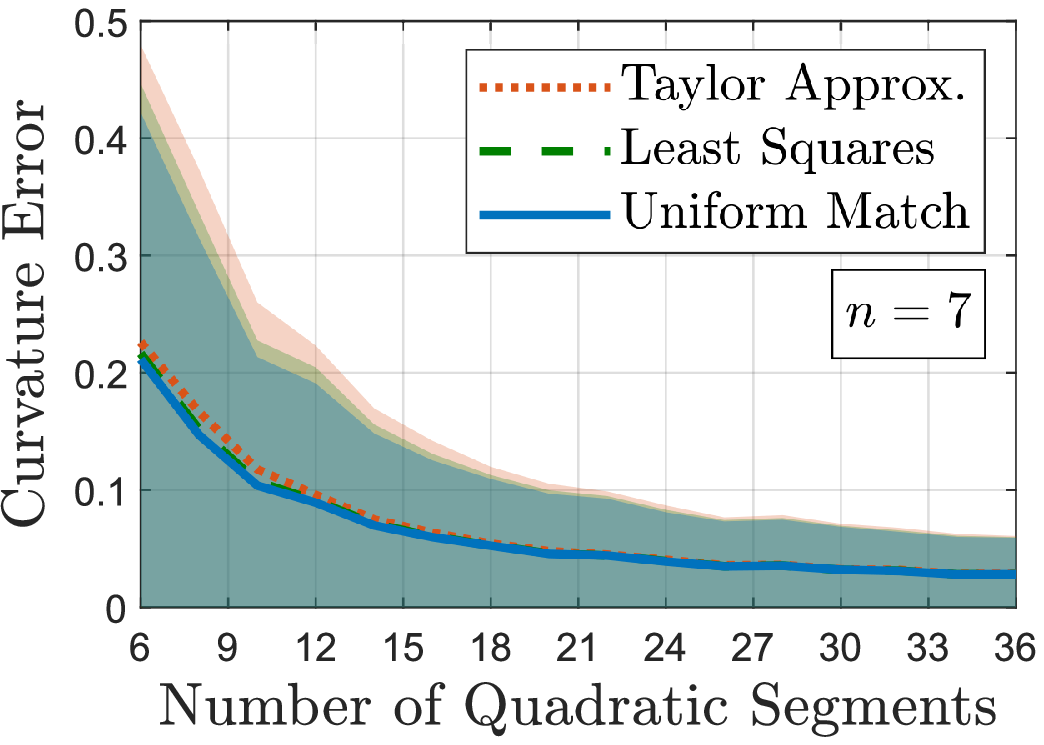} &
\includegraphics[width=0.165\textwidth]{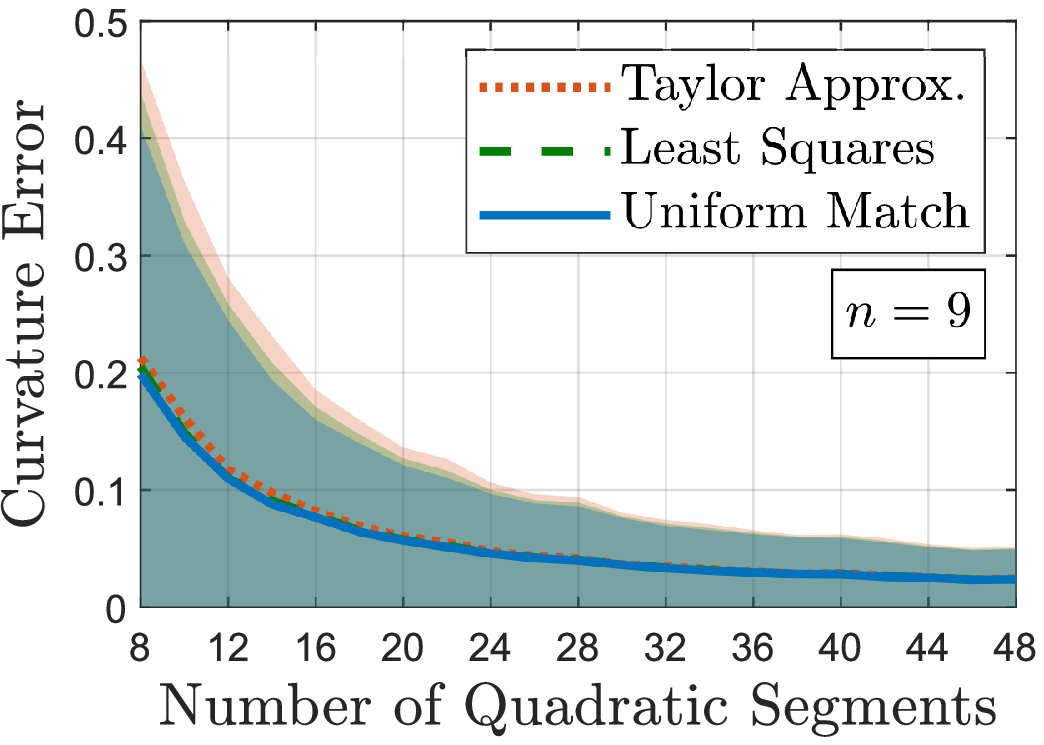} 
\\[-1.5mm]
\scriptsize{(a)} & \scriptsize{(b)} & \scriptsize{(c)}
\end{tabular}
\vspace{-3.5mm}
\caption{Normalized maximum curvature error statistics of quadratic approximations of B\'ezier curves for different number of segments:  (a) $n = 5$, (b) $n=7$, (c) $n = 9$, where the mean and the standard deviation of the error are presented by a line and a shaded region, respectively. Note that for numerical stability we set an upper bound of $1000$ units on the maximum curvature, and any sample case with a larger maximum curvature is rejected.}
\label{fig.MaximumCurvatureErrorStatistics}
\end{figure}

\begin{figure*}[t]
\begin{tabular}{@{}c@{\hspace{1mm}}c@{\hspace{1mm}}c@{\hspace{1mm}}c@{\hspace{1mm}}c@{\hspace{1mm}}c@{}}
\includegraphics[width=0.1625\textwidth]{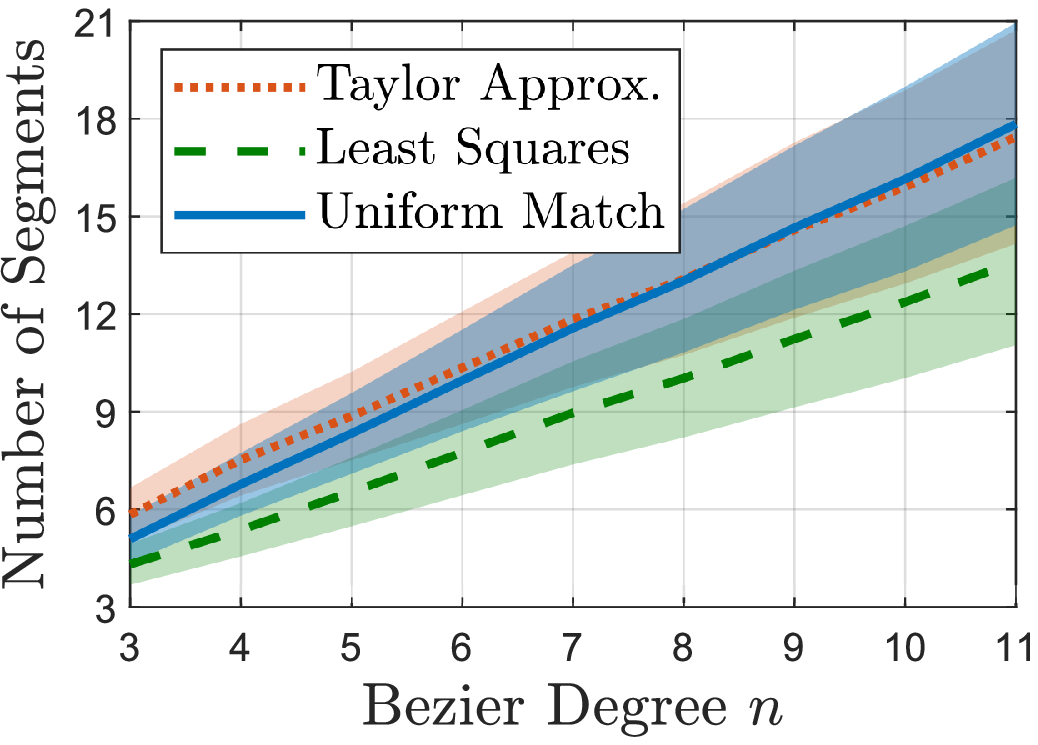} & 
\includegraphics[width=0.1625\textwidth]{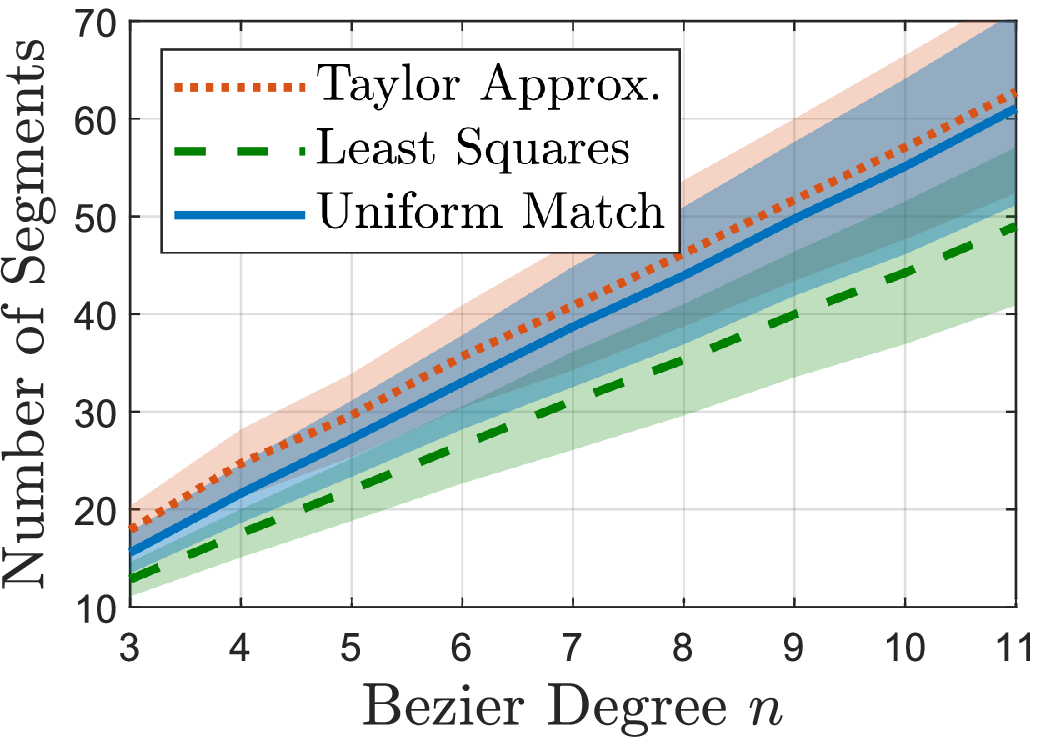} &
\includegraphics[width=0.1625\textwidth]{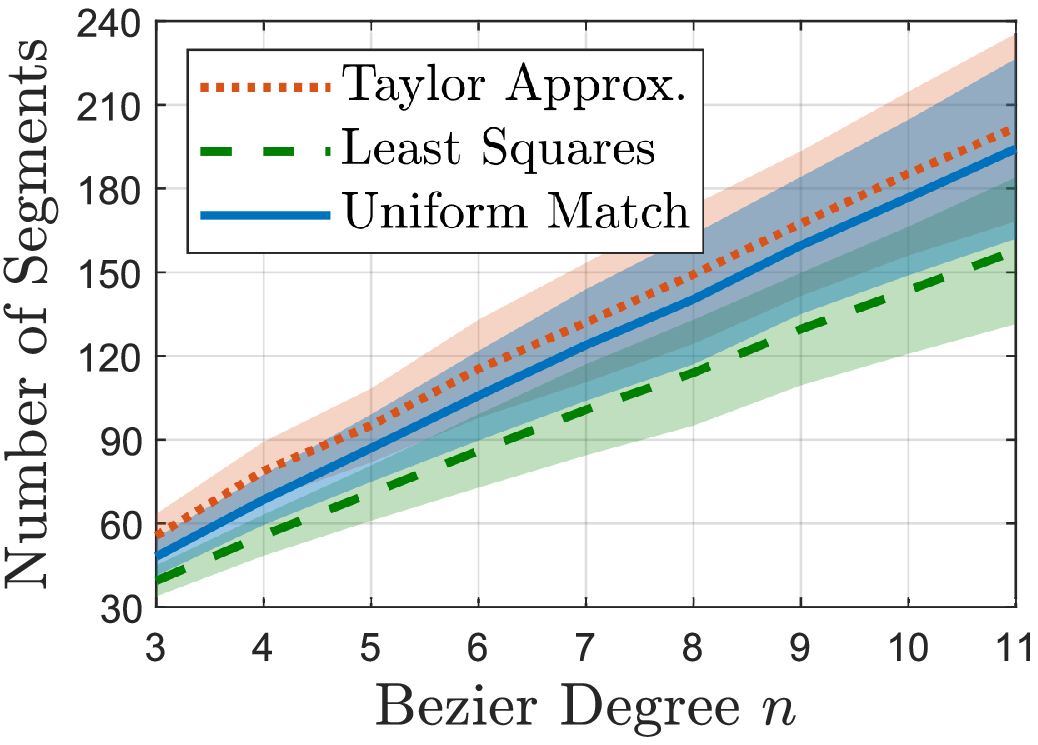} &
\includegraphics[width=0.1625\textwidth]{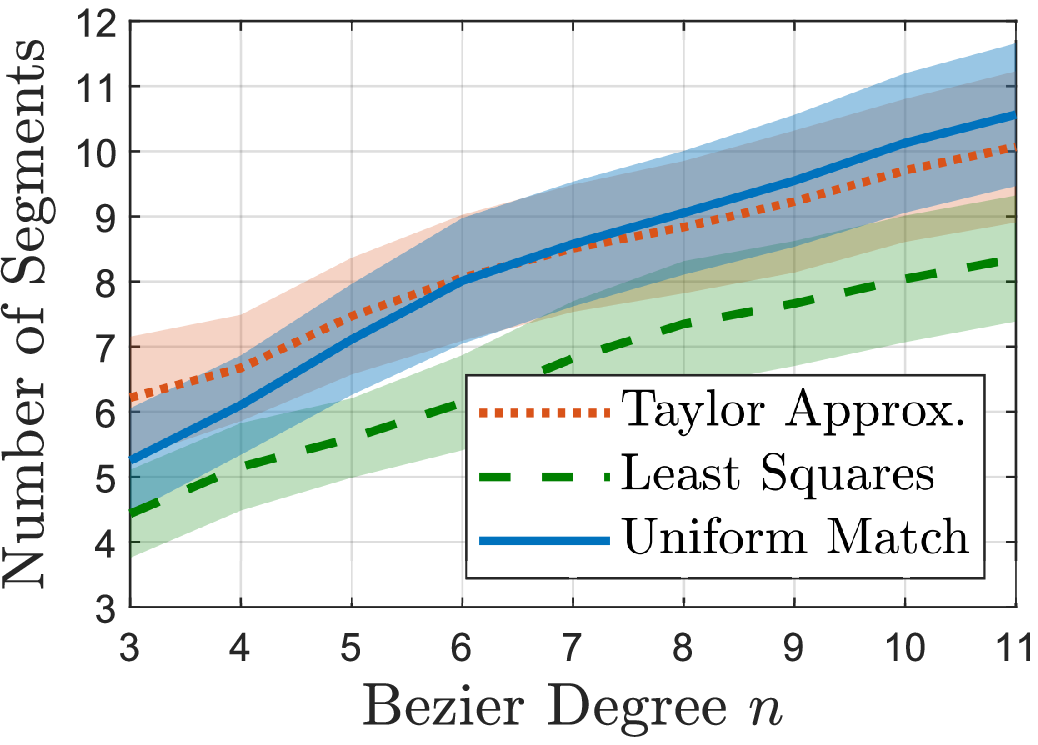} &
\includegraphics[width=0.1625\textwidth]{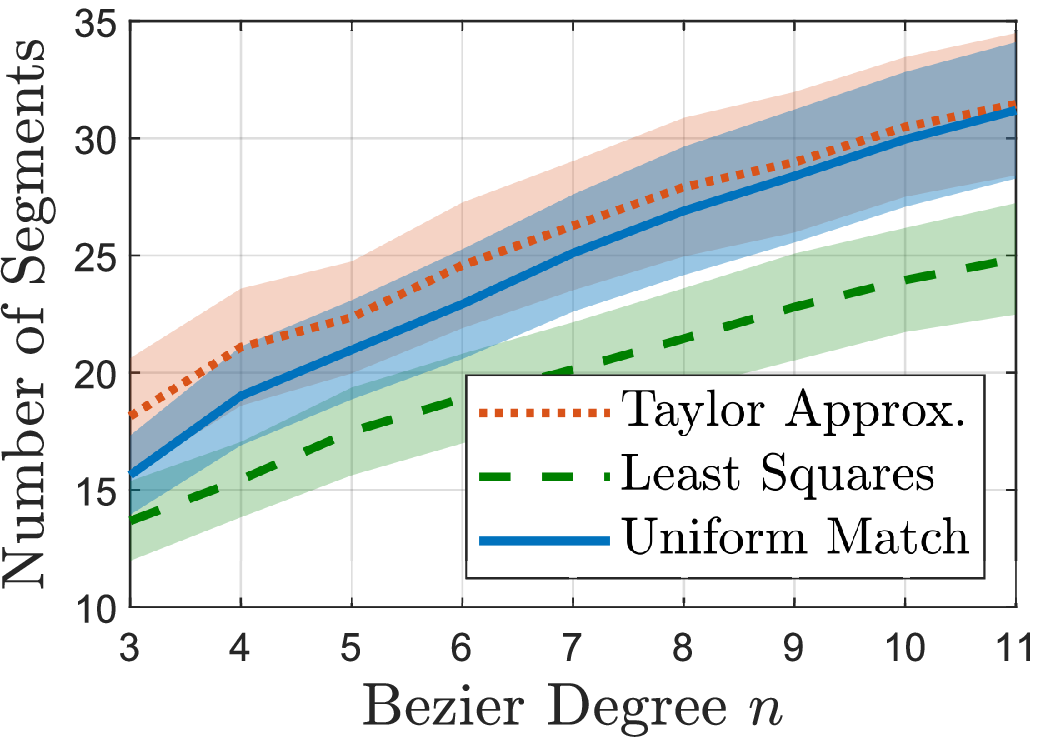} &
\includegraphics[width=0.1625\textwidth]{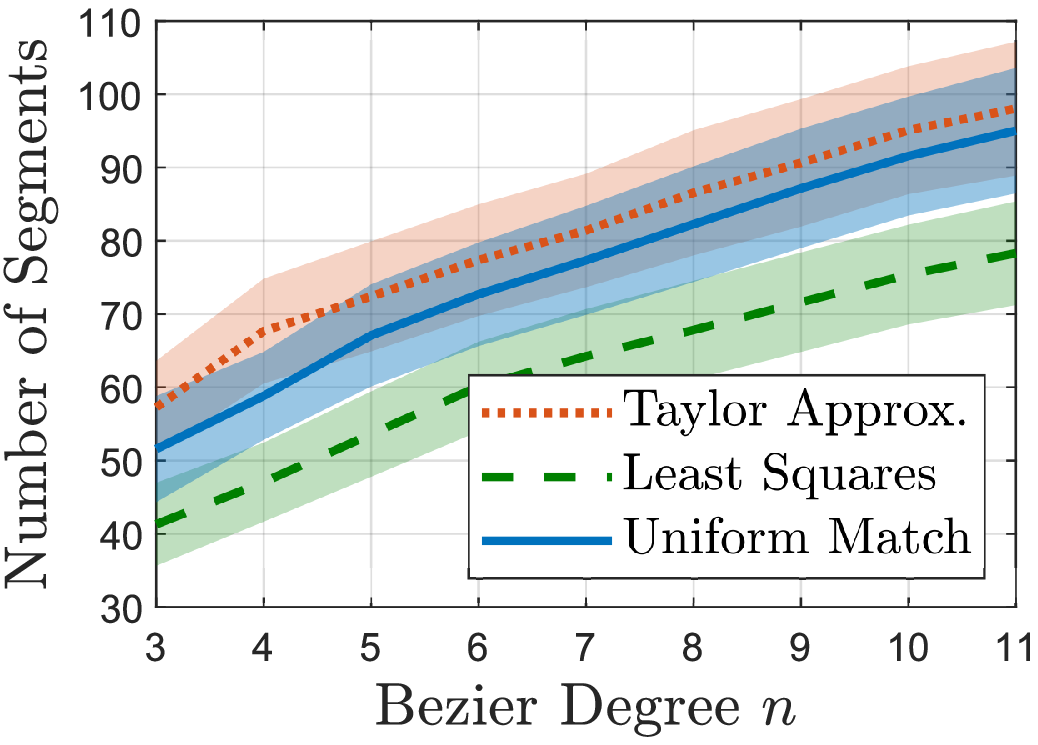} 
\\
\includegraphics[width=0.1625\textwidth]{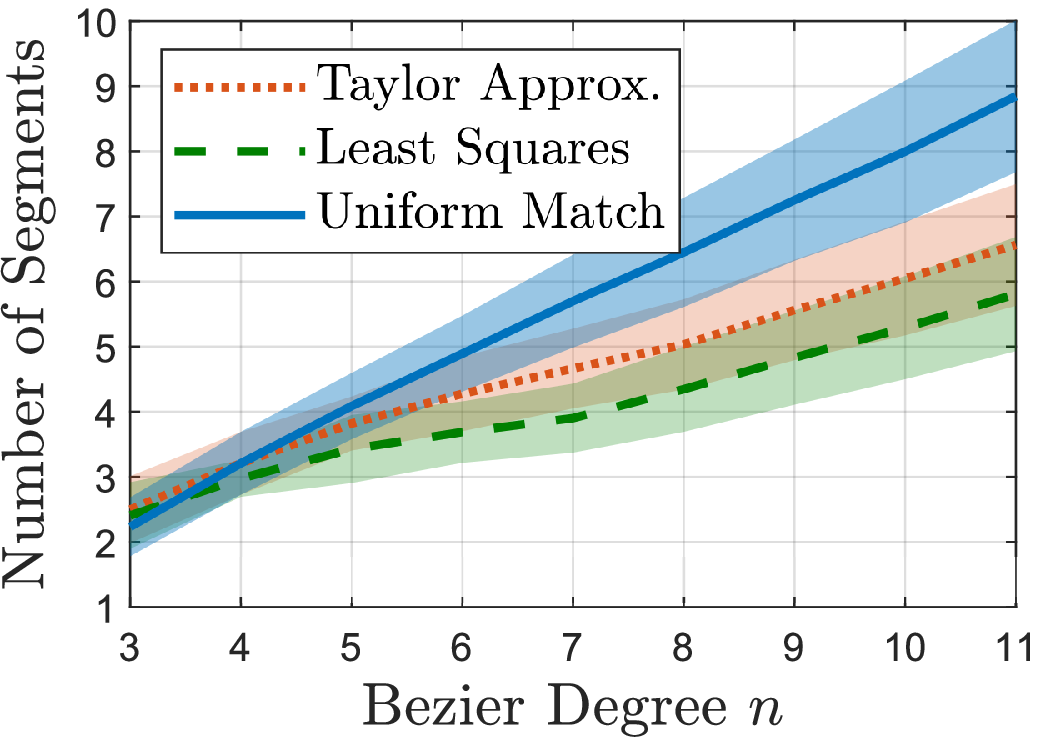} & 
\includegraphics[width=0.1625\textwidth]{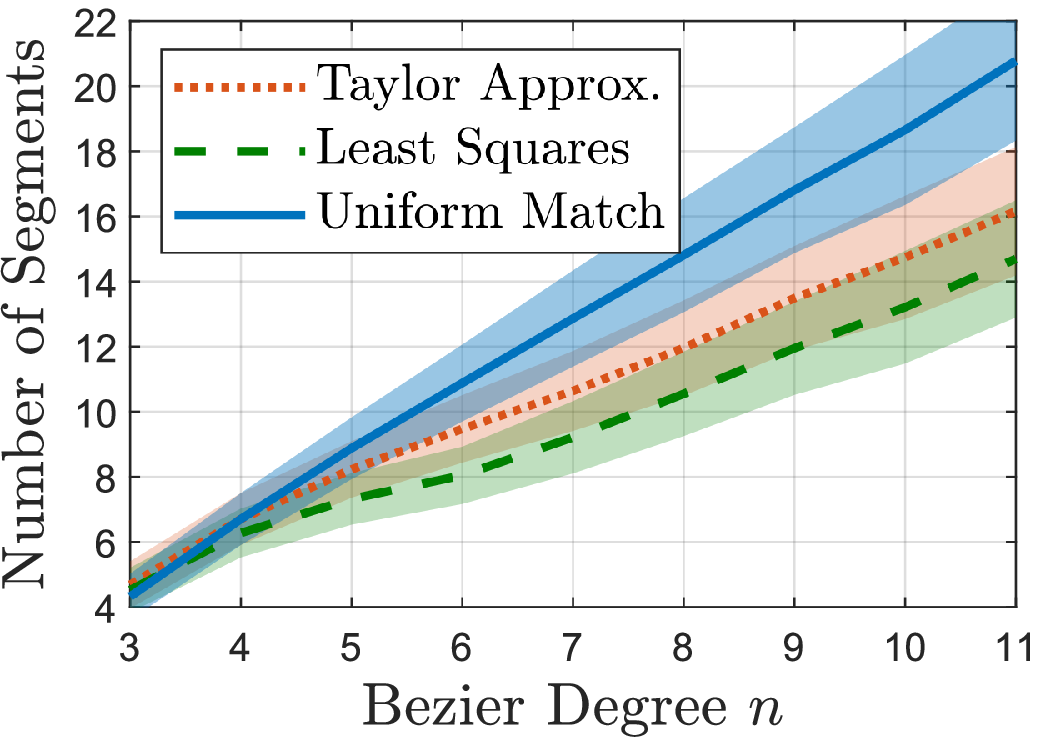} &
\includegraphics[width=0.1625\textwidth]{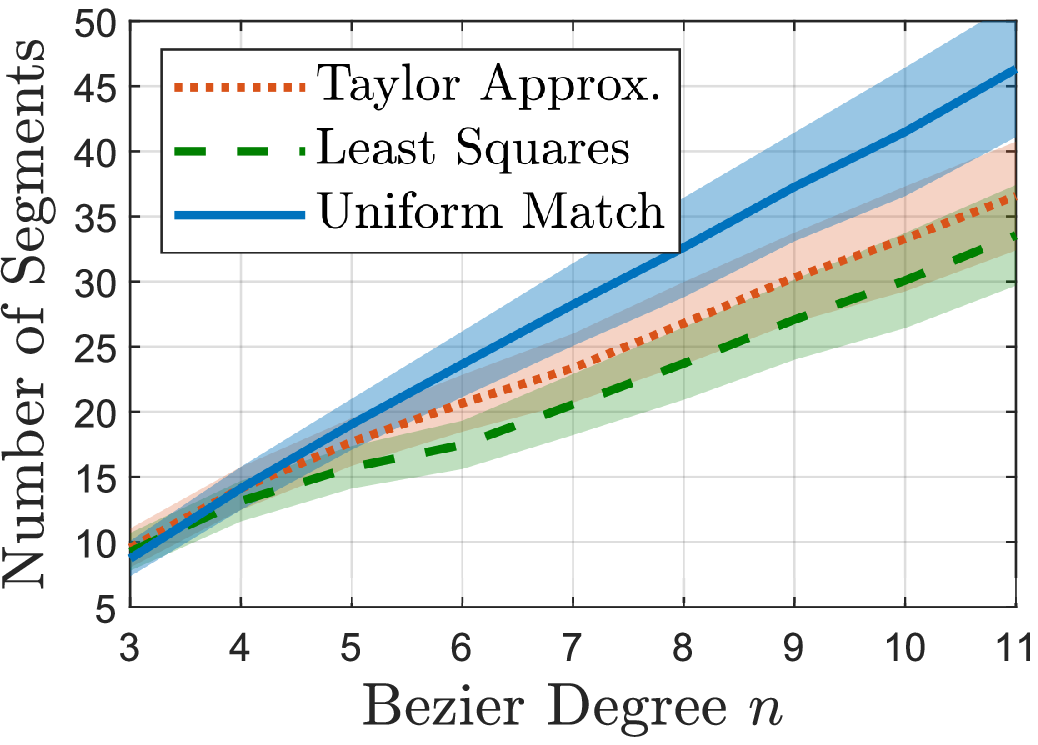} &
\includegraphics[width=0.1625\textwidth]{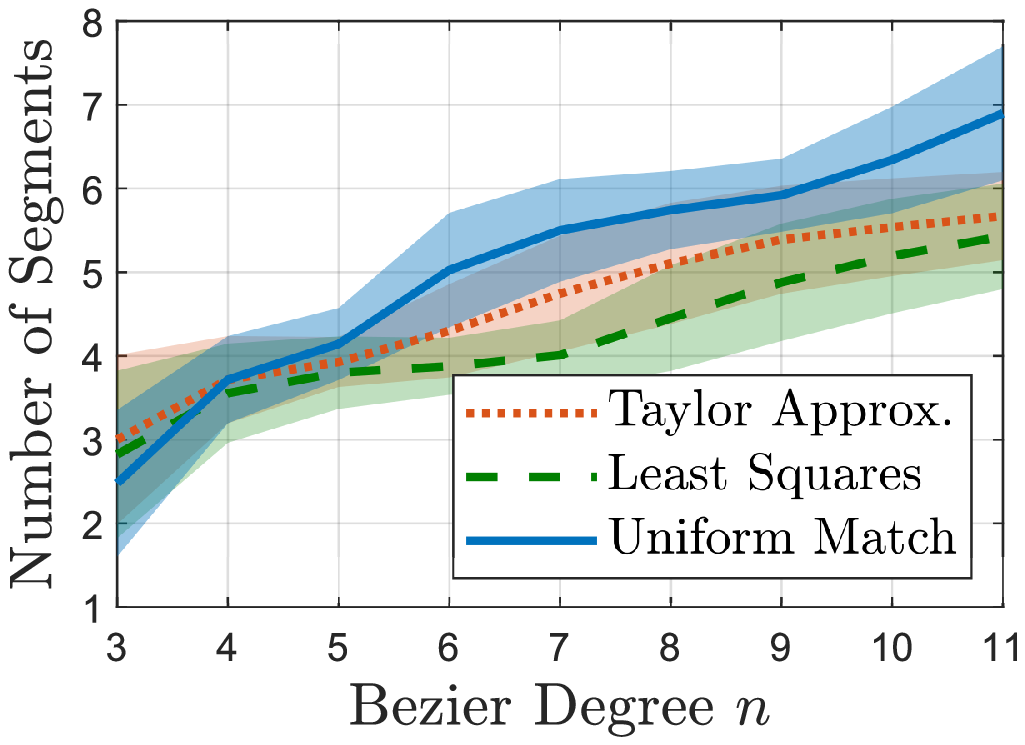} &
\includegraphics[width=0.1625\textwidth]{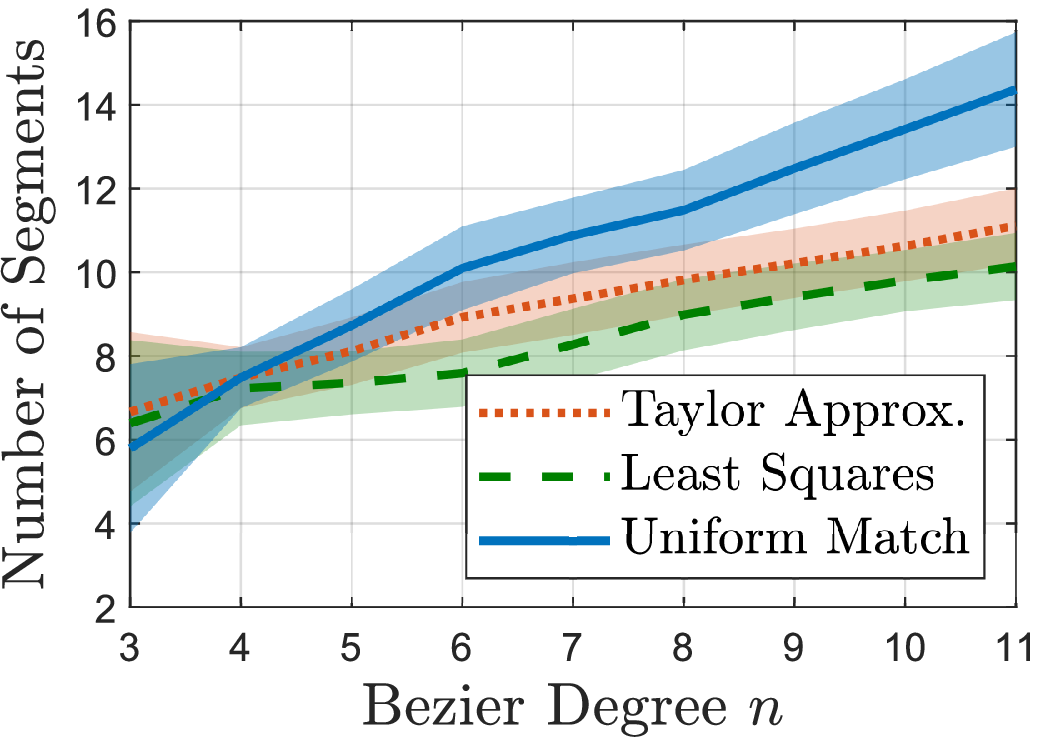} &
\includegraphics[width=0.1625\textwidth]{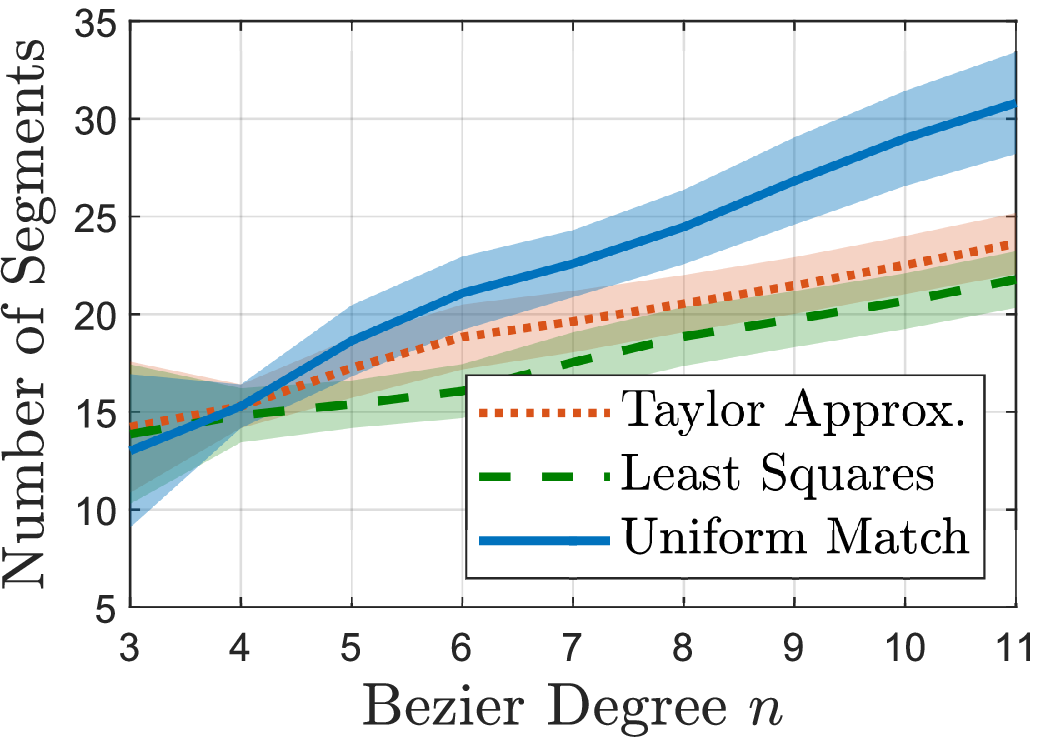} 
\\[-1.5mm]
\scriptsize{(a)}& \scriptsize{(b)}& \scriptsize{(c)}& \scriptsize{(d)}& \scriptsize{(e)}& \scriptsize{(f)} 
\end{tabular}
\vspace{-3.5mm}
\caption{Relation between the number of segments and B\'ezier degree in adaptive B\'ezier approximation by (top) linear and (bottom) quadratic segments using (a, b, c) linear  and (d, e, f) binary  search. Here, the approximation tolerance is measured by the maximum control-point distance and set to be (a, d) 0.1, (b, e) 0.01, and (c, f) 0.001 units.}
\label{fig.NumberOfSegmentsBezierOrder}
\end{figure*}

\begin{figure*}[t]
\begin{tabular}{@{}c@{\hspace{1mm}}c@{\hspace{1mm}}c@{\hspace{1mm}}c@{\hspace{1mm}}c@{\hspace{1mm}}c@{}}
\includegraphics[width=0.1625\textwidth]{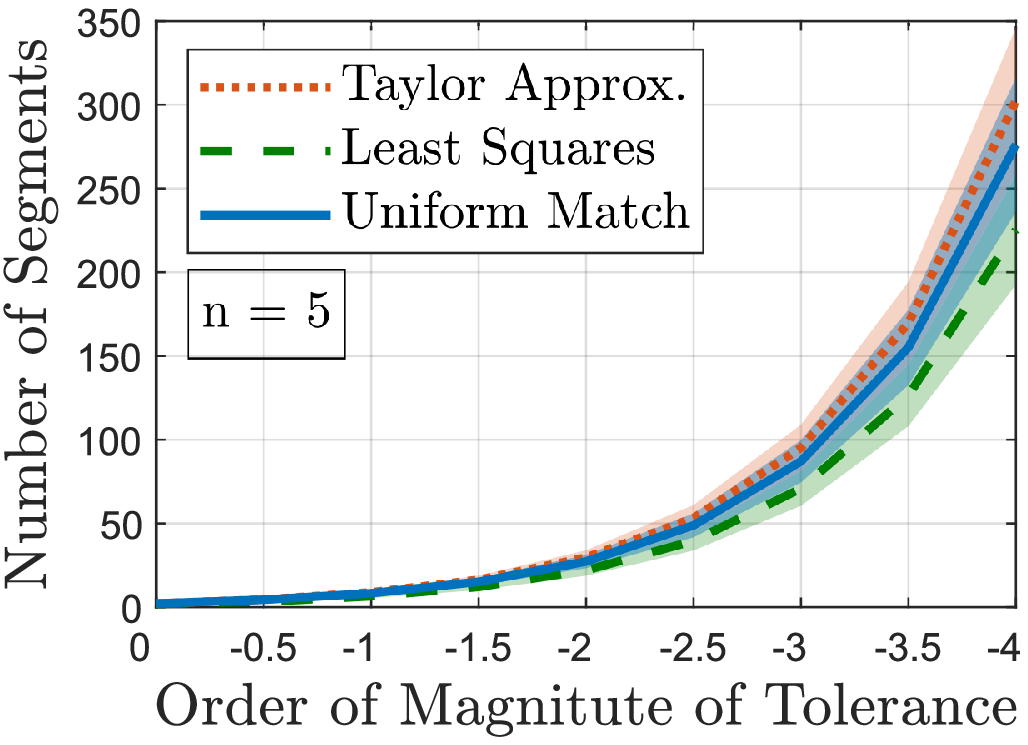} &
\includegraphics[width=0.1625\textwidth]{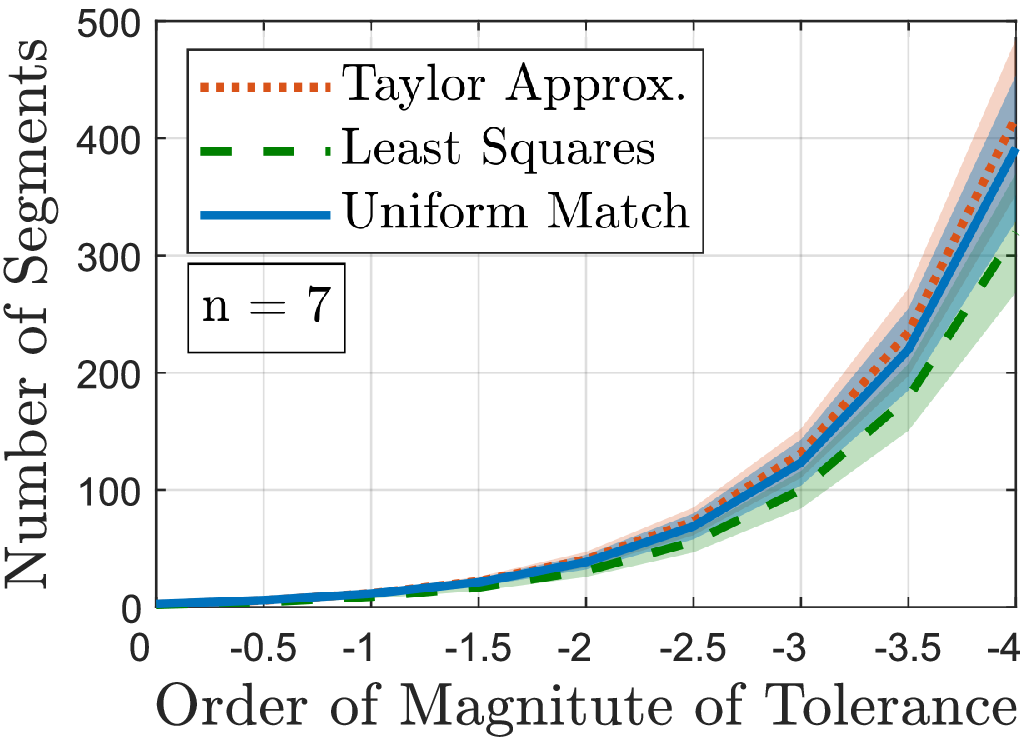} &
\includegraphics[width=0.1625\textwidth]{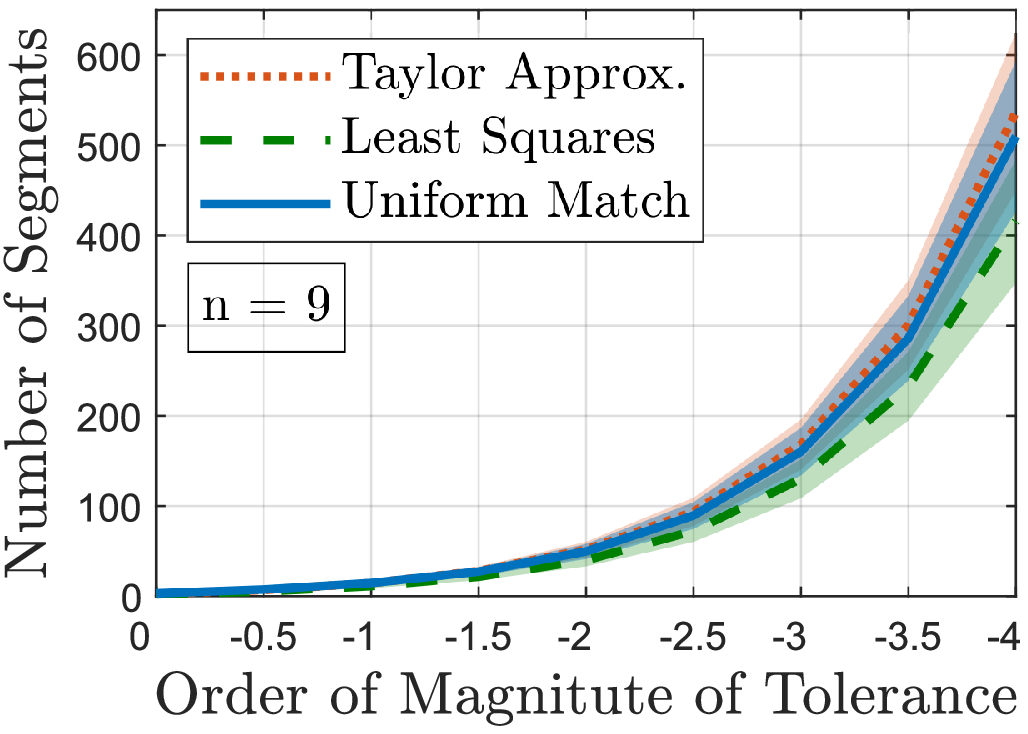} &
\includegraphics[width=0.1625\textwidth]{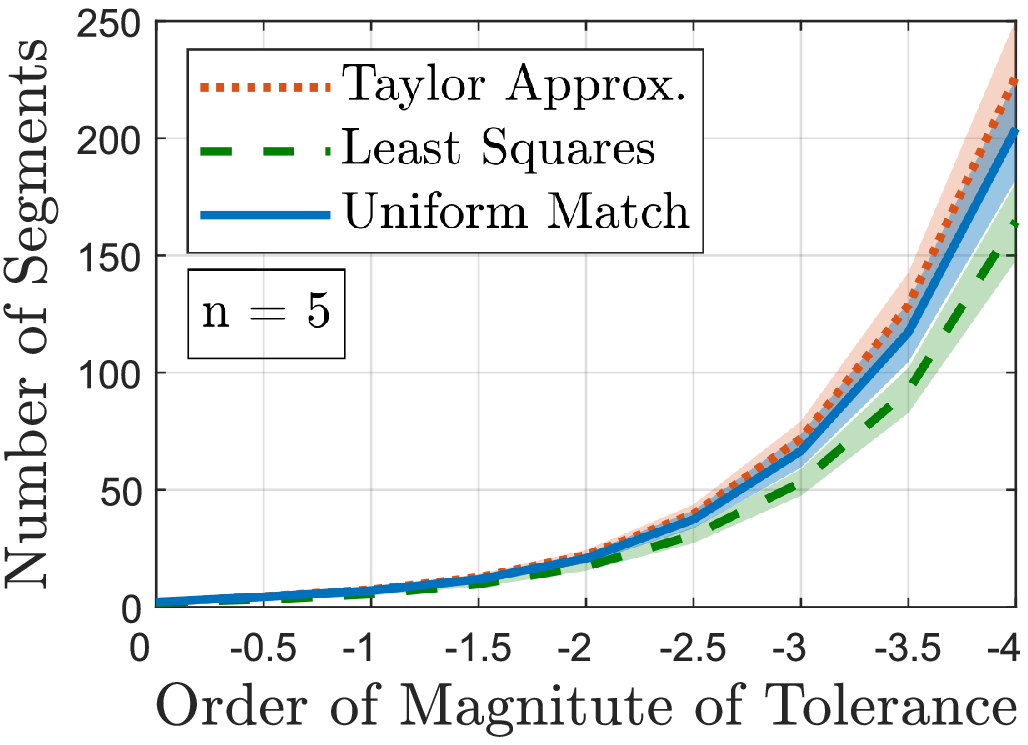} &
\includegraphics[width=0.1625\textwidth]{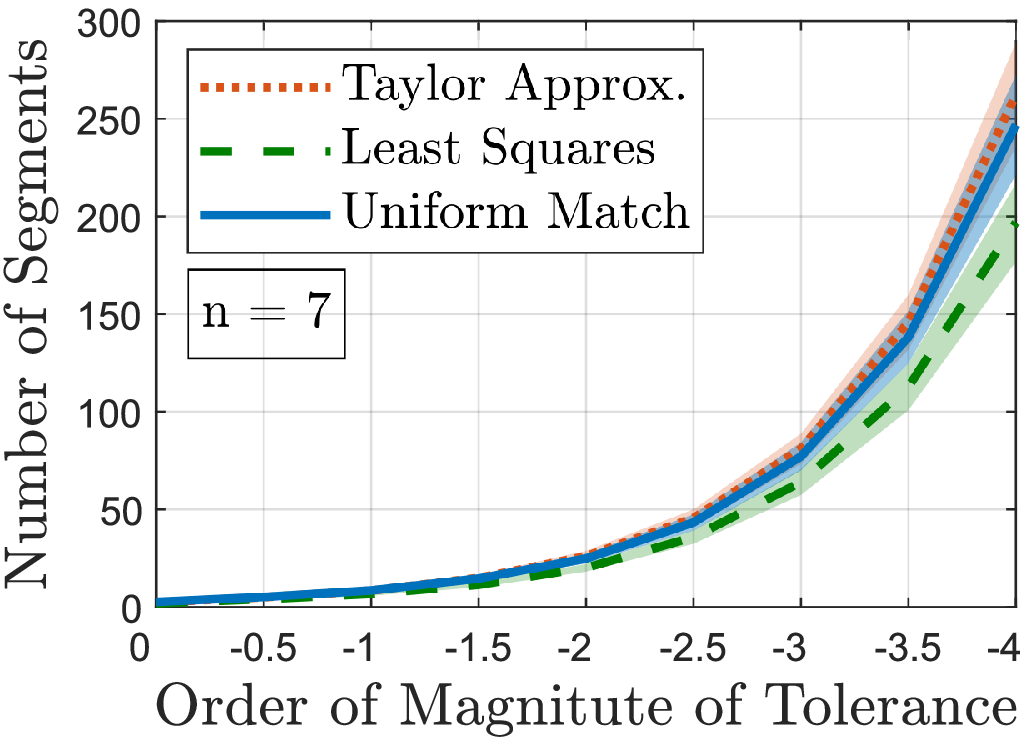} &
\includegraphics[width=0.1625\textwidth]{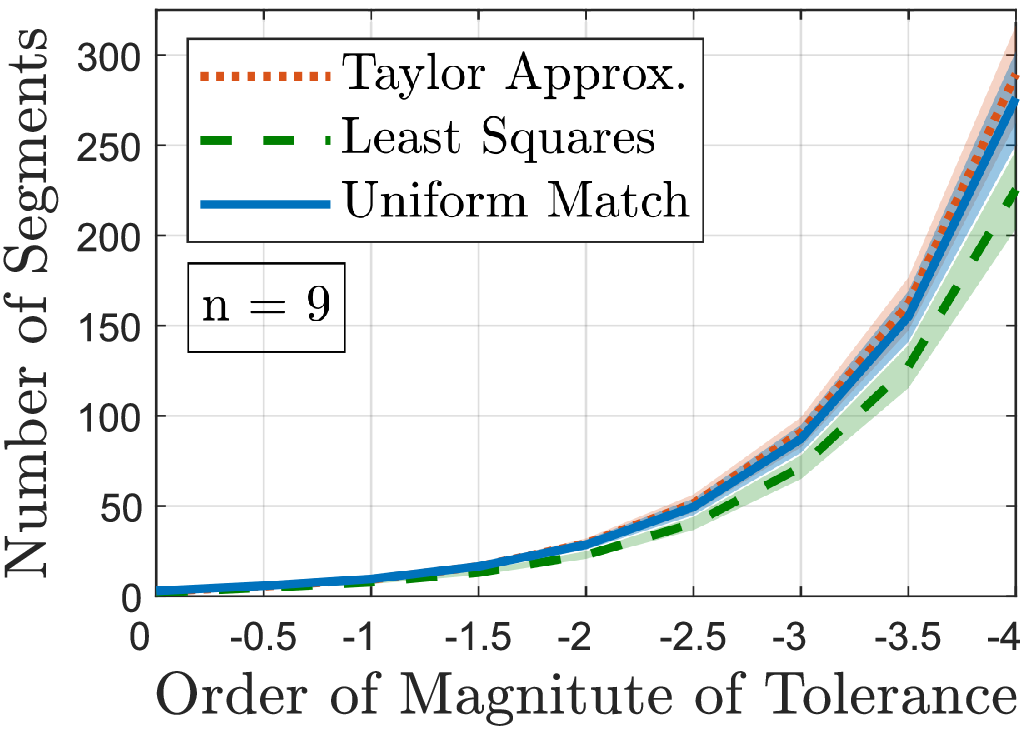} 
\\
\includegraphics[width=0.1625\textwidth]{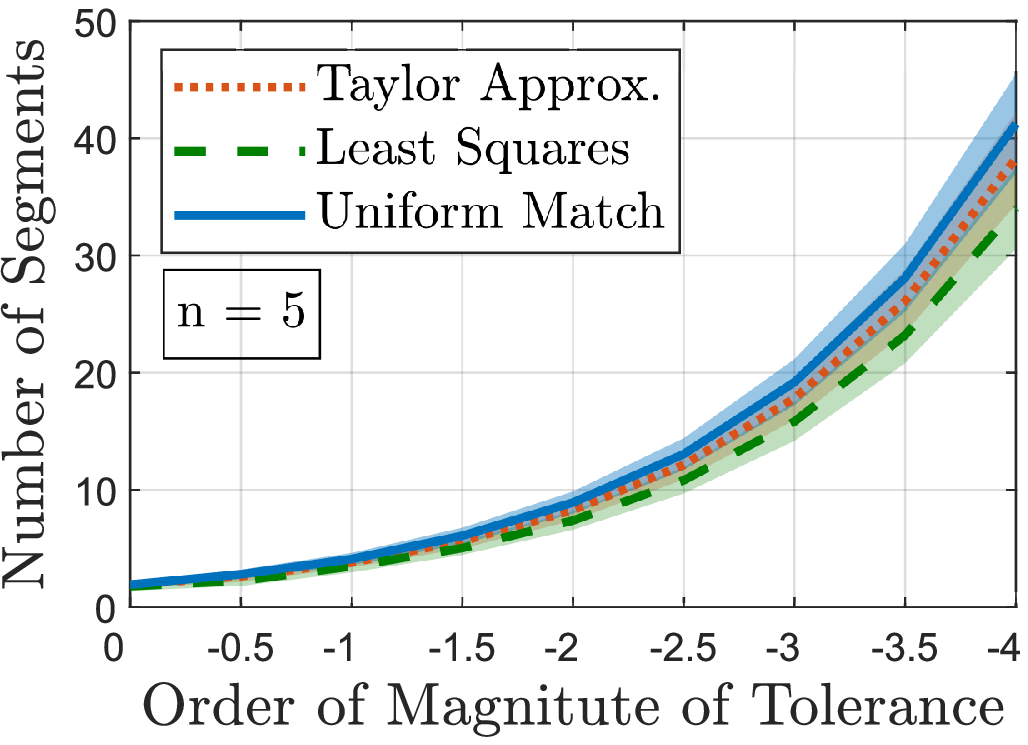} &
\includegraphics[width=0.1625\textwidth]{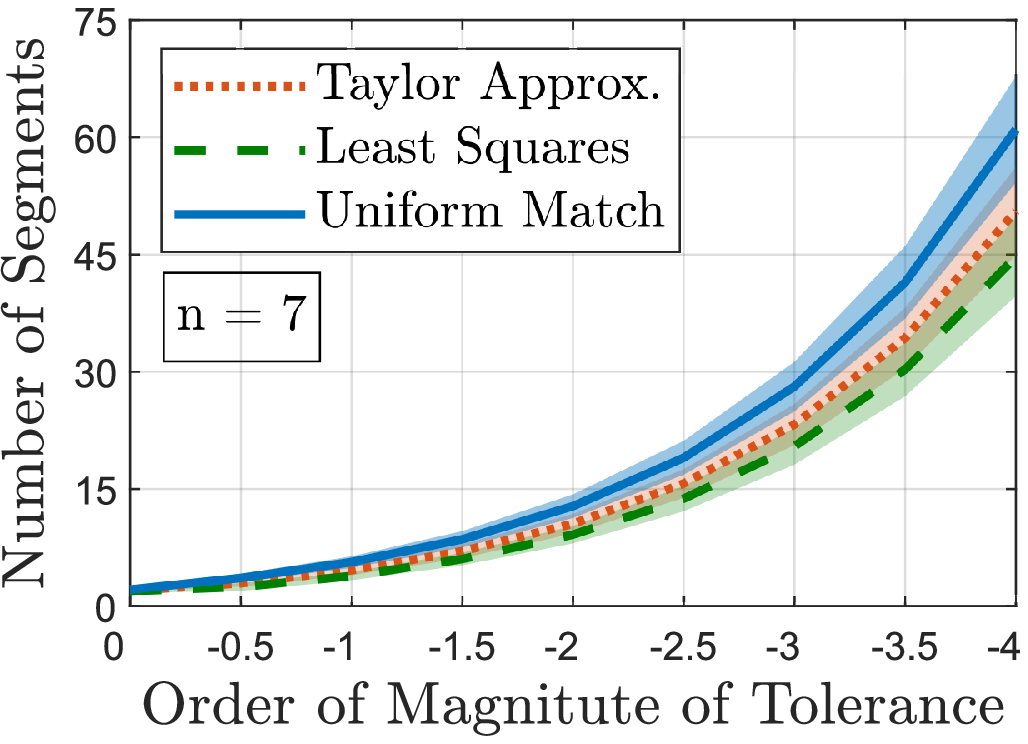} &
\includegraphics[width=0.1625\textwidth]{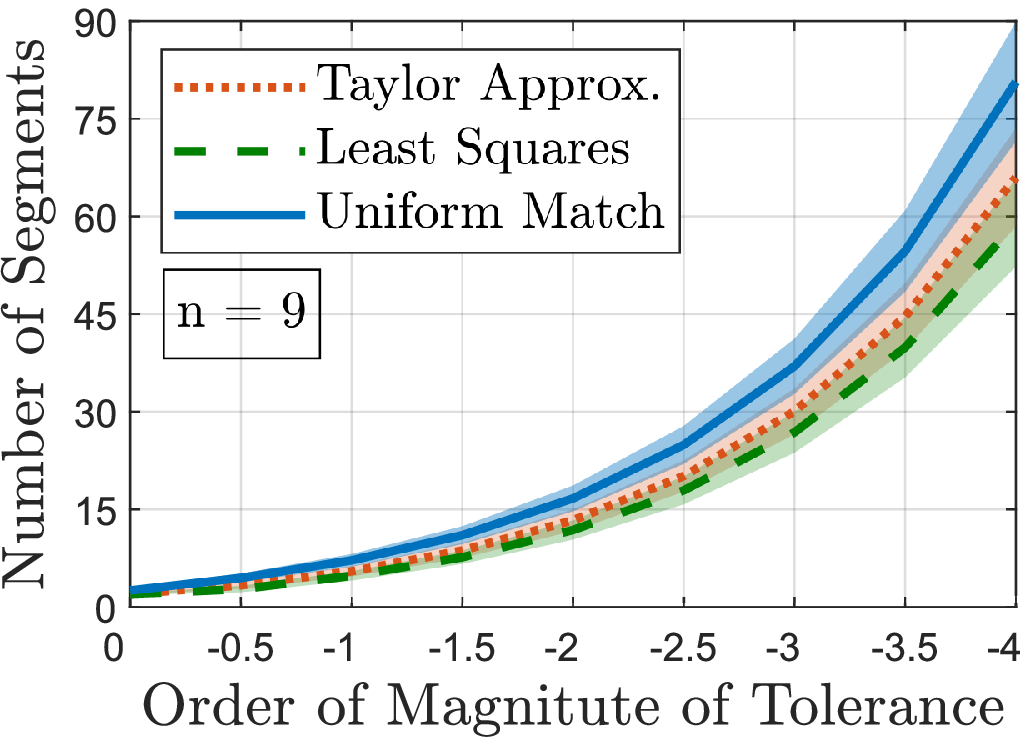} &
\includegraphics[width=0.1625\textwidth]{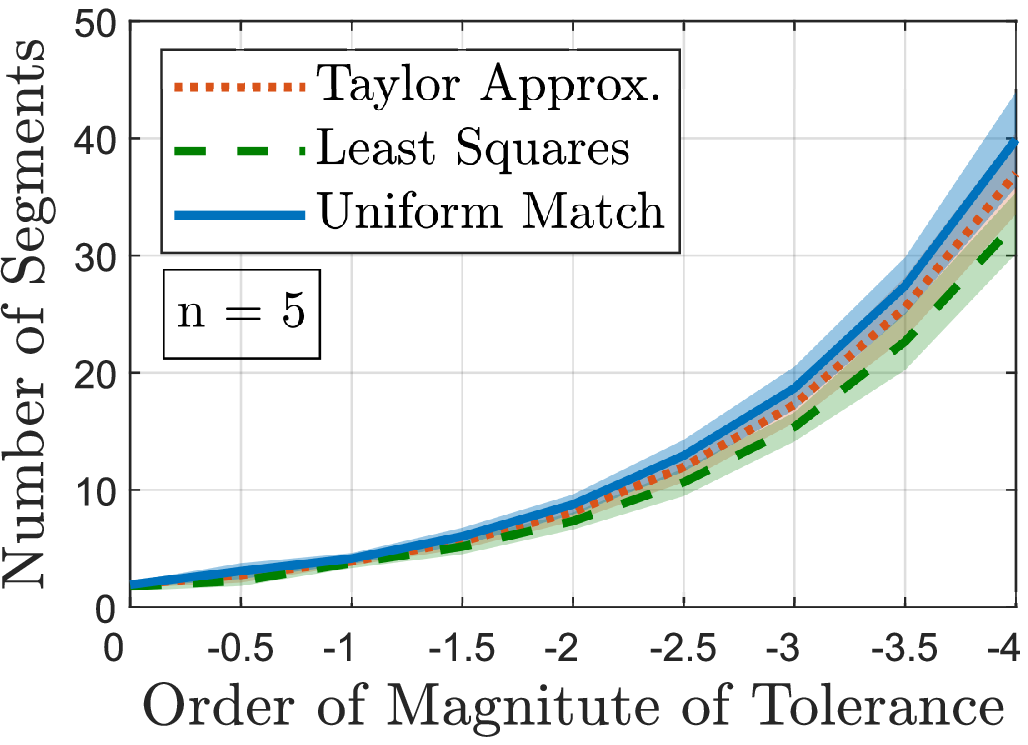} &
\includegraphics[width=0.1625\textwidth]{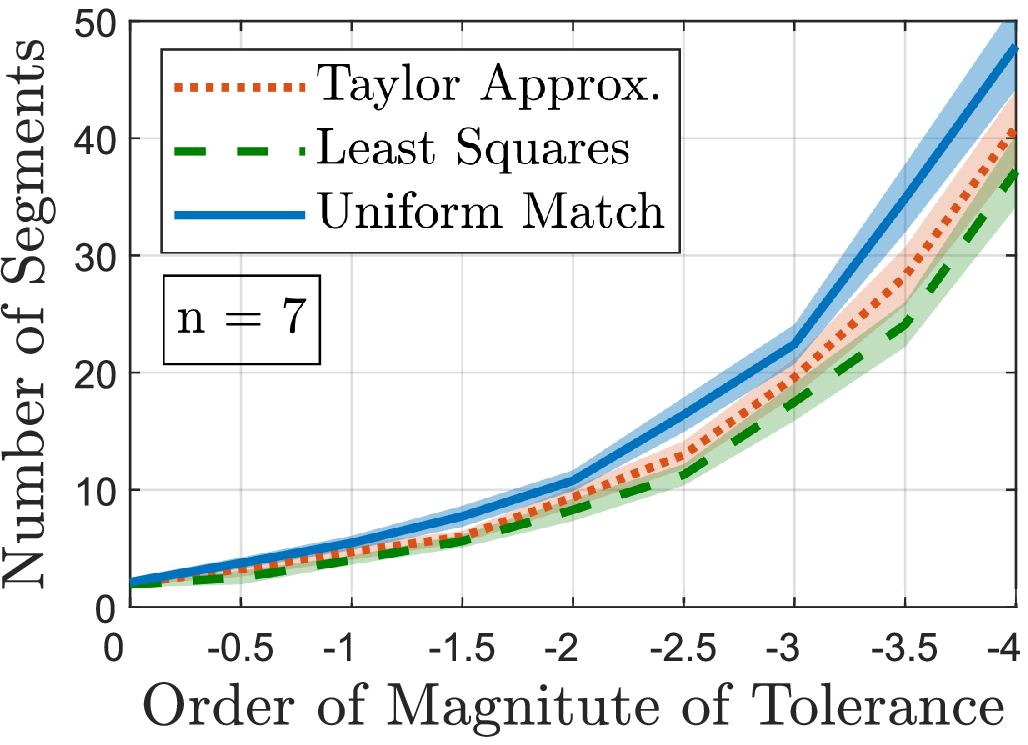} &
\includegraphics[width=0.1625\textwidth]{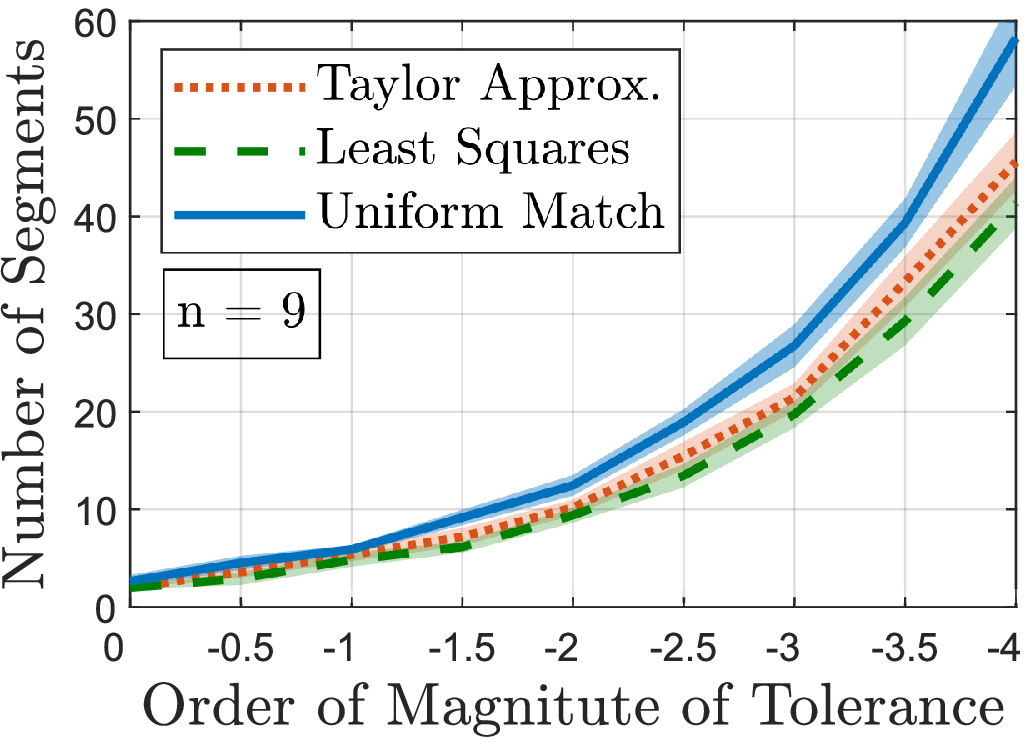} 
\\[-1.5mm]
\scriptsize{(a)}& \scriptsize{(b)}& \scriptsize{(c)}& \scriptsize{(d)}& \scriptsize{(e)}& \scriptsize{(f)} 
\end{tabular}
\vspace{-3.5mm}
\caption{Relation between the number of segments and approximation tolerance in adaptive B\'ezier approximation by (top) linear and (bottom) quadratic segments using (a, b, c) linear  and (d, e, f) binary  search for B\'ezier order (a, d) $n = 5$, (b, e) $n=7$, and (c, f) $n= 9$. Here, the approximation tolerance is measured by the maximum control-point distance.}
\label{fig.NumberOfSegmentsApproximationTolerance}
\end{figure*}

To determine approximation error statistics, we randomly generate B\'ezier control points that are uniformly distributed over the unit box $[0,1] \times [0,1]$.
For the distance-to-point criterion, we select the origin as the point of interest; and for distance-to-line, we select the horizontal side of the unit box (i.e., the line segment joining the origin $(0,0)$ to point $(1,0)$). 
In Fig. \ref{fig.LengthErrorStatistics}-\ref{fig.MaximumCurvatureErrorStatistics}, we provide sample statistics (mean and standard deviation) of normalized approximation errors of curve length, distance-to-point, distance-to-line, and maximum curvature versus the number of segments.
It is visibly clear that the B\'ezier approximation with uniform matching reduction achieves significantly better performance in capturing curve length, distance-to-point and distance-to-line compared to the least squares and Taylor approximations.
Especially, the end-point preservation property of uniform matching reduction plays a key role for its superior performance for the distance-to-point/line criteria presented in Fig. \ref{fig.DistanceToPointErrorStatistics}-\ref{fig.DistanceToLineErrorStatistics}.
We observe in \reffig{fig.LengthErrorStatistics} that B\'ezier approximations with linear segments have comparable accuracy for all three reduction methods, which can be explained by the limited representation power of linear curve segments.
On the other hand, uniform matching reduction shows a superior performance for B\'ezier approximations with quadratic segments.
Finally, as illustrated in \reffig{fig.MaximumCurvatureErrorStatistics}, we see that all B\'ezier degree reduction methods perform equally well for approximating the maximum curvature of B\'ezier curves.
This can be explained by the limited expresiveness of quadratic segments for approximating the first and second derivatives of B\'ezier curves since curvature is a function of the first and second curve derivatives.

\subsection{Number of Segments vs. B\'ezier Degree and Tolerance}

In this part, we numerically study how the number of segments automatically determined in adaptive B\'ezier approximation depends on the order of the B\'ezier curve and the approximation tolerance (specified in terms of the maximum control-point distance).
We consider randomly generated B\'ezier control points over the unit box $[0,1] \times [0,1]$.
To ensure scale invariance, we rescale B\'ezier control points to have a sample variance of unity. 
In \reffig{fig.NumberOfSegmentsBezierOrder} we present  the  average number of segments used in adaptive approximation of B\'ezier curves of different orders.
For a fixed choice of an approximation tolerance, we observe that the number of segments grows linearly with the B\'ezier degree for linear search whereas the grow rate is sublinear for binary search.  
This is strongly aligned with the B\'ezier approximation rule proposed in \refsec{sec.BezierApproximationRule}.
Finally, as illustrated in \reffig{fig.NumberOfSegmentsApproximationTolerance}, the average number of segments used in adaptive B\'ezier approximation grows exponentially with the negated order of magnitude of approximation tolerance $\varepsilon$, because the higher the accuracy the higher the spatial resolution.

\section{Conclusion}
\label{sec.Conclusions}

In this paper, we introduce a novel adaptive B\'ezier approximation method that automatically splits and performs degree reduction on high-order B\'ezier curves to approximately represent them by multiple low-order B\'ezier segments at any given approximation tolerance measured by a B\'ezier metric.
Accordingly, we propose a new maximum control-point distance for efficient and informative comparison of B\'ezier curves.
We show that the maximum control-point distance defines a tight upper bound on standard B\'ezier metrics such as Hausdorff, parameterwise maximum, and Frobenious-norm distance of B\'ezier curves and can be used to geometrically bound B\'ezier curves with respect to each others. 
To better maintain the original curve shape, we also propose a new parameterwise matching reduction method that allows one to preserve a certain set of curve points (e.g., end points) after degree reduction.
The matching reduction shows a superior approximation performance compared to standard least squares and Taylor approximations.
Based on the explicit form of degree-one matching reduction error, we also suggest a rule of thumb for approximating $n^{\text{th}}$-order B\'ezier curves by $3(n-1)$ quadratic and $6(n-1)$ linear B\'ezier segments.
Our extensive numerical studies demonstrates the effectiveness of the proposed methods and the validity of our Bezier approximation rule. 
Work now in progress targets applying these B\'ezier approximation tools in sensor-based reactive motion planning and trajectory optimization of nonholonomically constrained mobile robots and autonomous vehicles \cite{arslan_koditschek_IJRR2019}.

\appendices 

\section{Polynomial Basis Transformation Matrices}
\label{app.ExplicitBasisTransformation}

In this part, we provide the explicit formulas for the elements of polynomial basis transformation matrices.

\begin{lemma}[\cite{farouki_CAGD2012}] \label{lem.MonomialBernsteinTransformation}
\emph{(Monomial \& Bernstein Basis Transformation)} 
The transformation matrices between monomial and Bernstein basis vectors, i.e.,
\begin{subequations} \label{eq.MonomialBernsteinTransformation}
\begin{align}
\mbasis_{\cdegree}(t) &= \tfbasis_{\bbasis}^{\mbasis} (\cdegree) \bbasis_{\cdegree}(t), 
\\
\bbasis_{\cdegree}(t) &= \tfbasis_{\mbasis}^{\bbasis} (\cdegree) \mbasis_{\cdegree}(t),
\end{align}
\end{subequations}
are explicitly given by\footnote{The transformation matrices between monomial and Bernstein bases are upper triangular with positive diagonal elements and so are invertible.}
\begin{subequations}\label{eq.MonomialBernsteinTranformationMatrix}
\begin{align}
\blist{\tfbasis_{\mbasis}^{\bbasis}(\cdegree)} _{i+1,j+1} &= 
\left \{
\begin{array}{r@{\,\,}l}
(-1)^{(j-i)} \binom{\cdegree}{j}\binom{j}{i}, & \text{ if } i \leq j
\\
0, & \text{ otherwise,} 
\end{array}
 \right.
\\
\blist{\tfbasis_{\bbasis}^{\mbasis}(\cdegree)} _{i+1,j+1} &= \left \{
\begin{array}{r@{\,\,}l}
\frac{\binom{j}{i}}{\binom{\cdegree}{i}},  & \text{ if } i \leq j
\\
0, & \text{ otherwise,} 
\end{array}
 \right.
\end{align}
\end{subequations}
where $i,j \in \blist{0, 1, \ldots, \cdegree}$, and they are the inverse of each other
\begin{align}
\tfbasis_{\bbasis}^{\mbasis}(\cdegree)^{-1} = \tfbasis_{\mbasis}^{\bbasis}(\cdegree).
\end{align}
\end{lemma}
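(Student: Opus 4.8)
The plan is to prove the two explicit coefficient formulas separately by expanding the relevant polynomials with the binomial theorem, and then to obtain the inverse relation as a consequence of the triangular structure (or, alternatively, of \reflem{lem.InvertibleBasisMatrix}).

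First I would establish the formula for $\tfbasis_{\mbasis}^{\bbasis}(\cdegree)$, whose $(i+1,j+1)$ entry is, by the identity $\bbasis_{\cdegree}(t) = \tfbasis_{\mbasis}^{\bbasis}(\cdegree)\mbasis_{\cdegree}(t)$, precisely the coefficient of $t^j$ in the monomial expansion of $\bpoly_{i,\cdegree}(t)$. Starting from $\bpoly_{i,\cdegree}(t) = \binom{\cdegree}{i} t^i (1-t)^{\cdegree-i}$ and applying the binomial theorem to $(1-t)^{\cdegree-i}$, I would obtain $\bpoly_{i,\cdegree}(t) = \binom{\cdegree}{i}\sum_{k=0}^{\cdegree-i} (-1)^k \binom{\cdegree-i}{k} t^{i+k}$; reindexing with $j=i+k$ identifies the coefficient as $(-1)^{j-i}\binom{\cdegree}{i}\binom{\cdegree-i}{j-i}$ for $i\le j$ and $0$ otherwise. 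It then remains to invoke the standard "trinomial revision" identity $\binom{\cdegree}{i}\binom{\cdegree-i}{j-i} = \binom{\cdegree}{j}\binom{j}{i}$ (both sides equal $\cdegree!/(i!\,(j-i)!\,(\cdegree-j)!)$), which puts the coefficient in exactly the claimed form.

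Next I would establish the formula for $\tfbasis_{\bbasis}^{\mbasis}(\cdegree)$ by proving the dual expansion $t^i = \sum_{j=i}^{\cdegree} \tfrac{\binom{j}{i}}{\binom{\cdegree}{i}} \bpoly_{j,\cdegree}(t)$, which, read off against $\mbasis_{\cdegree}(t) = \tfbasis_{\bbasis}^{\mbasis}(\cdegree)\bbasis_{\cdegree}(t)$, gives the stated entries. Clearing denominators, it suffices to check $\binom{\cdegree}{i} t^i = \sum_{j=i}^{\cdegree} \binom{j}{i}\binom{\cdegree}{j} t^j (1-t)^{\cdegree-j}$; using the same revision identity $\binom{j}{i}\binom{\cdegree}{j} = \binom{\cdegree}{i}\binom{\cdegree-i}{j-i}$ and substituting $k=j-i$, the right-hand side becomes $\binom{\cdegree}{i}\, t^i \sum_{k=0}^{\cdegree-i} \binom{\cdegree-i}{k} t^k (1-t)^{\cdegree-i-k} = \binom{\cdegree}{i}\, t^i\, \bigl(t + (1-t)\bigr)^{\cdegree-i}$, which collapses to $\binom{\cdegree}{i} t^i$ by the binomial theorem.

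Finally, for the inverse relation I would note that both matrices are upper triangular (entries vanish for $i>j$) with strictly positive diagonal entries $\binom{\cdegree}{i}$ and $1/\binom{\cdegree}{i}$, hence both invertible; composing the two established identities gives $\bbasis_{\cdegree}(t) = \tfbasis_{\mbasis}^{\bbasis}(\cdegree)\tfbasis_{\bbasis}^{\mbasis}(\cdegree)\bbasis_{\cdegree}(t)$ for all $t$, and evaluating at any $\cdegree+1$ pairwise distinct parameters yields an invertible Bernstein basis matrix by \reflem{lem.InvertibleBasisMatrix}, forcing $\tfbasis_{\mbasis}^{\bbasis}(\cdegree)\tfbasis_{\bbasis}^{\mbasis}(\cdegree) = \mat{I}$. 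I expect no genuine obstacle here: the only care needed is the index bookkeeping (the $+1$ offsets in the matrix convention) and the trinomial revision identity, with the binomial theorem and triangularity doing the rest of the work.
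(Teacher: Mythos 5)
Your proof is correct. Note that the paper itself offers no proof of this lemma --- it is imported verbatim from \cite{farouki_CAGD2012} --- so there is no internal argument to compare against; the closest analogue is the paper's proof of \reflem{lem.MonomialTaylorTransformation} in \refapp{app.MonomialTaylorTransformation}, which uses exactly the same strategy you do (binomial expansion to read off the entries, then triangularity with nonzero diagonal plus the two composed identities to get mutual inversion). Your two computations check out: the coefficient of $t^j$ in $\bpoly_{i,\cdegree}(t)$ is $(-1)^{j-i}\binom{\cdegree}{i}\binom{\cdegree-i}{j-i}$, which the trinomial revision identity converts to the stated $(-1)^{j-i}\binom{\cdegree}{j}\binom{j}{i}$, and the dual expansion of $t^i$ collapses via $\bigl(t+(1-t)\bigr)^{\cdegree-i}=1$. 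One cosmetic remark: in the last step you can avoid any appeal to \reflem{lem.InvertibleBasisMatrix} (whose ``alternative'' proof in the paper itself leans on this lemma) by composing the identities on the monomial side, i.e.\ $\mbasis_{\cdegree}(t)=\tfbasis_{\bbasis}^{\mbasis}(\cdegree)\tfbasis_{\mbasis}^{\bbasis}(\cdegree)\mbasis_{\cdegree}(t)$ for all $t$, and invoking linear independence of $1,t,\ldots,t^{\cdegree}$ (the Vandermonde fact), which is independent of anything Bernstein-related.
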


\begin{lemma}\label{lem.MonomialTaylorTransformation}
\emph{(Monomial-Taylor Basis Transformation)}
The transformation between monomial and Taylor bases, i.e.,   
\begin{subequations} \label{eq.MonomialTaylorTransformation}
\begin{align}
\mbasis_{\cdegree}(t) &= \tfbasis_{\tbasis}^{\mbasis}(\cdegree, \toffset) \tbasis_{\cdegree, \toffset}(t),
\\
\tbasis_{\cdegree, \toffset}(t) &= \tfbasis_{\mbasis}^{\tbasis}(\cdegree, \toffset) \mbasis_{\cdegree}(t),
\end{align}
\end{subequations}
are explicitly given by\footnote{The transformation matrices between monomial and Taylor bases are lower triangular with all ones in the main diagonal and so are invertible.}
\begin{subequations} \label{eq.MonomialTaylorTranformationMatrix}
\begin{align}
\blist{\tfbasis_{\mbasis}^{\tbasis}(\cdegree, \toffset)}_{i+1,j+1} &= \left \{ \begin{array}{@{}c@{\,\,}l}
\binom{i}{j}(-\toffset)^{i-j}  & \text{, if } i\geq j \\
0 & \text{, otherwise,}
\end{array}
\right . 
\\
\blist{\tfbasis_{\tbasis}^{\mbasis}(\cdegree, \toffset)}_{i+1,j+1} &= \left \{ \begin{array}{@{}c@{}l}
\binom{i}{j}(\toffset)^{i-j}  & \text{, if } i\geq j \\
0 & \text{, otherwise,}
\end{array}
\right . 
\end{align}
\end{subequations}
where $i,j \in \{0, \ldots, \cdegree\}$, and they are the inverse of each other, 
\begin{align}
\tfbasis_{\tbasis}^{\mbasis}(\cdegree, \toffset)^{-1} =  \tfbasis_{\mbasis}^{\tbasis}(\cdegree, \toffset). 
\end{align}
\end{lemma}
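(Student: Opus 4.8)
The plan is to reduce the whole statement to a single application of the binomial theorem in each direction, exploiting the fact (already recorded in the excerpt) that $\tbasis_{\cdegree,\toffset}(t) = \mbasis_{\cdegree}(t-\toffset)$, so that each transformation matrix is nothing but the coefficient array of the affine substitution $t \mapsto t-\toffset$ (respectively $t \mapsto t+\toffset$) acting on powers. Since the two bases consist of linearly independent polynomials (Lemma~\ref{lem.InvertibleBasisMatrix} / Lemma~\ref{lem.BasisTransformation}), matching coefficients of like powers is an equivalence, which is what makes the read‑off rigorous.

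\textbf{Step 1: monomial in terms of Taylor.} For each $i \in \{0,\ldots,\cdegree\}$ I would write $t^i = \bigl((t-\toffset)+\toffset\bigr)^i = \sum_{j=0}^{i} \binom{i}{j}\,\toffset^{\,i-j}\,(t-\toffset)^j$. Reading off, row by row, the coefficient of the $(j{+}1)$-th Taylor monomial $(t-\toffset)^j$ in the $(i{+}1)$-th monomial $t^i$ gives precisely $\mbasis_{\cdegree}(t) = \tfbasis_{\tbasis}^{\mbasis}(\cdegree,\toffset)\,\tbasis_{\cdegree,\toffset}(t)$ with $\blist{\tfbasis_{\tbasis}^{\mbasis}(\cdegree,\toffset)}_{i+1,j+1} = \binom{i}{j}\toffset^{i-j}$ when $j \leq i$ and $0$ otherwise. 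Symmetrically, $(t-\toffset)^i = \bigl(t+(-\toffset)\bigr)^i = \sum_{j=0}^{i}\binom{i}{j}(-\toffset)^{i-j} t^j$ yields $\tbasis_{\cdegree,\toffset}(t) = \tfbasis_{\mbasis}^{\tbasis}(\cdegree,\toffset)\,\mbasis_{\cdegree}(t)$ with the stated entries $\binom{i}{j}(-\toffset)^{i-j}$ for $j\leq i$. Both matrices are lower triangular with unit diagonal, since $\binom{i}{i}(\pm\toffset)^0 = 1$, and hence invertible — confirming the footnote.

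\textbf{Step 2: mutual inverses.} Substituting the second identity into the first gives $\mbasis_{\cdegree}(t) = \tfbasis_{\tbasis}^{\mbasis}(\cdegree,\toffset)\,\tfbasis_{\mbasis}^{\tbasis}(\cdegree,\toffset)\,\mbasis_{\cdegree}(t)$ for all $t$, and linear independence of $1,t,\ldots,t^\cdegree$ forces $\tfbasis_{\tbasis}^{\mbasis}(\cdegree,\toffset)\,\tfbasis_{\mbasis}^{\tbasis}(\cdegree,\toffset) = \mat{I}_{(\cdegree+1)\times(\cdegree+1)}$, i.e.\ $\tfbasis_{\tbasis}^{\mbasis}(\cdegree,\toffset)^{-1} = \tfbasis_{\mbasis}^{\tbasis}(\cdegree,\toffset)$. (As a cross‑check one can verify this entrywise by the Chu--Vandermonde identity: using $\binom{i}{j}\binom{j}{k} = \binom{i}{k}\binom{i-k}{j-k}$, the $(i,k)$ entry of the product is $\binom{i}{k}\sum_{\ell}\binom{i-k}{\ell}\toffset^{\,i-k-\ell}(-\toffset)^{\ell} = \binom{i}{k}(\toffset-\toffset)^{i-k}$, which is $1$ for $i=k$ and $0$ otherwise.) Alternatively, the inverse relation is immediate from composing the substitution $t\mapsto t-\toffset$ with $t\mapsto t+\toffset$.

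\textbf{Main obstacle.} There is no real difficulty here: the mathematical content is one binomial expansion per direction. The only care required is bookkeeping — matching the $i{+}1,j{+}1$ indexing convention to the actual powers, keeping the sign of $\toffset$ straight between the two substitutions, and noting that the upper summation limit $i$ (not $\cdegree$) is exactly what produces the triangular zero pattern. If desired, the entries could instead be extracted from the general change‑of‑basis formula in Lemma~\ref{lem.BasisTransformation} applied to $\mbmat_{\cdegree}$ and $\tbmat_{\cdegree,\toffset}$, but the direct binomial argument is shorter and self‑contained.
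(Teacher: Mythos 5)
Your proposal is correct and follows essentially the same route as the paper's own proof: both directions are read off from the binomial expansions $(t-\toffset)^i=\sum_{j=0}^i\binom{i}{j}(-\toffset)^{i-j}t^j$ and $t^i=((t-\toffset)+\toffset)^i=\sum_{j=0}^i\binom{i}{j}\toffset^{i-j}(t-\toffset)^j$, and the inverse relation is obtained from the composed identity holding for all $t$ together with the unit-lower-triangular structure. Your linear-independence argument for the mutual-inverse step (and the Chu--Vandermonde cross-check) is a slightly more explicit rendering of the same reasoning the paper uses.
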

\begin{proof}
See \refapp{app.MonomialTaylorTransformation}.
\end{proof}

Accordingly, the transformation matrices between the Bernstein and Taylor bases, i.e.,
\begin{subequations}
\begin{align}
\tbasis_{\cdegree, \toffset}(t) &= \tfbasis_{\bbasis}^{\tbasis}(\cdegree, \toffset) \bbasis_{\cdegree}(t), 
\\
\bbasis_{\cdegree}(t) & = \tfbasis_{\tbasis}^{\bbasis}(\cdegree, \toffset) \tbasis_{\cdegree, \toffset}(t),
\end{align}
\end{subequations}
can be determined using the monomial basis as
\begin{subequations}\label{eq.BernsteinTaylorTranformation}
\begin{align}
\tfbasis_{\bbasis}^{\tbasis}(\cdegree, \toffset) = \tfbasis_{\bbasis}^{\mbasis}(\cdegree) \tfbasis_{\mbasis}^{\tbasis}(\cdegree, \toffset),
\\
\tfbasis_{\tbasis}^{\bbasis}(\cdegree, \toffset) = \tfbasis_{\tbasis}^{\mbasis}(\cdegree, \toffset) \tfbasis_{\mbasis}^{\bbasis}(\cdegree),
\end{align}
\end{subequations}
where  $\tfbasis_{\bbasis}^{\tbasis}(\cdegree, \toffset) ^{-1} = \tfbasis_{\tbasis}^{\bbasis}(\cdegree, \toffset)$.

\section{On Reparametrization of Polynomial Curves}
\label{app.CurveReparametrication}

In this part, we show how affine reparametrization of polynomial curves can be performed explicitly via Taylor~basis.

\begin{lemma}\label{lem.BasisReparametrization}
The Bernstein, monomial and Taylor basis vectors of degree $\cdegree$ (associated with a Taylor offset $\toffset \in \R$) can be affinely reparametrized from interval $[a,b]$ to $[c,d]$ (with $a < b$ and $c < d$) as
{\small
\begin{subequations}
\begin{align}
\!\bbasis_{\cdegree}(t) &=  \tfbasis_{\tbasis}^{\bbasis}(\cdegree, \widehat{\toffset}) \diag \plist{\!\mbasis_{\cdegree}(\tfrac{b-a}{d-c})\!\!}  \tfbasis_{\bbasis}^{\tbasis}(\cdegree, \toffset) \bbasis_{\cdegree} \!\plist{\!\tfrac{b-a}{d-c} t + \tfrac{a d - b c}{d - c}\!\!},   \!\!\!
\\
\!\mbasis_{\cdegree}(t) &=  \tfbasis_{\tbasis}^{\mbasis}(\cdegree, \widehat{\toffset}) \diag \plist{\!\mbasis_{\cdegree}(\tfrac{b-a}{d-c})\!\!}  \tfbasis_{\mbasis}^{\tbasis}(\cdegree, \toffset) \mbasis_{\cdegree}\! \plist{\!\tfrac{b-a}{d-c} t + \tfrac{a d - b c}{d - c}\!\!}\!,\!\!    
\\
\!\tbasis_{\cdegree, \widehat{\toffset}}(t) & =   \diag \plist{\!\mbasis_{\cdegree}(\tfrac{b-a}{d-c})\!\!}   \tbasis_{\cdegree, \toffset} \plist{\tfrac{b-a}{d-c} t + \tfrac{a d - b c}{d - c}\!},   
\end{align}
\end{subequations}
}%
where $\diag$ denotes the diagonal matrix with diagonal entries specified with its argument,  and the reparametrized Taylor offset is given by the associated affine transformation as   
\begin{align}
\widehat{\toffset} =  \tfrac{d -c}{b-a} t_a - \tfrac{ a d - b c}{ b- a}.
\end{align}
\end{lemma}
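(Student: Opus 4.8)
The plan is to reduce the whole statement to the Taylor case, where affine reparametrization acts transparently on the powers $(t-\toffset)^i$, and then transport the result to the monomial and Bernstein bases by the change-of-basis maps of \reflem{lem.BasisTransformation}. Write $\sigma(t) := \tfrac{b-a}{d-c}\,t + \tfrac{a d - b c}{d-c}$ for the affine argument substitution appearing inside every right-hand side. Since $a<b$ and $c<d$, its slope $\tfrac{b-a}{d-c}$ is nonzero, one checks directly that $\sigma(c)=a$, $\sigma(d)=b$, and, from the displayed formula for $\widehat{\toffset}$, that $\sigma(\widehat{\toffset})=\toffset$. Being affine, $\sigma$ therefore satisfies $\sigma(t)-\toffset=\sigma(t)-\sigma(\widehat{\toffset})=\tfrac{b-a}{d-c}\,(t-\widehat{\toffset})$, and raising this to the $i^{\text{th}}$ power for $i=0,1,\ldots,\cdegree$ gives the $\cdegree+1$ scalar identities $(\sigma(t)-\toffset)^i=\big(\tfrac{b-a}{d-c}\big)^i (t-\widehat{\toffset})^i$. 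Stacking these into a column vector is exactly the Taylor-basis equation of the lemma: it is a componentwise (diagonal) scaling between $\tbasis_{\cdegree,\toffset}(\sigma(t))$ and $\tbasis_{\cdegree,\widehat{\toffset}}(t)$ whose scaling vector has entries $\big(\tfrac{b-a}{d-c}\big)^i$, i.e.\ the vector $\mbasis_{\cdegree}(\tfrac{b-a}{d-c})$.

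For the monomial case, the plan is to conjugate this Taylor identity by the monomial–Taylor transformations. Concretely, I would write $\mbasis_{\cdegree}(t)=\tfbasis_{\tbasis}^{\mbasis}(\cdegree,\widehat{\toffset})\,\tbasis_{\cdegree,\widehat{\toffset}}(t)$ (valid for the offset $\widehat{\toffset}$ by \reflem{lem.BasisTransformation}, with explicit entries in \reflem{lem.MonomialTaylorTransformation}), substitute the diagonal-scaling relation just derived to replace $\tbasis_{\cdegree,\widehat{\toffset}}(t)$ by a scaled copy of $\tbasis_{\cdegree,\toffset}(\sigma(t))$, and then convert back via $\tbasis_{\cdegree,\toffset}(\sigma(t))=\tfbasis_{\mbasis}^{\tbasis}(\cdegree,\toffset)\,\mbasis_{\cdegree}(\sigma(t))$. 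The resulting product $\tfbasis_{\tbasis}^{\mbasis}(\cdegree,\widehat{\toffset})\,\mathrm{diag}(\cdot)\,\tfbasis_{\mbasis}^{\tbasis}(\cdegree,\toffset)$ is precisely the matrix displayed in the monomial formula. The Bernstein case is verbatim the same computation with $\tfbasis_{\tbasis}^{\bbasis}$ and $\tfbasis_{\bbasis}^{\tbasis}$ in place of $\tfbasis_{\tbasis}^{\mbasis}$ and $\tfbasis_{\mbasis}^{\tbasis}$; only the existence and invertibility of these Bernstein–Taylor transformations (for both offsets $\toffset$ and $\widehat{\toffset}$), guaranteed by \reflem{lem.BasisTransformation}, is used, so no extra work is needed.

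Everything here is elementary, so the only real care is bookkeeping: keeping the two Taylor offsets straight — $\toffset$ on the $[a,b]$ side and $\widehat{\toffset}$ on the $[c,d]$ side — and confirming the single algebraic fact $\sigma(\widehat{\toffset})=\toffset$, which is what makes $\sigma(t)-\toffset$ factor as a scalar multiple of $(t-\widehat{\toffset})$. I expect the main (mild) obstacle to be pinning down the \emph{direction} of the scaling factor, i.e.\ $\tfrac{b-a}{d-c}$ versus its reciprocal, which follows unambiguously once the offsets are fixed as above and the scalar identities are written out. As an alternative, one could instead prove all three identities uniformly by evaluating at $\cdegree+1$ pairwise distinct parameters and inverting the corresponding Bernstein/monomial basis matrices (as in the proof of \reflem{lem.BasisTransformation}), but the Taylor route is preferable because it produces the explicit factored form stated in the lemma directly.
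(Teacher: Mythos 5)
Your proof is correct and follows essentially the same route as the paper's: first derive the scalar identity $\big(\sigma(t)-\toffset\big)^k=\big(\tfrac{b-a}{d-c}\big)^k\big(t-\widehat{\toffset}\big)^k$ for the affine substitution $\sigma$, which gives the Taylor-basis relation directly, and then transport it to the Bernstein (and monomial) bases by conjugating with the Taylor change-of-basis matrices from \reflem{lem.BasisTransformation}. Your explicit care about the direction of the scaling factor is well placed: the stacked scalar identity reads $\tbasis_{\cdegree,\toffset}(\sigma(t))=\diag\plist{\mbasis_{\cdegree}(\tfrac{b-a}{d-c})}\tbasis_{\cdegree,\widehat{\toffset}}(t)$, which is the form the paper itself uses in its Bernstein step, so resolving that orientation is the only bookkeeping point and you have it right.
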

\begin{proof}
See \refapp{app.BasisReparametrization}.
\end{proof}

\begin{lemma}\label{lem.CurveReparametrization}
\emph{(Polynomial Curve Reparametrization)} B\'ezier, monomial and Taylor curves  of degree $\cdegree \in \N$ with respective control point matrices  $\bpmat_{\cdegree}=\blist{\bpoint_0, \ldots, \bpoint_{\cdegree}}$, $\mpmat_{\cdegree} = \blist{\mpoint_0, \ldots, \mpoint_\cdegree}$, and $\tpmat_{\cdegree}=\blist{\tpoint_0, \ldots, \tpoint_\cdegree}$ (and a Taylor offset $\toffset \in \R$) can be affinely reparametrized from interval $[a,b]$ to $[c,d]$ (with $a < b$ and $c < d$) as
\begin{subequations}
\begin{align}
\bcurve_{\widehat{\bpoint}_0, \ldots, \widehat{\bpoint}_\cdegree}(t) &= \bcurve_{\bpoint_0, \ldots, \bpoint_\cdegree} \plist{\tfrac{b-a}{d-c} t + \tfrac{a d - b c}{d - c}},
\\
\mcurve_{\widehat{\mpoint}_0, \ldots, \widehat{\mpoint}_\cdegree}(t) &= \mcurve_{\mpoint_0, \ldots, \mpoint_\cdegree} \plist{\tfrac{b-a}{d-c} t + \tfrac{a d - b c}{d - c}},
\\
\tcurve_{\widehat{\tpoint}_0, \ldots, \widehat{\tpoint}_\cdegree}(t, \widehat{\toffset}) &= \tcurve_{\tpoint_0, \ldots, \tpoint_\cdegree} \plist{\tfrac{b-a}{d-c} t + \tfrac{a d - b c}{d - c}, \toffset},
\end{align}
\end{subequations}
with the corresponding reparametrized control point matrices $\widehat{\bpmat}_{\cdegree} = \blist{\widehat{\bpoint}_0, \ldots, \widehat{\bpoint}_\cdegree}$, $\widehat{\mpmat}_{\cdegree} = \blist{\widehat{\mpoint}_0, \ldots, \widehat{\mpoint}_\cdegree}$, and $\widehat{\tpmat}_{\cdegree} = \blist{\widehat{\tpoint}_0, \ldots, \widehat{\tpoint}_\cdegree}$ that are given by 
\begin{subequations}
\begin{align}
\widehat{\bpmat}_{\cdegree} &= \bpmat_{\cdegree} \tfbasis_{\tbasis}^{\bbasis}(\cdegree, \toffset\!) \diag\plist{\!\mbasis_{\cdegree}(\tfrac{d-c}{b-a})\!} \tfbasis_{\bbasis}^{\tbasis}(\cdegree, \widehat{\toffset}),
\\
\widehat{\mpmat}_{\cdegree} &= \mpmat_{\cdegree} \tfbasis_{\tbasis}^{\mbasis}(\cdegree, \toffset\!) \diag\plist{\!\mbasis_{\cdegree}(\tfrac{d-c}{b-a})\!} \tfbasis_{\mbasis}^{\tbasis}(\cdegree, \widehat{\toffset}),
\\
\widehat{\tpmat}_{\cdegree} &= \tpmat_{\cdegree} \diag\plist{\mbasis_{\cdegree}(\tfrac{d-c}{b-a})}, 
\end{align}
\end{subequations}
where  $\widehat{\toffset} = \tfrac{d -c}{b-a} \toffset - \tfrac{ a d - b c}{ b- a}$.
\end{lemma}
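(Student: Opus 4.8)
A reparametrization is, by definition, a substitution, so the plan is to turn the claim into an algebraic identity about basis vectors and then read off the control-point transformation. Write $\phi(t) := \tfrac{b-a}{d-c}\,t + \tfrac{ad-bc}{d-c}$; this affine map sends $c\mapsto a$ and $d\mapsto b$, it has slope $\tfrac{b-a}{d-c}$, and its inverse is $\phi^{-1}(t) = \tfrac{d-c}{b-a}\,t - \tfrac{ad-bc}{b-a}$, so that the reparametrized Taylor offset is $\widehat{\toffset} = \phi^{-1}(\toffset)$ and $\widehat{t}_i = \phi^{-1}(t_i)$. By construction the reparametrized curves satisfy $\bcurve_{\widehat{\bpoint}_0, \ldots, \widehat{\bpoint}_\cdegree}(t) = \bcurve_{\bpoint_0, \ldots, \bpoint_\cdegree}(\phi(t))$, i.e.\ $\widehat{\bpmat}_{\cdegree}\,\bbasis_{\cdegree}(t) = \bpmat_{\cdegree}\,\bbasis_{\cdegree}(\phi(t))$ for all $t$, and likewise $\widehat{\mpmat}_{\cdegree}\,\mbasis_{\cdegree}(t) = \mpmat_{\cdegree}\,\mbasis_{\cdegree}(\phi(t))$ and $\widehat{\tpmat}_{\cdegree}\,\tbasis_{\cdegree,\widehat{\toffset}}(t) = \tpmat_{\cdegree}\,\tbasis_{\cdegree,\toffset}(\phi(t))$. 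The task is then to rewrite each right-hand basis vector, evaluated at $\phi(t)$, as a \emph{constant} matrix times the corresponding basis vector evaluated at $t$, and finally cancel the basis vector.

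I would dispatch the Taylor curve first, where this is immediate: expanding $\tcurve_{\tpoint_0, \ldots, \tpoint_\cdegree}(\phi(t), \toffset) = \sum_{i=0}^{\cdegree} \tpoint_i\,(\phi(t) - \toffset)^i$ and using $\phi(t) - \toffset = \phi(t) - \phi(\widehat{\toffset}) = \tfrac{b-a}{d-c}\,(t - \widehat{\toffset})$ (since $\phi$ is affine with slope $\tfrac{b-a}{d-c}$ and $\phi(\widehat{\toffset}) = \toffset$) turns the sum into $\sum_{i} \tpoint_i\,\bigl(\tfrac{b-a}{d-c}\bigr)^i (t - \widehat{\toffset})^i$, which exhibits $\widehat{\tpmat}_{\cdegree}$ as $\tpmat_{\cdegree}$ post-multiplied by $\diag\bigl(\mbasis_{\cdegree}(\tfrac{b-a}{d-c})\bigr)$. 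The Bernstein and monomial cases then reduce to this via change of basis: pass the source curve into its Taylor representation about $\toffset$ (control-point matrix $\bpmat_{\cdegree}\,\tfbasis_{\tbasis}^{\bbasis}(\cdegree,\toffset)$, resp.\ $\mpmat_{\cdegree}\,\tfbasis_{\tbasis}^{\mbasis}(\cdegree,\toffset)$, using $\bbasis_{\cdegree}(t) = \tfbasis_{\tbasis}^{\bbasis}(\cdegree,\toffset)\,\tbasis_{\cdegree,\toffset}(t)$ from \reflem{lem.BasisTransformation}), apply the Taylor result to obtain the reparametrized Taylor control points about $\widehat{\toffset}$, and pass back to the Bernstein, resp.\ monomial, representation via $\tfbasis_{\bbasis}^{\tbasis}(\cdegree,\widehat{\toffset})$, resp.\ $\tfbasis_{\mbasis}^{\tbasis}(\cdegree,\widehat{\toffset})$; composing the three factors reproduces the products claimed in the lemma. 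Equivalently, I could invoke \reflem{lem.BasisReparametrization} directly — it is exactly the basis-level identity $\bbasis_{\cdegree}(t) = \mat{M}\,\bbasis_{\cdegree}(\phi(t))$ with $\mat{M}$ the stated product (and its monomial/Taylor analogues) — and then use $\widehat{\bpmat}_{\cdegree}\,\bbasis_{\cdegree}(t) = \bpmat_{\cdegree}\,\bbasis_{\cdegree}(\phi(t)) = \bpmat_{\cdegree}\,\mat{M}^{-1}\,\bbasis_{\cdegree}(t)$, inverting $\mat{M}$ painlessly since $\tfbasis_{\tbasis}^{\bbasis}(\cdegree,\tau)^{-1} = \tfbasis_{\bbasis}^{\tbasis}(\cdegree,\tau)$ for every offset $\tau$ (\reflem{lem.BasisTransformation}) and $\diag(\mbasis_{\cdegree}(x))^{-1} = \diag(\mbasis_{\cdegree}(1/x))$.

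In either route the final step is the same: from $\widehat{\bpmat}_{\cdegree}\,\bbasis_{\cdegree}(t) = \bpmat_{\cdegree}\,\mat{M}^{-1}\,\bbasis_{\cdegree}(t)$ holding for all $t$, I would evaluate at $\cdegree+1$ pairwise distinct parameters, assemble the columns into Bernstein basis matrices, and cancel the common right factor, which is invertible by \reflem{lem.InvertibleBasisMatrix}; this yields $\widehat{\bpmat}_{\cdegree} = \bpmat_{\cdegree}\,\mat{M}^{-1}$, and the monomial and Taylor identities follow verbatim with $\mbasis_{\cdegree}$, $\tbasis_{\cdegree,\cdot}$ in place of $\bbasis_{\cdegree}$.

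The mathematics here is light; the hazard is bookkeeping. One has to keep the source offset $\toffset$ strictly separate from the reparametrized offset $\widehat{\toffset} = \phi^{-1}(\toffset)$, decide at each step whether $\phi$ or $\phi^{-1}$ is the relevant map, and — most delicately — fix the side (left versus right) and, above all, the exponent of the diagonal factor $\diag(\mbasis_{\cdegree}(\cdot))$: whether its entries are consecutive powers of $\tfrac{b-a}{d-c}$ or of its reciprocal, and whether that matches the form stated in the lemma. The clean change-of-variable computation for the Taylor curve in the second paragraph is the most reliable way to pin this down, and every other identity is then forced by the fixed change-of-basis matrices of \reflem{lem.BasisTransformation} and the explicit formulas in \refapp{app.ExplicitBasisTransformation}.
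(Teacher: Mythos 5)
Your route is the same as the paper's: the proof in \refapp{app.CurveReparametrization} likewise reduces the claim to the basis-level identity of \reflem{lem.BasisReparametrization}, i.e.\ it writes $\bbasis_{\cdegree}\plist{\tfrac{b-a}{d-c}t+\tfrac{ad-bc}{d-c}}$ as a constant matrix (a Taylor-to-Bernstein conjugation of a diagonal scaling) times $\bbasis_{\cdegree}(t)$ and then reads off $\widehat{\bpmat}_{\cdegree}$ by cancelling the invertible basis matrix; your first route merely re-derives that basis identity from the Taylor change of variable instead of citing it, and your second route is the citation itself. So the architecture is fine and matches the paper.

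The substantive problem is the sentence claiming that composing your three factors ``reproduces the products claimed in the lemma.'' It does not. Your (correct) computation $\phi(t)-\toffset=\tfrac{b-a}{d-c}\,(t-\widehat{\toffset})$ yields $\widehat{\tpmat}_{\cdegree}=\tpmat_{\cdegree}\,\diag\plist{\mbasis_{\cdegree}(\tfrac{b-a}{d-c})}$, whereas the lemma displays $\diag\plist{\mbasis_{\cdegree}(\tfrac{d-c}{b-a})}$ in all three formulas; these diagonal matrices are inverses of one another, not equal. A degree-one sanity check (take $[a,b]=[0,1]$, $[c,d]=[0,2]$, $\toffset=0$: the curve $\tpoint_0+\tpoint_1 t$ reparametrizes to $\tpoint_0+\tfrac{1}{2}\tpoint_1 t$, so the scaling is $\tfrac{b-a}{d-c}=\tfrac{1}{2}$, not $2$) shows that your factor is the right one and that the displayed statement, together with the statement of \reflem{lem.BasisReparametrization} from which it is inherited, carries an inverted scaling --- indeed the proof in \refapp{app.BasisReparametrization} itself derives $(\phi(t)-\toffset)^k=(\tfrac{b-a}{d-c})^k(t-\widehat{\toffset})^k$, which contradicts the direction in which that lemma is stated. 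You correctly identified this reciprocal as ``the hazard'' and correctly said the Taylor change of variable is the way to pin it down, but you never actually compared your answer against the displayed formula; as written, the proposal asserts agreement where there is a genuine discrepancy. You need to carry that comparison through and state the conclusion explicitly: either the diagonal argument in the lemma must be $\tfrac{b-a}{d-c}$, or your derivation is off by an inversion --- and the numerical check settles it in favor of your derivation.
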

\begin{proof}
See \refapp{app.CurveReparametrization}.
\end{proof}

\section{Matching Reduction in Monomial Basis}

The matching reduction matrix can be explicitly computed using the monomial basis.

\begin{lemma}\label{lem.MatchingReductionMatrixMonomial}
(\emph{Matching Reduction in Monomial Basis})
For any $n \geq m \in \N$ and distinct $t_0, \ldots, t_m \in \R$, the parameterwise matching reduction matrix $\rmat_{t_0, \ldots, t_m}(n,m)$ can be computed using monomial basis as 
\begin{align}
\rmat_{t_0, \ldots, t_{m}}(n,m) &  = \tfbasis_{\mbasis}^{\bbasis}(n) \mbmat_{n}(t_0, \ldots, t_m) \mbmat_{m}(t_0, \ldots, t_m)^{-1} \tfbasis_{\bbasis}^{\mbasis}(m), \nonumber
\\
&  =
 \tfbasis_{\mbasis}^{\bbasis}(n) 
\blist{
\begin{array}{@{}c@{}}
\mat{I}_{(m+1) \times (m+1)} \\
\alpha_1(t_0, \ldots, t_m) \\
\vdots \\
\alpha_{n-m}(t_0, \ldots, t_m)
\end{array}
}
\tfbasis_{\bbasis}^{\mbasis}(m),
\end{align}
with row vectors $\alpha_{i}(t_0, \ldots, t_m) = [\alpha_{i,0}, \ldots, \alpha_{i,m}]$ that are recursively defined as
\begin{align}\label{eq.MatchingReductionRecursion}
\alpha_{i+1, k} &= \alpha_{i, k-1} + \alpha_{i,m} \alpha_{1,k},
\end{align}
where base conditions of  $\alpha_{i,-1} = 0$ and $\alpha_{1,k}$ satisfying
\begin{align}\label{eq.MatchingReductionBase}
t^{m+1} - \sum_{k=0}^{m} \alpha_{1,k} t^k =  \prod_{k = 0}^{m} (t - t_k).
\end{align}
\end{lemma}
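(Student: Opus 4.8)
The plan is to translate the Bernstein-basis definition of the matching reduction matrix into the monomial basis and then read off the block structure of the resulting Vandermonde-type product; the recursion for the $\alpha_i$ then falls out of polynomial division modulo $\prod_{k=0}^{m}(t-t_k)$.

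First I would use the change-of-basis identity $\bbasis_{\cdegree}(t) = \tfbasis_{\mbasis}^{\bbasis}(\cdegree)\,\mbasis_{\cdegree}(t)$ from \reflem{lem.BasisTransformation} columnwise to write $\bbmat_{n}(t_0, \ldots, t_m) = \tfbasis_{\mbasis}^{\bbasis}(n)\,\mbmat_{n}(t_0, \ldots, t_m)$ and $\bbmat_{m}(t_0, \ldots, t_m) = \tfbasis_{\mbasis}^{\bbasis}(m)\,\mbmat_{m}(t_0, \ldots, t_m)$. Since $t_0, \ldots, t_m$ are pairwise distinct, the square Vandermonde matrix $\mbmat_{m}(t_0, \ldots, t_m)$ is invertible (\reflem{lem.InvertibleBasisMatrix}) and so is $\tfbasis_{\mbasis}^{\bbasis}(m)$ (\reflem{lem.MonomialBernsteinTransformation}), with $\tfbasis_{\mbasis}^{\bbasis}(m)^{-1} = \tfbasis_{\bbasis}^{\mbasis}(m)$. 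Inverting the second identity and substituting both into \refdef{def.MatchingReduction} then yields the first claimed equality.

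Next I would identify the $(n+1)\times(m+1)$ matrix $\mat{M} := \mbmat_{n}(t_0, \ldots, t_m)\,\mbmat_{m}(t_0, \ldots, t_m)^{-1}$. The $(i+1)$-th row $\vect{r}$ of $\mat{M}$ solves $\vect{r}\,\mbmat_{m}(t_0, \ldots, t_m) = [t_0^i, \ldots, t_m^i]$, i.e. $\vect{r}$ lists the coefficients of the unique polynomial of degree at most $m$ that interpolates the data $(t_j, t_j^i)$ for $j = 0, \ldots, m$; by polynomial division, this interpolant is exactly the remainder of $t^i$ modulo $q(t) := \prod_{k=0}^{m}(t-t_k)$, because $q$ vanishes at every $t_j$. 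For $i = 0, \ldots, m$ that remainder is $t^i$ itself, giving the top block $\mat{I}_{(m+1)\times(m+1)}$; for $i = m+1, \ldots, n$ the remainder coefficient vector is by definition $\alpha_{i-m}(t_0, \ldots, t_m)$, which establishes the second claimed equality.

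It then remains to justify the characterization of the $\alpha_i$. The base case \refeq{eq.MatchingReductionBase} is immediate: $t^{m+1} - \sum_{k=0}^{m}\alpha_{1,k}t^k$ is monic of degree $m+1$ and vanishes at $t_0, \ldots, t_m$, hence equals $q(t)$. For the recursion, I would multiply the congruence $t^i \equiv \sum_{k=0}^{m}\alpha_{i,k}t^k \pmod{q(t)}$ by $t$, split off the resulting $t^{m+1}$ term, reduce it via \refeq{eq.MatchingReductionBase}, and collect coefficients of $t^k$; this produces a polynomial of degree at most $m$ congruent to $t^{i+1}$ modulo $q(t)$ whose $k$-th coefficient is $\alpha_{i,k-1} + \alpha_{i,m}\alpha_{1,k}$ with the convention $\alpha_{i,-1} = 0$. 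Since the remainder modulo $q$ is the unique polynomial of degree at most $m$ in its congruence class, these coefficients must equal $\alpha_{i+1,k}$, which is \refeq{eq.MatchingReductionRecursion}. The only real care needed throughout is the bookkeeping of the row/column and $0$/$1$ index conventions and the two appeals to uniqueness (of the interpolating polynomial and of the remainder); there is no substantive obstacle.
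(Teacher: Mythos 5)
Your proof is correct and follows essentially the same route as the paper: both pass to the monomial basis via the change-of-basis matrices, identify the top $(m+1)\times(m+1)$ block of $\mbmat_{n}(t_0,\ldots,t_m)\mbmat_{m}(t_0,\ldots,t_m)^{-1}$ as the identity, and derive the recursion for the remaining rows by multiplying by $t$ and reducing using the factorization $\prod_{k=0}^{m}(t-t_k)$. Your phrasing in terms of remainders modulo $q(t)$ is just a cleaner restatement of the paper's argument, which works with evaluations at the $m+1$ distinct interpolation points and the uniqueness of the degree-$\le m$ interpolant.
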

\begin{proof}
See \refapp{app.MatchingReductionMatrixMonomial}.
\end{proof}

\section{Analytic Properties of  Quadratic B\'ezier Curves}
\label{app.LinearQuadraticBezierCurves}

%
%
%
%

Motion planning with B\'ezier curves (around obstacles) requires determining critical geometric curve properties such as arc length, maximum absolute curvature, distance to a point or a line segment, and intersection with a halfspace.
Fortunately, the low-degree of quadratic B\'ezier curves allows for simple analytic expressions of these curve properties enabling computationally efficient constrained motion planning, because quadratic bezier curves and derivatives have simple forms, 
\begin{subequations} \label{eq.QuadraticBezier}
\begin{align}
\bcurve_{\bpoint_0, \bpoint_1, \bpoint_2}(t) &= (\bpoint_2 - 2\bpoint_1 + \bpoint_0) t^2 + 2(\bpoint_1 - \bpoint_0) t + \bpoint_0, \label{eq.QuadraticBezierCurve}
\\
\bcurve_{\bpoint_0, \bpoint_1,\bpoint_2}'(t)   &= 2(\bpoint_1 - \bpoint_0) + 2t (\bpoint_2 - 2\bpoint_1 + \bpoint_0), \label{eq.QuadraticBezierFirstDerivative}
\\
\bcurve_{\bpoint_0, \bpoint_1,\bpoint_2}''(t) &= 2(\bpoint_2 - 2 \bpoint_1 + \bpoint_0). \label{eq.QuadraticBezierSecondDerivative}
\end{align}
\end{subequations}

\begin{proposition}[\cite{ahn_et_al_JCAM2014}] \label{prop.QuadBezierArcLength} 
The arc length of a quadratic B\'ezier curve $\bcurve_{\bpoint_0,\bpoint_1,\bpoint_2}(t)$ over an interval $[t_1, t_2]$ is given by 
\begin{align}
\clength(\bcurve_{\bpoint_0,\bpoint_1,\bpoint_2}([t_1, t_2])) =  2(I(t_2) - I(t_1))
\end{align}
where $I(t) = \int R(t) \mathrm{d} t$ is the integral of $R(t) = \sqrt{a t^2 + b t + c}$ that is given for $a > 0$ as  \cite{gradshteyn_ryzhik_IntegralsSeriesProducs2014}
\begin{align}
I(t) 
&  =   \frac{2 a t + b}{4a} R(t) + \frac{4 a c - b^2}{8a\sqrt{a}} \ln |2 \sqrt{a} R(t) + 2 a t + b|
\end{align}
with
\begin{subequations} \label{eq.QuadBezierLengthCoefs}
\begin{align}
a &= \norm{\bpoint_2 - 2\bpoint_1 + \bpoint_0}^2 \\
b & = 2 \tr{(\bpoint_1 - \bpoint_0)} (\bpoint_2 - 2 \bpoint_1 + \bpoint_0) \\
c & = \norm{\bpoint_1 - \bpoint_0}^2
\end{align}
\end{subequations}
Otherwise (i.e., $a=0$), one has $I(t) = \sqrt{c} t = \norm{\bpoint_1 - \bpoint_0} t$.

\end{proposition}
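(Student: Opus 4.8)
The plan is to reduce the claim to elementary single-variable calculus. First I would write the arc length of $\bcurve_{\bpoint_0,\bpoint_1,\bpoint_2}$ over $[t_1,t_2]$ as the integral $\int_{t_1}^{t_2} \norm{\bcurve_{\bpoint_0,\bpoint_1,\bpoint_2}'(t)}\diff t$ of its speed. Substituting the explicit first derivative from \refeq{eq.QuadraticBezierFirstDerivative} and expanding the squared Euclidean norm gives
\begin{align}
\norm{\bcurve_{\bpoint_0,\bpoint_1,\bpoint_2}'(t)}^2 = \norm{2(\bpoint_1-\bpoint_0) + 2t(\bpoint_2-2\bpoint_1+\bpoint_0)}^2 = 4\plist{a t^2 + b t + c}, \nonumber
\end{align}
with $a,b,c$ exactly the coefficients in \refeq{eq.QuadBezierLengthCoefs}. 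Since $a t^2 + b t + c = \tfrac14\norm{\bcurve_{\bpoint_0,\bpoint_1,\bpoint_2}'(t)}^2 \geq 0$, the speed equals $2R(t)$ with $R(t)=\sqrt{a t^2+b t+c}\geq 0$, so the arc length is $2\int_{t_1}^{t_2} R(t)\diff t = 2(I(t_2)-I(t_1))$ for any antiderivative $I$ of $R$; it remains only to exhibit such an $I$ in closed form.

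Next I would split into the two cases. If $a=0$, then $\bpoint_2-2\bpoint_1+\bpoint_0$ vanishes, hence $b = 2\tr{(\bpoint_1-\bpoint_0)}(\bpoint_2-2\bpoint_1+\bpoint_0) = 0$ as well, so $R(t)\equiv\sqrt c=\norm{\bpoint_1-\bpoint_0}$ is constant and $I(t)=\sqrt c\,t$ clearly works (the curve is in fact a line segment). If $a>0$, I would verify the stated formula for $I(t)$ by differentiation rather than integration: using $R'(t)=\tfrac{2at+b}{2R(t)}$, the derivative of $\tfrac{2at+b}{4a}R(t)$ is $\tfrac12 R(t) + \tfrac{(2at+b)^2}{8aR(t)}$, and the derivative of $\tfrac{4ac-b^2}{8a\sqrt a}\ln\lvert 2\sqrt a R(t)+2at+b\rvert$ simplifies, after factoring $\sqrt a$ out of the numerator $2\sqrt a R'(t)+2a$, to $\tfrac{4ac-b^2}{8aR(t)}$. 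Summing the two and invoking the identity $(2at+b)^2+(4ac-b^2)=4a\plist{at^2+bt+c}=4aR(t)^2$ collapses the total to $R(t)$, as needed.

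I do not expect a serious obstacle; the remaining work is a bookkeeping differentiation together with the two algebraic simplifications above. The one point deserving a remark is well-definedness of the logarithm when $a>0$: by Cauchy--Schwarz $4ac-b^2\geq 0$, and when this is strictly positive one has $4aR(t)^2=(2at+b)^2+(4ac-b^2)>(2at+b)^2$, so $2\sqrt a R(t)>\lvert 2at+b\rvert$ and the argument $2\sqrt a R(t)+2at+b$ is strictly positive on all of $\R$; in the borderline case $4ac-b^2=0$ the curve is a reparametrized line segment, the logarithmic coefficient vanishes, and the formula reduces to its first term (equivalently one integrates $R(t)=\tfrac{\lvert 2at+b\rvert}{2\sqrt a}$ directly), so the stated expression remains valid throughout.
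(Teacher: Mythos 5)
Your proposal is correct and follows essentially the same route as the paper: write the arc length as the speed integral, expand $\norm{\bcurve_{\bpoint_0,\bpoint_1,\bpoint_2}'(t)}^2 = 4(at^2+bt+c)$ to identify the coefficients, and note that $a=0$ forces $b=0$ in the degenerate case. The only difference is that the paper simply cites the closed form of $I(t)$ from a table of integrals, whereas you verify it by differentiation and also check positivity of the logarithm's argument via Cauchy--Schwarz; both of these checks are correct and harmless additions.
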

\begin{proof}
By definition, the arc length of a curve is given by the integral of the norm of its rate of change, i.e., 
\begin{align}
L(\bcurve_{\bpoint_0,\bpoint_1,\bpoint_2}([t_1, t_2])) &= \int_{t_1}^{t_2} \norm{\bcurve_{\bpoint_0,\bpoint_1,\bpoint_2} '(t)} \mathrm{d}t 
\\
& \hspace{-8mm}=   \int_{t_1}^{t_2} 2 \norm{(\bpoint_1 - \bpoint_0) + t (\bpoint_2 - 2 \bpoint_1 + \bpoint_0)}  \mathrm{d}t 
\!\!\\
& \hspace{-8mm} = 2 \int_{t_1}^{t_2} \sqrt{a t^2 + b t + c} \mathrm{d}t  
\end{align}
where $a,b, c$ are defined as in \refeq{eq.QuadBezierLengthCoefs}. 
Also note that $a = 0$ implies $b = 0$, and so $\int \sqrt{a t^2 + b t + c} \mathrm{d} t = \sqrt{c} t$.
Hence, the result follows.
\end{proof}

\begin{proposition}[\cite{sapidis_frey_CAGD1992, deddi_everett_lazard_TechReport2000}]
\label{prop.kappamax}
The maximum absolute curvature of a planar quadratic B\'ezier curve $\bcurve_{\bpoint_0,\bpoint_1,\bpoint_2}(t)$, associated with $\bpoint_0, \bpoint_1, \bpoint_2 \in \R^{2}$, over an interval $[t_1,t_2]$ satisfies
\begin{align} \label{eq.QuadBezierMaxAbsCurv}
\max_{t \in [t_1,t_2]} | \kappa(t) | = 
\left \{
\begin{array}{rl}
| \kappa(t_1) | & \text{, if } t_\kappa^* < t_1 \\
| \kappa(t_\kappa^*) | & \text{, if } t_1 \leq t_\kappa^* \leq  t_2 \\
| \kappa(t_2) | & \text{, if } t_\kappa^* > t_2
\end{array} 
\right.
\end{align}
where the quadratic B\'ezier curvature $\kappa(t)$ is given by
\begin{align}
\kappa(t) = \frac{\det([\bpoint_1 - \bpoint_0, \bpoint_2 - \bpoint_1])}{2\norm{(\bpoint_1-\bpoint_0)(1-t) +(\bpoint_2 - \bpoint_1)t}^3},    
\end{align}
and the optimal curve parameter $t_{\kappa}^*$ and the maximum absolute curvature $|\kappa (t_\kappa^*)|$ over $t \in \R$ are 
\begin{align}
t^*_\kappa &= \argmax\limits_{t \, \in \, \R} | \kappa(t) | = \frac{(\bpoint_0 - \bpoint_1)^{T} (\bpoint_2 - 2\bpoint_1 + \bpoint_0)}{\norm{\bpoint_2 - 2\bpoint_1 + \bpoint_0}^2}, \label{eq.tQuadBezierMaxAbsCurv}
\\
|\kappa (t_\kappa^*)| &= \cfrac{\norm{\bpoint_2 - 2\bpoint_1 + \bpoint_0}^3}{2 \det([\bpoint_1 - \bpoint_0, \bpoint_2 - \bpoint_1 ])^2}. 
\end{align}
\end{proposition}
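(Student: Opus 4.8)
The plan is to derive both the curvature formula and its maximizer directly from the standard planar curvature expression $\kappa(t) = \det([\bcurve_{\bpoint_0,\bpoint_1,\bpoint_2}'(t),\bcurve_{\bpoint_0,\bpoint_1,\bpoint_2}''(t)]) / \norm{\bcurve_{\bpoint_0,\bpoint_1,\bpoint_2}'(t)}^3$, together with the explicit derivatives in \refeq{eq.QuadraticBezier}. First I would substitute $\bcurve_{\bpoint_0,\bpoint_1,\bpoint_2}'(t) = 2(\bpoint_1-\bpoint_0)+2t(\bpoint_2-2\bpoint_1+\bpoint_0)$ and the constant $\bcurve_{\bpoint_0,\bpoint_1,\bpoint_2}''(t) = 2(\bpoint_2-2\bpoint_1+\bpoint_0)$ into the numerator. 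Bilinearity of the $2\times 2$ determinant, together with the vanishing of a determinant having two equal columns, kills the $t$-dependent term, so the numerator is the constant $4\det([\bpoint_1-\bpoint_0,\bpoint_2-2\bpoint_1+\bpoint_0])$; writing $\bpoint_2-2\bpoint_1+\bpoint_0 = (\bpoint_2-\bpoint_1)-(\bpoint_1-\bpoint_0)$ and discarding a repeated column once more turns this into $4\det([\bpoint_1-\bpoint_0,\bpoint_2-\bpoint_1])$. For the denominator I would use $\norm{\bcurve_{\bpoint_0,\bpoint_1,\bpoint_2}'(t)}^3 = 8\norm{(\bpoint_1-\bpoint_0)+t(\bpoint_2-2\bpoint_1+\bpoint_0)}^3$ and the identity $(\bpoint_1-\bpoint_0)+t(\bpoint_2-2\bpoint_1+\bpoint_0) = (\bpoint_1-\bpoint_0)(1-t)+(\bpoint_2-\bpoint_1)t$; the factors $4$ and $8$ cancel, giving the stated expression for $\kappa(t)$.

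Since the numerator of $\kappa(t)$ does not depend on $t$, the value $|\kappa(t)|$ is maximal precisely where the scalar quadratic $g(t) := \norm{(\bpoint_1-\bpoint_0)+t(\bpoint_2-2\bpoint_1+\bpoint_0)}^2 = \norm{\bpoint_1-\bpoint_0}^2 + 2t\,\tr{(\bpoint_1-\bpoint_0)}(\bpoint_2-2\bpoint_1+\bpoint_0) + t^2\norm{\bpoint_2-2\bpoint_1+\bpoint_0}^2$ is minimal. In the nondegenerate case $\bpoint_2-2\bpoint_1+\bpoint_0 \ne \vect{0}$, $g$ is a strictly convex parabola, so its unique minimizer is the root of $g'(t)=0$, which rearranges to exactly $t_\kappa^* = \tr{(\bpoint_0-\bpoint_1)}(\bpoint_2-2\bpoint_1+\bpoint_0) / \norm{\bpoint_2-2\bpoint_1+\bpoint_0}^2$. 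To get the peak value I would evaluate $g(t_\kappa^*) = \norm{\bpoint_1-\bpoint_0}^2 - (\tr{(\bpoint_1-\bpoint_0)}(\bpoint_2-2\bpoint_1+\bpoint_0))^2 / \norm{\bpoint_2-2\bpoint_1+\bpoint_0}^2$ and apply the planar Lagrange identity $\norm{a}^2\norm{b}^2 - (\tr{a}b)^2 = \det([a,b])^2$ with $a = \bpoint_1-\bpoint_0$ and $b = \bpoint_2-2\bpoint_1+\bpoint_0$, which --- using $\det([\bpoint_1-\bpoint_0,\bpoint_2-2\bpoint_1+\bpoint_0]) = \det([\bpoint_1-\bpoint_0,\bpoint_2-\bpoint_1])$ from the first paragraph --- collapses to $g(t_\kappa^*) = \det([\bpoint_1-\bpoint_0,\bpoint_2-\bpoint_1])^2 / \norm{\bpoint_2-2\bpoint_1+\bpoint_0}^2$. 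Feeding $g(t_\kappa^*)^{3/2}$ into $|\kappa(t_\kappa^*)| = |\det([\bpoint_1-\bpoint_0,\bpoint_2-\bpoint_1])| / (2\,g(t_\kappa^*)^{3/2})$ and simplifying yields the claimed $\norm{\bpoint_2-2\bpoint_1+\bpoint_0}^3 / (2\det([\bpoint_1-\bpoint_0,\bpoint_2-\bpoint_1])^2)$. This also shows $g(t) \ge g(t_\kappa^*) > 0$ whenever $\det([\bpoint_1-\bpoint_0,\bpoint_2-\bpoint_1]) \ne 0$, so the curve has no cusps and $\kappa$ is defined for all $t$.

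Finally, for the constrained maximum over $[t_1,t_2]$ I would invoke unimodality: because $g$ is convex with vertex at $t_\kappa^*$, it is strictly decreasing on $(-\infty,t_\kappa^*]$ and strictly increasing on $[t_\kappa^*,\infty)$, so $|\kappa(t)| = |\det([\bpoint_1-\bpoint_0,\bpoint_2-\bpoint_1])| / (2\,g(t)^{3/2})$ rises up to $t_\kappa^*$ and falls afterwards. A three-way case split on whether $t_\kappa^* < t_1$, $t_1 \le t_\kappa^* \le t_2$, or $t_\kappa^* > t_2$ then gives \refeq{eq.QuadBezierMaxAbsCurv} directly. I do not expect a genuine obstacle --- everything is elementary --- and the only item needing care is the degenerate bookkeeping: if $\bpoint_2-2\bpoint_1+\bpoint_0 = \vect{0}$ the velocity and hence $\kappa$ are constant, while if $\det([\bpoint_1-\bpoint_0,\bpoint_2-\bpoint_1]) = 0$ the control points are collinear and $\kappa \equiv 0$; in either case the maximum is attained trivially, and \refeq{eq.tQuadBezierMaxAbsCurv} and the closed form for $|\kappa(t_\kappa^*)|$ are to be read as excluding these cases, where they would otherwise divide by zero.
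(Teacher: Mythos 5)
Your proposal is correct and follows essentially the same route as the paper's proof: the same bilinear expansion of $\det([\bcurve',\bcurve''])$ to obtain a constant numerator, the same reduction of the maximization of $|\kappa|$ to minimizing the convex quadratic $\norm{(\bpoint_1-\bpoint_0)+t(\bpoint_2-2\bpoint_1+\bpoint_0)}^2$, the same Lagrange identity $\norm{a}^2\norm{b}^2-(\tr{a}b)^2=\det([a,b])^2$ for the peak value, and the same unimodality case split over $[t_1,t_2]$. Your explicit handling of the degenerate cases is a minor but welcome addition that the paper leaves implicit.
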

\begin{proof}
The quadratic B\'ezier curvature can be determined as
\begin{align}
\!\!\kappa(t) &= \frac{\det\plist{\blist{\bcurve_{\bpoint_0, \bpoint_1,\bpoint_2 }'(t), \bcurve_{\bpoint_0, \bpoint_1,\bpoint_2 }''(t)}}}{\norm{\bcurve_{\bpoint_0, \bpoint_1,\bpoint_2 }'(t)}^3} 
\\
& \hspace{-2mm}= \frac{\det{\blist{(\bpoint_1 - \bpoint_0) + t(\bpoint_2 - 2\bpoint_1 + \bpoint_0), (\bpoint_2 - 2\bpoint_1 + \bpoint_0)}}}{2\norm{(\bpoint_1 - \bpoint_0) + t(\bpoint_2 - 2\bpoint_1 + \bpoint_0)}^3} \!\! 
\\
& \hspace{-2mm}= \frac{\det{\blist{(\bpoint_1 - \bpoint_0), (\bpoint_2 - 2\bpoint_1 + \bpoint_0)}}}{2\norm{(\bpoint_1 - \bpoint_0) + t(\bpoint_2 - 2\bpoint_1 + \bpoint_0)}^3} \nonumber \\
& \hspace{10mm}+ t \underbrace{\frac{\det{\blist{(\bpoint_2 - 2\bpoint_1 + \bpoint_0), (\bpoint_2 - 2\bpoint_1 + \bpoint_0)}}}{2\norm{(\bpoint_1 - \bpoint_0) + t(\bpoint_2 - 2\bpoint_1 + \bpoint_0)}^3}}_{=0}\\
& = \frac{\det{\blist{(\bpoint_1 - \bpoint_0), (\bpoint_2 - 2\bpoint_1 + \bpoint_0)}}}{2\norm{(\bpoint_1 - \bpoint_0) + t(\bpoint_2 - 2\bpoint_1 + \bpoint_0)}^3} 
\end{align}
Hence, the maximum absolute curvature $|\kappa(t)|$ is achieved when $\norm{(\bpoint_1 - \bpoint_0) + t(\bpoint_2 - 2\bpoint_1 + \bpoint_0)}^2$ is minimized, which is a convex function of $t$ and its unique minimum can be determined by setting its derivative to zero as
\begin{align}
0 & = \frac{\mathrm{d}}{\mathrm{d}t} \norm{(\bpoint_1 - \bpoint_0) + t(\bpoint_2 - 2\bpoint_1 + \bpoint_0)}^2  \\
&= 2 \tr{(\bpoint_2 - 2\bpoint_1 + \bpoint_0)}\plist{(\bpoint_1 - \bpoint_0) + t(\bpoint_2 - 2\bpoint_1 + \bpoint_0)}
\end{align} 
which corresponds to \refeq{eq.tQuadBezierMaxAbsCurv}.
Hence, due to the convexity of quadratic $\norm{(\bpoint_1 - \bpoint_0) + t(\bpoint_2 - 2\bpoint_1 + \bpoint_0)}^2$, the maximum absolute curvature is realized at $t_{\kappa}^{*}$ if $t_{\kappa}^{*} \in [t_1, t_2]$; otherwise, the maximum value is at the closest boundary  of the interval $[t_1, t_2]$ to $t_{\kappa}^{*}$ as described in \refeq{eq.QuadBezierMaxAbsCurv}.  

Finally, using the fact that $\det([\vect{a}, \vect{b}])^2 = \norm{\vect{a}}^2 \norm{\vect{b}}^2 - \plist{\tr{\vect{a}} \vect{b}}^2$ for any $\vect{a}, \vect{b} \in \R^{2}$, one can verify that
\begin{align}
|\kappa(t^*_{\kappa})| &= \tfrac{|\det\plist{\blist{\bpoint_1 - \bpoint_0, \bpoint_2 - \bpoint_1}}| \norm{\bpoint_2 - 2\bpoint_1 + \bpoint_0}^3}{2 \norm{(\bpoint_1 - \bpoint_0)\norm{\bpoint_2 - 2\bpoint_1 + \bpoint_0} - \frac{\tr{(\bpoint_0 - \bpoint_1)}(\bpoint_2 - 2\bpoint_1 + \bpoint_0)(\bpoint_2 - 2\bpoint_1 + \bpoint_0)}{\norm{\bpoint_2 - 2\bpoint_1 + \bpoint_0}}}^3} 
\\
& = \tfrac{|\det\plist{\blist{\bpoint_1 - \bpoint_0, \bpoint_2 - \bpoint_1}}| \norm{\bpoint_2 - 2\bpoint_1 + \bpoint_0}^3}{2 \plist{\norm{\bpoint_1 - \bpoint_0}^2 \norm{\bpoint_2 - 2\bpoint_1 + \bpoint_0}^2 - \plist{\tr{\bpoint_1 - \bpoint_0} (\bpoint_2 - 2 \bpoint_1 + \bpoint_0)}^2}^{\frac{3}{2}}} 
\\
& = \tfrac{|\det\plist{\blist{\bpoint_1 - \bpoint_0, \bpoint_2 - \bpoint_1}}| \norm{\bpoint_2 - 2\bpoint_1 + \bpoint_0}^3}{2|\det\plist{\blist{\bpoint_1 - \bpoint_0, \bpoint_2 - \bpoint_1}}|^3} 
\\
&= \tfrac{\norm{\bpoint_2 - 2\bpoint_1 + \bpoint_0}^3}{2\det\plist{\blist{\bpoint_1 - \bpoint_0, \bpoint_2 - \bpoint_1}}^2}
\end{align}
which completes the proof.
\end{proof}

\begin{proposition}
\label{prop.QuadBezierDist}
The distance of a quadratic Bézier curve $\bcurve_{\bpoint_0, \bpoint_1, \bpoint_2}(t)$ to the origin  over an interval $[t_1, t_2]$ satisfies 
\begin{align}
\min_{t \in [t_1,t_2]} \norm{\bcurve_{\bpoint_0, \bpoint_1, \bpoint_2}(t)} = \min_{t \in T_{\bpoint_0, \bpoint_1, \bpoint_2}} \norm{\bcurve_{\bpoint_0, \bpoint_1, \bpoint_2}(t)},
\label{eq.QuadBezierDist}
\end{align}
where the finite set of time instances is given by\footnote{The roots of a cubic equation can be determined analytically \cite{nickalls_richard_TMG_1993}.}
\begin{align}
T_{\bpoint_0, \bpoint_1, \bpoint_2}([t_1, t_2]) = \clist{ t \in [t_1,t_2] \,\Big|\, \sum\nolimits_{i=0}^3 c_i t^i \!=\!0 } \cup \clist{t_1,t_2}, \!\!\!
\label{eq.tQuadBezierDist}
\end{align}
with
\begin{subequations} \label{eq.QuadBezierDistCoefs}
\begin{align}
    c_0 &= \tr{(\bpoint_1 - \bpoint_0)} \bpoint_0, \label{eq.c0coefficient} \\
    c_1 &= \tr{(\bpoint_2 - 2\bpoint_1 + \bpoint_0)}\bpoint_0  + 2\norm{\bpoint_1 - \bpoint_0}^2, \label{eq.c1coefficient} \\
    c_2 &= 3\tr{(\bpoint_2 - 2\bpoint_1 + \bpoint_0)} (\bpoint_1 - \bpoint_0), \label{eq.c2coefficient} \\
    c_3 &= \norm{\bpoint_2 - 2\bpoint_1 + \bpoint_0}^2. \label{eq.c3coefficient}
\end{align}
\end{subequations}
\end{proposition}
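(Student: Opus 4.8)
The plan is to reduce the constrained minimization of a continuous function over a compact interval to a search over a finite candidate set, and then to identify that set explicitly via elementary calculus on the squared distance. First I would note that $t \mapsto \norm{\bcurve_{\bpoint_0, \bpoint_1, \bpoint_2}(t)}$ is continuous on the compact interval $[t_1, t_2]$ and hence attains its minimum there, and that since $s \mapsto s^2$ is strictly increasing on $[0,\infty)$ it is equivalent to minimize $f(t) := \norm{\bcurve_{\bpoint_0, \bpoint_1, \bpoint_2}(t)}^2$. Substituting the explicit quadratic form $\bcurve_{\bpoint_0, \bpoint_1, \bpoint_2}(t) = \bpoint_0 + 2(\bpoint_1 - \bpoint_0)t + (\bpoint_2 - 2\bpoint_1 + \bpoint_0)t^2$ from \refeq{eq.QuadraticBezierCurve} shows that $f$ is a univariate polynomial of degree at most four, so $f'$ is a polynomial of degree at most three. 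Any minimizer of $f$ over $[t_1, t_2]$ is either an endpoint $t_1$ or $t_2$, or an interior stationary point, i.e., a root of $f'$ lying in $(t_1, t_2)$; as $f'$ has finitely many roots (being a nonzero polynomial unless the curve degenerates to a single point, a case treated below), the minimum of $\norm{\bcurve_{\bpoint_0, \bpoint_1, \bpoint_2}}$ over $[t_1,t_2]$ is attained on the finite set $T_{\bpoint_0, \bpoint_1, \bpoint_2}([t_1,t_2])$ of \refeq{eq.tQuadBezierDist}, which is exactly \refeq{eq.QuadBezierDist}.

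It then remains to compute $f'$ explicitly. Writing $\vect{u} := \bpoint_1 - \bpoint_0$ and $\vect{v} := \bpoint_2 - 2\bpoint_1 + \bpoint_0$ and expanding the squared Euclidean norm gives $f(t) = \norm{\bpoint_0}^2 + 4\,\tr{\bpoint_0}\vect{u}\, t + \plist{2\,\tr{\bpoint_0}\vect{v} + 4\norm{\vect{u}}^2} t^2 + 4\,\tr{\vect{u}}\vect{v}\, t^3 + \norm{\vect{v}}^2 t^4$. Differentiating and dividing by $4$ yields $\tfrac{1}{4} f'(t) = \tr{\bpoint_0}\vect{u} + \plist{\tr{\bpoint_0}\vect{v} + 2\norm{\vect{u}}^2} t + 3\,\tr{\vect{u}}\vect{v}\, t^2 + \norm{\vect{v}}^2 t^3$, which is precisely $\sum_{i=0}^{3} c_i t^i$ once one re-expresses $\tr{\bpoint_0}\vect{u} = \tr{(\bpoint_1 - \bpoint_0)}\bpoint_0$ and $\tr{\vect{u}}\vect{v} = \tr{(\bpoint_2 - 2\bpoint_1 + \bpoint_0)}(\bpoint_1 - \bpoint_0)$; this reproduces the coefficients $c_0, \ldots, c_3$ listed in \refeq{eq.QuadBezierDistCoefs}.

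Finally I would dispose of the degenerate case so that the candidate set is well posed: if $\bpoint_2 - 2\bpoint_1 + \bpoint_0$ vanishes then $c_3 = c_2 = 0$ and $f$ is at most quadratic, so the stationarity equation $\sum_{i=0}^{3} c_i t^i = 0$ is of lower degree but still has finitely many roots unless all $c_i$ vanish; the latter occurs exactly when in addition $\bpoint_1 - \bpoint_0$ vanishes, i.e., $\bpoint_0 = \bpoint_1 = \bpoint_2$, in which case the curve is the constant point $\bpoint_0$, $f$ is constant, and its minimum is attained in particular at the endpoints, consistently with $\clist{t_1,t_2} \subseteq T_{\bpoint_0,\bpoint_1,\bpoint_2}([t_1,t_2])$. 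The only genuine obstacle here is the bookkeeping in the polynomial expansion of $\norm{\bcurve_{\bpoint_0,\bpoint_1,\bpoint_2}(t)}^2$ and in reading off the inner products that define $c_0, \ldots, c_3$; everything else is the routine ``stationary points plus endpoints'' recipe for minimizing a smooth function on a closed interval, together with the analytic solvability of cubic equations already noted in the statement.
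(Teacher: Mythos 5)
Your proposal is correct and follows essentially the same route as the paper's proof: both reduce the problem to the standard ``interior stationary points plus endpoints'' argument and identify the stationarity condition as the cubic $\sum_{i=0}^{3} c_i t^i = 0$, the paper by computing $\tr{\bcurve_{\bpoint_0,\bpoint_1,\bpoint_2}(t)}\,\bcurve_{\bpoint_0,\bpoint_1,\bpoint_2}'(t) = 2(c_3 t^3 + c_2 t^2 + c_1 t + c_0)$ and you by differentiating the quartic $\norm{\bcurve_{\bpoint_0,\bpoint_1,\bpoint_2}(t)}^2$, which is the same polynomial up to a constant factor. Your additional treatment of the degenerate case and the explicit justification that minimizing the norm and its square are equivalent are minor refinements the paper leaves implicit.
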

\begin{proof}
One can verify using \refeq{eq.QuadraticBezier} that the critical points of $\norm{\bcurve_{\bpoint_0, \bpoint_1, \bpoint_2}(t)}$ over $\R$ satisfy the following cubic equation
\begin{align}
0 & = \tr{\bcurve_{\bpoint_0, \bpoint_1, \bpoint_2}(t)} \bcurve_{\bpoint_0, \bpoint_1, \bpoint_2}'(t) 
= 2(c_3 t^3 + c_2 t^2 + c_1 t + c_0 )\!\!
\end{align}
where the coefficients $c_0, c_1, c_2, c_3$ are defined as in \refeq{eq.QuadBezierDistCoefs}.
Hence, the quadratic bezier distance to the origin is realized in one of the critical points in $[t_1, t_2]$ or on the boundary.
\end{proof}

\begin{remark}
The distance of a  bezier curve $\bcurve_{\bpoint_0, \ldots, \bpoint_{\cdegree}}(t)$  to a point $\mpoint \in \R^{\cdim}$  or another  bezier curve $\bcurve_{\mpoint_0, \ldots, \mpoint_{\cdegree}}(t)$ (with parameter-wise correspondence) can be formulated as its distance to the origin because
\begin{align}
\bcurve_{\bpoint_0, \ldots, \bpoint_\cdegree}(t) - \mpoint &=  \bcurve_{\bpoint_0 - \mpoint, \ldots, \bpoint_{\cdegree} - \mpoint} (t) \\
\bcurve_{\bpoint_0, \ldots, \bpoint_\cdegree}(t) - \bcurve_{\mpoint_0, \ldots, \mpoint_\cdegree}(t) &= \bcurve_{\bpoint_0 - \mpoint_0, \ldots, \bpoint_\cdegree - \mpoint_{\cdegree}} (t)
\end{align} 
\end{remark}

\begin{proposition} \label{prop.QuadBezierDist2Line}
The distance of a quadratic B\'ezier curve $\bcurve_{\bpoint_0, \bpoint_1, \bpoint_2} (t)$ defined over the internal $[t_1, t_2]$ to a linear B\'ezier curve $\bcurve_{\mpoint_0, \mpoint_1} (k)$ defined over the interval [$k_1, k_2]$ can be analytically determined as 
\begin{align}
& \min_{t \in [t_1, t_2]} \min_{k \in [k_1, k_2]} \norm{\bcurve_{\bpoint_0, \bpoint_1, \bpoint_2} (t) - \bcurve_{\mpoint_0, \mpoint_1}(k)} \nonumber 
\\ & \hspace{25mm}=  \min_{t \in \widehat{T}} \norm{\bcurve_{\bpoint_0, \bpoint_1, \bpoint_2} (t) - \bcurve_{\mpoint_0, \mpoint_1}(k^*(t))}
\end{align}
using a finite set of critical time instances $\widehat{T}$ defined in terms of $T$ in \refeq{eq.tQuadBezierDist} as
\begin{align}
\widehat{T} &= T_{\bpoint_0-\mpoint_0, \bpoint_1 - \mpoint_0, \bpoint_2 - \mpoint_0 } ([t_1, t_2]) \nonumber \\
& \hspace{20mm}\cup T_{\bpoint_0-\mpoint_1, \bpoint_1 - \mpoint_1, \bpoint_2 - \mpoint_1} ([t_1, t_2]) \nonumber \\
& \hspace{40mm}  \cup T_{\hat{\bpoint}_0, \hat{\bpoint}_1, \hat{\bpoint}_2}([t_1, t_2])
\end{align}
and the optimal line parameter $k^*(t)$ that is given by 
\begin{align}
k^*(t) &=  \min\plist{\max\plist{k_1, \tfrac{\tr{\plist{\mpoint_1 - \mpoint_0}}}{\norm{\mpoint_1 - \mpoint_0}^2}\plist{\bcurve_{\bpoint_0, \bpoint_1, \bpoint_2}(t) - \mpoint_0}}, k_2}. \!\! \label{eq.QuadBezierDist2LineOptimal}
\end{align}
where 
\begin{align}
\hat{\bpoint}_i = \plist{\mat{I} - \frac{(\mpoint_1 - \mpoint_0)\tr{(\mpoint_1 - \mpoint_0)}}{\norm{\mpoint_1 - \mpoint_0}^2}}(\bpoint_i - \mpoint_0).
\end{align}
\end{proposition}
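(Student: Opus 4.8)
The plan is to collapse the two-dimensional minimization to a one-dimensional one by eliminating $k$ in closed form, and then to recognize the surviving one-variable objective, branch by branch, as one of three distance-to-origin problems already solved in \refprop{prop.QuadBezierDist}. First I would fix $t \in [t_1,t_2]$ and minimize over $k \in [k_1,k_2]$. Since $\bcurve_{\mpoint_0,\mpoint_1}(k) = \mpoint_0 + k(\mpoint_1 - \mpoint_0)$ is affine in $k$, the quantity $\norm{\bcurve_{\bpoint_0,\bpoint_1,\bpoint_2}(t) - \bcurve_{\mpoint_0,\mpoint_1}(k)}^2$ is a convex quadratic in $k$ (nondegenerate when $\mpoint_0 \neq \mpoint_1$), so its minimizer over $[k_1,k_2]$ is the orthogonal-projection parameter clamped to $[k_1,k_2]$, which is exactly $k^*(t)$ in \refeq{eq.QuadBezierDist2LineOptimal}. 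Hence the double minimum equals $\min_{t \in [t_1,t_2]} f(t)$ where $f(t) := \norm{\bcurve_{\bpoint_0,\bpoint_1,\bpoint_2}(t) - \bcurve_{\mpoint_0,\mpoint_1}(k^*(t))}$, and the whole proof reduces to showing that any minimizer $t^*$ of $f$ on $[t_1,t_2]$ belongs to $\widehat{T}$.

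Next I would split according to which branch of the clamp is active at $t^*$. If $k^*(t^*)$ equals the left endpoint, then $f(t) \le \norm{\bcurve_{\bpoint_0,\bpoint_1,\bpoint_2}(t) - \mpoint_0}$ for every $t$, with equality at $t^*$; so $t^*$ also globally minimizes the distance of the quadratic curve to the fixed point $\mpoint_0$, and by the identity $\bcurve_{\bpoint_0,\bpoint_1,\bpoint_2}(t) - \mpoint_0 = \bcurve_{\bpoint_0 - \mpoint_0, \bpoint_1 - \mpoint_0, \bpoint_2 - \mpoint_0}(t)$ together with \refprop{prop.QuadBezierDist}, $t^*$ lies in $T_{\bpoint_0 - \mpoint_0,\bpoint_1 - \mpoint_0,\bpoint_2 - \mpoint_0}([t_1,t_2])$. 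The right-endpoint branch is symmetric and yields membership in $T_{\bpoint_0 - \mpoint_1,\bpoint_1 - \mpoint_1,\bpoint_2 - \mpoint_1}([t_1,t_2])$. If instead $k^*(t^*)$ is strictly interior, I would first observe that $\bcurve_{\hat{\bpoint}_0,\hat{\bpoint}_1,\hat{\bpoint}_2}(t)$ is precisely the component of $\bcurve_{\bpoint_0,\bpoint_1,\bpoint_2}(t) - \mpoint_0$ orthogonal to $\mpoint_1 - \mpoint_0$ — this is immediate because the orthogonal projector is linear and the Bernstein basis polynomials sum to one — so on the interior branch $f$ coincides with the perpendicular distance $\norm{\bcurve_{\hat{\bpoint}_0,\hat{\bpoint}_1,\hat{\bpoint}_2}(t)}$. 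Since the clamped projection parameter depends continuously on $t$, it stays interior on a neighborhood of $t^*$, so $f$ agrees with that perpendicular distance near $t^*$; if $f(t^*) = 0$ the minimum is zero and is trivially attained in $\widehat{T}$, and otherwise $f$ is $C^1$ near $t^*$, whence $t^*$ is a critical point of $\norm{\bcurve_{\hat{\bpoint}_0,\hat{\bpoint}_1,\hat{\bpoint}_2}(\cdot)}^2$ and therefore lies in $T_{\hat{\bpoint}_0,\hat{\bpoint}_1,\hat{\bpoint}_2}([t_1,t_2])$. The boundary subcase $t^* \in \{t_1,t_2\}$ is absorbed automatically, since each $T$-set in \refeq{eq.tQuadBezierDist} is defined to include $\{t_1,t_2\}$.

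I expect the interior-projection branch to be the main obstacle. Unlike the two endpoint branches — where $f$ is globally dominated by a single distance-to-point function and one can argue directly that $t^*$ is a global minimizer of that function — for the interior branch there is no such global domination, so the argument must be local: it relies on continuity of the clamped projection and on the point-to-segment distance being continuously differentiable off the segment to conclude only that $t^*$ is a \emph{critical} point of the perpendicular-distance polynomial, which suffices because $T_{\hat{\bpoint}_0,\hat{\bpoint}_1,\hat{\bpoint}_2}$ collects all such critical points in $[t_1,t_2]$. A secondary point to check is the continuity and $C^1$-regularity of $f$ at the two parameter values where the clamp switches branches; this is harmless because there $\bcurve_{\mpoint_0,\mpoint_1}(k^*(t))$ equals a segment endpoint, so the adjacent branch formulas agree, but it should be stated for completeness.
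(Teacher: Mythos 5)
Your proposal is correct and follows essentially the same route as the paper's proof: eliminate $k$ via the clamped orthogonal projection $k^*(t)$, then split on which branch of the clamp is active at the optimum, reducing to the two distance-to-point problems and the perpendicular-distance problem $\norm{\bcurve_{\hat{\bpoint}_0,\hat{\bpoint}_1,\hat{\bpoint}_2}(t)}$, each handled by \refprop{prop.QuadBezierDist}. You are in fact somewhat more careful than the paper about \emph{why} the minimizer must land in one of the three $T$-sets (global domination on the endpoint branches, local $C^1$ criticality on the interior branch), which the paper dispatches with a brief "similarly".
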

\begin{proof}
Due to convexity, for any $t \in \R$, the line parameter
\begin{align}
k(t) = \tfrac{\tr{\plist{\mpoint_1 - \mpoint_0}}}{\norm{\mpoint_1 - \mpoint_0}^2}\plist{\bcurve_{\bpoint_0, \bpoint_1, \bpoint_2}(t) - \mpoint_0}
\end{align}
minimizes $\norm{\bcurve_{\bpoint_0, \bpoint_1, \bpoint_2}(t) - \bcurve_{\mpoint_0, \mpoint_1}(k)}$  over $\R$, i.e., 
\begin{align}
0 =  \tr{(\mpoint_1 - \mpoint_0)}\plist{\bcurve_{\bpoint_0, \bpoint_1, \bpoint_2}(t) - \mpoint_0 - k(t) (\mpoint_1 - \mpoint_0)}.
\end{align} 
Hence, the optimal solution $k^*(t)$ over the interval $[k_1, k_2]$ is given by \refeq{eq.QuadBezierDist2LineOptimal} since the optimal solution of a quadratic optimization problem is realized at $k(t)$ if $k(t) \in [k_1, k_2]$; otherwise, the optimum is at the closest interval boundary.

Similarly, the optimal bezier parameter is either related with a boundary point $\mpoint_0$ and $\mpoint_1$ of the line segment (corresponding to $T_{\bpoint_0-\mpoint_0, \bpoint_1 - \mpoint_0, \bpoint_2 - \mpoint_0 } ([t_1, t_2])$ and $T_{\bpoint_0-\mpoint_0, \bpoint_1 - \mpoint_0, \bpoint_2 - \mpoint_0 } ([t_1, t_2])$) or the minimum of $\norm{\bcurve_{\bpoint_0, \bpoint_1, \bpoint_2}(t) - \bcurve_{\mpoint_0, \mpoint_1}(k(t))}$ where 
{\small
\begin{align}
\bcurve_{\mpoint_0, \mpoint_1}(k(t)) 
&= \mpoint_0 +  \tfrac{(\mpoint_1 - \mpoint_0)\tr{(\mpoint_1 - \mpoint_0)}}{\norm{\mpoint_1 - \mpoint_0}^2}(\bcurve_{\bpoint_0, \bpoint_1, \bpoint_2}(t) - \mpoint_0)
\\
\bcurve_{\bpoint_0, \bpoint_1, \bpoint_2}(t) - \bcurve_{\mpoint_0, \mpoint_1}(k(t)) 
& = \plist{\mat{I} - \tfrac{(\mpoint_1 - \mpoint_0)\tr{(\mpoint_1 - \mpoint_0)}}{\norm{\mpoint_1 - \mpoint_0}^2}}(\bcurve_{\bpoint_0, \bpoint_1, \bpoint_2}(t) - \mpoint_0)
\\
& =  \bcurve_{\hat{\bpoint}_0, \hat{\bpoint}_1, \hat{\bpoint}_2} (t)
\end{align}
}
which completes the proof.
\end{proof}

\begin{proposition}
The intersection of quadratic B\'ezier curve $\bcurve_{\bpoint_0, \bpoint_1, \bpoint_2}(t)$ defined over an interval $[t_a, t_b]$ with a halfspace $H_{\vect{a},\vect{b}} = \clist{\vect{x} \in \R^{\cdim} | \tr{\vect{a}}(\vect{x} - \vect{b}) \leq 0}$ satisfies
\begin{align}
&\clist{ t \in [t_a, t_b] \big| \tr{\vect{a}} (\bcurve_{\bpoint_0, \bpoint_1, \bpoint_2}(t) - \vect{b}) \geq 0} \nonumber 
\\ 
&\hspace{10mm} = \bigcup \clist{[t_i, t_{i+1}] | 1 \leq i < |T|, \bcurve_{\bpoint_0, \bpoint_1, \bpoint_2}(\frac{t_i + t_{i+1}}{2}) \in H_{\vect{a}, \vect{b}}}
\end{align}
where $(t_1, t_2, \ldots, t_{|T|} )$ is the ascendingly ordered tuple of     
\begin{align}
T = \clist{ t \in [t_a, t_b] \big| \tr{\vect{a}} (\bcurve_{\bpoint_0, \bpoint_1, \bpoint_2}(t) - \vect{b}) =0 } \cup \clist{t_a, t_b}
\end{align}
where $\tr{\vect{a}} (\bcurve_{\bpoint_0, \bpoint_1, \bpoint_2}(t) - \vect{b}) =0$ is a quadratic equation.
\end{proposition}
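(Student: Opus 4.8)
The plan is to reduce the claim to an elementary fact about the sign pattern of a univariate quadratic. Introduce the scalar function
\[
f(t) \ldf \tr{\vect{a}}\plist{\bcurve_{\bpoint_0, \bpoint_1, \bpoint_2}(t) - \vect{b}},
\]
which, by the explicit form \refeq{eq.QuadraticBezierCurve}, equals $\tr{\vect{a}}(\bpoint_2 - 2\bpoint_1 + \bpoint_0)\, t^2 + 2\,\tr{\vect{a}}(\bpoint_1 - \bpoint_0)\, t + \tr{\vect{a}}(\bpoint_0 - \vect{b})$, a polynomial in $t$ of degree at most two for every ambient dimension $\cdim$; in particular $\tr{\vect{a}}(\bcurve_{\bpoint_0, \bpoint_1, \bpoint_2}(t) - \vect{b}) = 0$ is a (possibly degenerate) quadratic equation, as asserted, and the left-hand parameter set is exactly $\{t \in [t_a, t_b] : f(t) \ge 0\}$. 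By construction $T$ collects the real roots of $f$ in $[t_a, t_b]$ together with $t_a$ and $t_b$, so $T$ is finite with $|T| \le 4$; writing $(t_1, \dots, t_{|T|})$ for its ascending enumeration gives $t_1 = t_a$, $t_{|T|} = t_b$, and $[t_a, t_b] = \bigcup_{i} [t_i, t_{i+1}]$, the pieces overlapping only at points of $T$.

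The heart of the argument is a \emph{sign-constancy lemma}: each open subinterval $(t_i, t_{i+1})$ contains no zero of $f$ by the choice of $T$, so by the intermediate value theorem $f$ keeps a constant strict sign on $(t_i, t_{i+1})$ (or is identically zero there), and in either case $f$ does not change sign on $[t_i, t_{i+1}]$. Hence the restriction of the target set to $[t_i, t_{i+1}]$ is either the whole subinterval or empty, and the alternative is settled by the value of $f$ at any interior point — for instance at the midpoint $\tfrac{t_i + t_{i+1}}{2}$, i.e.\ by whether $\bcurve_{\bpoint_0, \bpoint_1, \bpoint_2}(\tfrac{t_i + t_{i+1}}{2})$ lies in $H_{\vect{a},\vect{b}}$, which is precisely the membership test appearing on the right-hand side. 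Taking the union over $i$ of these restrictions then recovers the whole target set on the left and, by the lemma, equals the union of exactly those subintervals whose midpoint passes the test — the right-hand side. The finitely many shared endpoints $t_i$ cause no trouble: an interior root has $f(t_i) = 0$ and so lies in the target set, while the remaining $t_i$ are endpoints of neighbouring retained pieces.

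The argument is otherwise routine; the only real work is bookkeeping the degenerate configurations so that the single fact used — ``$f$ has no root in $(t_i, t_{i+1})$'' — always applies. I would dispatch these uniformly: $f \equiv 0$ (then $T = \{t_a, t_b\}$ and both sides are $[t_a, t_b]$); $f$ affine or constant (fewer roots, lemma unchanged); $f$ with a double root; and a root coinciding with $t_a$ or $t_b$, in which case a listed ``subinterval'' collapses to a point and contributes nothing to either side. Every non-degenerate subinterval has $t_i < t_{i+1}$, hence a genuine interior midpoint, so the intermediate value step is always legitimate, and degenerate pieces simply do not appear in the enumeration $(t_1, \dots, t_{|T|})$.
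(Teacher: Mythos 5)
Your overall strategy is exactly the paper's: partition $[t_a,t_b]$ by the roots of the scalar quadratic $f(t) = \tr{\vect{a}}(\bcurve_{\bpoint_0,\bpoint_1,\bpoint_2}(t)-\vect{b})$ together with the endpoints, argue that $f$ cannot change sign on any resulting subinterval, and decide each subinterval by one interior sample. The paper compresses this into three sentences; your version is more careful, in particular the explicit sign-constancy lemma via the intermediate value theorem and the enumeration of degenerate configurations ($f\equiv 0$, affine $f$, double roots, roots coinciding with $t_a$ or $t_b$), which the paper does not address at all. That part of your write-up is a genuine improvement.

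However, there is one step where your proof asserts something false, and it is precisely the step that hides an inconsistency in the statement itself. The target set on the left is $\clist{t \in [t_a,t_b] \mid f(t) \geq 0}$, whereas the right-hand side retains a subinterval when its midpoint lies in $H_{\vect{a},\vect{b}}$, i.e.\ when $f$ at the midpoint is $\leq 0$. These are opposite tests, yet you write that deciding ``$f \geq 0$ on $[t_i,t_{i+1}]$'' by the midpoint value ``is precisely the membership test appearing on the right-hand side.'' It is not: if $f > 0$ on all of $[t_a,t_b]$, then $T = \clist{t_a,t_b}$, the left-hand side is $[t_a,t_b]$, but the single midpoint fails the membership test and the right-hand side is empty. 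So the displayed identity is false as printed (the left-hand inequality should read $\leq 0$ to match the proposition's stated purpose of computing the intersection with $H_{\vect{a},\vect{b}}$, and that is what the midpoint test actually computes). Your machinery proves the corrected statement, but a proof that concludes with an equality between two sets defined by opposite inequalities must contain an error, and yours is at that identification. You should either flag the sign error in the statement and prove the corrected version, or carry the inequality through the midpoint test explicitly so the mismatch becomes visible rather than being absorbed into the phrase ``precisely the membership test.''
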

\begin{proof}
By definition, the roots of $\tr{\vect{a}} (\bcurve_{\bpoint_0, \bpoint_1, \bpoint_2}(t) - \vect{b})$ determines the Bezier parameters over $\R$ where the curve intersects the halfspace boundary.
Hence, the ascendingly sorted elements of $T$ define a partition of the interval $[t_a, t_b]$ whose consecutive pairs define the part of the bezier curve on the opposite sides of the halfspace.
Therefore, the result follows.   
\end{proof}

\section{Proofs}

\subsection{Proof of \reflem{lem.InvertibleBasisMatrix}}
\label{app.InvertibleBasisMatrix}

\begin{proof}
The result follows from that $n^{\text{th}}$-order Bernstein polynomials, as well as monomials and Taylor polynomials of degree less than or equal to $\cdegree$, define a basis of $\cdegree + 1$ linearly independent polynomials for $n^{\text{th}}$-order polynomials \cite{farouki_CAGD2012}.

Alternatively, one can verify the result using polynomial basis transformations as follows.
The monomial basis matrix $\mbmat_{\cdegree}(t_{0}, \ldots, t_\cdegree)$, by definition, equals to the Vandermonde matrix, which is nonsingular for distinct $t_0, \ldots, t_\cdegree$ \cite{neagoe_SPL1996}. 
The Bezier and Taylor basis matrices are also nonsingular due to the change of basis relation, i.e.,
\begin{subequations}
\begin{align}
\bbmat_{\cdegree}(t_0, \ldots, t_\cdegree) & = \tfbasis_{\mbasis}^{\bbasis}(\cdegree) \mbmat_{\cdegree}(t_0, \ldots, t_ \cdegree) ,
\\
\tbmat_{\cdegree, \toffset}(t_0, \ldots,  t_\cdegree) & = \tfbasis_{\tbasis}^{\bbasis}(\cdegree, \toffset) \mbmat_{\cdegree}(t_0, \ldots, t_ \cdegree) ,
\end{align}
\end{subequations}
where $\tfbasis_{\mbasis}^{\bbasis}(\cdegree)$ and $\tfbasis_{\tbasis}^{\bbasis}(\cdegree, \toffset)$ are invertible triangular basis transformation matrices (see Lemmas \ref{lem.MonomialBernsteinTransformation} \& \ref{lem.MonomialTaylorTransformation}).
\end{proof}

\subsection{Proof of \reflem{lem.BasisTransformation}}
\label{app.BasisTransformation}

\begin{proof}
Consider the basis transformation matrix $\tfbasis_{\mbasis}^{\bbasis}(\cdegree)$ from monomial to Bernstein basis. 
It follows by definition that 
\begin{align}
\bbmat_{\cdegree}(t_0, \ldots, t_\cdegree) = \tfbasis_{\mbasis}^{\bbasis}(\cdegree) \mbmat_{\cdegree}(t_0, \ldots, t_\cdegree).
\end{align}   
Since the monomial basis matrix $\mbmat_{\cdegree}(t_0, \ldots, t_\cdegree)$  is invertible for any distinct $t_0, \ldots, t_\cdegree \in \R$ (\reflem{lem.InvertibleBasisMatrix}), we obtain
\begin{align}
\tfbasis_{\mbasis}^{\bbasis}(\cdegree) = \bbmat_{\cdegree}(t_0, \ldots, t_\cdegree) \mbmat_{\cdegree}(t_0, \ldots, t_\cdegree)^{-1}.
\end{align}
Similarly, the result can be verified for any change of basis between Bernstein, Taylor and monomial bases, which completes the proof. 
\end{proof}

\subsection{Proof of \reflem{lem.PolynomialCurveEquivalence}}
\label{app.PolynomialCurveEquivalence}

\begin{proof}
Let us focus on the equivalence of B\'ezier curves to monomial curves.
The equivalence of B\'ezier and monomial curves means that  for any distinct $t_0, \ldots, t_\cdegree \in \R$ one has
\begin{align} 
\bcurve_{\bpoint_0, \ldots, \bpoint_\cdegree}(t_0, \ldots, t_\cdegree) & = \mcurve_{\bpoint_0, \ldots, \bpoint_\cdegree}(t_0, \ldots, t_\cdegree)  
\\ 
\bpmat_{\cdegree} \bbmat_{\cdegree}(t_0, \ldots, t_\cdegree) &=  \mpmat_{\cdegree} \mbmat_{\cdegree}(t_0, \ldots, t_\cdegree)
\\
& =  \mpmat_{\cdegree} \tfbasis_{\bbasis}^{\mbasis}(\cdegree)\bbmat_{\cdegree}(t_0, \ldots, t_\cdegree).
\end{align}
Since the Bernstein basis matrix $\bbmat_{\cdegree}(t_0, \ldots, t_\cdegree)$ is invertible for any distinct $t_0, \ldots, t_\cdegree$ (\reflem{lem.InvertibleBasisMatrix}), one can conclude that
\begin{align}
\bpmat_{\cdegree} = \mpmat_{\cdegree} \tfbasis_{\bbasis}^{\mbasis}(\cdegree)
\end{align}
which can be similarly extended for other polynomial curve equivalence relations. 
\end{proof}

\subsection{Proof of \reflem{lem.PolynomialReparametrization}}
\label{app.PolynomialReparametrization}

\begin{proof}
For any $t_0, \ldots, t_\cdegree \in \R$, by definition, affine Bezier reparametrization satisfies for $\widehat{t}_i = \tfrac{d -c}{b-a} t_i - \tfrac{ a d - b c}{ b- a}$ that 
\begin{align}
\widehat{\bpmat}_{\cdegree} \bbmat_{\cdegree}(\widehat{t}_0, \ldots, \widehat{t}_{\cdegree})=  \bpmat_{\cdegree} \bbmat_{\cdegree}(t_0, \ldots, t_\cdegree). 
\end{align}
Hence, we have the result since Bernstein basis matrices are invertible for any distinct parameters (\reflem{lem.InvertibleBasisMatrix}), which also extends in a similar way to Taylor and monomial curves.
\end{proof}


\subsection{Proof of \refprop{prop.L2Distance}}
\label{app.L2Distance}

\begin{proof}
Using the following properties of Bernstein polynomials \cite{farin_CurvesSurfaces2002},
\begin{align}
\int_{0}^{1} \!\!\bpoly_{i,n}(t) \diff t = \scalebox{1.25}{$\frac{1}{n+1}$}, \, \text{ and } \, 
\bpoly_{i,n}(t) \bpoly_{j,m}(t)  = \frac{\binom{n}{i}\binom{n}{j}}{\binom{n+m}{i+j}} \bpoly_{i+j, n+m}(t) \nonumber 
\end{align}
one can verify the result as
\begin{align}
\!\!\bdistL(\bcurve_{\bpoint_0, \ldots, \bpoint_n}, \bcurve_{\mpoint_0, \ldots, \mpoint_n})^2  &= \int_{0}^{1} \!\!\norm{\bcurve_{\bpoint_0 - \mpoint_0, \ldots, \bpoint_n - \mpoint_n}(t) }^2 \diff t,  
\\
& \hspace{-29mm} = \sum_{i=0}^{n}\sum_{j=0}^{n} \tr{\plist{\bpoint_i - \mpoint_i}} \plist{\bpoint_j - \mpoint_j} \int_{0}^{1}\!\! \bpoly_{i,n}(t) \bpoly_{j,n}(t) \diff t,
\\
& \hspace{-29mm}  = \sum_{i=0}^{n}\sum_{j=0}^{n} \tr{\plist{\bpoint_i - \mpoint_i}} \plist{\bpoint_j - \mpoint_j}  \frac{\binom{n}{i}\binom{n}{j}}{\binom{2n}{i+j}} \int_{0}^{1}\!\! \bpoly_{i+j, 2n}(t) \diff t,  \!\!\!
\\
& \hspace{-29mm}  = \sum_{i=0}^{n}\sum_{j=0}^{n} \frac{1}{2n+1}\frac{\binom{n}{i}\binom{n}{j}}{\binom{2n}{i+j}}  \tr{\plist{\bpoint_i - \mpoint_i}} \plist{\bpoint_j - \mpoint_j}, 
\\
& \hspace{-29mm}   = \trace\plist{(\bpmat_{\cdegree}- \mpmat_{\cdegree})\bwmat_{\cdegree} \tr{(\bpmat_{\cdegree} - \mpmat_{\cdegree})}},
\end{align}
which completes the proof.
\end{proof}

\subsection{Proof of \refprop{prop.DistanceOrder}}
\label{app.DistanceOrder}

\begin{proof}
By \refdef{def.HaussdorffMaximumDistance}, the B\'ezier parameterwise-maximum distance defines an upper bound on the B\'ezier Haussdoff distance, i.e.,
\begin{align}
\bdistH(\bcurve_{\bpoint_0, \ldots, \bpoint_n}, \bcurve_{\mpoint_0, \ldots, \mpoint_n}) &\leq \bdistM(\bcurve_{\bpoint_0, \ldots, \bpoint_n}, \bcurve_{\mpoint_0, \ldots, \mpoint_n}). 
\end{align}
Similarly, the equivalence relation of the Frobenius distance and the control-point distances is evident from \refdef{def.FrobeniusDistance} and \refdef{def.CtrlDistance} as
\begin{align}
\bdistC(\bcurve_{\bpoint_0, \ldots, \bpoint_n}, \bcurve_{\mpoint_0, \ldots, \mpoint_n}) \hspace{-20mm}& \hspace{20mm} = \max_{i=0, \ldots, n} \norm{\bpoint_i - \mpoint_i} ,
\\
& \leq \bdistF(\bcurve_{\bpoint_0, \ldots, \bpoint_n}, \bcurve_{\mpoint_0, \ldots, \mpoint_n})  = \sqrt{\sum\nolimits_{i=0}^{n} \norm{\bpoint_i - \mpoint_i}^2}, \!\!\!
\\
& \leq \sqrt{\cdegree}  \bdistC(\bcurve_{\bpoint_0, \ldots, \bpoint_n}, \bcurve_{\mpoint_0, \ldots, \mpoint_n}).
\end{align}
Hence, the result follows from Jensen's equality for the squared Euclidean distance because
\begin{align}
\norm{\bcurve_{\bpoint_0, \ldots, \bpoint_\cdegree}(t) - \bcurve_{\mpoint_0, \ldots, \mpoint_\cdegree}(t)}^2 &= \norm{\bcurve_{\bpoint_0 - \mpoint_0, \ldots, \bpoint_\cdegree - \mpoint_{\cdegree}}(t)}^2, 
\\
& \hspace{-31mm}=\norm{\sum_{i=0}^{\cdegree} \bpoly_{i,\cdegree}(t) (\bpoint_i - \mpoint_i)}^2
\leq \sum_{i=0}^{\cdegree} \bpoly_{i,\cdegree}(t) \norm{\bpoint_i - \mpoint_i}^2 ,
\\
&  \hspace{-31mm}\leq \max_{i} \norm{\bpoint_i - \mpoint_i}^2.
\end{align}
Note that the Bernstein polynomials sum to one over $t \in [0,1]$, i.e., $\sum_{i=0}^{\cdegree} \bpoly_{i, \cdegree}(t) = 1$ for all $t \in [0,1]$ (\refpropty{propty.BezierConvexity}).
\end{proof}

\subsection{Proof of \refprop{prop.RelativeBezierBound}}
\label{app.RelativeBezierBound}

\begin{proof}
The ordering relation of B\'ezier metrics in \refprop{prop.DistanceOrder} and the convexity of B\'ezier curves in \refpropty{propty.BezierConvexity} imply that
{
\begin{align}
&\hspace{-3mm} \min_{t' \in [0,1]} \! \norm{\bcurve_{\bpoint_0, \ldots, \bpoint_n}(t) - \bcurve_{\mpoint_0, \ldots, \mpoint_n}(t')}  \leq \norm{\bcurve_{\bpoint_0, \ldots, \bpoint_n}(t) - \bcurve_{\mpoint_0, \ldots, \mpoint_n}(t)},  \nonumber
\\
& \hspace{5mm} \leq \bdistM( \bcurve_{\bpoint_0, \ldots, \bpoint_n}, \bcurve_{\mpoint_0, \ldots, \mpoint_n} )
\leq \bdistC (\bcurve_{\bpoint_0, \ldots, \bpoint_n}, \bcurve_{\mpoint_0, \ldots, \mpoint_n}), \!\!\!    
\end{align}  
}%
and so the result follows.
\end{proof}

\subsection{Proof of \refprop{prop.ElevationControlPoint}}
\label{app.ElevationControlPoint}

\begin{proof}
The sufficiency of elevated control points in \refeq{eq.ElevationControlPoints} can be verified as
\begin{subequations}
\begin{align}
\bcurve_{\mpoint_0, \ldots, \mpoint_m}(t) &= \mpmat_m \bbasis_{m}(t) 
\\
&= \bpmat_{n} \emat(n,m) \bbasis_{m}(t),
\\
& = \bpmat_{n} \tfbasis_{\mbasis}^{\bbasis}(n) \mat{I}_{(n+1) \times (m+1)} \tfbasis_{\bbasis}^{\mbasis}(m) \bbasis_{m}(t), \!\!\!
\\
& = \bpmat_{n} \tfbasis_{\mbasis}^{\bbasis}(n) \mat{I}_{(n+1) \times (m+1)} \mbasis_{m}(t), \!\!\!
\\
& = \bpmat_{n}  \tfbasis_{\mbasis}^{\bbasis}(n) \mbasis_{n}(t), 
\\
& = \bpmat_{n} \bbasis_{n}(t) = \bcurve_{\bpoint_0, \ldots, \bpoint_{n}}(t).
\end{align}
\end{subequations}

To show the necessity of parameterwise coincidence, consider distinct parameters $t_0, \ldots, t_m \in \R$ with $t_i \neq t_j$ for all $i \neq j$. 
Since square Bernstein basis matrices of distinct parameters are invertible (\reflem{lem.InvertibleBasisMatrix}), using the coinciding curve points at $t_0, \ldots, t_m$, i.e.,
\begin{align}
\bpmat_{n} \bbmat_{n}(t_0, \ldots, t_m) = \mpmat_{m} \bbmat_{m}(t_0, \ldots, t_m),
\end{align}
one can obtain an explicit expression for $\emat(n,m)$ as 
\begin{align}
\mpmat_{m} &= \bpmat_{n} \bbmat_{n}(t_0, \ldots, t_m)\bbmat_{m}(t_0, \ldots, t_m)^{-1}, \!\!\!
\\
& = \bpmat_{n} \emat(n,m),
\end{align}
which can be further simplified using the Bernstein-to-monomial basis transformation as
\begin{align}
\emat(n,m) \hspace{-2mm}&  \hspace{2mm}= \bbmat_{n}(t_0, \ldots, t_m)\bbmat_{m}(t_0, \ldots, t_m)^{-1} ,
\\
& =  \tfbasis_{\mbasis}^{\bbasis}(n) \mbmat_{n}(t_0, \ldots, t_m)\mbmat_{m}(t_0, \ldots, t_m)^{-1} \tfbasis_{\bbasis}^{\mbasis}(m) , \!\!\!
\\
& = \tfbasis_{\mbasis}^{\bbasis}(n) \mat{I}_{(n+1) \times (m+1)} \tfbasis_{\bbasis}^{\mbasis}(m) ,
\end{align} 
which completes the proof.
\end{proof}

\subsection{Proof of \refprop{prop.ElevationMatrixBernstein}}
\label{app.ElevationMatrixBernstein}

\begin{proof}
By definition in \refeq{eq.ElevationMatrix}, we have
\begin{align}
\emat(n,m) \bbasis_{m}(t) & = \tfbasis_{\mbasis}^{\bbasis}(n) \mat{I}_{(n+1) \times (m+1)} \tfbasis_{\bbasis}^{\mbasis}(m) \bbasis_{m}(t) 
\\
& = \tfbasis_{\mbasis}^{\bbasis}(n) \mat{I}_{(n+1) \times (m+1)} \mbasis_{m}(t)
\\
& = \tfbasis_{\mbasis}^{\bbasis}(n)\mbasis_{n}(t) = \bbasis_{n}(t).
\end{align}
Therefore, the result follows since the Bernstein basis matrix $\bbmat_{m}(t_0, \ldots, t_m)$ is invertible (\reflem{lem.InvertibleBasisMatrix}).
\end{proof}

\subsection{Proof of \refprop{prop.ElevationMatrixElements}}
\label{app.ElevationMatrixElements}

\begin{proof}
We below provide a proof by induction. 

$\bullet$ Base Case: ($n \leq m \leq n+1$): 
If $m = n$, then one trivially has $\emat(n,n) = \mat{I}_{(n+1) \times (n+1)}$. 
If $m = n+1$, then
\begin{align}\label{eq.ElevationMatrixSpecial}
\blist{\emat(n, n+1)}_{i+1,j+1} = \left \{
\begin{array}{@{}c@{\,}l@{}}
1 - \frac{j}{n + 1} & \text{, if } j = i, \\
\frac{j}{n + 1} & \text{, if }   j = i + 1, \\
0 & \text{, otherwise,}
\end{array} 
\right .
\end{align}
which follows from \refeq{eq.BernsteinBasisElevationMatrix} and the  following degree-one elevation property of Bernstein polynomials \cite{farouki_CAGD2012}
\begin{align}
\bpoly_{i,n}(t) = \plist{1- \tfrac{i}{n+1}}\bpoly_{i,n+1}(t) + \tfrac{i+1}{n+1} \bpoly_{i+1,n+1}(t).
\end{align}
Also note that $\binom{n}{j}/\binom{n+1}{j} = 1 - \frac{j}{n+1}$ and $\binom{n}{j-1}/\binom{n+1}{j} = \frac{j}{n+1}$.
Hence, the result holds for the base case.

$\bullet$ Induction Step ($m>n+1$): Suppose the results holds for $\emat(n, m-1)$, then one can determine $\emat(n,m)$ as
\begin{align}
\emat(n,m) = \emat(n,m-1) \emat(m-1,m),
\end{align}
because the degree elevation operation preserves the original B\'ezier curve exactly. 
Hence, it follows from \refeq{eq.ElevationMatrixSpecial} that
{\small
\begin{align}
\hspace{0mm}\blist{\emat(n,m)}_{i+1, j+1} \!& =\! \sum_{k=0}^{m} \!\blist{\emat(n,m-1)}_{i+1, k+1} \blist{\emat(m-1, m)}_{k+1, j+1}, \!\!\!\!\!
\\
&\hspace{-13mm} = \blist{\emat(n,m-1)}_{i+1, j} \blist{\emat(m-1, m)}_{j, j+1} \nonumber \\
& \hspace{+0mm} + \blist{\emat(n,m-1)}_{i+1, j+1} \blist{\emat(m-1, m)}_{j+1, j+1}, \!\!
\\
&\hspace{-13mm} =   \tfrac{j}{m} \blist{\emat(n,m-1)}_{i+1, j} + \plist{1 - \tfrac{j}{m}} \blist{\emat(n,m-1)}_{i+1, j+1}. \!\!\! 
\end{align}
}%
Note that $\blist{\emat(n,m-1)}_{i+1, j+1} \neq 0$ iff $0 \leq j -i \leq m-n-1$; and $\blist{\emat(n,m-1)}_{i+1, j} \neq 0$ iff $1 \leq j-i \leq m-n$. 
Hence, we complete the induction step by checking the following cases:

\smallskip

$\circ$ If $j-i > m-n$ or $j-i < 0$, then $\blist{\emat(n,m-1)}_{i+1, j+1} = 0$ and $\blist{\emat(n,m-1)}_{i+1, j} = 0$, and so
\begin{align}
\blist{\emat(n,m)}_{i+1, j+1} = 0.
\end{align}

\smallskip

$\circ$ If $j-i = m - n$, then $\blist{\emat(n,m-1)}_{i+1, j+1} = 0$ and so
\begin{align}
\blist{\emat(n,m)}_{i+1, j+1} = \tfrac{j}{m} \blist{\emat(n,m-1)}_{i+1, j} = \tfrac{j}{m} \frac{\binom{n}{i}}{\binom{m-1}{j-1}} = \frac{\binom{n}{i}}{\binom{m}{j}}. 
\end{align}

\smallskip

$\circ$ If $j-i = 0$, then  $\blist{\emat(n,m-1)}_{i+1, j} = 0$ and so
\begin{align}
\blist{\emat(n,m)}_{i+1, j+1} &= \plist{1 - \tfrac{j}{m}} \blist{\emat(n,m-1)}_{i+1, j+1}, \\
&= \plist{1 - \tfrac{j}{m}} \frac{\binom{n}{i}}{\binom{m-1}{j}} = \frac{\binom{n}{i}}{\binom{m}{j}}.
\end{align}

\smallskip
$\circ$ Otherwise (i.e., $0< j-i < m-n$), we have
\begin{align}
\blist{\emat(n,m)}_{i+1, j+1} &= \tfrac{j}{m} \blist{\emat(n,m-1)}_{i+1, j}  \nonumber 
\\ & \hspace{8mm}+ \plist{1 - \tfrac{j}{m}} \blist{\emat(n,m-1)}_{i+1, j+1},\!\! 
\\
& \hspace{-8mm} = \tfrac{j}{m} \frac{\binom{n}{i} \binom{m-n-1}{j-i-1}}{\binom{m-1}{j-1}} + \plist{1 - \tfrac{j}{m}}  \frac{\binom{n}{i} \binom{m-n-1}{j-i}}{\binom{m-1}{j}},
\\
& \hspace{-8mm}=   \frac{\binom{n}{i} \binom{m-n-1}{j-i-1}}{\frac{m}{j}\binom{m-1}{j-1}} +  \frac{\binom{n}{i} \binom{m-n-1}{j-i}}{ \frac{m}{m - j}\binom{m-1}{j}},
\\
& \hspace{-8mm} = \frac{\binom{n}{i}}{\binom{m}{j}} \plist{\! \binom{m-n-1}{j-i-1} + \binom{m-n-1}{j-i}\!},
\\
& \hspace{-8mm} = \frac{\binom{n}{i} \binom{m-n}{j-i}}{\binom{m}{j}},
\end{align}
which completes the proof.
\end{proof}

\subsection{Proof of \refprop{prop.ElevationMatrixFullRank}}
\label{app.ElevationMatrixFullRank}

\begin{proof}
The result can be verified using  either \refeq{eq.ElevationMatrix} or \refeq{eq.ElevationMatrixBernstein} with the fact that if a square matrix $\mat{B}$ is full rank (i.e., invertible), then $\rank(\mat{A}\mat{B}) = \rank(\mat{A})$ for any matrix  $\mat{A}$ that is conformable for the multiplication $\mat{A} \mat{B}$ \cite{horn_johnson_MatrixAnalyis2012}.  
\end{proof}

\subsection{Proof of \refprop{prop.ElevationMatrixRowColumnSum}}
\label{app.ElevationMatrixRowColumnSum}

\begin{proof}
The column-sum property of the elevation matrix follows from \refprop{prop.ElevationMatrixBernstein}, 
\begin{align}
\emat(n,m) &= \bbmat_{n}(t_0, \ldots, t_m) \bbmat_{m}(t_0, \ldots, t_m)^{-1},
\end{align}
and  the convexity of Bernstein polynomials (\refpropty{propty.BezierConvexity}),
\begin{align}
\mat{1}_{1 \times (n+1)} \bbmat_{n}(t_0, \ldots, t_m) & = \mat{1}_{1 \times (m+1)}, \\
\mat{1}_{1 \times (m+1)} \bbmat_{m}(t_0, \ldots, t_m) & = \mat{1}_{1 \times (m+1)} \bbmat_{m}(t_0, \ldots, t_m)^{-1}, \\
 &   = \mat{1}_{1 \times (m+1)},
\end{align}
where  $t_0, \ldots, t_m \in \R$ are any distinct reals.

\smallskip

The row-sum property of the elevation matrix can be proven by induction as follows. 

 $\bullet$ Base Case ($n \leq m \leq n+1)$: If $m = n$,  then one has  $\emat(n,n) = \mat{I}_{(n+1)\times (n+1)}$ and so the result holds. 
For $m=n+1$, 
\begin{align}
\blist{\emat(n,n+1)}_{i+1, j+1} = \left \{ 
\begin{array}{@{}c@{\,\,}l@{}}
1 - \frac{i}{n+1} & \text{, if } j = i, \\
\frac{i+1}{n+1} & \text{, if } j = i+1, \\
0 & \text{, otherwise}.
\end{array}
\right.
\end{align}
Hence, the row sum of $\emat(n,n+1)$ is $\frac{n+2}{n+1}$, i.e.,
\begin{align}
\sum_{j=0}^{n+1} \blist{\emat(n,n+1)}_{i+1, j+1} = 1  - \tfrac{i}{n+1} + \tfrac{i+1}{n+1} = \tfrac{n+2}{n+1}.
\end{align}

 $\bullet$ Induction ($m > n+1$). 
Suppose that the result holds for $\emat(n,m-1)$. 
Hence, using $\emat(n,m) = \emat(n,m-1)\emat(m-1,m)$, one can conclude that the row sum of multiplication of two matrices is the multiplication of the their row sums, i.e.,
{\small
\begin{align}
\hspace{-1.5mm}\sum_{j = 0}^{m} \blist{\emat(n,m)}_{i+1, j+1} \!\! & \nonumber \\
& \hspace{-19mm} = \!\!\sum_{j = 0}^{m} \sum_{k=0}^{m-1} \!\blist{\emat(n,m-1)}_{i+1, k+1} \blist{ \emat(m-1,m)}_{k+1, j+1}, \!\! 
\\
& \hspace{-19mm}= \plist{\sum_{k=0}^{m-1} \blist{\emat(n,m-1)}_{i+1, k+1}\!\!}\!\! \plist{\sum_{j = 0}^{m} \blist{ \emat(m-1,m)}_{k+1, j+1}\!\!}, \!\!
\\
& \hspace{-19mm} = \frac{m}{n+1}\frac{m+1}{m} = \frac{m+1}{n+1},
\end{align}
}%
which completes the proof.
\end{proof}

\subsection{Proof of \refprop{prop.L2DistanceElevation}}
\label{app.L2DistanceElevation}

\begin{proof}
The result directly follows from \refdef{def.L2Distance} of the  L2-norm distance because degree elevation exactly represents B\'ezier curves with more control points (\refdef{def.DegreeElevation}).
\end{proof}

\subsection{Proof of \refprop{prop.FrobeniusDistanceElevation}}
\label{app.FrobeniusDistanceElevation}

\begin{proof}
Using the column- and row-sum property of the elevation matrix in  \refprop{prop.ElevationMatrixRowColumnSum}, one can obtain the result by applying Jensen's  inequality as
\begin{align}
\bdistF(\bcurve_{[\bpoint_0, \ldots, \bpoint_n] \emat(n,m)}, \bcurve_{[\mpoint_0, \ldots, \mpoint_n]\emat(n,m)}) \hspace{-24mm}& \hspace{24mm} \nonumber 
\\
&= \norm{(\mat{P}_n - \mat{Q}_n) \emat(n,m)}_F^2, 
\\
&  = \sum_{j=0}^{m} \norm{(\mat{P}_n - \mat{Q}_n) \emat(n,m)_{j+1}}^2,
\\
& \leq \sum_{j=0}^{m} \sum_{i=0}^{n} [\emat(n,m)]_{i+1,j+1} \norm{\bpoint_i - \mpoint_i}^2, 
\\
&  = \sum_{i=0}^{n}\norm{\bpoint_i - \mpoint_i}^2 \sum_{j=0}^{m}[\emat(n,m)]_{i+1,j+1},  \!\!
\\
& =  \tfrac{m+1}{n+1}\bdistF(\bcurve_{\bpoint_0, \ldots, \bpoint_n}, \bcurve_{\mpoint_0, \ldots, \mpoint_n})^2,
\end{align}
where $\emat(n,m)_{j+1}$ denotes the $(j+1)^\text{th}$-column of $\emat(n,m)$. 
\end{proof}

\subsection{Proof of \refprop{prop.CtrlDistanceElevation}}
\label{app.CtrlDistanceElevation}

\begin{proof}
Let $\emat(n,m) = \blist{\vect{e}_0, \ldots, \vect{e}_m}$. 
Then, the result can be verified using Jensen's inequality and the unit column sum property of the elevation matrix (\refprop{prop.ElevationMatrixRowColumnSum}) as follows: 
{\small
\begin{align}
\bdistC(\bcurve_{\blist{\bpoint_0, \ldots, \bpoint_n}\emat(n,m)}, \bcurve_{\blist{\mpoint_0, \ldots, \mpoint_n} \emat(n,m)}) & = \max_{i=0\ldots m} \! \norm{ (\mat{P} - \mat{Q}) \vect{e}_i}, \!\!\!
\\
& \hspace{-30mm} \leq \max_{j = 0, \ldots, n} \norm{\bpoint_j - \mpoint_j} = \bdistC(\bcurve_{\bpoint_0, \ldots, \bpoint_n}, \bcurve_{\mpoint_0, \ldots, \mpoint_n}),
\end{align}
}%
where  the Jensen's inequality and $\tr{\mat{1}} \vect{e}_i = 1$ imply that
\begin{align}
\norm{ (\mat{P} - \mat{Q}) \vect{e}_i} \leq \max_{j = 0, \ldots, n} \norm{\bpoint_j - \mpoint_j}
\end{align}
and this completes the proof.
\end{proof}

\subsection{Proof of \refprop{prop.LeastSquaresReductionOptimality}}
\label{app.LeastSquaresReductionOptimality}

\begin{proof}
Using the following matrix identities \cite{petersen_pedersen_MatrixCookbook2012}
{\small
\begin{align}
\frac{\partial}{\partial \mat{X}} \trace\plist{\mat{X}\mat{A}} & = \tr{\mat{A}}, \quad  \text{ and }  \quad 
\frac{\partial}{\partial \mat{X}} \trace\plist{\mat{X} \mat{A} \tr{\mat{X}}} = \mat{X} (\mat{A} + \tr{\mat{A}}), 
\end{align}
}%
and the explicit form of the L2-norm distance in \refprop{prop.L2Distance}, one can verify the optimality of the least squares reduction with respect to the L2-norm distance as follows
{\small
\begin{align}
 &\frac{\partial}{\partial\mpmat_{m}} \bdistL(\bcurve_{\bpoint_0, \ldots, \bpoint_n}, \bcurve_{\blist{\mpoint_0, \ldots, \mpoint_m} \emat(m,n)})^2 \nonumber \\
& \hspace{5mm} = \frac{\partial}{\partial \mpmat_m} \trace\plist{\!\!(\bpmat_n  - \mpmat_m \emat(m,n)) \bwmat_n \tr{(\bpmat_n - \mpmat_m \emat(m,n))\!}\!}, \!\!\! \!
\\
& \hspace{5mm} = 2 (\mpmat_m \emat(m,n) - \bpmat_n)\bwmat_n,
\end{align}
}%
which equals to zero for $\mpmat_{m} = \bpmat_{n} \rmat(n,m)$. 
Thus, the global optimality follows from the convexity of the squared L2-norm distance.

Similarly, due to its strong relation with linear least squares, the Frobenius-norm distance of B\'ezier curves 
\begin{align}
\bdistF(\bcurve_{\bpoint_0, \ldots, \bpoint_n }, \bcurve_{[\mpoint_0, \ldots, \mpoint_m] \emat(m,n)}) = \norm{ \bpmat_n  - \mpmat_m \emat(m,n)}_F,
\end{align}
is minimized via the pseudo-inverse $\emat(m,n)^{+}$ of $\emat(m,n)$ at 
\begin{align}
\mpmat_m = \bpmat_{n} \emat(m,n)^{+} = \bpmat_{n} \rmat_{L2}(n,m),
\end{align} 
which completes the proof.
\end{proof}

\subsection{Proof of \refprop{prop.TaylorReductionInverse}}
\label{app.TaylorReductionInverse}

\begin{proof}
The result can be verified using \refeq{eq.ElevationMatrix} and \refeq{eq.ReductionMatrix} as
{\small
\begin{align}
& \hspace{-2mm}\emat(m,n) \mat{R}_{\tbasis, \toffset}(n,m) \nonumber \\
&= \tfbasis_{\mbasis}^{\bbasis}(m) \mat{I}_{(m+1) \times (n+1)} \tfbasis_{\bbasis}^{\mbasis}(n) \tfbasis_{\tbasis}^{\bbasis}(n, \toffset) \mat{I}_{(n+1) \times (m+1)} \tfbasis_{\bbasis}^{\tbasis}(m, \toffset), \!\!\!\!
\\
& =\tfbasis_{\mbasis}^{\bbasis}(m) \underbrace{\mat{I}_{m \times n}  \tfbasis_{\tbasis}^{\mbasis}(n, \toffset) \mat{I}_{n \times m}}_{= \tfbasis_{\tbasis}^{\mbasis}(m, \toffset)} \tfbasis_{\bbasis}^{\tbasis}(m, \toffset), 
\\
& = \tfbasis_{\mbasis}^{\bbasis}(m)\tfbasis_{\tbasis}^{\mbasis}(m, \toffset)\tfbasis_{\bbasis}^{\tbasis}(m, \toffset),
\\
&= \tfbasis_{\mbasis}^{\bbasis}(m) \tfbasis_{\bbasis}^{\mbasis}(m) = \mat{I}_{(m+1) \times (m+1)}.
\end{align}
}%
which completes the proof.
\end{proof}

\subsection{Proof of \refprop{prop.MatchingReduction}}
\label{app.MatchingReduction}

\begin{proof}
Since $\bbmat(t_0, \ldots, t_m)^{-1} \bbmat_{m}(t_0, \ldots, t_m) = \mat{I}_{(m+1)\times (m+1)}$,   we have for any $t_i \in \clist{t_0, \ldots, t_m}$ that
\begin{subequations}
\begin{align}
\bcurve_{\mpoint_0, \ldots, \mpoint_m}(t_i) \hspace{-12mm}&\hspace{12mm}= [\mpoint_{0}, \ldots, \mpoint_m] \bbasis_{m}(t_i) ,
\\
&=  [\bpoint_0, \ldots, \bpoint_n] \rmat_{t_0, \ldots, t_m}(n,m)  \bbasis_{m}(t_i)  ,
\\
&=
[\bpoint_0, \ldots, \bpoint_n] \bbmat_{n}(t_0, \ldots, t_m) \bbmat_{m}(t_0, \ldots, t_m)^{-1}  \bbasis_{m}(t_i), \!\!\!
\\
& = [\bpoint_0, \ldots, \bpoint_n] \bbasis_n(t_i)  = \bcurve_{\bpoint_0, \ldots, \bpoint_n}(t_i).
\end{align}
\end{subequations}
Thus, the matching reduction preserves the curve at $t_i$.
\end{proof}

\subsection{Proof of \refprop{prop.MatchingReductionInverse}}
\label{app.MatchingReductionInverse}

\begin{proof}
Consider some additional distinct parameters $t_{m+1}, \ldots, t_{n} \in \R $ that are different from $ t_0, \ldots, t_m$.
Then the result can be verified using \refprop{prop.ElevationMatrixBernstein} as
{\small
\begin{align}
\! \emat(m,n)\rmat_{t_0, \ldots, t_m}(n,m) & = \emat(m,n) \bbmat_{n}(t_0, \ldots, t_m) \bbmat_{m}(t_0, \ldots, t_m)^{-1}\!\!\!, \!\!\!\!
\\
& = \bbmat_{m}(t_0, \ldots, t_m) \bbmat_{m}(t_0, \ldots, t_m)^{-1},
\\
& = \mat{I}_{(m+1) \times (m+1)},
\end{align}
}%
which completes the proof.
\end{proof}

\subsection{Proof of \refprop{prop.MatchingReductionDifference}}
\label{app.MatchingReductionDifference}

\begin{proof}
The matching reduction matrix $\rmat_{t_0, \ldots, t_n}(n+1, n)$ can be expressed in the monomial basis using the basis transformation between Bernstein and monomial bases as
{\small
\begin{align}
\rmat_{t_0, \ldots, t_n}(n+1, n) & = \bbmat_{n+1}(t_0, \ldots, t_n) \bbmat_{n}(t_0, \ldots, t_n) ^{-1}, 
\\
& \hspace{-17mm}=  \tfbasis_{\mbasis}^{\bbasis}(n+1) \mbmat_{n+1}(t_0, \ldots, t_n) \mbmat_{n}(t_0, \ldots, t_n) ^{-1} \tfbasis_{\bbasis}^{\mbasis}(n), \\
& \hspace{-17mm}=  \tfbasis_{\mbasis}^{\bbasis}(n+1) \blist{\begin{array}{c}
\mat{I}_{(n+1)\times(n+1)} 
\\
\blist{t_0^{n+1}, \ldots, t_n^{n+1}} \mbmat_{n}(t_0, \ldots, t_n)^{-1}
\end{array}} 
\tfbasis_{\bbasis}^{\mbasis}(n). \!\!\!
\end{align}
}%
 
Hence,  the degree-one matching reduction difference can be written for $\mpmat_{n} = [\mpoint_0, \ldots, \mpoint_n]$  and $\bpmat_{n+1} = [\bpoint_0, \ldots, \bpoint_{n+1}]$ as 
{\footnotesize
\begin{align}
&\bcurve_{\bpoint_0, \ldots, \bpoint_{\cdegree+1}}(t) - \bcurve_{\mpoint_0, \ldots, \mpoint_{\cdegree}}(t) = \bpmat_{n+1} \bbasis_{\cdegree+1}(t) - \mpmat_{n} \bbasis_{\cdegree}(t)  ,
\\
& \hspace{2mm}= \bpmat_{n+1} \tfbasis_{\mbasis}^{\bbasis}(\cdegree+1) \mbasis_{\cdegree+1}(t) \nonumber \\
& \hspace{5mm} - \bpmat_{n+1} \tfbasis_{\mbasis}^{\bbasis}(\cdegree+1) 
\left[\begin{array}{@{}c@{}}
\mat{I}_{(\cdegree+1) \times (\cdegree+1)} \\
\blist{t_0^{n+1}, \ldots, t_n^{n+1}} \mbmat_{n}(t_0, \ldots, t_n)^{-1}
\end{array}
\right]
\tfbasis_{\bbasis}^{\mbasis}(\cdegree) \bbasis_{\cdegree}(t),  \!\!\!
\\
&\hspace{2mm}= \bpmat_{n+1} \tfbasis_{\mbasis}^{\bbasis}(\cdegree+1)\!\plist{\!\!\mbasis_{\cdegree+1}(t)\! -\! \left[\begin{array}{@{}c@{}}
\mat{I}_{(\cdegree+1) \times (\cdegree+1)} \!\!\!\! \\
\blist{t_0^{n+1}, \ldots, t_n^{n+1}} \mbmat_{n}(t_0, \ldots, t_n)^{-1}\!
\end{array}
\right] \mbasis_{\cdegree}(t)\!\!\!},\!\!\!
\\
& \hspace{2mm} = \bpmat_{n+1} \tfbasis_{\mbasis}^{\bbasis}(\cdegree+1) \!\!\blist{\begin{array}{@{}c@{}}
0 \\
\vdots \\
0 \\
1
\end{array}}\!\!\plist{t^{n+1}\! - [t_0^{n+1}, \ldots, t_n^{n+1}] \mbmat_{n}(t_0, \ldots, t_n)^{-1} \mbasis_{n}(t)\!}. \!\!\!
\end{align}
}%
Now observe that for any distinct $t_0, \ldots, t_n \in \R$ one has 
{\small
\begin{align}
t^{n+1} - [t_0^{n+1}, \ldots, t_n^{n+1}] \mbmat_{n}(t_0, \ldots, t_n)^{-1} \mbasis_{n}(t) = \prod_{i=0}^{n}(t - t_i),
\end{align}
}%
which is zero at $t = t_0, \ldots, t_n$. 
We also have from \reflem{lem.MonomialBernsteinTransformation}
{\small
\begin{align}
\blist{\tfbasis_{\mbasis}^{\bbasis}(\cdegree+1)}_{i+1,\cdegree+2} &= (-1)^{\cdegree+1 - i} \binom{\cdegree+1}{\cdegree+1} \binom{\cdegree+1}{i},
\\
& = (-1)^{\cdegree+1 - i} \binom{\cdegree+1}{i}.
\end{align}
}%
Hence, the matching reduction difference is given by  
{\small
\begin{align}
\bcurve_{\bpoint_0, \ldots, \bpoint_{\cdegree+1}}(t) - \bcurve_{\mpoint_0, \ldots, \mpoint_{\cdegree}}(t) \!=\! \plist{\sum_{i=0}^{\cdegree + 1}\! (-1)^{n+1-i}\binom{n+1}{i} \bpoint_i\!\!}\!\! \prod_{i=0}^{\cdegree}(t- t_i),
\end{align}
}%
which completes the proof.
\end{proof}

\subsection{Proof of \reflem{lem.MonomialTaylorTransformation}}
\label{app.MonomialTaylorTransformation}

\begin{proof}
The monomial-to-Taylor basis transformation $\tfbasis_{\mbasis}^{\tbasis}$ directly follows from the binomial formula, 
\begin{align}
(t - \toffset)^{i} = \sum_{j=0}^{i} \tbinom{i}{j} (-\toffset)^{i-j} t^{j}.
\end{align}

Similarly, the Taylor-to-monomial basis transformation $\tfbasis_{\mbasis}^{\tbasis}$ can be obtained using the binomial formula as 
\begin{align}
t^{i} = (t- \toffset + \toffset)^{i} = \sum_{j=0}^{i} \binom{i}{j} \toffset^{i-j} (t - \toffset)^{j}.
\end{align}

Finally, the monomial-to-Taylor and Taylor-to-monomial transformations are inverses of each other since they are lower triangular matrices with all ones in the main diagonal (i.e., $\det\plist{\tfbasis_{\mbasis}^{\tbasis}(\cdegree, \toffset)} = \det\plist{\tfbasis_{\tbasis}^{\mbasis}(\cdegree, \toffset)} = 1$), and 
\begin{align}
\mbasis_{\cdegree}(t) = \tfbasis_{\tbasis}^{\mbasis}(\cdegree, \toffset) \tbasis_{\cdegree, \toffset} (t) = \tfbasis_{\tbasis}^{\mbasis}(\cdegree, \toffset) \tfbasis_{\mbasis}^{\tbasis}(\cdegree, \toffset) \mbasis_{\cdegree}(t)   
\\
\tbasis_{\cdegree, \toffset}(t) = \tfbasis_{\mbasis}^{\tbasis}(\cdegree, \toffset) \mbasis_{\cdegree} (t) = \tfbasis_{\mbasis}^{\tbasis}(\cdegree, \toffset) \tfbasis_{\tbasis}^{\mbasis}(\cdegree, \toffset) \tbasis_{\cdegree, \toffset}(t)   
\end{align}
hold for all $t \in \R$.
\end{proof}

\subsection{Proof of \reflem{lem.BasisReparametrization}}
\label{app.BasisReparametrization}

\begin{proof}
For Taylor basis reparametrization, the result follows from the definition of monomial and Taylor basis because
\begin{align}
\plist{\tfrac{b-a}{d-c} t + \tfrac{a d - b c}{d - c} - \toffset}^k 
& =  \plist{\tfrac{b-a}{d-c}}^{k} \plist{t - \tfrac{d-c}{b-a} \toffset + \tfrac{a d - b c}{b - a}}^{k} 
\\ 
&=  \plist{\tfrac{b-a}{d-c}}^{k} \plist{t - \widehat{\toffset}}^k
\end{align}
For Bernstein basis reparametrization, the results can be verified using the change of basis between Bernstein and Taylor bases as 
\begin{align}
\bbasis_{\cdegree}(t) \hspace{-4mm}& \hspace{4mm}= \tfbasis_{\tbasis}^{\bbasis}(\cdegree, \widehat{\toffset}) \tbasis_{\cdegree,\widehat{\toffset}} (t) 
\\
& =  \tfbasis_{\tbasis}^{\bbasis}(\cdegree, \widehat{\toffset}) \diag \plist{\mbasis_{\cdegree}(\tfrac{b-a}{d-c})}   \tbasis_{\cdegree, \toffset} \plist{\tfrac{b-a}{d-c} t + \tfrac{a d - b c}{d - c}}
\\
& =  \tfbasis_{\tbasis}^{\bbasis}(\cdegree, \widehat{\toffset}) \diag \plist{\mbasis_{\cdegree}(\tfrac{b-a}{d-c})}  \tfbasis_{\bbasis}^{\tbasis}(\cdegree, \toffset) \bbasis_{\cdegree} \plist{\tfrac{b-a}{d-c} t + \tfrac{a d - b c}{d - c}} \!\!\!
\end{align}
which also extends to the monomial basis reparametrization in a similar way and so completes the proof.
\end{proof}

\subsection{Proof of \reflem{lem.CurveReparametrization}}
\label{app.CurveReparametrization}

\begin{proof}
For B\'ezier curve reparametrization, the result follows from the Bernstein basis reparametrization in \reflem{lem.BasisReparametrization} as
\begin{align}
\bcurve_{\bpoint_0, \ldots, \bpoint_\cdegree}\plist{\tfrac{b-a}{d-c} t + \tfrac{a d - b c}{d - c}} 
& = \bpmat_{\cdegree} \bbasis_{\cdegree} \plist{\tfrac{b-a}{d-c} t + \tfrac{a d - b c}{d - c}} 
\\
& \hspace{-20mm} =   \bpmat_{\cdegree} \tfbasis_{\tbasis}^{\bbasis}(\cdegree, \toffset) \diag\plist{\mbasis_{\cdegree}(\tfrac{d-c}{b-a})} \tfbasis_{\bbasis}^{\tbasis}(\cdegree, \widehat{\toffset})\bbasis_{\cdegree}\plist{t} 
\\
& \hspace{-20mm} = \widehat{\bpmat}_{\cdegree} \bbasis_{\cdegree}\plist{t}
\end{align}
which similarly extends to the monomial and Taylor curve reparametrization as well. 
\end{proof}

\subsection{Proof of \reflem{lem.MatchingReductionMatrixMonomial}}
\label{app.MatchingReductionMatrixmonomial}

\begin{proof}
It follow from the Bernstein-to-monomial basis transformation that
\begin{align}
\rmat_{t_0, \ldots, t_m}(n,m) &= \bbmat_{n}(t_0, \ldots, t_m) \bbmat_{m}(t_0, \ldots, t_m)^{-1}, \label{eq.DemotionMatrix2}
\\
& \hspace{-16mm} =  \tfbasis_{\mbasis}^{\bbasis}(n)\mbmat_{n}(t_0, \ldots, t_m) \mbmat_{m}(t_0, \ldots, t_m)^{-1} \tfbasis_{\bbasis}^{\mbasis}(m),
\\
& \hspace{-16mm} =  \tfbasis_{\mbasis}^{\bbasis}(n)
\left [
\begin{array}{@{}c@{}}
\mat{I}_{(m+1) \times (m+1)} \\
\blist{t_0^{m+1}, \ldots, t_m^{m+1}} \mbmat_{m}(t_0, \ldots, t_m)^{-1}
\\
\vdots
\\
\blist{t_0^{n}, \ldots, t_m^{n}} \mbmat_{m}(t_0, \ldots, t_m)^{-1}
\end{array}
\right ]
 \tfbasis_{\bbasis}^{\mbasis}(m). \!\!\!
\end{align}

To complete the proof, we show below  that the rows of the middle matrix following the identity matrix satisfy the recursion in \refeq{eq.MatchingReductionRecursion} with the base case of \refeq{eq.MatchingReductionBase}.
Hence, we first consider the base case where
\begin{align}
[\alpha_{1, 0}, \ldots, \alpha_{1,m}] = \blist{t_0^{m+1}, \ldots, t_m^{m+1}} \mbasis_{m}(t_0, \ldots, t_m)^{-1}, 
\\
\blist{t_0^{m+1}, \ldots, t_m^{m+1}} = [\alpha_{1, 0}, \ldots, \alpha_{1,m}] \mbasis_{m}(t_0, \ldots, t_m),
\end{align}  
which can be equivalently written as
\begin{align}
t^{m+1} - \sum_{k=0}^{m}  \alpha_{1,k} t^k = 0 \quad \forall t = t_0, \ldots, t_m. 
\end{align}
Since parameters $t_0, \ldots, t_m$ are distinct, $\prod_{k=0}^{m}(t - t_k)$ is the unique polynomial of order $m+1$ whose roots are $t_0, \ldots, t_m$ with the unity coefficient of the monomial $t^{m+1}$.  
Therefore, we obtain the base case in \refeq{eq.MatchingReductionBase} as
\begin{align}
t^{m+1} - \sum_{k=0}^{m}  \alpha_{1,k} t^k = \prod_{k=0}^{m}(t - t_k).
\end{align}

\noindent Now consider the $(m+i+1)^{\text{th}}$-row,
\begin{align}
[\alpha_{i+1, 0}, \ldots, \alpha_{i+1,m}] &= \blist{t_0^{m+i+1}, \ldots, t_m^{m+i+1}} \mbmat_{m}(t_0, \ldots, t_m)^{-1}, 
\\
\blist{t_0^{m+i+1}, \ldots, t_m^{m+i+1}} &= [\alpha_{i+1, 0}, \ldots, \alpha_{i+1,m}] \mbmat_{m}(t_0, \ldots, t_m), \!\!\!
\end{align}
which is equivalent to  
\begin{align}
\sum_{k=0}^{m} \alpha_{i+1,k} t^{k}  & = t^{m+i+1} = t^{m+i} t,
\\
& = t \sum_{k=0}^{m} \alpha_{i,k} t^{k} = \alpha_{i, m} t^{m+ 1} + \sum_{k=0}^{m-1} \alpha_{i,k} t^{k+1},
\\
& = \sum_{k=0}^{m} (\alpha_{i,m} \alpha_{1,k} + \alpha_{i, k-1}) t^k,
\end{align}
where $\alpha_{i,-1} = 0$. 
This implies the recursion relation in \refeq{eq.MatchingReductionRecursion} and so the result follows.
\end{proof}


\bibliographystyle{IEEEtran}
\bibliography{references.bib}

\vfill

\end{document}